\documentclass{article}
\usepackage{iclr2019_conference,times}
 \iclrfinalcopy
\usepackage[colorlinks,
linkcolor=blue,
anchorcolor=blue,
citecolor=blue
]{hyperref}
\usepackage{smile}
\usepackage{float}
\usepackage{threeparttable}
\usepackage[inline]{enumitem}
\usepackage{bbm}
\usepackage{tikz}
\usetikzlibrary{positioning} 
\tikzstyle{line}=[draw] 
\tikzstyle{arrow}=[draw, -latex]
\usetikzlibrary{shapes,shadows,arrows,calc}
\tikzset{
    >=stealth',
    punkt/.style={
    fill=white,
           rectangle,
           rounded corners,
           text width=5em,
           minimum height=2.5em,
           text centered,
           node distance=4em},
    pil/.style={
           ->,
           thick,
           shorten <=0pt,
           shorten >=0pt,}
}
\tikzstyle{epibox} = [
  draw=epigraphbordercolour,
  shade,
  top color=epigraphfillcolour!40,
  bottom color=epigraphfillcolour!5,
  drop shadow=dropshadowcolour,
  very thick,
  rectangle,
  rounded corners,
  inner sep=10pt,
  inner ysep=15pt
]
\newcolumntype{P}[1]{>{\centering\arraybackslash}p{#1}}
\newcolumntype{L}[1]{>{\centering\arraybackslash}m{#1}}

\newcommand{\Fr}{\text{F}}

\newcommand{\removed}[1]{}
\newcommand{\sgn}{{\rm sign}}
\newcommand{\HT}{{\c{T}}}
\newcommand{\gradvec}{\b{g}_i}
\newcommand{\gradmat}{\b{g}}
\newcommand{\supp}{{\rm supp}}

\newcommand{\siri}[1]{{\color{black}{#1}}}
\usepackage[font=footnotesize,labelfont=bf]{caption}

\usepackage[lined,commentsnumbered,ruled]{algorithm2e}
\SetKwInput{KwInput}{Input}
\SetKwInput{KwInit}{Initialize}
\SetKwInput{KwOutput}{Output}
\SetKwInput{KwReturn}{Return}
\SetKwInput{KwSet}{Set}

\def\[#1\]{{$#1$}}
\def\b#1{{\mathbf{#1}}}
\def\c#1{{\mathcal{#1}}}

\def\##1\#{\begin{align}#1\end{align}}
\def\$#1\${\begin{align*}#1\end{align*}}

\newcommand{\remove}[1]{}

\usepackage{cancel,soul}

\usepackage{setspace}

\begin{document}

\title{NOODL: Provable Online Dictionary Learning \\ and Sparse Coding \vspace{-12pt}}

\author{\normalsize Sirisha Rambhatla\[^\dagger\], Xingguo Li\[^\ddagger\], and Jarvis Haupt\[^\dagger\]\\ \[^\dagger\]Dept. of Electrical and Computer Engineering, University of Minnesota -- Twin Cities, USA\\ \[^\ddagger\]Dept. of Computer Science, Princeton University, Princeton, NJ, USA \\{\small ~~Email: \texttt{rambh002@umn.edu},  \texttt{xingguol@cs.princeton.edu}, and \texttt{jdhaupt@umn.edu}}\vspace{-14pt}}

\date{}

\maketitle

\begin{abstract}
	\vspace{-5pt}
We consider the dictionary learning problem, where the aim is to model the given data as a linear combination of a few columns of a matrix known as a \textit{dictionary}, where the sparse weights forming the linear combination are known as \textit{coefficients}. Since the dictionary and coefficients, parameterizing the linear model are unknown, the corresponding optimization is inherently non-convex. This was a major challenge until recently, when provable algorithms for dictionary learning were proposed. Yet, these provide guarantees only on the recovery of the dictionary, without explicit recovery guarantees on the coefficients. Moreover, any estimation error in the dictionary adversely impacts the ability to successfully localize and estimate the coefficients. This potentially limits the utility of existing provable dictionary learning methods in applications where coefficient recovery is of interest. To this end, we develop NOODL: a simple Neurally plausible alternating Optimization-based Online Dictionary Learning algorithm, which recovers \textit{both} the dictionary and coefficients \textit{exactly} at a geometric rate, when initialized appropriately. Our algorithm, NOODL, is also scalable and amenable for large scale distributed implementations in neural architectures, by which we mean that it only involves simple linear and non-linear operations. Finally, we corroborate these theoretical results via experimental evaluation of the proposed algorithm with the current state-of-the-art techniques.

\end{abstract}

\vspace{-12pt}
\section{Introduction}
\vspace{-4pt}

Sparse models avoid overfitting by favoring simple yet highly expressive representations. Since signals of interest may not be inherently sparse, expressing them as a sparse linear combination of a few columns of a dictionary is used to exploit the sparsity properties. Of specific interest are overcomplete dictionaries, since they provide a flexible way of capturing the richness of a dataset, while yielding sparse representations that are robust to noise; see \cite{Mallat1993,Chen1998,Donoho2006}. 
In practice however, these dictionaries may not be known, warranting a need to {learn} such representations -- known as \textit{dictionary learning} (DL) or \textit{sparse coding} \citep{Olshausen97}. 
Formally, this entails learning an \textit{a priori} unknown dictionary \[\b{A} \in \RR^{n \times m}\] and sparse coefficients \[\b{x}_{(j)}^* \in \RR^{m}\] from data samples \[\b{y}_{(j)} \in \RR^{n}\] generated as
\vspace{-2pt}
\begin{align}\label{eq:model}
\b{y}_{(j)} = \b{A}^*\b{x}_{(j)}^*,~\|\b{x}_{(j)}^*\|_0 \leq k~~\text{for all}~~j = 1, 2, \dots \vspace{-2pt}
\end{align}
This particular model can also be viewed as an extension of the low-rank model \citep{Pearson1901}. Here, instead of sharing a low-dimensional structure, each data vector can now reside in a separate low-dimensional subspace. Therefore, together the data matrix admits a \textit{union-of-subspace} model. As a result of this additional flexibility, DL finds applications in a wide range of signal processing and machine learning tasks, such as denoising \citep{Elad2006}, image inpainting \citep{Mairal09}, clustering and classification \citep{Ramirez2010, Rambhatla2013sbmca,Rambhatla2016DRPCA, Rambhatla17DRPCAApp, Rambhatla18LrApp,Rambhatla18LrTheo}, and analysis of deep learning primitives \citep{Ranzato2007,Gregor2010}; see also \cite{Elad2010}, and references therein.

Notwithstanding the non-convexity of the associated optimization problems (since both factors are unknown),  alternating minimization-based dictionary learning techniques have enjoyed significant success in practice. Popular heuristics include regularized least squares-based \citep{Olshausen97,Lee2007,Mairal09, Lewicki2000, Kreutz2003}, and greedy approaches such as the method of optimal directions (MOD) \citep{Engan1999} and k-SVD \citep{ Aharon2006}. However, dictionary learning, and matrix factorization models in general, are difficult to analyze in theory; see also \cite{li2016symmetry}.


To this end, motivated from a string of recent theoretical works \citep{Remi2010, Jenatton2012, Geng2014}, provable algorithms for DL have been proposed recently to explain the success of aforementioned alternating minimization-based algorithms \citep{Agarwal14, Arora14, Arora15}. However, these works exclusively focus on guarantees for dictionary recovery. 
On the other hand, for applications of DL in tasks such as classification and clustering -- which rely on coefficient recovery -- it is crucial to have guarantees on coefficients recovery as well. 

Contrary to conventional prescription, a sparse approximation step after recovery of the dictionary does not help; since any error in the dictionary -- which leads to an error-in-variables (EIV) \citep{Fuller2009} model for the dictionary -- degrades our ability to even recover the support of the coefficients \citep{Wainwright2009}. Further, when this error is non-negligible, the existing results guarantee recovery of the sparse coefficients only in $\ell_2$-norm sense \citep{Donoho2006}. As a result, there is a need for scalable dictionary learning techniques with guaranteed recovery of both factors.

\vspace{-3pt}
\subsection{Summary of Our Contributions}
\vspace{-3pt}


In this work,  we present a simple online DL algorithm motivated from the following regularized least squares-based problem, where \[S(\cdot)\] is a nonlinear function that promotes sparsity.
\vspace*{-2pt}
\begin{align}\tag{P1}\label{eq:objective}
\textstyle \min\limits_{\b{A}, \cbr{\b{x}_{(j)}}_{j=1}^p } \sum\limits_{j = 1}^{p}\|\b{y}_{(j)} - \b{A}\b{x}_{(j)}\|^2_2 + \sum\limits_{j=1}^{p}S(\b{x}_{(j)}).\vspace*{-2pt}
\end{align}
 Although our algorithm does not optimize this objective, it leverages the fact that the problem \eqref{eq:objective} is convex w.r.t \[\b{A}\], given the sparse coefficients \[\{\b{x}_{(j)}\}\]. Following this, we recover the dictionary by choosing an appropriate gradient descent-based strategy \citep{Arora15, Engan1999}. 
To recover the coefficients, we develop an iterative hard thresholding (IHT)-based update step \citep{Haupt2006, Blumensath09}, and show that -- given an appropriate initial estimate of the dictionary and a mini-batch of $p$ data samples at each iteration \[t\] of the online algorithm -- alternating between this IHT-based update for coefficients, and a gradient descent-based step for the dictionary leads to geometric convergence to the true factors, i.e., \[\hspace{-1pt}\b{x}_{(j)}\hspace{-2pt}{\tiny\rightarrow}\b{x}_{(j)}^*\hspace{-2pt}\] and \[\b{A}^{(t)}_i{\hspace{-2pt}\tiny\rightarrow}\b{A}^*_i\hspace{-2pt}\] as \[\hspace{-1pt}t{\tiny\rightarrow}\hspace{-0pt}\infty\].



In addition to achieving exact recovery of both factors, our algorithm -- Neurally plausible alternating Optimization-based Online Dictionary Learning (NOODL) -- has linear convergence properties. Furthermore, it is scalable, and involves simple operations, making it an attractive choice for practical DL applications. Our major contributions are summarized as follows:
\vspace{-3pt}
\begin{itemize}[leftmargin=*]
\setlength{\itemsep}{-0.1pt}
\item \textbf{Provable coefficient recovery:} To the best of our knowledge, this is the first result on \textit{exact} recovery of the sparse coefficients \[\{\b{x}^*_{(j)}\}\], including their support recovery, for the  DL problem. The proposed IHT-based strategy to update coefficient under the EIV model, is of independent interest for recovery of the sparse coefficients via IHT, which is challenging even when the dictionary is known; see also \cite{Yuan2016} and \cite{li2016stochastic}.
\item \textbf{Unbiased estimation of factors and linear convergence:} The recovery guarantees on the coefficients also helps us to get rid of the bias incurred by the prior-art in dictionary estimation. Furthermore, our technique geometrically converges to the true factors.

\item \textbf{Online nature and neural implementation:}  The online nature of algorithm, makes it suitable for machine learning applications with streaming data. 
In addition, the separability of the coefficient update allows for distributed implementations in neural architectures (only involves simple linear and non-linear operations) to solve large-scale problems. To showcase this, we also present a prototype neural implementation of NOODL.
\end{itemize}
\vspace{-2pt}
In addition, we also verify these theoretical properties of NOODL through experimental evaluations on synthetic data, and compare its performance with state-of-the-art provable DL techniques.

\vspace{-3pt}
\subsection{Related Works}
\vspace{-3pt}


With the success of the alternating minimization-based techniques in practice, a push to study the DL problem began when \cite{Remi2010} showed that for \[m = n\], the solution pair \[\rbr{\b{A}^*, \b{X}^*}\] lies at a local minima of the following non-convex optimization program, where \[\b{X}=[\b{x}_{(1)}, \b{x}_{(2)}, \dots,\b{x}_{(p)} ]\] and \[\b{Y}=[\b{y}_{(1)}, \b{y}_{(2)}, \dots,\b{y}_{(p)} ]\], with high probability over the randomness of the coefficients,
\begin{align}\label{eq:alt_min_obj}
\underset{\b{A}, \b{X}}{\rm min}~\|\b{X}\|_1 ~~~\text{s.t.}~ \b{Y} = \b{AX}, ~~~\|\b{A}_i\| = 1, \forall ~i \in [m].
\end{align}
Following this, \cite{Geng2014} and \cite{Jenatton2012} extended these results to the overcomplete case (\[n<m\]), and the noisy case, respectively. Concurrently, \cite{Jung2014, Jung2016} studied the nature of the DL problem for \[S(\cdot) = \|\cdot\|_1\] (in \eqref{eq:objective}), and derived a lower-bound on the minimax risk of the DL problem. However, these works do not provide any algorithms for DL.

\begin{table}[!t]
\vspace{-5pt}
\caption{Comparison of provable algorithms for dictionary learning. }\label{tab:compare}
\vspace{-5pt}
\renewcommand{\arraystretch}{1.2}
\resizebox{\columnwidth}{!}{
\begin{threeparttable}[h]
\begin{tabular}{c|c|c|c|c|c}
\Xhline{1pt}
\multirow{3}{*}{\textbf{Method}}&\multicolumn{3}{c|}{\textbf{Conditions}}& \multicolumn{2}{c}{\textbf{Recovery Guarantees} }\\ \cline{2-6}
& {\small\textbf{Initial Gap of} }&  {\small\textbf{Maximum} }& {\small\textbf{Sample} }& \multirow{2}{*}{\small\textbf{Dictionary}} & \multirow{2}{*}{\small\textbf{Coefficients}} \vspace{-1pt}\\ 
& {\small \bf Dictionary} & {\small\bf Sparsity} & {\small\bf Complexity} & & \\ \hline
NOODL (this work) & \multirow{3}{*}{\[\mathcal{O}^*\rbr{\tfrac{1}{\log(n)}} \]}  & 
\multirow{3}{*}{\[\mathcal{O}^*\rbr{\tfrac{\sqrt{n}}{\mu \log(n)}} \]}& \[\tilde{\Omega}\rbr{mk^2}\] & No bias & No bias\\ \cline{1-1}\cline{4-6}
\texttt{Arora15(``biased'')}\tnote{$\dagger$}&   &   & \[\tilde{\Omega}\rbr{mk}\] &  \[\mathcal{O}(\sqrt{{k}/{n}})\] & N/A\\ \cline{1-1}\cline{4-6}
\texttt{Arora15(``unbiased'')}\tnote{$\dagger$}&  &   & \[{\rm poly}(m)\] &  \textit{Negligible} bias \[^\S\]& N/A \\ \hline
\cite{Barak2015}\[^\P\] &N/A & \[\c{O}(m^{(1-\delta)})\] for \[\delta>0\]  &  \[n^{O{(d)}}/{\rm poly}{(k/m)}\]& \[\epsilon\]&N/A  \\\hline
\cite{Agarwal14}{$^\ddagger$} & \[\mathcal{O}^*\rbr{{1}/{\rm poly(m)}}\] &  \[\mathcal{O}\rbr{{\sqrt[6]{n}}/{\mu}} \] & \[\Omega(m^2)\]&  No bias & N/A \\\hline
\cite{Spielman2012} \[(\text{for}~n\leq m)\]& N/A&  \[\c{O}(\sqrt{n})\] &\[\tilde{\Omega}(n^2)\] & No bias&N/A   \\
\Xhline{1pt}
\end{tabular}
\begin{tablenotes}
\item[] Dictionary recovery reported in terms of column-wise error. $\dagger$ See Section~\ref{sec:exp} for description. $\ddagger$ This procedure is not \textit{online}. \[^\S\] The bias is not explicitly quantified. The authors claim it will be \textit{negligible}. \[^\P\] Here, \[d = \Omega(\tfrac{1}{\epsilon} \log(m/n))\] for column-wise error of \[\epsilon\].
\end{tablenotes}
\end{threeparttable}}%
\vspace{-5pt}
\end{table}
Motivated from these theoretical advances, \cite{Spielman2012} proposed an algorithm for the under-complete case \[n\geq m\] that works up-to a sparsity of \[k = O(\sqrt{n})\]. Later, \cite{Agarwal14} and \cite{Arora14} proposed clustering-based provable algorithms for the overcomplete setting, motivated from MOD  \citep{Engan1999} and k-SVD \citep{Aharon2006}, respectively. 
Here, in addition to requiring stringent conditions on dictionary initialization, \cite{Agarwal14} alternates between solving a quadratic program for coefficients and an MOD-like \citep{Engan1999} update for the dictionary, which is too expensive in practice. Recently, a DL algorithm that works for almost linear sparsity was proposed by \cite{Barak2015}; however, as shown in Table~\ref{tab:compare}, this algorithm may result in exponential running time. Finally, \cite{Arora15} proposed a provable online DL algorithm, which provided improvements on initialization, sparsity, and sample complexity, and is closely related to our work. A follow-up work by \cite{Chatterji2017} extends this to random initializations while recovering the dictionary exactly, however the effect described therein kicks-in only in very high dimensions. We summarize the relevant provable DL techniques in Table~\ref{tab:compare}.

The algorithms discussed above implicitly assume that the coefficients can be recovered, after dictionary recovery, via some sparse approximation technique.  However, as alluded to earlier, the guarantees for coefficient recovery -- when the dictionary is known approximately -- may be limited to some \[\ell_2\] norm bounds \citep{Donoho2006}. This means that, the resulting coefficient estimates may not even be sparse.  Therefore, for practical applications, there is a need for efficient online algorithms with guarantees, which serves as the primary motivation for our work.

{\let\thefootnote\relax\footnote{\noindent\textbf{Notation.} Given an integer $n$, we denote $[n] = \cbr{1, 2,\ldots,n}$. The bold upper-case and lower-case letters are used to denote matrices \[\b{M}\] and vectors \[\b{v}\], respectively. \[\b{M}_i\], \[\b{M}_{(i,:)}\], \[\b{M}_{ij}\], and \[\b{v}_i\] (and \[\b{v} (i)\]) denote the \[i\]-th column, \[i\]-th row, \[(i,j)\] element of a matrix, and \[i\]-th element of a vector, respectively.  The superscript \[(\cdot)^{(n)}\] denotes the \[n\]-th iterate, while the subscript \[(\cdot)_{(n)}\] is reserved for the \[n\]-th data sample. Given a matrix $\Mb$, we use \[\|\b{M}\|\] and \[\|\b{M}\|_{\Fr}\] as the spectral norm and Frobenius norm. Given a vector $\vb$, we use \[\|\b{v}\|\], \[\|\b{v}\|_0\], and \[\|\b{v}\|_1\] to denote the \[\ell_2\] norm, \[\ell_0\] (number of non-zero entries), and \[\ell_1\] norm, respectively. We also use standard notations \[\cO(\cdot), \Omega(\cdot)\] (\[\tilde{\cO}(\cdot), \tilde{\Omega}(\cdot)\]) to indicate the asymptotic behavior (ignoring logarithmic factors). Further, we use \[g(n) = \mathcal{O}^*(f(n))\] to indicate that \[g(n) \leq L f(n)\] for a small enough constant \[L\], which is independent of \[n\].  We use \[c(\cdot)\] for constants parameterized by the quantities in \[(\cdot)\]. 
\[\HT_{\tau}(z) := z\cdot\mathbbm{1}_{|z|\geq \tau}\] denotes the hard-thresholding operator, where ``\[\mathbbm{1}\]'' is the indicator function. We use \[\supp(\cdot)\] for the support (the set of non-zero elements) and \[\sgn(\cdot)\] for the element-wise sign.}}
\vspace*{-20pt}
\section{Algorithm}\label{sec:algorithm}
\vspace{-5pt} 
We now detail the specifics of our algorithm -- NOODL, outlined in Algorithm~\ref{alg:main_alg}. NOODL recovers both the dictionary and the coefficients exactly given an appropriate initial estimate \[\b{A}^{(0)}\] of the dictionary. Specifically, it requires \[\b{A}^{(0)}\] to be \[(\epsilon_0, 2)\]-close to \[\b{A}^*\] for \[\epsilon_0 = \mathcal{O}^*(1/\log(n))\], where \[(\epsilon, \kappa)\]-closeness is defined as follows.  This implies that, the initial dictionary estimate needs to be column-wise, and in spectral norm sense, close to \[\b{A}^*\], which can be achieved via certain initialization algorithms, such as those presented in \cite{Arora15}. 
\vspace{-1pt} 
\begin{definition}[\[(\epsilon, \kappa)\]-closeness]\label{def:del_kappa}
\textit{A dictionary \[\b{A}\] is \[(\epsilon, \kappa)\]-close to \[\b{A}^*\] if \[\|\b{A} - \b{A}^*\| \leq \kappa\|\b{A}^*\|\], and if there is a permutation \[\pi: [m] \rightarrow [m]\] and a collection of signs \[\sigma: [m] \rightarrow \{\pm 1\}\] such that \[\|\sigma(i)\b{A}_{\pi(i)} - \b{A}^*_i\|\leq \epsilon, ~\forall~ i\in [m]\].}\vspace{-3pt}
\end{definition}
\setlength{\textfloatsep}{11pt}
\begin{algorithm}[h]
\setstretch{1}
\caption{\small NOODL: Neurally plausible alternating Optimization-based Online Dictionary Learning.}\label{alg:main_alg}
\SetAlgoLined
\KwInput{Fresh data samples \[\b{y}_{(j)} \in \mathbb{R}^{n} \] for \[j \in [p]\] at each iteration \[t\] generated as per \eqref {eq:model}, where \[|\b{x}^*_i|\geq C\] for \[i \in \supp(\b{x}^*)\]. Parameters \[\eta_A\], \[\eta_x^{(r)}\] and \[\tau^{(r)}\] chosen as per \ref{assumption:step dict} and \ref{assumption:step coeff}. No. of iterations \[T = \Omega(\log(1/\epsilon_{T}))\] and \[R = \Omega(\log(1/\delta_{R}))\], for target tolerances \[\epsilon_{T}\] and \[\delta_{R}\].}
\KwOutput{The dictionary \[\b{A}^{(t)}\] and coefficient estimates \[\hat{\b{x}}_{(j)}^{(t)} \] for \[j \in [p]\] at each iterate \[t\]. }
\KwInit{Estimate \[\b{A}^{(0)}\], which is \[(\epsilon_0, 2)\]-near to \[\b{A}^*\] for \[\epsilon_0 = \mathcal{O}^*(1/\log(n))\]}
\For{\[t=0\] \textbf{to} \[T-1\]}{
\textbf{Predict: (Estimate Coefficients)}\\
\For{\[j = 1\] \textbf{to} \[p\]}{\vspace*{-7pt}
\begin{flalign}\label{alg:coeff_init}
&\hspace{-0.06in}\textbf{Initialize:}~~\b{x}^{(0)}_{(j)} = \HT_{C/2}(\b{A}^{(t)^\top}\b{y}_{(j)})& 
\end{flalign}\\[-6pt]
	\For{\[r = 0\]  \textbf{to} \[R-1\]}{\vspace{-5pt}
	\begin{flalign}\label{alg:coeff_iht}
	&\textbf{Update:}~~\b{x}^{(r+1)}_{(j)} = \HT_{\tau^{(r)}}(\b{x}^{(r)}_{(j)} - \eta_{x}^{(r)}~\b{A}^{(t)^\top}(\b{A}^{(t)}\b{x}^{(r)}_{(j)} - \b{y}_{(j)}))&
	\end{flalign}\vspace{-14pt}}}
\vspace{-1pt}
\[\hat{\b{x}}_{(j)}^{(t)}  :=\b{x}^{(R)}_{(j)} \] for \[j \in [p]\]\\
\textbf{Learn: (Update Dictionary)}\vspace{-8pt}
\begin{flalign}\label{eq:emp_grad_est}
&\hspace{0.14in}\text{Form empirical gradient estimate:}~~\hat{\b{g}}^{(t)} = \tfrac{1}{p}\textstyle\sum_{j = 1}^{p}(\b{A}^{(t)}\hat{\b{x}}_{(j)}^{(t)}   - \b{y}_{(j)})\sgn(\hat{\b{x}}_{(j)}^{(t)}  )^\top&
\end{flalign}\vspace{-20pt}
\begin{flalign}\label{eq:apx_grad}
&\hspace{0.14in}\text{Take a gradient descent step:}~~\b{A}^{(t+1)} = \b{A}^{(t)} - \eta_A~\hat{\b{g}}^{(t)}&
\end{flalign}\vspace{-20pt}
\begin{flalign*}
&\hspace{0.14in}\text{Normalize:}~~\b{A}^{(t+1)}_i = \b{A}^{(t+1)}_i/\|\b{A}^{(t+1)}_i\| ~\forall~i \in [m]&
\end{flalign*}\vspace{-18pt}
}
\end{algorithm}

Due to the streaming nature of the incoming data, NOODL takes a mini-batch of \[p\] data samples at the \[t\]-th iteration of the algorithm, as shown in Algorithm~\ref{alg:main_alg}. It then proceeds by alternating between two update stages: coefficient estimation (``Predict'') and dictionary update (``Learn'') as follows. 

\noindent\textbf{Predict Stage}: 
For a general data sample \[\b{y} = \b{A}^*\b{x}^*\], the algorithm begins by forming an initial coefficient estimate \[\b{x}^{(0)}\] based on a hard thresholding (HT) step as shown in \eqref{alg:coeff_init}, where \[\HT_{\tau}(z) := z\cdot\mathbbm{1}_{|z|\geq \tau}\] for a vector \[\b{z}\].
Given this initial estimate $\b{x}^{(0)}$, the algorithm iterates over  \[R= \Omega(\log(1/\delta_{R}))\] IHT-based steps \eqref{alg:coeff_iht} to achieve a target tolerance of \[\delta_{R}\], such that \[(1 - \eta_{x})^{R} \leq \delta_{R}\]. 
Here, \[\eta_{x}^{(r)}\] is the learning rate, and \[\tau^{(r)}\] is the threshold at the \[r\]-th iterate of the IHT. In practice, these can be fixed to some constants for all iterations; see \ref{assumption:step coeff} for details. 
Finally at the end of this stage, we have estimate \[\hat{\b{x}}^{(t)} :=\b{x}^{(R)}\] of \[\b{x}^*\].

\noindent\textbf{Learn Stage:} Using this estimate of the coefficients, we update the dictionary at \[t\]-th iteration \[\b{A}^{(t)}\] by an approximate gradient descent step \eqref{eq:apx_grad}, using the empirical gradient estimate \eqref{eq:emp_grad_est} and the learning rate \[\eta_A = \Theta(m/k)\]; see also \ref{assumption:step dict}.
Finally, we normalize the columns of the dictionary and continue to the next batch.  The running time of each step \[t\] of NOODL is therefore \[\mathcal{O}(mnp \log(1/\delta_{R}))\]. For a target tolerance of \[\epsilon_{T} \] and \[\delta_T\], such that \[ \|\b{A}_i^{(T)} -\b{A}_i^*\| \leq \epsilon_{T} , \forall i\in [m]\] and \[|\hat{\b{x}}_{i}^{(T)} - \b{x}_{i}^*| \leq \delta_T\] we choose \[T=\max(\Omega(\log(1/\epsilon_{T})), \Omega(\log(\sqrt{k}/\delta_{T})))\].


NOODL uses an initial HT step and an approximate gradient descent-based strategy as in \cite{Arora15}. Following which, our IHT-based coefficient update step yields an estimate of the coefficients at each iteration of the online algorithm. Coupled with the guaranteed progress made on the dictionary, this also removes the bias in dictionary estimation. Further, the simultaneous recovery of both factors also avoids an often expensive post-processing step for recovery of the coefficients.

\vspace*{-3pt}
\section{Main Result} \label{sec:analysis}
\vspace*{-3pt}

We start by introducing a few important definitions. First, as discussed in the previous section we require that the initial estimate \[\b{A}^{(0)}\] of the dictionary is \[(\epsilon_0, 2)\]-close to \[\b{A}^*\]. In fact, we require this closeness property to hold at each subsequent iteration \[t\], which is a key ingredient in our analysis. This initialization achieves two goals. First, the \[\|\sigma(i)\b{A}_{\pi(i)} - \b{A}^*_i\|\leq \epsilon_0\] condition ensures that the signed-support of the coefficients are recovered correctly (with high probability) by the hard thresholding-based coefficient initialization step, where signed-support is defined as follows.
\begin{definition}\label{def:signed-support}
\textit{The signed-support of a vector \[\b{x}\] is defined as \[\sgn(\b{x})\cdot \supp(\b{x})\].}
\end{definition}
\vspace{-1pt}
Next, the \[\|\b{A} - \b{A}^*\| \leq 2\|\b{A}^*\|\] condition keeps the dictionary estimates close to \[\b{A}^*\] and is used in our analysis to ensure that the gradient direction \eqref{eq:emp_grad_est} makes progress. Further, in our analysis, we ensure \[\epsilon_t\] (defined as \[\|\b{A}_i^{(t)} - \b{A}^*_i\| \leq \epsilon_t\]) contracts at every iteration, and assume \[\epsilon_0, \epsilon_t = \mathcal{O}^*(1/\log(n))\]. 
Also, we assume that the dictionary \[\b{A}\] is fixed (deterministic) and \[\mu\]-incoherent, defined as follows.
\begin{definition} \label{def:mu}
	\textit{ A matrix \[\b{A} \in \mathbb{R}^{n \times m}\] with unit-norm columns is \[\mu\]-incoherent if for all \[{i \neq j}\] the inner-product between the columns of the matrix follow \[ |\langle \b{A}_i, \b{A}_j\rangle| \leq \mu/\sqrt{n}\].}
\end{definition}%
\vspace{-1pt}  
The incoherence parameter measures the degree of closeness of the dictionary elements. Smaller values (i.e., close to \[0\]) of \[\mu\] are preferred, since they indicate that the dictionary elements do not resemble each other. This helps us to effectively tell dictionary elements apart \citep{Donoho2001, Candes2007}. We assume that \[\mu = \c{O}(\log(n))\] \citep{Donoho2001}. 
Next, we assume that the coefficients are drawn from a distribution class \[\c{D}\] defined as follows.
\begin{definition}[Distribution class \[\c{D}\]]\label{dist_x} 
	\textit{The coefficient vector \[\b{x}^*\] belongs to an unknown distribution \[\c{D}\], where the support \[S = \supp(\b{x}^*)\] is at most of size \[k\], \[\b{Pr}[i \in S] = \Theta(k/m)\] and \[\b{Pr}[i, j \in S] = \Theta(k^2/m^2)\]. Moreover, the distribution is normalized such that \[\b{E}[\b{x}_i^*|i \in S] = 0\] and \[\b{E}[\b{x}_i^{*^2}|i \in S] = 1\], and when \[i \in S\],  \[|\b{x}^*_i| \geq C\] for some constant \[C \leq 1\]. In addition, the non-zero entries are sub-Gaussian and pairwise independent conditioned on the support.}
\end{definition}
\vspace{-1pt}  

The randomness of the coefficient is necessary for our finite sample analysis of the convergence. Here, there are two sources of randomness. The first is the randomness of the support, where the non-zero elements are assumed to pair-wise independent. The second is the value an element in the support takes, which is assumed to be zero mean with variance one, and bounded in magnitude. Similar conditions are also required for support recovery of sparse coefficients, even when the dictionary is known \citep{Wainwright2009, Yuan2016}.
Note that, although we only consider the case \[|\b{x}^*_i | \geq C\] for ease of discussion, analogous results may hold more generally for \[\b{x}_i^*\]’s drawn from a distribution with sufficiently (exponentially) small probability of taking values in \[[-C, C]\].

Recall that, given the coefficients, we recover the dictionary by making progress on the least squares objective \eqref{eq:objective} (ignoring the term penalizing \[S(\cdot)\]). Note that, our algorithm is based on finding an appropriate direction to ensure descent than the geometry of the objective. To this end, we adopt a gradient descent-based strategy for dictionary update. However, since the coefficients are not exactly known, this results in an approximate gradient descent-based approach, where the empirical
gradient estimate is formed as \eqref{eq:emp_grad_est}. In our analysis, we establish the conditions under which both the empirical gradient vector (corresponding to each dictionary element) and the gradient matrix concentrate around their means. To ensure progress at each iterate \[t\], we show that the expected gradient vector is \[(\Omega(k/m), \Omega(m/k),0 )\]-correlated with the descent direction, defined as follows.
\begin{definition}\label{def:grad_alpha_beta}
	\textit{A vector \[\b{g}^{(t)}\] is  \[(\rho_{-}, \rho_{_+}, \zeta_t)\]-correlated with a vector \[\b{z}^*\] if 
	\vspace{-3pt}
	\begin{align*}
	\langle \b{g}^{(t)}, \b{z}^{(t)} - \b{z}^{*}\rangle \geq \rho_{-}\|\b{z}^{(t)} - \b{z}^{*}\|^2 + \rho_{+} \|\b{g}^{(t)}\|^2 - \zeta_{t}.
	\end{align*}}
	\vspace{-18pt}
\end{definition}
This can be viewed as a local descent condition which leads to the true dictionary columns; see also \cite{Candes2015}, \cite{Chen2015} and \cite{Arora15}. In convex optimization literature, this condition is implied by the \[2\rho_{-}\]-strong convexity, and \[1/2\rho_{_+}\]-smoothness of the objective.  We show that for NOODL, \[\zeta_{t}=0\], which facilitates linear convergence to \[\b{A}^*\] without incurring any bias.
Overall our specific model assumptions for the analysis can be formalized as:
\vspace{-4pt}
\begin{enumerate}[label=\textbf{A.\arabic*}, ref=\textbf{A.\arabic*}, leftmargin=*]
\setlength{\itemsep}{-1pt}
\item \label{assumption:mu} \[\b{A}^*\] is \[\mu\]-incoherent (Def.~\ref{def:mu}), where \[\mu = \mathcal{O}(\log(n))\], \[\|\b{A}^*\| = \mathcal{O}(\sqrt{m/n})\] and \[m = \mathcal{O}(n)\];
\item \label{assumption:dist}The coefficients are drawn from the distribution class \[\c{D}\], as per Def.~\ref{dist_x};
\item \label{assumption:k}The sparsity \[k\] satisfies \[k = \mathcal{O}({\sqrt{n}}/{\mu~\log(n)})\];
\item \label{assumption:close} \[\b{A}^{(0)}\] is \[(\epsilon_0, 2)\]-close to \[\b{A}^*\]  as per Def.~\ref{def:del_kappa}, and \[\epsilon_0 = \mathcal{O}^*(1/\log(n))\];
\item \label{assumption:step dict}The step-size for dictionary update satisfies \[\eta_A = \Theta(m/k)\];
\item \label{assumption:step coeff}The step-size and threshold for coefficient estimation satisfies \[\eta_x^{(r)}< c_1(\epsilon_t, \mu, n, k) = \tilde{\Omega}({k}/{\sqrt{n}})<1\] and \[\tau^{(r)} = c_2(\epsilon_t, \mu, k, n) = \tilde{\Omega}({k^2}/{n})\] for small constants \[c_1\] and \[c_2\].
\vspace{-3pt}
\end{enumerate}

We are now ready to state our main result. A summary of the notation followed by a details of the analysis is provided in Appendix~\ref{app:summary_notation} and Appendix~\ref{app:pf_main}, respectively.

\begin{theorem}[Main Result]\label{main_result}\textit{Suppose that assumptions \ref{assumption:mu}-\ref{assumption:step coeff}  hold, and Algorithm~\ref{alg:main_alg} is provided with  \[ p = \tilde{\Omega}(mk^2)\] new samples generated according to model \eqref{eq:model} at each iteration \[t\]. 
Then,  with probability at least \[(1 - \delta_{\text{alg}}^{(t)})\] for some small constant \[\delta_{\text{alg}}^{(t)}\], given \[R = \Omega({\rm log}(n))\], the coefficient estimate \[\hat{\b{x}}_{i}^{(t)}\] at \[t\]-th iteration has the correct signed-support and satisfies
\vspace{-2pt}
\begin{align*}
(\hat{\b{x}}_{i}^{(t)} - \b{x}_{i}^*)^2 
&= \mathcal{O}(k(1 - \omega)^{t/2}\|\b{A}_i^{(0)} - \b{A}_i^*\|), ~\text{for all}~i \in \supp({\b{x}^*}).\vspace{-2pt}
\end{align*} 
Furthermore, for some \[0 < \omega < 1/2\], the estimate \[\b{A}^{(t)}\] at \[(t)\]-th iteration satisfies 
\vspace{-2pt}
\begin{align*}
\|\b{A}_i^{(t)} - \b{A}_i^*\|^2 \leq (1 - \omega)^t\|\b{A}_i^{(0)} - \b{A}_i^*\|^2,~\text{for all}~t = 1,2,\ldots.\vspace{-2pt} .
\end{align*}
}
\end{theorem}
\vspace{-3pt}
Our main result establishes that when the model satisfies \ref{assumption:mu}\[\sim\]\ref{assumption:k}, the errors corresponding to the dictionary and coefficients geometrically decrease to the true model parameters, given appropriate dictionary initialization and learning parameters (step sizes and threshold); see \ref{assumption:close}\[\sim\]\ref{assumption:step coeff}. In other words, to attain a target tolerance of  \[\epsilon_{T}\] and \[\delta_T\], where \[\|\b{A}_i^{(T)} -\b{A}_i^*\| \leq \epsilon_{T}\], \[|\hat{\b{x}}_{i}^{(T)} - \b{x}_{i}^*| \leq \delta_T\], we require \[T=\max(\Omega(\log(1/\epsilon_{T})), \Omega(\log(\sqrt{k}/\delta_{T})))\] outer iterations and \[R= \Omega(\log(1/\delta_{R}))\] IHT steps per outer iteration. Here, \[\delta_{R}\geq (1 - \eta_{x})^{R} \] is the target decay tolerance for the IHT steps. An appropriate number of IHT steps, \[R\], remove the dependence of final coefficient error (per outer iteration) on the initial \[\b{x}^{(0)}\]. In \cite{Arora15}, this dependence in fact results in an irreducible error, which is the source of bias in dictionary estimation. As a result, since  (for NOODL) the error in the coefficients only depends on the error in the dictionary, it can be made arbitrarily small, at a geometric rate, by the choice of \[ \epsilon_{T}\], \[\delta_T\], and \[\delta_R\].
Also, note that, NOODL can tolerate i.i.d. noise, as long as the noise variance is controlled to enable the concentration results to hold; we consider the noiseless case here for ease of discussion, which is already highly involved.

Intuitively, Theorem~\ref{main_result} highlights the symbiotic relationship between the two factors. It shows that, to make progress on one, it is imperative to make progress on the other. The primary condition that allows us to make progress on both factors is the signed-support recovery (Def.~\ref{def:signed-support}). However, the introduction of IHT step adds complexity in the analysis of both the dictionary and coefficients. To analyze the coefficients, in addition to deriving conditions on the parameters to preserve the correct signed-support, we analyze the recursive IHT update step, and decompose the noise term into a component that depends on the error in the dictionary, and the other that depends on the initial coefficient estimate. For the dictionary update, we analyze the interactions between elements of the coefficient vector (introduces by the IHT-based update step) and show that the gradient vector for the dictionary update is \[(\Omega(k/m), \Omega(m/k),0 )\]-correlated with the descent direction. In the end, this leads to exact recovery of the coefficients and removal of bias in the dictionary estimation. Note that our analysis pipeline is standard for the convergence analysis for iterative algorithms. However, the introduction of the IHT-based strategy for coefficient update makes the analysis highly involved as compared to existing results, e.g., the simple HT-based coefficient estimate in \cite{Arora15}.

NOODL has an overall running time of \[\mathcal{O}(mnp \log(1/\delta_{R})\max(\log(1/\epsilon_{T}), \log(\sqrt{k}/\delta_T))\] to achieve target tolerances \[\epsilon_{T} \]  and \[\delta_T\], with a total sample complexity of \[ p \cdot T  = \tilde{\Omega}(mk^2)\]. Thus to remove bias, the IHT-based coefficient update introduces a factor of \[\log(1/\delta_{R})\] in the computational complexity as compared to \cite{Arora15} (has a total sample complexity of \[ p\cdot T = \tilde{\Omega}(mk)\]), and also does not have the exponential running time and sample complexity as \cite{Barak2015}; see Table~\ref{tab:compare}.

\vspace*{-3pt}
\section{Neural implementation of NOODL}\label{app:neural}
\vspace{-3pt}
The neural plausibility of our algorithm implies that it can be implemented as a neural network. This is because, NOODL employs simple linear and non-linear operations (such as inner-product and hard-thresholding) and the coefficient updates are separable across data samples, as shown in \eqref{alg:coeff_iht} of Algorithm~\ref{alg:main_alg}. To this end, we present a neural implementation of our algorithm in Fig.~\ref{fig:neural_imple}, which showcases the applicability of NOODL in large-scale distributed learning tasks, motivated from the implementations described in \citep{Olshausen97} and \citep{Arora15}. 

The neural architecture shown in Fig.~\ref{fig:neural_imple}(a) has three layers -- input layer, weighted residual evaluation layer, and the output layer.  The input to the network is a data and step-size pair \[(\b{y}_{(j)},\eta_x)\] to each input node. Given an input, the second layer evaluates the weighted residuals as shown in Fig.~\ref{fig:neural_imple}. Finally, the output layer neurons evaluate the IHT iterates \[\b{x}_{(j)}^{(r+1)}\] \eqref{alg:coeff_iht}. We illustrate the operation of this architecture using the timing diagram in Fig.~\ref{fig:neural_imple}(b). The main stages of operation are as follows.

\begin{figure}[!t]
\vspace{-10pt}
\centering
\begin{minipage}{0.55\textwidth}
\centering
\includegraphics[width=\textwidth]{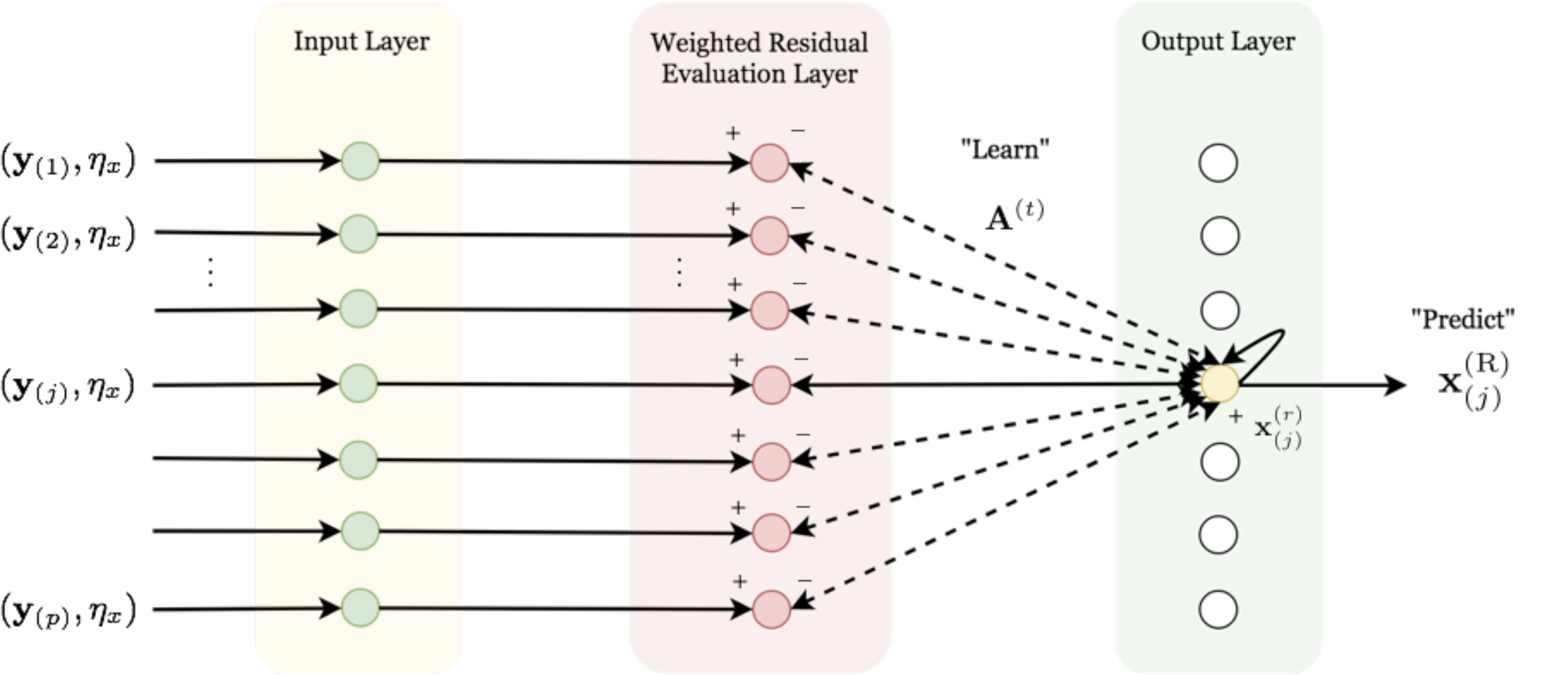}\\
{\footnotesize(a) Neural implementation of NOODL}
\end{minipage}\hfill\hspace{-8pt}
\begin{minipage}{0.42\textwidth}
\caption{A neural implementation of NOODL. Panel (a) shows the neural architecture, which consists of three layers: an input layer, a weighted residual evaluation layer (evaluates \[\eta_x \big(\b{y}_{(j)} - \b{A}^{(t)}\b{x}_{(j)}^{(r)} \big) \]), and an output layer. Panel (b) shows the operation of the neural architecture in panel (a). The update of \[\b{x}_{(j)}^{(r+1)}\] is given by \eqref{alg:coeff_iht}. }
\label{fig:neural_imple}
\end{minipage}
\begin{threeparttable}
\colorbox[RGB]{239,240,241}{
\begin{minipage}{0.98\textwidth}
\resizebox{\linewidth}{!}{%
\centering
{\small \begin{tabular}{lccccccc|m{1.5cm}}
	\hspace{-0.15in}\[\ell = 0\]&\[\ell =1\] & \[\ell  = 2\] &\hspace{-0.1in} \[\ell  = 3\]&\[\ell  = 4\]&\[\ell  = 5\] &\[\dots\] & \[\ell = 2{R} + 1\]&
	\multicolumn{1}{m{1.3cm}}{\multirow{3}{1.5cm}{\footnotesize\textbf{Hebbian Learning}:   Residual sharing and dictionary update.}}\hspace{-0.1in}\\ 
	\hspace{-0.15in}&&&&&&&&\hspace{-0.1in}\\
	\hspace{-0.15in}Output: \[\b{x}\leftarrow \b{0}\] & \[\b{0}\]&\hspace{-0.1in}\[\b{x}_{(j)}^{(0)} = \HT_{\tau}({\b{A}^{(t)}}^\top\b{y}_{(j)})\]& \[\b{x}_{(j)}^{(0)} \]& \[\b{x}_{(j)}^{(1)}\]&\[\b{x}_{(j)}^{(1)}\]&\[\dots\]&\[\b{x}_{(j)}^{(R)}\]&\vspace{-10pt}\\ 
	\hspace{-0.15in}Residual: \[\b{0}\] & \[\b{y}_{(j)}\]&\[\b{y}_{(j)}\]&\hspace{-0.1in}\[\eta_x(\b{y}_{(j)} - \b{A}^{(t)}\b{x}_{(j)}^{(0)}) \]&\hspace{-0.1in}\[\eta_x(\b{y}_{(j)} - \b{A}^{(t)}\b{x}_{(j)}^{(0)}) \]&\hspace{-0.1in}\[\eta_x(\b{y}_{(j)} - \b{A}^{(t)}\b{x}_{(j)}^{(1)}) \]&\[\dots\]&\hspace{-0.1in}\[\eta_x(\b{y}_{(j)} - \b{A}^{(t)}\b{x}_{(j)}^{(R-1)})\]&\hspace{-0.1in}\\
	\hspace{-0.15in}Input: \[(\b{y}_{(j)}, 1)\]&\[.\]&\[(\b{y}_{(j)}, \eta_x)\]&\hspace{-0.1in}\[.\]&\[.\]&\[.\]&\[\dots\]&\[(\b{y}_{(j)}, 1)\]& \hspace{-0.1in}
\end{tabular}}}
\end{minipage}}
\end{threeparttable}\\
{\footnotesize (b) The timing sequence of the neural implementation.}
\vspace{-5pt}
\end{figure}

\noindent\textbf{Initial Hard Thresholding Phase}: The coefficients initialized to zero, and an input \[(\b{y}_{(j)}, 1)\] is provided to the input layer at a time instant \[\ell=0\], which communicates these to the second layer. Therefore, the residual at the output of the weighted residual evaluation layer evaluates to \[\b{y}_{(j)}\] at \[\ell=1\]. Next, at \[\ell=2\], this residual is communicated to the output layer, which results in evaluation of the initialization \[ \b{x}_{(j)}^{(0)}\] as per \eqref{alg:coeff_init}. This iterate is communicated to the second layer for the next residual evaluation. Also, at this time, the input layer is injected with  \[(\b{y}_{(j)}, \eta_x)\] to set the step size parameter \[\eta_x\] for the IHT phase, as shown in Fig.~\ref{fig:neural_imple}(b). 

\noindent\textbf{Iterative Hard Thresholding (IHT) Phase}: Beginning \[\ell =3\], the timing sequence enters the IHT phase. Here, the output layer neurons communicate the iterates \[\b{x}^{(r+1)}_{(j)}\] to the second layer for evaluation of subsequent iterates as shown in Fig.~\ref{fig:neural_imple}(b). The process then continues till the time instance \[\ell = 2R+1\], for \[{R} = \Omega(\log(1/\delta_{R}))\] to generate the final coefficient estimate \[ \widehat{\b{x}}_{(j)}^{(t)} := \b{x}_{(j)}^{(R)} \] for the current batch of data. At this time, the input layer is again injected with \[(\b{y}_{(j)}, 1)\] to prepare the network for residual sharing and gradient evaluation for dictionary update.

\noindent\textbf{Dictionary Update Phase:} The procedure now enters the dictionary update phase, denoted as ``Hebbian Learning'' in the timing sequence. In this phase, each output layer neuron communicates the final coefficient estimate \[ \widehat{\b{x}}_{(j)}^{(t)} = \b{x}_{(j)}^{(R)} \] to the second layer, which evaluates the residual for one last time (with \[\eta_x =1\]), and shares it across all second layer neurons (``Hebbian learning''). This allows each second layer neuron to evaluate the empirical gradient estimate \eqref{eq:emp_grad_est},
which is used to update the current dictionary estimate (stored as weights) via an approximate gradient descent step. This completes one outer iteration of Algorithm~\ref{alg:main_alg}, and the process continues for \[T\] iterations to achieve target tolerances \[\epsilon_T\] and \[\delta_T\], with each step receiving a new mini-batch of data. 


%
%
%
%
%

\vspace{-4pt}
\section{Experiments}\label{sec:exp} 
\vspace*{-4pt}

We now analyze the convergence properties and sample complexity of NOODL via experimental evaluations \footnote{ The associated code is made available at \texttt{https://github.com/srambhatla/NOODL}.}. The experimental data generation set-up, additional results, including analysis of computational time, are shown in Appendix~\ref{app:additional_res}.


\vspace*{-5pt}
\subsection{Convergence Analysis} 
\vspace*{-5pt}

We compare the performance of our algorithm NOODL with the current state-of-the-art alternating optimization-based online algorithms presented in \cite{Arora15}, and the popular algorithm presented in \cite{Mairal09} (denoted as {\texttt{Mairal} `\texttt{09}}). First of these, \texttt{Arora15(``biased'')}, is a simple neurally plausible method which incurs a bias and has a sample complexity of \[\Omega(mk)\]. The other, referred to as \texttt{Arora15(``unbiased'')}, incurs no bias as per \cite{Arora15}, but the sample complexity results were not established.
{\footnotesize
\begin{figure}[!t]
\vspace{-0.1in}
\centering
\begin{tabular}{ccccc}
\hspace*{25pt}{\scriptsize $k = 10$, $\eta_A = 30$} 
&\hspace{-5pt}{\scriptsize $k = 20$, $\eta_A = 30$} 
&\hspace{-5pt}{\scriptsize $k = 50$, $\eta_A = 15$} 
&\hspace{-5pt}{\scriptsize $k = 100$, $\eta_A = 15$} \vspace{-0pt}
&\hspace{-5pt} {\scriptsize  Phase Transition} 
\vspace{-12pt}\\
\hspace{-10pt}
{\rotatebox{90}{\parbox{2.4cm}{\centering\scriptsize\textbf{Dictionary Recovery Across Techniques}}}}\hspace{4pt}
\includegraphics[width=0.18\textwidth]{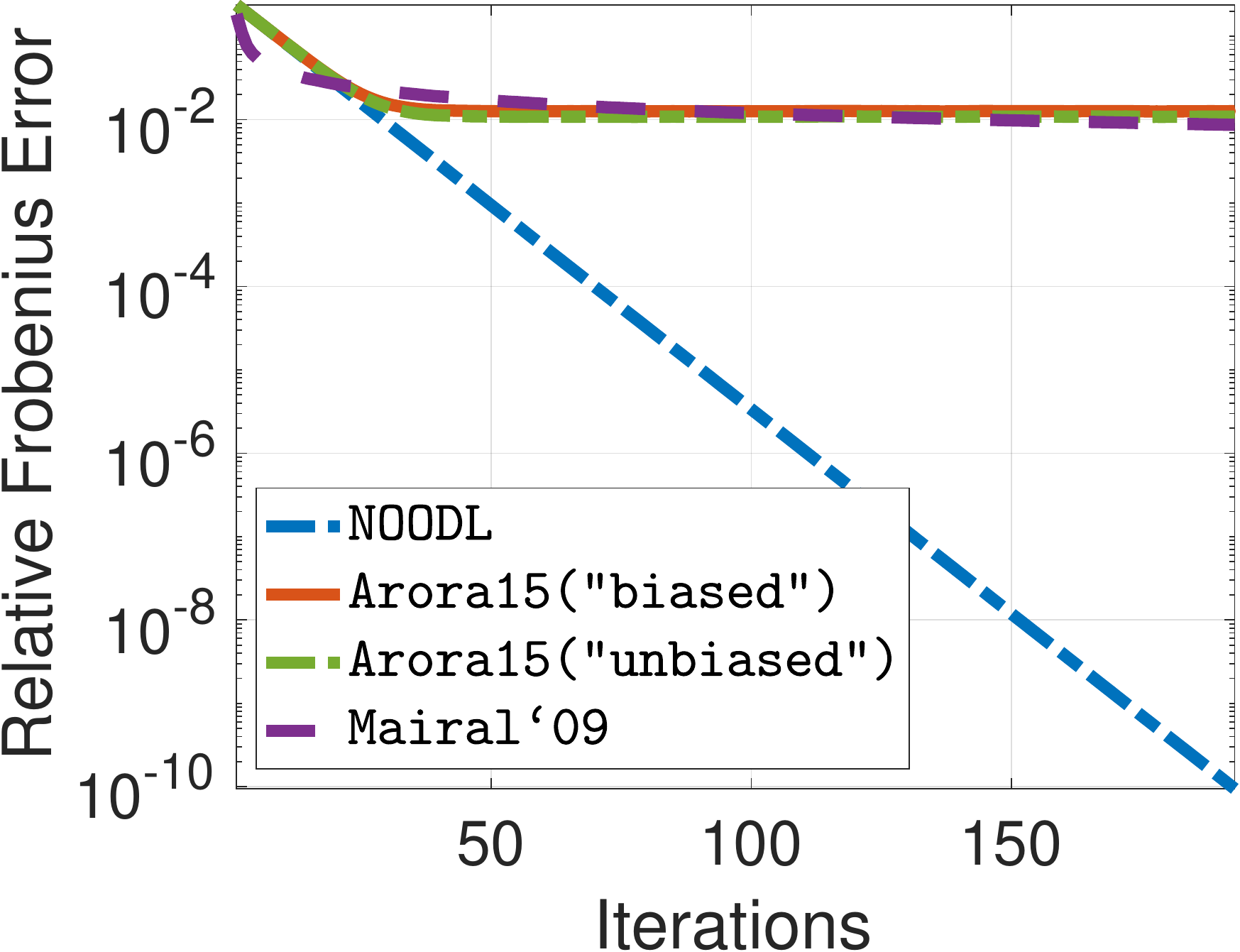}&\hspace*{-13pt}
\includegraphics[width=0.185\textwidth]{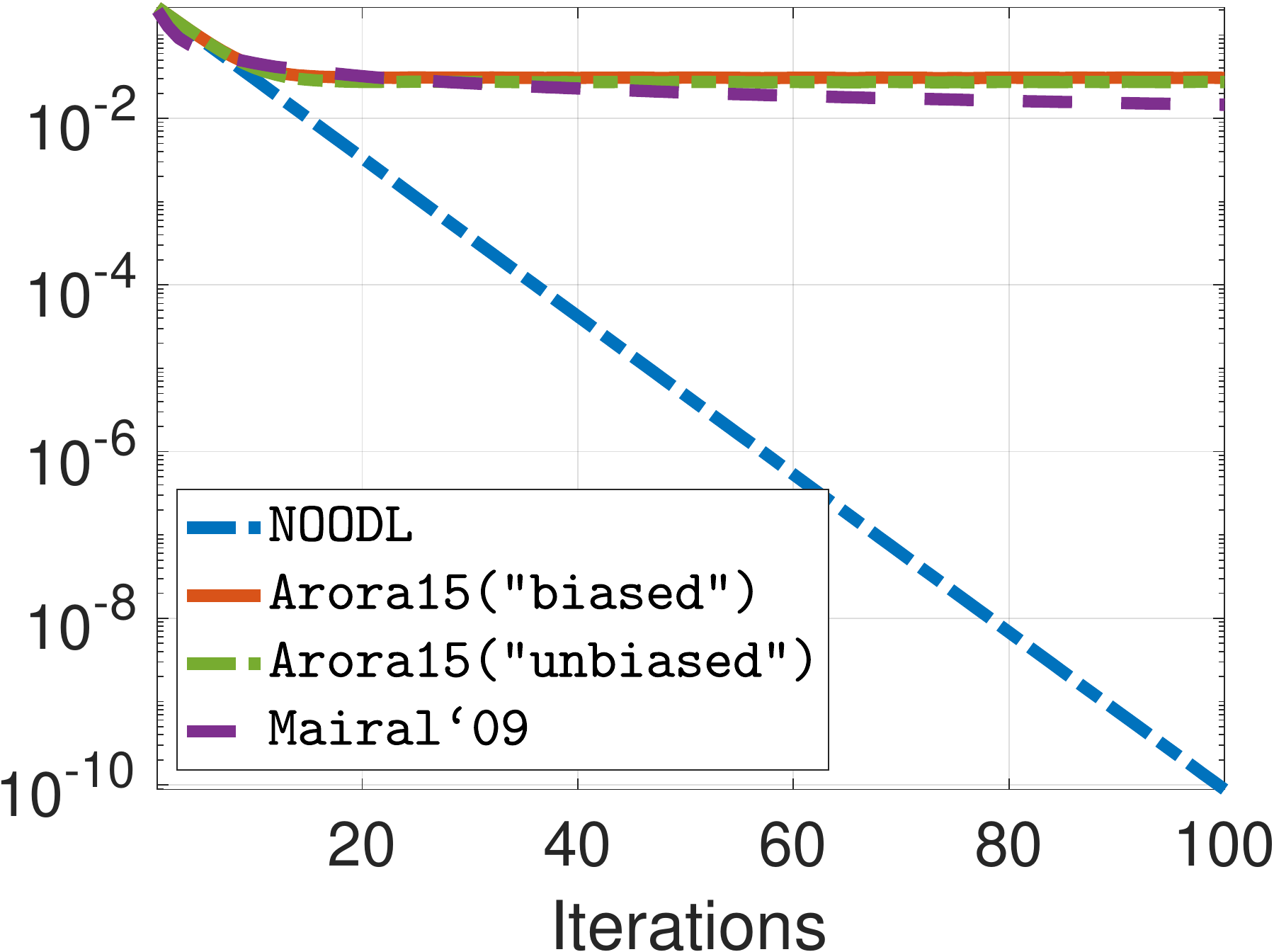}&\hspace*{-13pt}
\includegraphics[width=0.18\textwidth]{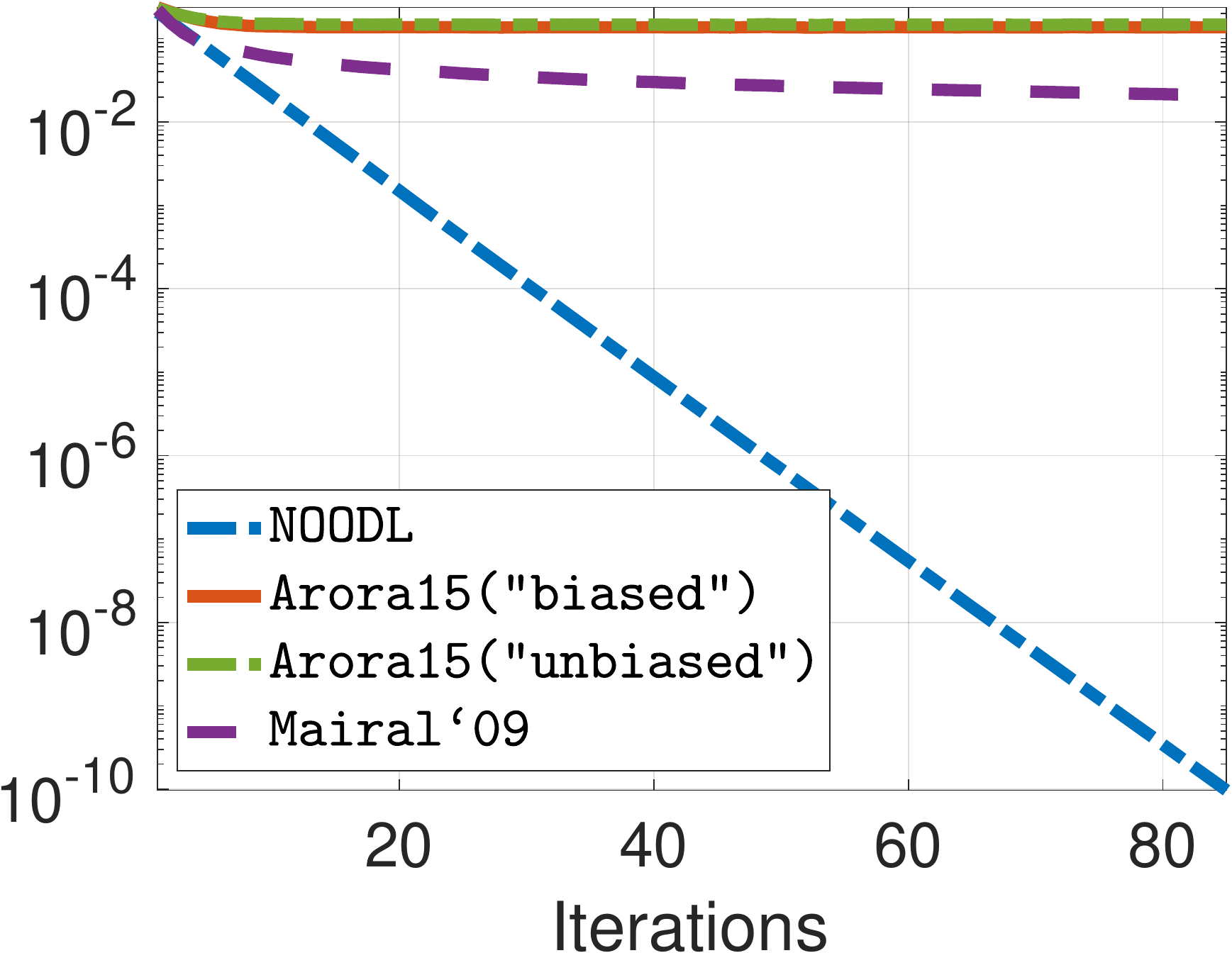}&\hspace*{-13pt}
\includegraphics[width=0.18\textwidth]{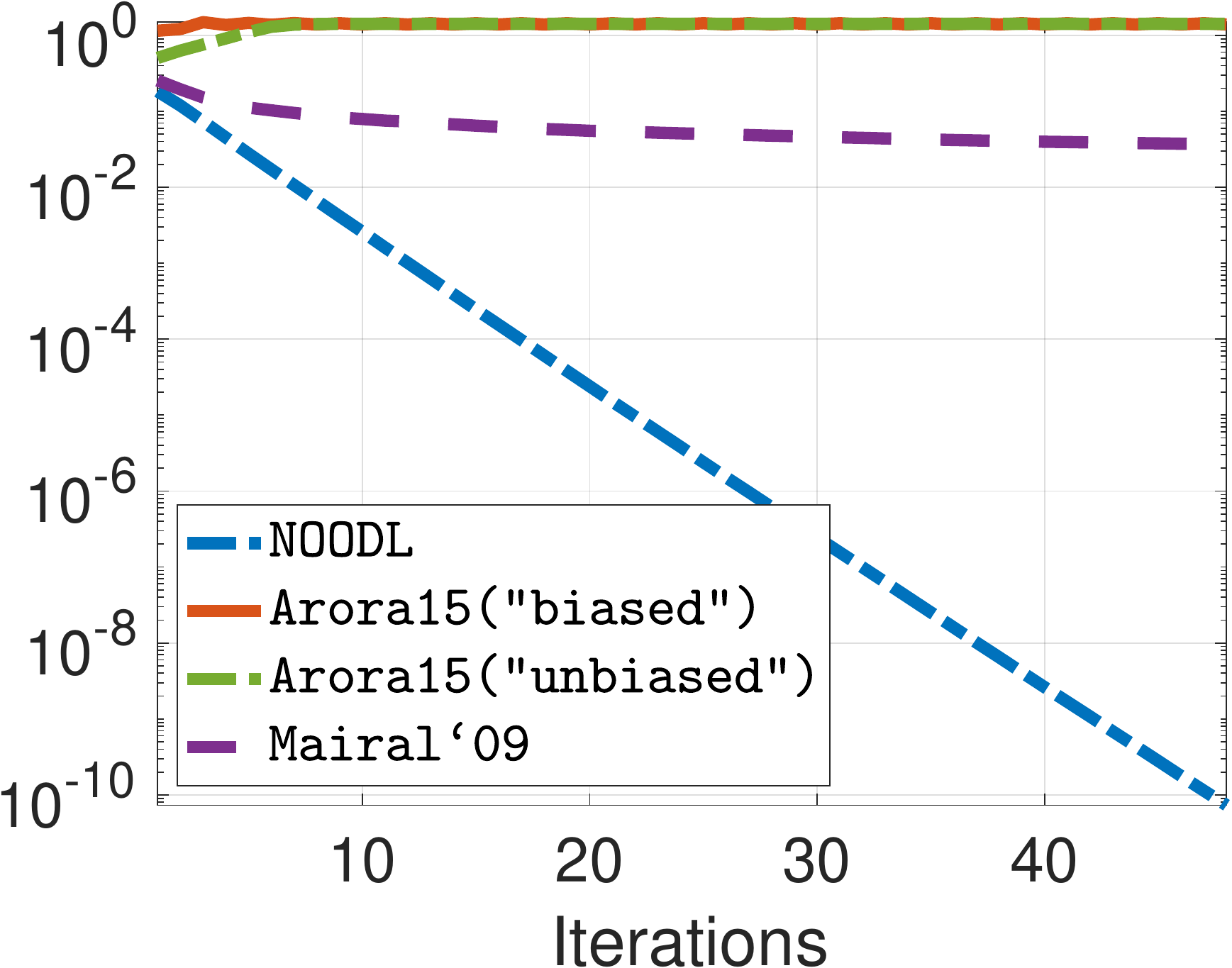}&\hspace*{-11pt}
\includegraphics[width=0.18\textwidth]{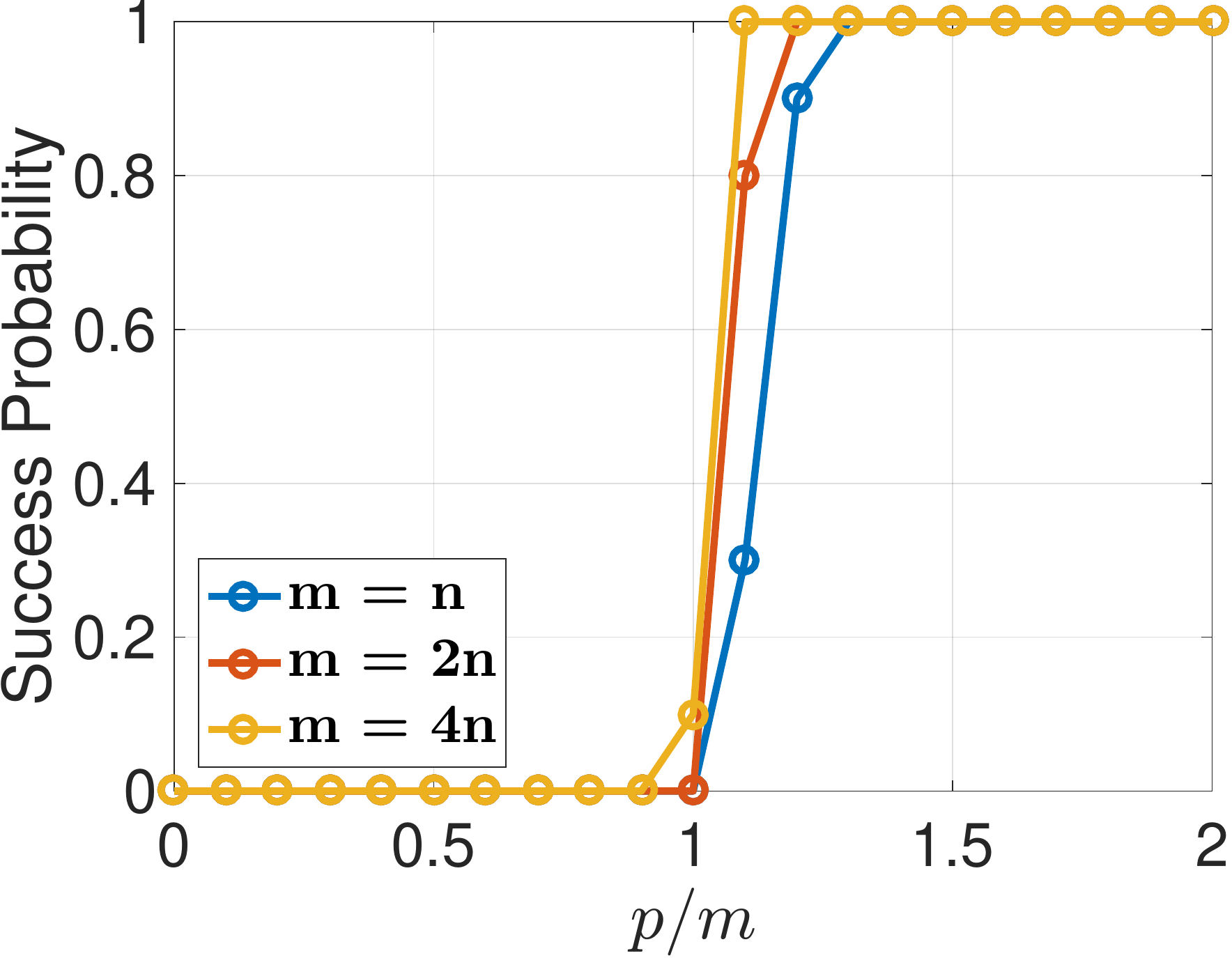}\vspace{-7pt}

\vspace{0.05in}

\\

~~~~~~~~{\footnotesize (a-i)} 
&{\footnotesize  (b-i)} 
&{\footnotesize  (c-i)} 
&{\footnotesize (d-i) }
&{\footnotesize (e-i) {\scriptsize Dictionary} } \vspace{-15pt}\\
\hspace{-10pt}
{\rotatebox{90}{ \parbox{2.4cm}{\centering\scriptsize\textbf{Performance of NOODL}}}}\hspace{5pt}
\includegraphics[width=0.18\textwidth]{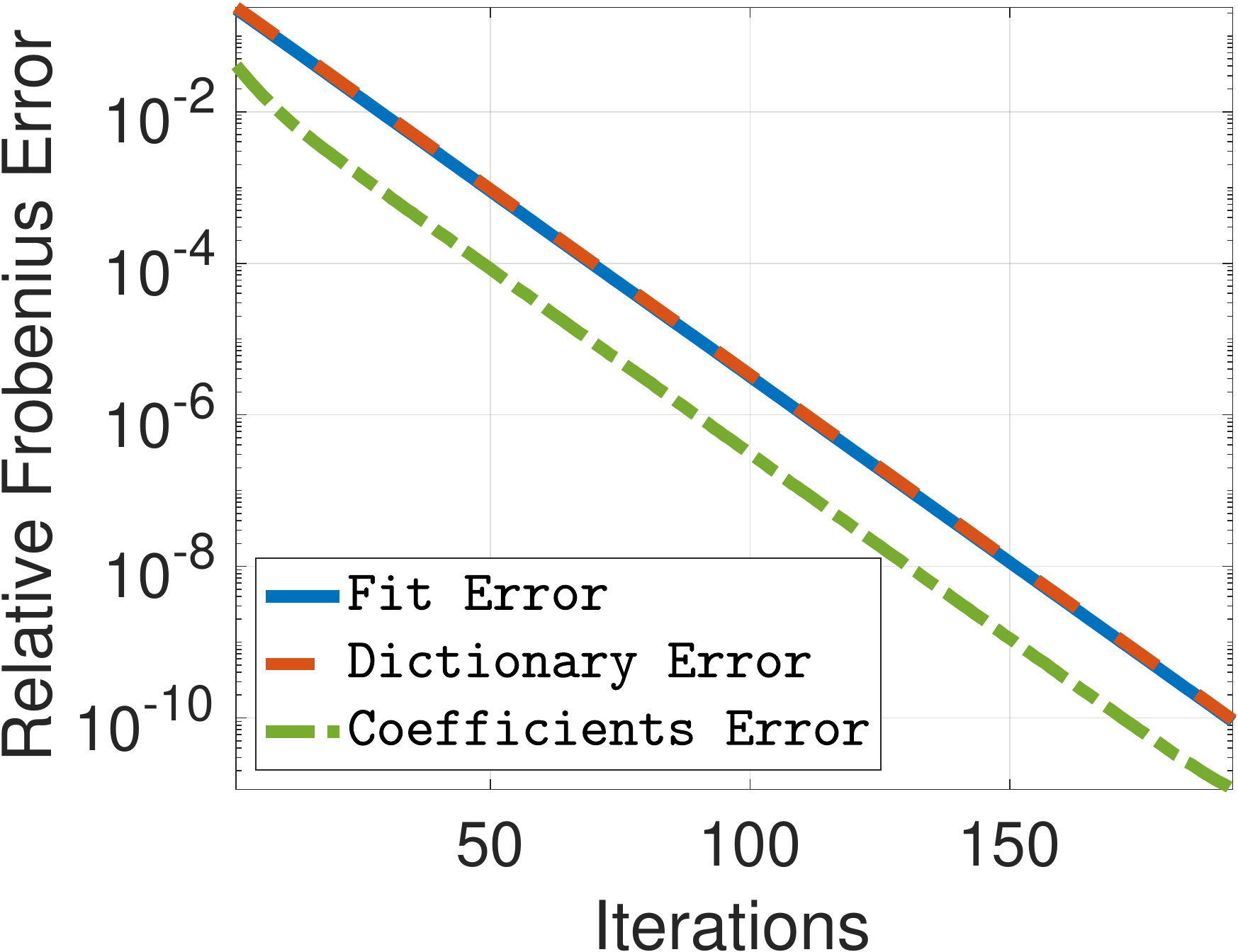}&\hspace{-13pt}
\includegraphics[width=0.175\textwidth]{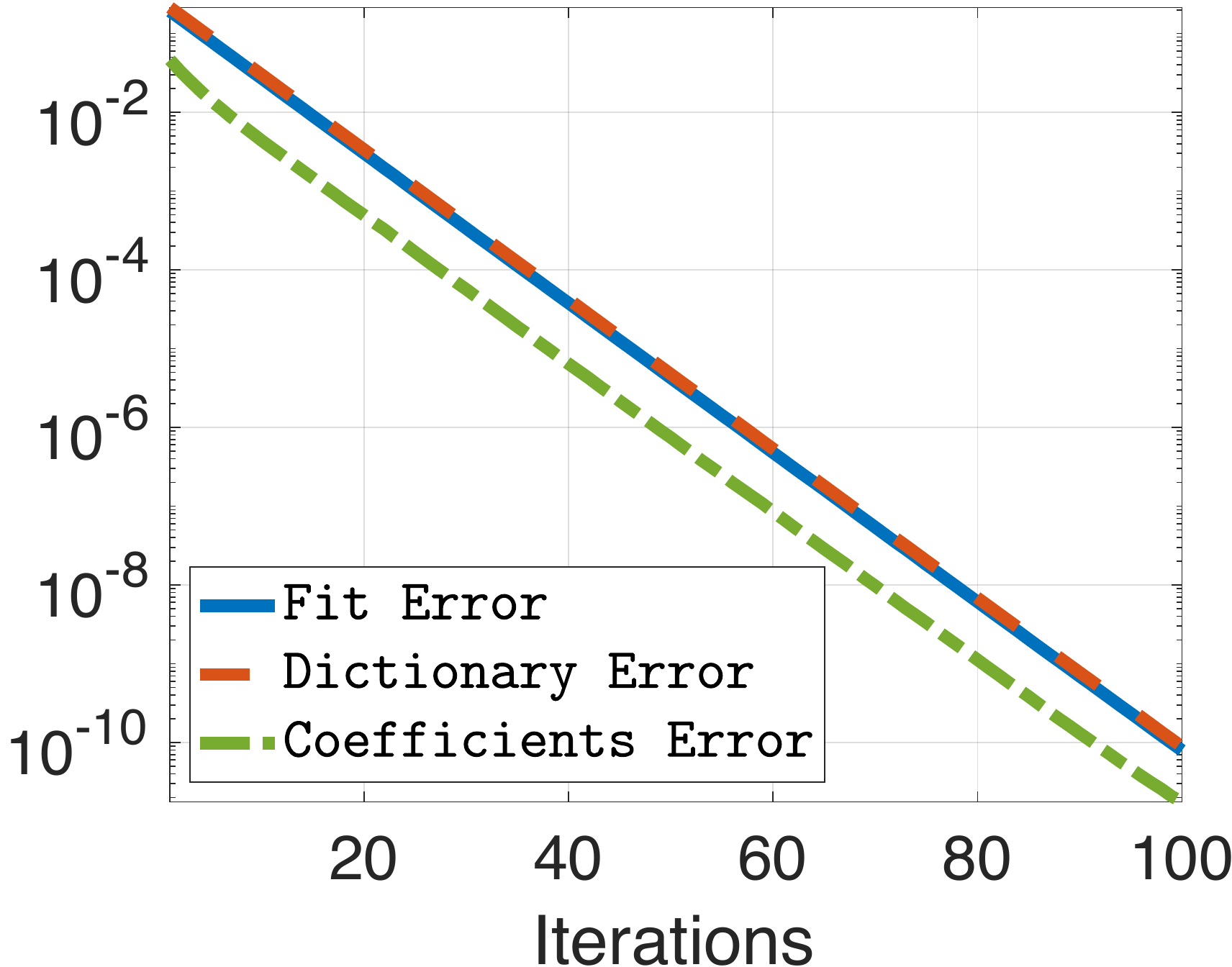}&\hspace{-13pt}
\includegraphics[width=0.17\textwidth]{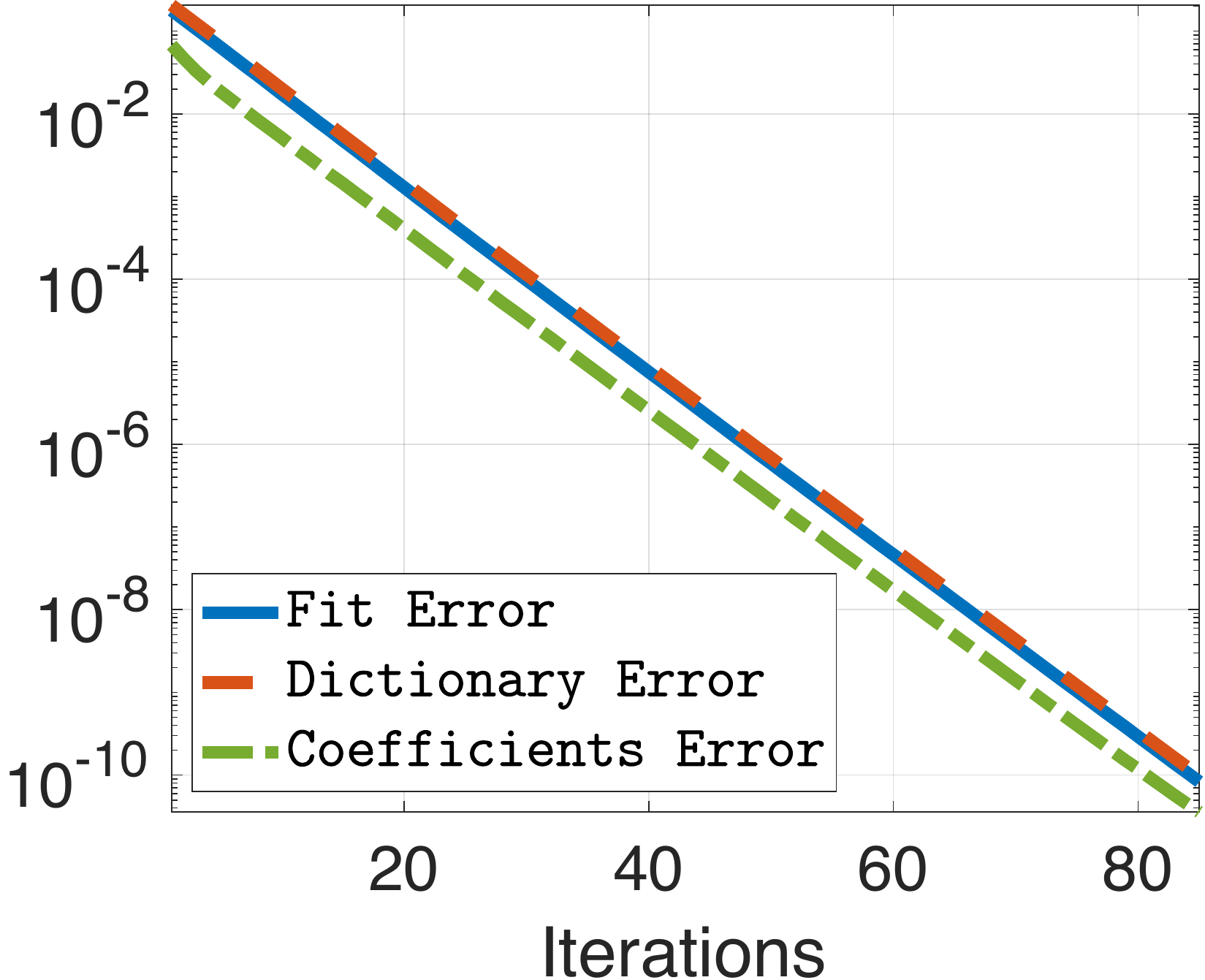}&\hspace{-13pt}
\includegraphics[width=0.17\textwidth]{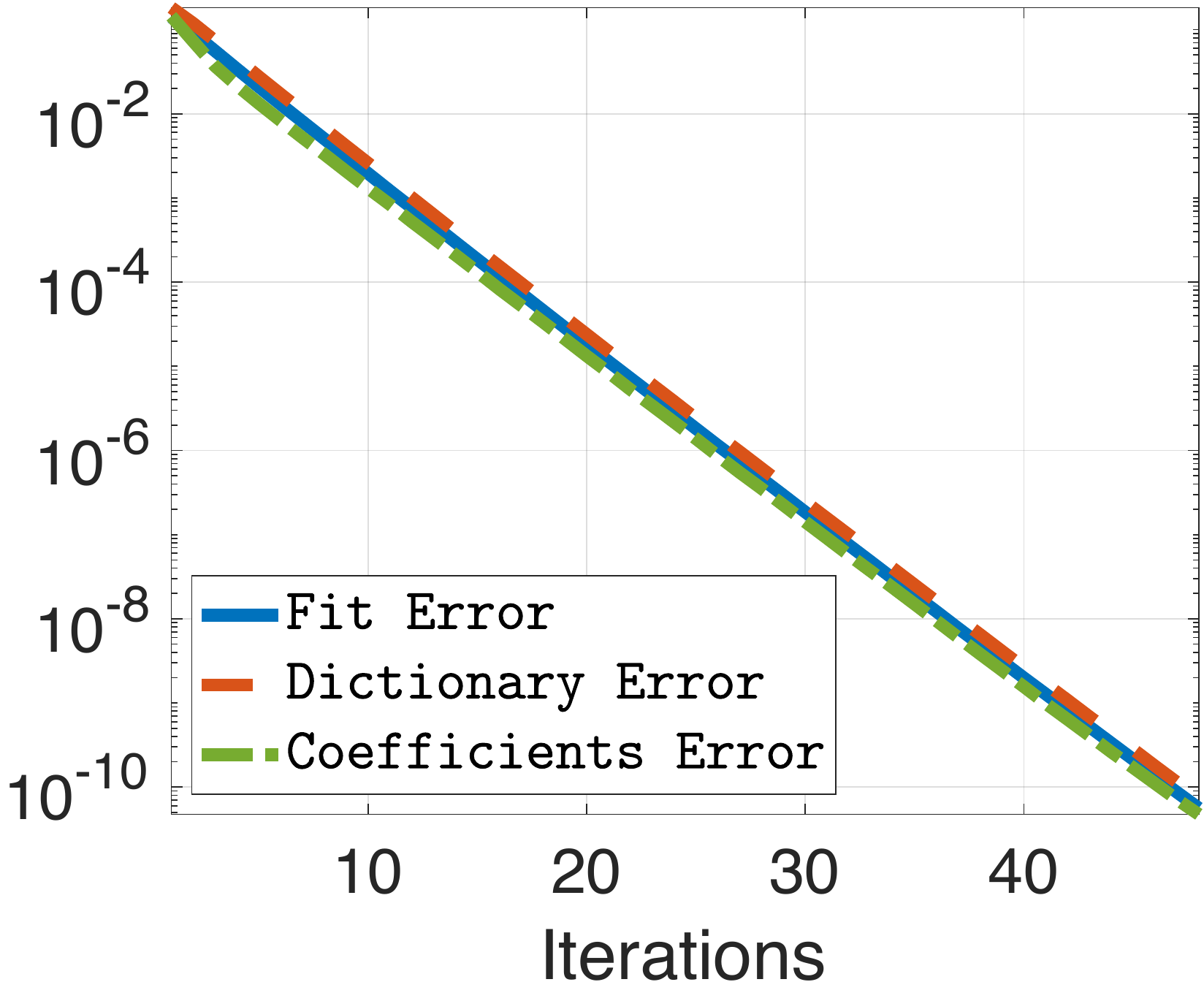}&\hspace{-11pt}
\includegraphics[width=0.18\textwidth]{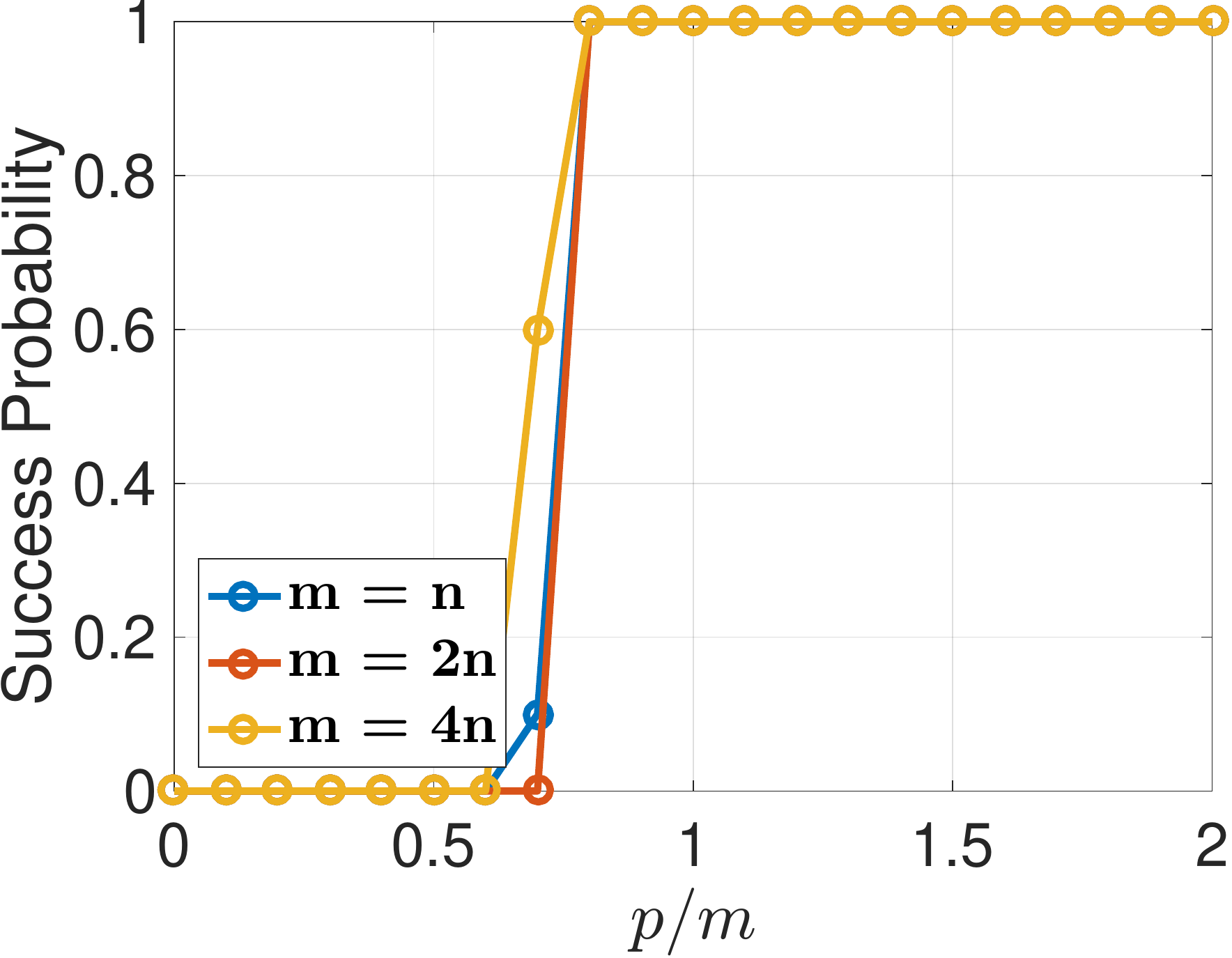}\vspace{-7pt}

\vspace{0.05in}

\\

~~~~~~~~{\footnotesize (a-ii)} 
&{\footnotesize  (b-ii)} 
&{\footnotesize  (c-ii)} 
&{\footnotesize (d-ii) }
&{\footnotesize (e-ii) {\scriptsize Coefficients}}
\vspace{-7pt}\\
\end{tabular}
\caption{Comparative analysis of convergence properties. Panels (a-i), (b-i), (c-i), and (d-i) show the convergence of NOODL,  \texttt{Arora15(``biased'')}, \texttt{Arora15(``unbiased'')} and {\texttt{Mairal} `\texttt{09}}, for different sparsity levels for $n=1000$, $m=1500$ and $p =5000$. Since NOODL also recovers the coefficients, we show the corresponding recovery of the dictionary, coefficients, and overall fit in panels (a-ii), (b-ii), (c-ii), and (d-ii), respectively. Further, panels (e-i)  and (e-ii) show the phase transition in samples \[p\] (per iteration) with the size of the dictionary \[m\] averaged across \[10\] Monte Carlo simulations for the two factors.  Here, \[n=100\], \[k =3\], \[\eta_x = 0.2\], \[\tau = 0.1\], \[\epsilon_0 =  2/\log(n)\], \[\eta_A\] is chosen as per \ref{assumption:step dict}. A trial is considered successful if the relative Frobenius error incurred by \[\hat{\b{A}}\] and \[\hat{\b{X}}\] is below \[5\times10^{-7}\] after \[50\] iterations.}
\label{fig:recovery}
\vspace{-7pt}
\end{figure}}

\noindent\textbf{Discussion:} Fig.~\ref{fig:recovery} panels (a-i), (b-i), (c-i), and (d-i) show the performance of the aforementioned methods for \[k = 10,~20,~50\],
and \[100\], respectively. Here, for all experiments we set \[\eta_x = 0.2\] and \[\tau = 0.1\]. We terminate NOODL when the error in dictionary is less than $10^{-10}$. Also, for coefficient update, we terminate when change in the iterates is below \[10^{-12}\]. 
For \[k=10,~20\] and \[k=50\], we note that \texttt{Arora15(``biased'')} and \texttt{Arora15(``unbiased'')} incur significant bias, while NOODL converges to \[\b{A}^*\] \textit{linearly}. NOODL also converges for significantly higher choices of sparsity $k$, i.e., for $k=100$ as shown in panel (d), beyond \[k=\c{O}(\sqrt{n})\], indicating a potential for improving this bound. Further, we observe that {\texttt{Mairal} `\texttt{09}} exhibits significantly slow convergence as compared to NOODL.
Also, in panels (a-ii), (b-ii), (c-ii) and (d-ii) we show the corresponding performance of NOODL in terms of the error in the overall fit (\[{\|\b{Y} - \b{AX} \|_{\Fr}}/{\|\b{Y}\|_{\Fr}}\]), and the error in the coefficients and the dictionary, in terms of relative Frobenius error metric discussed above.  We observe that the error in dictionary and coefficients drops linearly as indicated by our main result.


%
%

\vspace*{-6pt}
\subsection{Phase transitions }
\vspace*{-5pt}

Fig.~\ref{fig:recovery} panels (e-i) and (e-ii), shows the phase transition in number of samples with respect to the size of the dictionary \[m\].
We observe a sharp phase transition at \[\tfrac{p}{m} = 1\] for the dictionary, and at \[\tfrac{p}{m} = 0.75\] for the coefficients. This phenomenon is similar one observed by \cite{Agarwal14} (however, theoretically they required \[p = \mathcal{O}(m^2)\]). Here, we confirm number of samples required by NOODL are linearly dependent on the dictionary elements \[m\].
\vspace*{-6pt}
\section{Future Work}
\vspace{-7pt}
We consider the online DL setting in this work. We note that, empirically NOODL works for the batch setting also. However, analysis for this case will require more sophisticated concentration results, which can address the resulting dependence between iterations of the algorithm. In addition, our experiments indicate that NOODL works beyond the sparsity ranges prescribed by our theoretical results. Arguably, the bounds on sparsity can potentially be improved by moving away from the incoherence-based analysis.  We also note that in our experiments, NOODL converges even when initialized outside the prescribed initialization region, albeit it achieves the linear rate once it satisfies the closeness condition \ref{assumption:close}. These potential directions may significantly impact the analysis and development of provable algorithms for other factorization problems as well. We leave these research directions, and a precise analysis under the noisy setting, for future explorations.
\vspace*{-6pt}
\section{Conclusions}
\vspace{-7pt}
We present NOODL, to the best of our knowledge, the first neurally plausible provable online algorithm for exact recovery of both factors of the dictionary learning (DL) model. NOODL alternates between: (a) an iterative hard thresholding (IHT)-based step for coefficient recovery, and (b) a gradient descent-based update for the dictionary, resulting in a simple and scalable algorithm, suitable for large-scale distributed implementations. We show that once initialized appropriately, the sequence of estimates produced by NOODL converge \textit{linearly} to the true dictionary and coefficients without incurring any bias in the estimation.  Complementary to our theoretical and numerical results, we also design an implementation of NOODL in a neural architecture for use in practical applications. 
In essence, the analysis of this inherently non-convex problem impacts other matrix and tensor factorization tasks arising in signal processing, collaborative filtering, and machine learning. 

\vspace*{-20pt}
\subsubsection*{Acknowledgment}
\vspace*{-4pt}
 The authors would like to graciously acknowledge support from DARPA Young Faculty Award, Grant No. N66001-14-1-4047.
\vspace*{-4pt}
\setlength{\bibsep}{6.5pt plus 10pt}
\bibliographystyle{ims}
\bibliography{referDL_arxiv_v2}

\begin{thebibliography}{49}
\expandafter\ifx\csname natexlab\endcsname\relax\def\natexlab#1{#1}\fi
\expandafter\ifx\csname url\endcsname\relax
  \def\url#1{\texttt{#1}}\fi
\expandafter\ifx\csname urlprefix\endcsname\relax\def\urlprefix{}\fi

\bibitem[{Agarwal et~al.(2014)Agarwal, Anandkumar, Jain, Netrapalli and
  Tandon}]{Agarwal14}
\textsc{Agarwal, A.}, \textsc{Anandkumar, A.}, \textsc{Jain, P.},
  \textsc{Netrapalli, P.} and \textsc{Tandon, R.} (2014).
\newblock Learning sparsely used overcomplete dictionaries.
\newblock In \textit{Conference on Learning Theory (COLT)}.

\bibitem[{Aharon et~al.(2006)Aharon, Elad and Bruckstein}]{Aharon2006}
\textsc{Aharon, M.}, \textsc{Elad, M.} and \textsc{Bruckstein, A.} (2006).
\newblock k-svd: An algorithm for designing overcomplete dictionaries for
  sparse representation.
\newblock \textit{IEEE Transactions on Signal Processing}, \textbf{54}
  4311--4322.

\bibitem[{Arora et~al.(2015)Arora, Ge, Ma and Moitra}]{Arora15}
\textsc{Arora, S.}, \textsc{Ge, R.}, \textsc{Ma, T.} and \textsc{Moitra, A.}
  (2015).
\newblock Simple, efficient, and neural algorithms for sparse coding.
\newblock In \textit{Conference on Learning Theory (COLT)}.

\bibitem[{Arora et~al.(2014)Arora, Ge and Moitra}]{Arora14}
\textsc{Arora, S.}, \textsc{Ge, R.} and \textsc{Moitra, A.} (2014).
\newblock New algorithms for learning incoherent and overcomplete dictionaries.
\newblock In \textit{Conference on Learning Theory (COLT)}.

\bibitem[{Barak et~al.(2015)Barak, Kelner and Steurer}]{Barak2015}
\textsc{Barak, B.}, \textsc{Kelner, J.~A.} and \textsc{Steurer, D.} (2015).
\newblock Dictionary learning and tensor decomposition via the sum-of-squares
  method.
\newblock In \textit{Proceedings of the 47th annual ACM symposium on Theory of
  Computing}. ACM.

\bibitem[{Beck and Teboulle(2009)}]{Beck09}
\textsc{Beck, A.} and \textsc{Teboulle, M.} (2009).
\newblock A fast iterative shrinkage-thresholding algorithm for linear inverse
  problems.
\newblock \textit{SIAM Journal on Imaging Sciences}, \textbf{2} 183--202.

\bibitem[{Blumensath and Davies(2009)}]{Blumensath09}
\textsc{Blumensath, T.} and \textsc{Davies, M.~E.} (2009).
\newblock Iterative hard thresholding for compressed sensing.
\newblock \textit{Applied and Computational Harmonic Analysis}, \textbf{27}
  265--274.

\bibitem[{Cand\`{e}s and Romberg(2007)}]{Candes2007}
\textsc{Cand\`{e}s, E.} and \textsc{Romberg, J.} (2007).
\newblock Sparsity and incoherence in compressive sampling.
\newblock \textit{Inverse Problems}, \textbf{23} 969.

\bibitem[{Cand\`{e}s et~al.(2015)Cand\`{e}s, Li and
  Soltanolkotabi}]{Candes2015}
\textsc{Cand\`{e}s, E.~J.}, \textsc{Li, X.} and \textsc{Soltanolkotabi, M.}
  (2015).
\newblock Phase retrieval via wirtinger flow: Theory and algorithms.
\newblock \textit{IEEE Transactions on Information Theory}, \textbf{61}
  1985--2007.

\bibitem[{Chatterji and Bartlett(2017)}]{Chatterji2017}
\textsc{Chatterji, N.} and \textsc{Bartlett, P.~L.} (2017).
\newblock Alternating minimization for dictionary learning with random
  initialization.
\newblock In \textit{Advances in Neural Information Processing Systems}.

\bibitem[{Chen et~al.(1998)Chen, Donoho and Saunders}]{Chen1998}
\textsc{Chen, S.~S.}, \textsc{Donoho, D.~L.} and \textsc{Saunders, M.~A.}
  (1998).
\newblock Atomic decomposition by basis pursuit.
\newblock \textit{SIAM Journal on Scientific Computing}, \textbf{20} 33--61.

\bibitem[{Chen and Wainwright(2015)}]{Chen2015}
\textsc{Chen, Y.} and \textsc{Wainwright, M.~J.} (2015).
\newblock Fast low-rank estimation by projected gradient descent: General
  statistical and algorithmic guarantees.
\newblock \textit{CoRR}, \textbf{abs/1509.03025}.

\bibitem[{Donoho et~al.(2006)Donoho, Elad and Temlyakov}]{Donoho2006}
\textsc{Donoho, D.}, \textsc{Elad, M.} and \textsc{Temlyakov, V.~N.} (2006).
\newblock Stable recovery of sparse overcomplete representations in the
  presence of noise.
\newblock \textit{IEEE Transactions on Information Theory}, \textbf{52} 6--18.

\bibitem[{Donoho and Huo(2001)}]{Donoho2001}
\textsc{Donoho, D.~L.} and \textsc{Huo, X.} (2001).
\newblock Uncertainty principles and ideal atomic decomposition.
\newblock \textit{IEEE Transactions on Information Theory}, \textbf{47}
  2845--2862.

\bibitem[{Elad(2010)}]{Elad2010}
\textsc{Elad, M.} (2010).
\newblock \textit{Sparse and Redundant Representations: From Theory to
  Applications in Signal and Image Processing}.
\newblock 1st ed. Springer Publishing Company, Incorporated.

\bibitem[{Elad and Aharon(2006)}]{Elad2006}
\textsc{Elad, M.} and \textsc{Aharon, M.} (2006).
\newblock Image denoising via sparse and redundant representations over learned
  dictionaries.
\newblock \textit{IEEE Transactions on Image Processing}, \textbf{15}
  3736--3745.

\bibitem[{Engan et~al.(1999)Engan, Aase and Husoy}]{Engan1999}
\textsc{Engan, K.}, \textsc{Aase, S.~O.} and \textsc{Husoy, J.~H.} (1999).
\newblock Method of optimal directions for frame design.
\newblock In \textit{IEEE International Conference on Acoustics, Speech, and
  Signal Processing (ICASSP)}, vol.~5. IEEE.

\bibitem[{Fuller(2009)}]{Fuller2009}
\textsc{Fuller, W.~A.} (2009).
\newblock \textit{Measurement error models}, vol. 305.
\newblock John Wiley \& Sons.

\bibitem[{Geng and Wright(2014)}]{Geng2014}
\textsc{Geng, Q.} and \textsc{Wright, J.} (2014).
\newblock On the local correctness of $\ell_1$-minimization for dictionary
  learning.
\newblock In \textit{2014 IEEE International Symposium on Information Theory
  (ISIT)}. IEEE.

\bibitem[{Gershgorin(1931)}]{Gershgorin1931}
\textsc{Gershgorin, S.~A.} (1931).
\newblock Uber die abgrenzung der eigenwerte einer matrix 749--754.

\bibitem[{Gregor and LeCun(2010)}]{Gregor2010}
\textsc{Gregor, K.} and \textsc{LeCun, Y.} (2010).
\newblock Learning fast approximations of sparse coding.
\newblock In \textit{Proceedings of the 27th International Conference on
  Machine Learning (ICML)}. Omnipress.

\bibitem[{Gribonval and Schnass(2010)}]{Remi2010}
\textsc{Gribonval, R.} and \textsc{Schnass, K.} (2010).
\newblock Dictionary identification and sparse matrix-factorization via
  $\ell_1$ -minimization.
\newblock \textit{IEEE Transactions on Information Theory}, \textbf{56}
  3523--3539.

\bibitem[{Hanson and Wright(1971)}]{Hanson1971}
\textsc{Hanson, D.} and \textsc{Wright, F.~T.} (1971).
\newblock A bound on tail probabilities for quadratic forms in independent
  random variables.
\newblock \textit{The Annals of Mathematical Statistics}, \textbf{42}
  1079--1083.

\bibitem[{Haupt and Nowak(2006)}]{Haupt2006}
\textsc{Haupt, J.} and \textsc{Nowak, R.} (2006).
\newblock Signal reconstruction from noisy random projections.
\newblock \textit{IEEE Transactions on Information Theory}, \textbf{52}
  4036--4048.

\bibitem[{Jenatton et~al.(2012)Jenatton, Gribonval and Bach}]{Jenatton2012}
\textsc{Jenatton, R.}, \textsc{Gribonval, R.} and \textsc{Bach, F.} (2012).
\newblock {Local stability and robustness of sparse dictionary learning in the
  presence of noise}.
\newblock Research report.
\newline\urlprefix\url{https://hal.inria.fr/hal-00737152}

\bibitem[{Jung et~al.(2014)Jung, Eldar and G{\"o}rtz}]{Jung2014}
\textsc{Jung, A.}, \textsc{Eldar, Y.} and \textsc{G{\"o}rtz, N.} (2014).
\newblock Performance limits of dictionary learning for sparse coding.
\newblock In \textit{Proceedings of the European Signal Processing Conference
  (EUSIPCO),}. IEEE.

\bibitem[{Jung et~al.(2016)Jung, Eldar and Görtz}]{Jung2016}
\textsc{Jung, A.}, \textsc{Eldar, Y.~C.} and \textsc{Görtz, N.} (2016).
\newblock On the minimax risk of dictionary learning.
\newblock \textit{IEEE Transactions on Information Theory}, \textbf{62}
  1501--1515.

\bibitem[{Kreutz-Delgado et~al.(2003)Kreutz-Delgado, Murray, Rao, Engan, Lee
  and Sejnowski}]{Kreutz2003}
\textsc{Kreutz-Delgado, K.}, \textsc{Murray, J.~F.}, \textsc{Rao, B.~D.},
  \textsc{Engan, K.}, \textsc{Lee, T.} and \textsc{Sejnowski, T.~J.} (2003).
\newblock Dictionary learning algorithms for sparse representation.
\newblock \textit{Neural Computation}, \textbf{15} 349--396.

\bibitem[{Lee et~al.(2007)Lee, Battle, Raina and Ng}]{Lee2007}
\textsc{Lee, H.}, \textsc{Battle, A.}, \textsc{Raina, R.} and \textsc{Ng,
  A.~Y.} (2007).
\newblock Efficient sparse coding algorithms.
\newblock In \textit{Advances in Neural Information Processing Systems (NIPS}.

\bibitem[{Lewicki and Sejnowski(2000)}]{Lewicki2000}
\textsc{Lewicki, M.~S.} and \textsc{Sejnowski, T.~J.} (2000).
\newblock Learning overcomplete representations.
\newblock \textit{Neural Computation}, \textbf{12} 337--365.

\bibitem[{Li et~al.(2016{\natexlab{a}})Li, Wang, Lu, Arora, Haupt, Liu and
  Zhao}]{li2016symmetry}
\textsc{Li, X.}, \textsc{Wang, Z.}, \textsc{Lu, J.}, \textsc{Arora, R.},
  \textsc{Haupt, J.}, \textsc{Liu, H.} and \textsc{Zhao, T.}
  (2016{\natexlab{a}}).
\newblock Symmetry, saddle points, and global geometry of nonconvex matrix
  factorization.
\newblock \textit{arXiv preprint arXiv:1612.09296}.

\bibitem[{Li et~al.(2016{\natexlab{b}})Li, Zhao, Arora, Liu and
  Haupt}]{li2016stochastic}
\textsc{Li, X.}, \textsc{Zhao, T.}, \textsc{Arora, R.}, \textsc{Liu, H.} and
  \textsc{Haupt, J.} (2016{\natexlab{b}}).
\newblock Stochastic variance reduced optimization for nonconvex sparse
  learning.
\newblock In \textit{International Conference on Machine Learning}.

\bibitem[{Mairal et~al.(2009)Mairal, Bach, Ponce and Sapiro}]{Mairal09}
\textsc{Mairal, J.}, \textsc{Bach, F.}, \textsc{Ponce, J.} and \textsc{Sapiro,
  G.} (2009).
\newblock Online dictionary learning for sparse coding.
\newblock In \textit{Proceedings of the International Conference on Machine
  Learning (ICML)}. ACM.

\bibitem[{Mallat and Zhang(1993)}]{Mallat1993}
\textsc{Mallat, S.~G.} and \textsc{Zhang, Z.} (1993).
\newblock Matching pursuits with time-frequency dictionaries.
\newblock \textit{IEEE Transactions on Signal Processing}, \textbf{41}
  3397--3415.

\bibitem[{Olshausen and Field(1997)}]{Olshausen97}
\textsc{Olshausen, B.~A.} and \textsc{Field, D.~J.} (1997).
\newblock Sparse coding with an overcomplete basis set: A strategy employed by
  v1?
\newblock \textit{Vision Research}, \textbf{37} 3311--3325.

\bibitem[{Pearson(1901)}]{Pearson1901}
\textsc{Pearson, K.} (1901).
\newblock On lines and planes of closest fit to systems of points in space.
\newblock \textit{The London, Edinburgh, and Dublin Philosophical Magazine and
  Journal of Science}, \textbf{2} 559--572.
\newline\urlprefix\url{https://doi.org/10.1080/14786440109462720}

\bibitem[{Rambhatla and Haupt(2013)}]{Rambhatla2013sbmca}
\textsc{Rambhatla, S.} and \textsc{Haupt, J.} (2013).
\newblock Semi-blind source separation via sparse representations and online
  dictionary learning.
\newblock In \textit{2013 Asilomar Conference on Signals, Systems and
  Computers,}. IEEE.

\bibitem[{Rambhatla et~al.(2016)Rambhatla, Li and Haupt}]{Rambhatla2016DRPCA}
\textsc{Rambhatla, S.}, \textsc{Li, X.} and \textsc{Haupt, J.} (2016).
\newblock A dictionary based generalization of robust {PCA}.
\newblock In \textit{IEEE Global Conference on Signal and Information
  Processing (GlobalSIP)}. IEEE.

\bibitem[{Rambhatla et~al.(2017)Rambhatla, Li and Haupt}]{Rambhatla17DRPCAApp}
\textsc{Rambhatla, S.}, \textsc{Li, X.} and \textsc{Haupt, J.} (2017).
\newblock Target-based hyperspectral demixing via generalized robust {PCA}.
\newblock In \textit{51st Asilomar Conference on Signals, Systems, and
  Computers}. IEEE.
\newline\urlprefix\url{https://doi.org/10.1109/ACSSC.2017.8335372}

\bibitem[{Rambhatla et~al.(2019{\natexlab{a}})Rambhatla, Li, Ren and
  Haupt}]{Rambhatla18LrTheo}
\textsc{Rambhatla, S.}, \textsc{Li, X.}, \textsc{Ren, J.} and \textsc{Haupt,
  J.} (2019{\natexlab{a}}).
\newblock A dictionary-based generalization of robust {PCA} part {I}: Study of
  theoretical properties.
\newblock \textbf{abs/1902.08304}.
\newline\urlprefix\url{https://arxiv.org/abs/1902.08304}

\bibitem[{Rambhatla et~al.(2019{\natexlab{b}})Rambhatla, Li, Ren and
  Haupt}]{Rambhatla18LrApp}
\textsc{Rambhatla, S.}, \textsc{Li, X.}, \textsc{Ren, J.} and \textsc{Haupt,
  J.} (2019{\natexlab{b}}).
\newblock A dictionary-based generalization of robust {PCA} part {II}:
  Applications to hyperspectral demixing.
\newblock \textbf{abs/1902.10238}.
\newline\urlprefix\url{https://arxiv.org/abs/1902.10238}

\bibitem[{Ramirez et~al.(2010)Ramirez, Sprechmann and Sapiro}]{Ramirez2010}
\textsc{Ramirez, I.}, \textsc{Sprechmann, P.} and \textsc{Sapiro, G.} (2010).
\newblock Classification and clustering via dictionary learning with structured
  incoherence and shared features.
\newblock In \textit{IEEE Conference on Computer Vision and Pattern Recognition
  (CVPR)}. IEEE.

\bibitem[{Ranzato et~al.(2008)Ranzato, Boureau and LeCun}]{Ranzato2007}
\textsc{Ranzato, M.}, \textsc{Boureau, Y.} and \textsc{LeCun, Y.} (2008).
\newblock Sparse feature learning for deep belief networks.
\newblock In \textit{Advances in Neural Information Processing Systems (NIPS)}.
  1185--1192.

\bibitem[{Rudelson and Vershynin(2013)}]{Rudelson13}
\textsc{Rudelson, M.} and \textsc{Vershynin, R.} (2013).
\newblock Hanson-wright inequality and sub-gaussian concentration.
\newblock \textit{Electronic Communications in Probability}, \textbf{18}.

\bibitem[{Spielman et~al.(2012)Spielman, Wang and Wright}]{Spielman2012}
\textsc{Spielman, D.~A.}, \textsc{Wang, H.} and \textsc{Wright, J.} (2012).
\newblock Exact recovery of sparsely-used dictionaries.
\newblock In \textit{Conference on Learning Theory (COLT)}.

\bibitem[{Tibshirani(1996)}]{Tibshirani1996}
\textsc{Tibshirani, R.} (1996).
\newblock Regression shrinkage and selection via the lasso.
\newblock \textit{Journal of the Royal Statistical Society. Series B
  (Methodological)} 267--288.

\bibitem[{Tropp(2015)}]{Tropp2015}
\textsc{Tropp, J.} (2015).
\newblock An introduction to matrix concentration inequalities.
\newblock \textit{Foundations and Trends in Machine Learning}, \textbf{8}
  1--230.

\bibitem[{Wainwright(2009)}]{Wainwright2009}
\textsc{Wainwright, M.~J.} (2009).
\newblock Sharp thresholds for high-dimensional and noisy sparsity recovery
  using $\ell_1$-constrained quadratic programming (lasso).
\newblock \textit{IEEE Transactions on Information Theory}, \textbf{55}
  2183--2202.

\bibitem[{Yuan et~al.(2016)Yuan, Li and Zhang}]{Yuan2016}
\textsc{Yuan, X.}, \textsc{Li, P.} and \textsc{Zhang, T.} (2016).
\newblock Exact recovery of hard thresholding pursuit.
\newblock In \textit{Advances in Neural Information Processing Systems (NIPS)}.

\end{thebibliography}

\newpage

\appendix

\allowdisplaybreaks

\appendix
\section{Summary of Notation}\label{app:summary_notation}
We summarizes the definitions of some frequently used symbols in our analysis in Table~\ref{tab:symbols}. In addition, we use \[\b{D}_{(\vb)}\] as a diagonal matrix with elements of a vector \[\b{v}\] on the diagonal. Given a matrix $\Mb$, we use \[\b{M}_{-i}\] to denote a resulting matrix without \[i\]-th column. Also note that, since we show that \[\|\b{A}^{(t)}_i - \b{A}^*_i\| \leq \epsilon_t\] contracts in every step, therefore we fix  \[\epsilon_t, \epsilon_0 = \mathcal{O}^*(1/\log(n))\] in our analysis.

 \begin{table}[!b]
 \centering
 	\caption{Frequently used symbols}
 	\label{tab:symbols}
 \begin{minipage}{\textwidth}
  \centering
 		\resizebox{0.79\columnwidth}{!}{
 		\begin{tabular}{c|l|p{5cm}}
 		\multicolumn{3}{l}{\textbf{Dictionary Related}}\\ \hline
 		\textbf{Symbol} &\multicolumn{2}{l}{\textbf{Definition}}\\ 
 		\hline
 		\[\b{A}^{(t)}_i \] & \multicolumn{2}{l}{\[i\]-th column of the dictionary estimate at the \[t\]-th iterate.}\\ \hline
 		\[\epsilon_t\] & \[\|\b{A}^{(t)}_i - \b{A}^*_i\| \leq \epsilon_t = \mathcal{O}^*(\tfrac{1}{\log(n)})\] & Upper-bound on column-wise error at the \[t\]-th iterate.\\ 
 		\hline
 	   \[\mu_t\] & \[\tfrac{\mu_t}{\sqrt{n}} = \tfrac{\mu}{\sqrt{n}} + 2\epsilon_t\] & Incoherence between the columns of  \[\b{A}^{(t)}\]; See Claim~\ref{claim:incoherence of B}. \\ 
 	   \hline
 	    \[\lambda^{(t)}_j \] & \[\lambda^{(t)}_j := |\langle \b{A}^{(t)}_j - \b{A}^{*}_j, \b{A}^*_j\rangle| \leq \tfrac{\epsilon_t^2}{2}\]  & Inner-product between the error and the dictionary element.\\ 
 	    \hline
 	     \[\Lambda^{(t)}_S (i,j)\] & \[\Lambda^{(t)}_S (i,j) = \begin{cases} \lambda^{(t)}_{j}, &\text{for}~ j = i, i \in S\\ 0, &\text{otherwise}. 
 	     \end{cases}\] & A diagonal matrix of size \[|S| \times |S|\] with \[\lambda^{(t)}_{j}\] on the diagonal for \[j \in S\].\\ \hline
 	     \multicolumn{3}{l}{}\\ 
		\multicolumn{3}{l}{\textbf{Coefficient Related}}\\ \hline
 		 \textbf{Symbol} &\multicolumn{2}{l}{\textbf{Definition}} \\ \hline
 		 \[\b{x}^{(r)}_i \] & \multicolumn{2}{l}{\[i\]-th element the coefficient estimate at the \[r\]-th IHT iterate.}\\ \hline
 		\[C\] & \[|\b{x}^*_i|\geq C\] for \[i \in \supp(\b{x}^*)\] and \[C \leq 1\] & Lower-bound on \[\b{x}^*_i\]s. \\ \hline
 		 \[S\] & \[S : = \supp(\b{x}^*)\] where \[|S| \leq k\] & Support of \[\b{x}^*\]\\ \hline
 		\[\delta_{R}\] & \[ \delta_{R}:=(1 - \eta_{x} + \eta_x\tfrac{\mu_t}{\sqrt{n}})^{R} \geq (1 - \eta_{x})^{R}\] & Decay parameter for coefficients.\\ \hline
 		\[\delta_T\] & \[|\hat{\b{x}}_{i}^{(T)} - \b{x}_{i}^*| \leq \delta_T \forall i\in \supp(\b{x}^*)\] & Target coefficient element error tolerance. \\ \hline
 		\[C_{i}^{(\ell)}\] & \[C_{i}^{(\ell)} := |\b{x}_{i}^* - \b{x}_{i}^{(\ell)}|\] for \[i \in \supp(\b{x}^*)\] & Error in non-zero elements of the coefficient vector.\\ \hline
 		 \end{tabular}}
 		 \end{minipage}
 		 \begin{minipage}{\textwidth}
 		 \centering
 		  \resizebox{0.88\columnwidth}{!}{
 		  \begin{tabular}{c|l|c|l}
 		  \multicolumn{4}{l}{}\\ 
		\multicolumn{4}{l}{\textbf{Probabilities}}\\ \hline
 		 \textbf{Symbol} &\textbf{Definition}&\textbf{Symbol} &\textbf{Definition} \\ \hline
 		\[q_i \] & \[q_i = \b{Pr}[i \in S] = \Theta(\tfrac{k}{m})\] &
 		 \[q_{i,j} \] & \[q_{i,j} = \b{Pr}[i,j \in S] = \Theta(\tfrac{k^2}{m^2})\] \\ \hline
 		 \[p_i\] & \[p_i = \b{E}[\b{x}_i^*\sgn(\b{x}^*_i)| \b{x}_i^* \neq 0]\] &
 		 \[\delta_{\HT}^{(t)}\] & \[\delta_{\HT}^{(t)} = 2m~{\exp}({-{C^2}/{\mathcal{O}^*(\epsilon_t^2)}})\]\\ \hline
 		 \[	\delta_{\beta}^{(t)}\] & \[\delta_{\beta}^{(t)} = 2k~{\exp}(-{1}/{\mathcal{O}(\epsilon_t)})\]&
 		 \[	\delta_{\rm HW}^{(t)} \] &  \[\delta_{\rm HW}^{(t)} = \exp(-{1}/{\mathcal{O}(\epsilon_t)})\]\\ \hline
		 \[	\delta_{\gradvec}^{(t)}\] & \[\delta_{\gradvec}^{(t)} =  \exp(-\Omega(k))\] &
 		 \[\delta_{\gradmat}^{(t)} \] & \[\delta_{\gradmat}^{(t)} = (n+m)\exp(-\Omega(m\sqrt{\log(n)})\]\\ \hline
 		  	\end{tabular}	\vspace*{10pt}}
 		  	\end{minipage}
 		  \begin{minipage}{\textwidth}
 		   \centering
 		  \resizebox{0.88\columnwidth}{!}{
 		  \begin{threeparttable}[h]
 		  \begin{tabular}{c|l|l}
 		  	\multicolumn{3}{l}{}\\ 
 		\multicolumn{3}{l}{\textbf{Other terms}}\\ \hline
 		 \textbf{Symbol} &\multicolumn{2}{l}{\textbf{Definition}} \\ \hline
 	    \[\xi^{(r+1)}_j\] & \multicolumn{2}{l}{\[\xi^{(r+1)}_{j}  := \textstyle\sum\limits_{i \neq j}( \langle \b{A}^{(t)}_j - \b{A}^{*}_j, \b{A}^*_i\rangle +  \langle \b{A}^{*}_j, \b{A}^*_i\rangle) \b{x}^*_i -\textstyle\sum\limits_{i \neq j} \langle \b{A}^{(t)}_j, \b{A}^{(t)}_i\rangle \b{x}_i^{(r)}\]} \\\hline
 	    \[\beta^{(t)}_j\] & \multicolumn{2}{l}{\[\beta^{(t)}_j := \textstyle\sum\limits_{i \neq j} (\langle \b{A}^{*}_j,  \b{A}^*_i - \b{A}^{(t)}_i\rangle  + \langle \b{A}^*_j -\b{A}^{(t)}_j, \b{A}^{(t)}_i\rangle + \langle \b{A}^{(t)}_j - \b{A}^{*}_j, \b{A}^*_i\rangle) \b{x}_i^*\] }\\ \hline
 	    \[t_\beta \]& \multicolumn{2}{l}{\[t_\beta = \mathcal{O}(\sqrt{k\epsilon_t})\] is an upper-bound on \[\beta^{(t)}_j\]  with probability  at least \[(1 -	\delta_{\beta}^{(t)})\]} \\ \hline
 	    \[\tilde{\xi}^{(r+1)}_{j}\]  & \multicolumn{2}{l}{\[	\tilde{\xi}^{(r+1)}_{j}  := \beta^{(t)}_j + \textstyle\sum\limits_{i \neq j}|\langle \b{A}^{(t)}_j, \b{A}^{(t)}_i\rangle| ~|\b{x}_i^{*} - \b{x}_i^{(r)}|\]} \\ \hline
	    \[\Delta^{(t)}_j \] & \multicolumn{2}{l}{\[\Delta^{(t)}_j := \b{E}[\b{A}^{(t)}_S\vartheta^{(R)}_S\sgn(\b{x}^*_j)]\]} \\ \hline
 	    \[\vartheta^{(R)}_{i}\] & \multicolumn{2}{l}{\[\vartheta^{(R)}_{i} :=  \textstyle\sum\limits_{r = 1}^{R }\eta_{x}\xi^{(r)}_{i} (1- \eta_x)^{{R} - r} + \gamma^{(R)}_{i}\]} \\ \hline
 	    \[\gamma^{(R)}_{i}\] & \multicolumn{2}{l}{\[\gamma^{(R)}_{i} :=(1 - \eta_{x})^{R} (\b{x}_{i}^{(0)}  - \b{x}_{i}^* (1 - \lambda^{(t)}_{i}))\]} \\ \hline
 	    \[\gamma\] & \multicolumn{2}{l}{\[\gamma:=\mathbf{E}[(\b{A}^{(t)}\hat{\b{x}} - \b{y})\sgn(\b{x}^*_j)\mathbbm{1}_{\overline{\mathcal{F}}_{\b{x}^*}}]\]; See \[\dagger\] below.} \\ \hline
 	    \[\hat{\b{x}}_{i}\] & \multicolumn{2}{l}{\[\hat{\b{x}}_{i} :=	\b{x}_{i}^{(R)} =  \b{x}_{i}^* (1 - \lambda^{(t)}_{i}) + \vartheta^{(R)}_{i}\]}\\ \hline
 	\end{tabular}
 	\begin{tablenotes}
 	\item[$\dagger$] \[\mathbbm{1}_{\mathcal{F}_{\b{x}^*}}\] is the indicator function corresponding to the event that \[\sgn(\b{x}^*) = \sgn(\hat{\b{x}})\], denoted by \[\mathcal{F}_{\b{x}^*}\], and similarly for the complement \[\overline{\mathcal{F}_{\b{x}^*}}\]
 	\end{tablenotes}
 	\end{threeparttable}}
 	\end{minipage}
 \end{table}

\vspace*{-5pt}
\section{Proof of Theorem~\ref{main_result}}\label{app:pf_main}
We now prove our main result. The detailed proofs of intermediate lemmas and claims are organized in Appendix~\ref{app:proof of main lemmas} and Appendix~\ref{app:proof of intermediate results}, respectively. Furthermore, the standard concentration results are stated in Appendix~\ref{useful results} for completeness. Also, see Table~\ref{tab:res dependence} for a map of dependence between the results.
\subsection*{Overview}
Given an \[(\epsilon_0, 2)\]-close estimate of the dictionary, the main property that allows us to make progress on the dictionary is the recovery of the correct sign and support of the coefficients. Therefore, we first show that the initial coefficient estimate \eqref{alg:coeff_init} recovers the correct signed-support in \hyperref[Step I.A]{Step I.A}. Now, the IHT-based coefficient update step also needs to preserve the correct signed-support. This is to ensure that the approximate gradient descent-based update for the dictionary makes progress. Therefore, in \hyperref[Step I.B]{Step I.B}, we derive the conditions under which the signed-support recovery condition is preserved by the IHT update.

To get a handle on the coefficients, in \hyperref[Step II.A]{Step II.A}, we derive an upper-bound on the error incurred by each non-zero element of the estimated coefficient vector, i.e., \[|\hat{\b{x}}_{i} - \b{x}_{i}^*|\] for \[i \in S\] for a general coefficient vector \[\b{x}^*\], and show that this error only depends on \[\epsilon_t\] (the column-wise error in the dictionary) given enough IHT iterations \[R\] as per the chosen decay parameter \[\delta_R\]. In addition, for analysis of the dictionary update, we develop an expression for the estimated coefficient vector in \hyperref[Step II.B]{Step II.B}.

We then use the coefficient estimate to show that the gradient vector satisfies the local descent condition (Def.~\ref{def:grad_alpha_beta}). This ensures that the gradient makes progress after taking the gradient descent-based step \eqref{eq:apx_grad}. To begin, we first develop an expression for the expected gradient vector (corresponding to each dictionary element) in \hyperref[Step III.A]{Step III.A}. Here, we use the closeness property Def~\ref{def:del_kappa} of the dictionary estimate. Further, since we use an empirical estimate, we show that the empirical gradient vector concentrates around its mean in \hyperref[Step III.B]{Step III.B}. Now using Lemma~\ref{arora:thm40}, we have that descent along this direction makes progress. 

Next in \hyperref[Step IV.A]{Step IV.A} and \hyperref[Step IV.B]{Step IV.B}, we show that the updated dictionary estimate maintains the closeness property Def~\ref{def:del_kappa}. This sets the stage for the next dictionary update iteration. As a result, our main result establishes the conditions under which any \[t\]-th iteration succeeds. 

Our main result is as follows.

\noindent\textbf{Theorem~\ref{main_result}} (Main Result) \textit{Suppose that assumptions \ref{assumption:mu}-\ref{assumption:step coeff}  hold, and Algorithm~\ref{alg:main_alg} is provided with  \[ p = \tilde{\Omega}(mk^2)\] new samples generated according to model \eqref{eq:model} at each iteration \[t\]. 
Then,  with probability at least \[(1 - \delta_{\text{alg}}^{(t)})\], given \[R = \Omega({\rm log}(n))\], the coefficient estimate \[\hat{\b{x}}_{i}^{(t)}\] at \[t\]-th iteration has the correct signed-support and satisfies
\vspace{-2pt}
\begin{align*}
(\hat{\b{x}}_{i}^{(t)} - \b{x}_{i}^*)^2 
&= \mathcal{O}(k(1 - \omega)^{t/2}\|\b{A}_i^{(0)} - \b{A}_i^*\|), ~\text{for all}~i \in \supp({\b{x}^*}).\vspace{-2pt}
\end{align*} 
Furthermore, for some \[0 < \omega < 1/2\], the estimate \[\b{A}^{(t)}\] at \[(t)\]-th iteration satisfies 
\vspace{-2pt}
\begin{align*}
\|\b{A}_i^{(t)} - \b{A}_i^*\|^2 \leq (1 - \omega)^t\|\b{A}_i^{(0)} - \b{A}_i^*\|^2,~\text{for all}~t = 1,2,\ldots.\vspace{-2pt} .
\end{align*}
Here, \[\delta_{\text{alg}}^{(t)}\] is some small constant, where \[\delta_{\text{alg}}^{(t)} =  \delta_{\HT}^{(t)}  + \delta_{\beta}^{(t)} + \delta_{\rm HW} +\delta_{\gradvec}^{(t)} + \delta_{\gradmat}^{(t)}\], \[\delta_{\HT}^{(t)} = 2m~{\exp}({-{C^2}/{\mathcal{O}^*(\epsilon_t^2)}})\], \[\delta_\beta^{(t)} = 2k~{\exp}(-{1}/{\mathcal{O}(\epsilon_t)})\], \[\delta_{\rm HW}^{(t)} = \exp(-{1}/{\mathcal{O}(\epsilon_t)})\], \[\delta_{\gradvec}^{(t)} =  \exp(-\Omega(k))\], \[\delta_{\gradmat}^{(t)} = (n+m)\exp(-\Omega(m\sqrt{\log(n)})\], and \[\|\b{A}_i^{(t)} -\b{A}_i^*\| \leq \epsilon_{t}\].} 

\subsection*{Step I: Coefficients have the correct signed-support }
As a first step, we ensure that our coefficient estimate has the correct signed-support (Def.~\ref{def:signed-support}). To this end, we first show that the initialization has the correct signed-support, and then show that the iterative hard-thresholding (IHT)-based update step preserves the correct signed-support for a suitable choice of parameters. 
\begin{itemize}
\item\textbf{Step I.A: Showing that the initial coefficient estimate has the correct signed-support--} \label{Step I.A}
Given an \[(\epsilon_0, 2)\]-close estimate \[\b{A}^{(0)}\] of \[\b{A}^*\], we first show that for a general sample \[\b{y}\] the initialization step \eqref{alg:coeff_init}
%
recovers the correct signed-support 
with probability at least \[(1 - \delta_{\HT}^{(t)})\], where \[\delta_{\HT}^{(t)} = 2m~{\exp}({-\tfrac{C^2}{\mathcal{O}^*(\epsilon_t^2)}})\]. This is encapsulated by the following lemma. 

\begin{lemma} [\textbf{Signed-support recovery by coefficient initialization step}]\label{lem:recover_sign}
	Suppose \[\b{A}^{(t)}\] is \[\epsilon_t\]-close to \[\b{A}^*\]. Then, if \[\mu = \c{O}(\log(n))\], \[k = \sqrt{n}/\mu\log(n)\], and \[\epsilon_t = \mathcal{O}^*(1/\sqrt{\log(m)})\], with probability at least \[(1 - \delta_{\HT}^{(t)})\] for each random sample \[\b{y} = \b{A}^*\b{x}^*\]:
	\begin{align*}
	\sgn(\HT_{C/2}((\b{A}^{(t)})^\top \b{y}) = \sgn(\b{x}^*),
	\end{align*}
	where \[\delta_{\HT}^{(t)} = 2m~{\exp}({-\tfrac{C^2}{\mathcal{O}^*(\epsilon_t^2)}})\].
\end{lemma}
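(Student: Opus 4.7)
The strategy is to show that the pre-threshold score $(\b{A}^{(t)})^\top \b{y}$ separates coordinates in $\supp(\b{x}^*)$ from those outside cleanly enough that hard-thresholding at $C/2$ recovers the correct signed support. Since $\b{y}=\b{A}^*\b{x}^*$, the $i$-th coordinate decomposes as
\[
\bigl((\b{A}^{(t)})^\top \b{y}\bigr)_i \;=\; \langle \b{A}^{(t)}_i,\b{A}^*_i\rangle\, \b{x}^*_i \;+\; \textstyle\sum_{j\neq i} \langle \b{A}^{(t)}_i,\b{A}^*_j\rangle\, \b{x}^*_j,
\]
and I would handle the deterministic ``signal'' piece and the random ``cross-talk'' piece separately.

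For the signal, $\epsilon_t$-closeness gives $\langle \b{A}^{(t)}_i,\b{A}^*_i\rangle = 1 - \tfrac{1}{2}\|\b{A}^{(t)}_i - \b{A}^*_i\|^2 \geq 1 - \tfrac{\epsilon_t^2}{2}$, so for $i\in\supp(\b{x}^*)$ this contribution carries sign $\sgn(\b{x}^*_i)$ and magnitude at least $C(1 - \epsilon_t^2/2)$, which is comfortably above $3C/4$ once $\epsilon_t$ is sufficiently small; for $i\notin\supp(\b{x}^*)$ the signal vanishes. For the cross-talk, every inner product admits the pointwise bound
\[
|\langle \b{A}^{(t)}_i,\b{A}^*_j\rangle| \leq |\langle \b{A}^*_i,\b{A}^*_j\rangle| + |\langle \b{A}^{(t)}_i - \b{A}^*_i,\b{A}^*_j\rangle| \leq \tfrac{\mu}{\sqrt n} + \epsilon_t = \mathcal{O}(\epsilon_t),
\]
using $\mu$-incoherence of $\b{A}^*$ together with closeness, and the observation that $\mu/\sqrt n = o(\epsilon_t)$ in the assumed regime $k=\sqrt n/(\mu\log n)$ and $\mu=\mathcal{O}(\log n)$.

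Conditional on the Bernoulli support indicators, the nonzero $\b{x}^*_j$ are independent and bounded, so a Bernstein-type inequality applied to $\sum_{j\neq i}\langle \b{A}^{(t)}_i,\b{A}^*_j\rangle\, \b{x}^*_j$ yields an exponential tail. The variance is controlled by $(k/m)\,\|(\b{A}^*_{-i})^\top \b{A}^{(t)}_i\|^2 = \mathcal{O}(k/n)$ using the spectral bound $\|\b{A}^*_{-i}\|^2 = \mathcal{O}(m/n)$ and $q_j=\Theta(k/m)$, while the per-summand bound is $\mathcal{O}(\epsilon_t)$. Setting the tail threshold to $C/4$ produces a failure probability of at most $2\exp(-C^2/\mathcal{O}^*(\epsilon_t^2))$ per coordinate. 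Combining the two bounds gives $|((\b{A}^{(t)})^\top \b{y})_i| \geq C/2$ with the correct sign for $i\in\supp(\b{x}^*)$, and $|((\b{A}^{(t)})^\top \b{y})_i| < C/2$ otherwise, and a union bound over $i\in[m]$ yields the claimed $\delta_{\HT}^{(t)} = 2m\exp(-C^2/\mathcal{O}^*(\epsilon_t^2))$.

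The main obstacle is the concentration step: to obtain an exponent that scales like $C^2/\epsilon_t^2$ rather than $C^2/(k\epsilon_t^2)$, one must carefully couple the sparsity pattern (which supplies a prefactor of $k/m$) with the spectral bound on $\b{A}^*_{-i}$, shaving a factor of $k$ off the naive Hoeffding estimate. This is precisely what allows $\epsilon_t = \mathcal{O}^*(1/\sqrt{\log m})$ to suffice even after unioning over all $m$ coordinates, and ties the statement of the lemma to the sparsity and incoherence assumptions on $(k,\mu)$.
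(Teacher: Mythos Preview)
Your overall plan—decompose $(\b{A}^{(t)})^\top\b{y}$ into a signal term plus cross-talk, lower-bound the signal via $\langle\b{A}^{(t)}_i,\b{A}^*_i\rangle\ge 1-\epsilon_t^2/2$, show the cross-talk is below $C/4$ by concentration, and union-bound over $m$—is exactly the paper's route. The gap is in the concentration step, and your proposed fix does not yield the stated exponent.

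First, the variance argument is internally inconsistent: you condition on the support but then insert the marginal inclusion probability $q_j=\Theta(k/m)$, which plays no role once $S$ is fixed. Conditionally on $S$ the variance is simply $\sum_{j\in S\setminus\{i\}}\langle\b{A}^{(t)}_i,\b{A}^*_j\rangle^2$, and the global bound $\|\b{A}^*_{-i}\|^2=\mathcal{O}(m/n)$ says nothing useful about this restricted sum. Second, even granting your variance $\mathcal{O}(k/n)$, Bernstein carries the term $Lt/3$ with $L=\mathcal{O}(\epsilon_t)$ and $t=C/4$; since $C$ is a fixed constant and $\epsilon_t$ is small, $\epsilon_t C$ dominates both $k/n$ and $\epsilon_t^2$, so the exponent you actually get is $\Omega(C/\epsilon_t)$, not $C^2/\mathcal{O}^*(\epsilon_t^2)$. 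After the union bound this forces $\epsilon_t=\mathcal{O}^*(1/\log m)$, strictly stronger than the lemma's hypothesis $\epsilon_t=\mathcal{O}^*(1/\sqrt{\log m})$. (The paper's model also only assumes the nonzero $\b{x}^*_j$ are sub-Gaussian, not bounded, so Bernstein is not directly available anyway.)

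The paper sidesteps both issues by conditioning on $S$ and applying the sub-Gaussian Chernoff bound, which has no $Lt$ term. The step that removes the factor of $k$ is not the coupling you describe; it is the observation that the \emph{restricted} matrix satisfies $\|\b{A}^*_{S\setminus\{i\}}\|^2\le 1+k\mu/\sqrt n\le 2$ by Gershgorin and $\mu$-incoherence, whence
\[
\sum_{j\in S\setminus\{i\}}\bigl\langle\b{A}^{(t)}_i-\b{A}^*_i,\b{A}^*_j\bigr\rangle^2
=\bigl\|(\b{A}^*_{S\setminus\{i\}})^\top(\b{A}^{(t)}_i-\b{A}^*_i)\bigr\|^2\le 2\epsilon_t^2,
\]
and together with $\sum_{j\in S\setminus\{i\}}\langle\b{A}^*_i,\b{A}^*_j\rangle^2\le k\mu^2/n$ this gives conditional variance $\mathcal{O}^*(\epsilon_t^2)$ uniformly in $S$, hence the tail $2\exp(-C^2/\mathcal{O}^*(\epsilon_t^2))$ exactly as claimed.
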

Note that this result only requires the dictionary to be column-wise close to the true dictionary, and works for less stringent conditions on the initial dictionary estimate, i.e., requires \[\epsilon_t = \mathcal{O}^*(1/\sqrt{\log(m)})\] instead of \[\epsilon_t = \mathcal{O}^*(1/\log(m))\]; see also \citep{Arora15}.


\item\textbf{Step I.B: The iterative IHT-type updates preserve the correct signed support--}\label{Step I.B} Next, we show that the IHT-type coefficient update step \eqref{alg:coeff_iht} preserves the correct signed-support for an appropriate choice of step-size parameter \[\eta_x^{(r)}\] and threshold \[\tau^{(r)}\]. The choice of these parameters arises from the analysis of the  IHT-based update step. Specifically, we show that at each iterate \[r\], the step-size \[\eta_x^{(r)}\] should be chosen to ensure that the component corresponding to the true coefficient value is greater than the ``interference'' introduced by other non-zero coefficient elements. Then, if the threshold is chosen to reject this ``noise'', each iteration of the IHT-based update step preserves the correct signed-support.

\begin{lemma}[\textbf{IHT update step preserves the correct signed-support}]\label{our:signed_supp}
	Suppose \[\b{A}^{(t)}\] is \[\epsilon_t\]-close to \[\b{A}^*\],  \[\mu = \c{O}(\log(n))\], \[k = \sqrt{n}/\mu\log(n)\], and \[\epsilon_t = \mathcal{O}^*(1/\log(m))\] Then, with probability at least \[(1 - \delta_{\beta}^{(t)} - \delta_{\HT}^{(t)}  )\], each iterate of the IHT-based coefficient update step shown in \eqref{alg:coeff_iht} has the correct signed-support, if for a constant \[c^{(r)}_1(\epsilon_t, \mu, k, n) = \tilde{\Omega}({k^2}/{n})\], the step size is chosen as \[\eta_x^{(r)}\leq c^{(r)}_1\] ,
%
%
	and the threshold \[\tau^{(r)}\] is chosen as
	\begin{align*}
	\tau^{(r)} = \eta_x^{(r)}(t_\beta + \tfrac{\mu_t}{\sqrt{n}} \|\b{x}^{(r-1)} - \b{x}^*\|_1) :=c_2^{(r)}(\epsilon_t, \mu, k, n) = \tilde{\Omega}({k^2}/{n}),
	\end{align*}
	for some constants \[c_1\] and \[c_2\]. Here, \[t_\beta = \mathcal{O}(\sqrt{k\epsilon_t})\], \[\delta_{\HT}^{(t)} = 2m~{\exp}({-\tfrac{C^2}{\mathcal{O}^*(\epsilon_t^2)}})\] ,and \[\delta_{\beta}^{(t)} = 2k~{\exp}(-\tfrac{1}{\mathcal{O}(\epsilon_t)})\].
\end{lemma}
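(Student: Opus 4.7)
The plan is to proceed by induction on the IHT iterate index $r$. The base case $r=0$ is supplied by Lemma~\ref{lem:recover_sign}, which guarantees that the initialization $\b{x}^{(0)}$ has the correct signed-support with probability at least $1-\delta_{\HT}^{(t)}$. The inductive step reduces to analyzing the pre-threshold update
\[
\b{x}_j^{(r+1/2)} := \b{x}_j^{(r)} - \eta_x^{(r)}\bigl[(\b{A}^{(t)})^{\!\top}(\b{A}^{(t)}\b{x}^{(r)} - \b{y})\bigr]_j.
\]
Substituting $\b{y}=\b{A}^*\b{x}^*$ and separating the $i=j$ contribution, and using $\|\b{A}^{(t)}_j\|=1$ with $\langle \b{A}^{(t)}_j,\b{A}^*_j\rangle = 1-\lambda_j^{(t)}$, I obtain the clean decomposition
\[
\b{x}_j^{(r+1/2)} = (1-\eta_x^{(r)})\b{x}_j^{(r)} + \eta_x^{(r)}(1-\lambda_j^{(t)})\b{x}_j^* + \eta_x^{(r)}\xi_j^{(r+1)},
\]
where $\xi_j^{(r+1)}$ is precisely the interference term from Table~\ref{tab:symbols}.

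To control $\xi_j^{(r+1)}$, I add and subtract $\langle\b{A}^{(t)}_j,\b{A}^{(t)}_i\rangle\b{x}_i^*$ in each summand to split it as $\xi_j^{(r+1)} = \beta_j^{(t)} + \sum_{i\ne j}\langle \b{A}^{(t)}_j,\b{A}^{(t)}_i\rangle(\b{x}^*_i - \b{x}_i^{(r)})$. The triangle inequality together with the incoherence bound $|\langle\b{A}^{(t)}_j,\b{A}^{(t)}_i\rangle|\leq \mu_t/\sqrt{n}$ from Claim~\ref{claim:incoherence of B} yields
\[
|\xi_j^{(r+1)}| \leq |\beta_j^{(t)}| + \tfrac{\mu_t}{\sqrt{n}}\|\b{x}^{(r)} - \b{x}^*\|_1.
\]
The quantity $\beta_j^{(t)}$ is a weighted sum of at most $k$ independent sub-Gaussian terms $\b{x}^*_i$, each weighted by a coefficient of magnitude $\lesssim \epsilon_t$ via the $\epsilon_t$-closeness of $\b{A}^{(t)}$. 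A Bernstein-type tail bound calibrated to failure probability $\exp(-1/\mathcal{O}(\epsilon_t))$ then gives $|\beta_j^{(t)}| \leq t_\beta = \mathcal{O}(\sqrt{k\epsilon_t})$, and a union bound over $j$ absorbs the $2k$ prefactor in $\delta_\beta^{(t)}$.

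With these ingredients in hand, the inductive step proceeds case by case. For $j\notin S$, the inductive hypothesis forces $\b{x}_j^{(r)} = \b{x}_j^* = 0$, so $|\b{x}_j^{(r+1/2)}| = \eta_x^{(r)}|\xi_j^{(r+1)}|$, and the choice $\tau^{(r)} = \eta_x^{(r)}(t_\beta + \tfrac{\mu_t}{\sqrt{n}}\|\b{x}^{(r-1)}-\b{x}^*\|_1)$ is exactly calibrated so that $\HT_{\tau^{(r)}}$ kills this coordinate. For $j\in S$, the inductive hypothesis gives $\sgn(\b{x}_j^{(r)}) = \sgn(\b{x}_j^*)$, so the first two terms of the decomposition add constructively and contribute magnitude at least $\eta_x^{(r)}(1-\lambda_j^{(t)})|\b{x}_j^*| \geq \eta_x^{(r)}(1-\epsilon_t^2/2)C$. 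Since the assumptions $\epsilon_t = \mathcal{O}^*(1/\log m)$ and the prescribed $\eta_x^{(r)}\leq c_1^{(r)}$ force $C$ to dominate $2(t_\beta + \tfrac{\mu_t}{\sqrt{n}}\|\b{x}^{(r-1)}-\b{x}^*\|_1)$, we conclude $|\b{x}_j^{(r+1/2)}| > \tau^{(r)}$ with the correct sign, and $\HT_{\tau^{(r)}}$ preserves support and sign.

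The main obstacle is that $\tau^{(r)}$ itself depends on $\|\b{x}^{(r-1)}-\b{x}^*\|_1$, so the induction must simultaneously track this $\ell_1$-error. Fortunately, the same decomposition yields a coordinate-wise recursion of the form $|\b{x}_j^{(r+1/2)} - \b{x}_j^*| \leq (1-\eta_x^{(r)})|\b{x}_j^{(r)} - \b{x}_j^*| + \eta_x^{(r)}\lambda_j^{(t)}|\b{x}_j^*| + \eta_x^{(r)}|\xi_j^{(r+1)}|$ on the support, giving a geometric contraction with a small noise floor; combined with the size of the initial error $\|\b{x}^{(0)}-\b{x}^*\|_1$ guaranteed by Lemma~\ref{lem:recover_sign} and the choice $\eta_x^{(r)}=\tilde{\Omega}(k^2/n)$, this maintains $\tau^{(r)}$ at the stated $\tilde{\Omega}(k^2/n)$ scale across all iterates, closing the simultaneous induction and producing the stated $1-\delta_\beta^{(t)}-\delta_{\HT}^{(t)}$ failure probability.
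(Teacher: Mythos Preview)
Your overall strategy---induction on $r$ with base case from Lemma~\ref{lem:recover_sign}, the coordinate decomposition
\[
\b{x}_j^{(r+1/2)} = (1-\eta_x^{(r)})\b{x}_j^{(r)} + \eta_x^{(r)}(1-\lambda_j^{(t)})\b{x}_j^* + \eta_x^{(r)}\xi_j^{(r+1)},
\]
the splitting $\xi_j^{(r+1)}=\beta_j^{(t)}+\sum_{i\neq j}\langle \b{A}_j^{(t)},\b{A}_i^{(t)}\rangle(\b{x}_i^*-\b{x}_i^{(r)})$, and the appeals to Claim~\ref{claim:incoherence of B} and Claim~\ref{lem:bound_beta}---is exactly the route the paper takes.

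There is, however, a genuine gap in your on-support ($j\in S$) analysis. You lower-bound the signal by only the second term, writing that the first two terms ``contribute magnitude at least $\eta_x^{(r)}(1-\epsilon_t^2/2)C$'', and then assert that the choice $\eta_x^{(r)}\le c_1^{(r)}$ forces $C$ to dominate $2\bigl(t_\beta+\tfrac{\mu_t}{\sqrt{n}}\|\b{x}^{(r-1)}-\b{x}^*\|_1\bigr)$. But once you drop the $(1-\eta_x^{(r)})\b{x}_j^{(r)}$ term, both your signal lower bound and the threshold $\tau^{(r)}$ carry the same factor $\eta_x^{(r)}$, so $\eta_x^{(r)}$ cancels and cannot help; you are left needing $C\gtrsim 2t_\beta$. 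Under the stated regime $k=\sqrt{n}/(\mu\log n)$ and $\epsilon_t=\mathcal{O}^*(1/\log m)$, one has $t_\beta=\mathcal{O}(\sqrt{k\epsilon_t})$ growing like $n^{1/4}$ up to logs, so this inequality fails for large $n$ since $C\le 1$.

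The paper avoids this by \emph{retaining} the $(1-\eta_x^{(r+1)})\b{x}_j^{(r)}$ contribution: it requires the noise to be at most half the full signal,
\[
\eta_x^{(r+1)}\tilde{\xi}^{(r+1)}_{\max}\ \le\ \tfrac{1-\eta_x^{(r+1)}}{2}\,\b{x}^{(r)}_{\min}\ +\ \tfrac{\eta_x^{(r+1)}}{2}\bigl(1-\tfrac{\epsilon_t^2}{2}\bigr)C,
\]
and, using $\b{x}^{(r)}_{\min}\ge\tau^{(r)}=\eta_x^{(r)}\tilde{\xi}^{(r)}_{\max}$, solves this recursively for the admissible $\eta_x^{(r+1)}$. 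It is precisely the presence of the $(1-\eta_x^{(r+1)})\b{x}^{(r)}_{\min}$ term that makes the step-size constraint meaningful and produces $c_1^{(r)}$. Restoring that term in your lower bound (rather than discarding it) is the fix; the rest of your argument then matches the paper's.
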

Note that, although we have a dependence on the iterate \[r\] in choice of \[\eta_x^{(r)}\] and \[\tau^{(r)}\], these can be set to some constants independent of \[r\]. In practice, this dependence allows for greater flexibility in the choice of these parameters.
\end{itemize}
\subsection*{Step II: Analyzing the coefficient estimate}
We now derive an upper-bound on the error incurred by each non-zero coefficient element. Further, we derive an expression for the coefficient estimate at the \[t\]-th round of the online algorithm \[\hat{\b{x}}^{(t)}: =\b{x}^{(R)}\]; we use \[\hat{\b{x}}\] instead of \[\hat{\b{x}}^{(t)}\] for simplicity.
\begin{itemize}
\item\textbf{Step II.A: Derive a bound on the error incurred by the coefficient estimate-- }\label{Step II.A}
Since Lemma~\ref{our:signed_supp} ensures that \[\hat{\b{x}}\] has the correct signed-support, we now focus on the error incurred by each coefficient element on the support by analyzing \[\hat{\b{x}}\]. To this end, we carefully analyze the effect of the recursive update \eqref{alg:coeff_iht}, to decompose the error incurred by each element on the support into two components -- one  that depends on the initial coefficient estimate \[\b{x}^{(0)}\] and other that depends on the error in the dictionary. 

We show that the effect of the component that depends on the initial coefficient estimate diminishes by a factor of \[(1 - \eta_{x} + \eta_x\tfrac{\mu_t}{\sqrt{n}})\] at each iteration \[r\]. Therefore, for a decay parameter \[\delta_R\], we can choose the number of IHT iterations \[R\], to make this component arbitrarily small. Therefore, the error in the coefficients only depends on the per column error in the dictionary, formalized by the following result.
 \begin{lemma}[\textbf{Upper-bound on the error in coefficient estimation}]\label{iht:x_R_error}
 	With probability at least \[(1 - \delta_{\beta}^{(t)} - \delta_{\HT}^{(t)})\] the error incurred by each element \[i_1 \in \supp(\b{x}^*)\] of the coefficient estimate is upper-bounded as
 	\begin{align*}
 	|\hat{\b{x}}_{i_1} - \b{x}_{i_1}^*|
 	&\leq    \mathcal{O}(t_\beta) + \left({(R + 1)}k \eta_x\tfrac{\mu_t}{\sqrt{n}}~\underset{i}{\max}|\b{x}_{i}^{(0)} - \b{x}_{i}^*| + |\b{x}_{i_1}^{(0)} - \b{x}_{i_1}^*| \right)\delta_{R},
 	=  \mathcal{O}(t_\beta)
 	\end{align*}
 	where \[t_\beta = \mathcal{O}(\sqrt{k\epsilon_t})\], \[\delta_{R} := (1 - \eta_{x} + \eta_x\tfrac{\mu_t}{\sqrt{n}})^{R}\], \[\delta_{\HT}^{(t)} = 2m~{\exp}({-\tfrac{C^2}{\mathcal{O}^*(\epsilon_t^2)}})\], \[\delta_{\beta}^{(t)} = 2k~{\exp}(-\tfrac{1}{\mathcal{O}(\epsilon_t)})\], and \[\mu_t\] is the incoherence between the columns of \[\b{A}^{(t)}\]; see Claim~\ref{claim:incoherence of B}.
 \end{lemma}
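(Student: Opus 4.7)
The plan is to restrict attention to the true support using the signed-support preservation of Lemma~\ref{our:signed_supp}, where the hard-thresholding operator acts trivially and the IHT update collapses to an elementwise linear recursion plus interference. For each $i\in S$, expanding $\b{x}^{(r+1)}_i = \b{x}^{(r)}_i - \eta_x\langle\b{A}^{(t)}_i, \b{A}^{(t)}\b{x}^{(r)} - \b{A}^*\b{x}^*\rangle$ with $\|\b{A}^{(t)}_i\| = 1$ and $\langle\b{A}^{(t)}_i, \b{A}^*_i\rangle = 1 - \lambda^{(t)}_i$ yields the scalar recursion
\begin{align*}
\b{x}^{(r+1)}_i - \b{x}^*_i \;=\; (1-\eta_x)\bigl(\b{x}^{(r)}_i - \b{x}^*_i\bigr) - \eta_x\lambda^{(t)}_i\b{x}^*_i + \eta_x\xi^{(r+1)}_i.
\end{align*}
A routine add-and-subtract of $\langle\b{A}^{(t)}_i,\b{A}^{(t)}_j\rangle\b{x}^*_j$ inside $\xi^{(r+1)}_i$ gives the clean decomposition $\xi^{(r+1)}_i = \beta^{(t)}_i - \sum_{j\neq i}\langle\b{A}^{(t)}_i, \b{A}^{(t)}_j\rangle(\b{x}^{(r)}_j - \b{x}^*_j)$. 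The deterministic piece is controlled by $|\beta^{(t)}_i|\le t_\beta$ with probability $1-\delta_\beta^{(t)}$ (by its definition in the symbol table), while the cross terms are bounded via the incoherence of $\b{A}^{(t)}$ in Claim~\ref{claim:incoherence of B} by $(\mu_t/\sqrt n)\sum_{j\in S\setminus\{i\}}|\b{x}^{(r)}_j - \b{x}^*_j|$, where signed-support preservation confines the sum to $S$.

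The main step is to unroll the scalar recursion, obtaining
\begin{align*}
\b{x}^{(R)}_{i_1} - \b{x}^*_{i_1} \;=\; (1-\eta_x)^R\bigl(\b{x}^{(0)}_{i_1} - \b{x}^*_{i_1}\bigr) + \eta_x\sum_{r=1}^{R}(1-\eta_x)^{R-r}\bigl[-\lambda^{(t)}_{i_1}\b{x}^*_{i_1} + \xi^{(r)}_{i_1}\bigr],
\end{align*}
which matches the $\vartheta^{(R)}$-decomposition recorded in the symbol table. Geometric summation collapses the $\lambda^{(t)}_{i_1}\b{x}^*_{i_1}$ and $t_\beta$ contributions into an $\mathcal{O}(\lambda^{(t)}+t_\beta) = \mathcal{O}(t_\beta)$ stationary bias (using $\lambda^{(t)} = \mathcal{O}(\epsilon_t^2)\ll t_\beta$). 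To absorb the cross-support leakage, I would apply the same one-step inequality to every $i\in S$ and take a maximum, getting $M^{(r+1)} \le (1-\eta_x + \eta_x\mu_t/\sqrt n)M^{(r)} + \mathcal{O}(\eta_x t_\beta)$ on $M^{(r)} := \max_{j\in S}|\b{x}^{(r)}_j - \b{x}^*_j|$, and iterating to conclude $M^{(r)} \le \delta_r\max_i|\b{x}^{(0)}_i - \b{x}^*_i| + \mathcal{O}(t_\beta)$. Substituting this back into the elementwise unrolling, upper-bounding $(1-\eta_x)^{R-r}\delta_{r-1}$ by $\delta_R$ termwise, and collecting the $R$ summands reproduces exactly the $(R+1)k\eta_x(\mu_t/\sqrt n)\delta_R\max_i|\b{x}^{(0)}_i - \b{x}^*_i|$ factor claimed in the lemma; the closing $=\mathcal{O}(t_\beta)$ simplification then follows because $\delta_R$ decays exponentially in $R$ while its prefactor grows only polynomially.

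The chief obstacle is the cross-support coupling. A naive per-element bound produces a contraction rate scaled by $k$ inside the exponent, which would either diverge or force an impractically small step size. The fix is precisely the two-layer argument above: first close a contraction on the support-wide maximum $M^{(r)}$, using that $k\mu_t/\sqrt n$ remains tame in the target regime $k = \mathcal{O}(\sqrt n/(\mu\log n))$ (Assumption~\ref{assumption:mu}); then feed the resulting estimate back into the per-element unrolling so that the leakage shows up as a single additive $(R+1)k\eta_x(\mu_t/\sqrt n)\delta_R$ correction rather than compounding multiplicatively with $k$ at every IHT step.
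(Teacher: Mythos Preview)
Your two-layer contraction idea is cleaner than the paper's, but one step is misstated. When you take the maximum over $i\in S$, the interference term $(\mu_t/\sqrt n)\sum_{j\in S\setminus\{i\}}|\b{x}^{(r)}_j - \b{x}^*_j|$ is bounded by $(k-1)(\mu_t/\sqrt n)M^{(r)}$, not $(\mu_t/\sqrt n)M^{(r)}$. So the correct recursion on the maximum is
\[
M^{(r+1)} \;\le\; \bigl(1-\eta_x + (k-1)\eta_x\tfrac{\mu_t}{\sqrt n}\bigr)M^{(r)} + \mathcal{O}(\eta_x t_\beta),
\]
with contraction base $\tilde\delta := 1-\eta_x+(k-1)\eta_x\mu_t/\sqrt n$, not the $\delta_R$ base $1-\eta_x+\eta_x\mu_t/\sqrt n$. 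This still contracts because $k\mu_t/\sqrt n<1$ in the regime of the lemma, so your final $\mathcal{O}(t_\beta)$ conclusion survives, but you do not ``reproduce exactly'' the transient term $(R+1)k\eta_x(\mu_t/\sqrt n)\,\delta_R\,C_{\max}^{(0)}$ displayed in the lemma; you get the same expression with $\delta_R$ replaced by $\tilde\delta^{\,R}$.

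By contrast, the paper does \emph{not} close on the maximum. It keeps the full sum $\sum_{i_2\neq i_1}C_{i_2}^{(\ell-1)}$ inside the unrolling and repeatedly substitutes each $C_{i_2}^{(\ell-1)}$ by its own one-step bound (Claim~\ref{iht:gen_term_simple}), tracking a binomial-like structure across depths; Claim~\ref{iht:C_i1_inter_alpha_sum} then sums the resulting $\alpha^{(\ell)}_{\max}$ series. Only after all layers are expanded are things bounded by maxima, so the factor $(k-1)$ appears \emph{once}, as a multiplicative prefactor outside the geometric sum, while the geometric ratio remains $1-\eta_x+\eta_x\mu_t/\sqrt n$. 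That is how the paper lands on $\delta_R$ exactly. Your approach trades this bookkeeping for simplicity at the cost of letting $k$ creep into the exponent---harmless for the $\mathcal{O}(t_\beta)$ statement, but not for matching the displayed bound verbatim.
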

This result allows us to show that if the column-wise error in the dictionary decreases at each iteration \[t\], then the corresponding estimates of the coefficients also improve.

\item\textit{Step II.B: Developing an expression for the coefficient estimate--}\label{Step II.B}
Next, we derive the expression for the coefficient estimate in the following lemma. This expression is used to analyze the dictionary update.

 \begin{lemma}[\textbf{Expression for the coefficient estimate at the end of \[R\]-th IHT iteration}]\label{iht:R_th_term}
 	With probability at least \[(1 - \delta_\HT^{(t)} - \delta_{\beta}^{(t)})\] the \[i_1\]-th element of the coefficient estimate, for each \[i_1 \in \supp(\b{x}^*)\], is given by
 	\begin{align*}
 	\hat{\b{x}}_{i_1} := \b{x}_{i_1}^{(R)} =  \b{x}_{i_1}^* (1 - \lambda^{(t)}_{i_1}) + \vartheta^{(R)}_{i_1}.
 	\end{align*}
 	Here, \[\vartheta^{(R)}_{i_1}\] is \[|\vartheta^{(R)}_{i_1}| = \mathcal{O}(t_{\beta})\], 
 	where \[t_\beta = \mathcal{O}(\sqrt{k\epsilon_t})\]. Further, \[\lambda^{(t)}_{i_1} = |\langle \b{A}^{(t)}_{i_1}- \b{A}^{*}_{i_1}, \b{A}^*_{i_1}\rangle| \leq \tfrac{\epsilon_t^2}{2}\], \[\delta_{\HT}^{(t)} = 2m~{\exp}({-\tfrac{C^2}{\mathcal{O}^*(\epsilon_t^2)}})\] and \[\delta_{\beta}^{(t)} = 2k~{\exp}(-\tfrac{1}{\mathcal{O}(\epsilon_t)})\].
 \end{lemma}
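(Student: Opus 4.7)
The plan is to unroll the IHT recursion restricted to $\supp(\b{x}^*)$ and pattern-match to the claimed decomposition, then bound the residual using the coefficient-error estimate already established in Lemma~\ref{iht:x_R_error}. I would first condition on the event of Lemma~\ref{our:signed_supp}, which has probability at least $1-\delta_{\HT}^{(t)}-\delta_{\beta}^{(t)}$ and ensures every IHT iterate carries the correct signed-support. On this event, for any $i_1 \in S$ the hard-thresholding operator acts as the identity at coordinate $i_1$ at every iteration (magnitudes on $S$ exceed $\tau^{(r)}$ by construction), so $\b{x}_{i_1}^{(r)}$ equals the pre-thresholded gradient-step value. Substituting $\b{y} = \b{A}^*\b{x}^*$, splitting off the diagonal contribution of column $i_1$, using $\|\b{A}_{i_1}^{(t)}\| = 1$, and collecting the cross terms into $\xi_{i_1}^{(r)}$ (as defined in Appendix~\ref{app:summary_notation}) yields the scalar linear recursion
\begin{align*}
\b{x}_{i_1}^{(r)} = (1-\eta_x)\,\b{x}_{i_1}^{(r-1)} + \eta_x\, \langle \b{A}_{i_1}^{(t)}, \b{A}_{i_1}^*\rangle\, \b{x}_{i_1}^* + \eta_x\, \xi_{i_1}^{(r)}.
\end{align*}

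Next, the unit-norm identity $\|\b{A}_{i_1}^{(t)}-\b{A}_{i_1}^*\|^2 = 2 - 2\langle \b{A}_{i_1}^{(t)},\b{A}_{i_1}^*\rangle$ gives $\langle \b{A}_{i_1}^{(t)},\b{A}_{i_1}^*\rangle = 1-\lambda_{i_1}^{(t)}$ with $\lambda_{i_1}^{(t)} = \tfrac{1}{2}\|\b{A}_{i_1}^{(t)}-\b{A}_{i_1}^*\|^2 \leq \tfrac{\epsilon_t^2}{2}$, matching the tabulated definition. Iterating the recursion from $r=0$ to $r=R$, the homogeneous part contributes $(1-\eta_x)^R\b{x}_{i_1}^{(0)}$, the constant forcing telescopes via the geometric sum $\sum_{r=1}^R \eta_x(1-\eta_x)^{R-r} = 1-(1-\eta_x)^R$ to produce $(1-\lambda_{i_1}^{(t)})\b{x}_{i_1}^*(1-(1-\eta_x)^R)$, and the time-varying noise accumulates as $\sum_{r=1}^R \eta_x \xi_{i_1}^{(r)}(1-\eta_x)^{R-r}$. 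Rearranging so as to factor out the target $(1-\lambda_{i_1}^{(t)})\b{x}_{i_1}^*$ yields exactly the claimed form $\hat{\b{x}}_{i_1} = \b{x}_{i_1}^*(1-\lambda_{i_1}^{(t)}) + \vartheta_{i_1}^{(R)}$, with $\gamma_{i_1}^{(R)}$ and $\vartheta_{i_1}^{(R)}$ defined as in the notation table.

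Finally, to verify $|\vartheta_{i_1}^{(R)}| = \mathcal{O}(t_\beta)$, I would rewrite the defining identity as $\vartheta_{i_1}^{(R)} = \hat{\b{x}}_{i_1} - \b{x}_{i_1}^*(1-\lambda_{i_1}^{(t)})$, apply the triangle inequality, invoke Lemma~\ref{iht:x_R_error} for the $\mathcal{O}(t_\beta)$ bound on $|\hat{\b{x}}_{i_1}-\b{x}_{i_1}^*|$, and observe that the correction $\lambda_{i_1}^{(t)}|\b{x}_{i_1}^*| = \mathcal{O}(\epsilon_t^2)$ is dominated by $t_\beta = \mathcal{O}(\sqrt{k\epsilon_t})$ in the standing regime $k\geq 1$, $\epsilon_t\leq 1$. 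The main obstacle is really just bookkeeping: one must cleanly separate the diagonal inner product (which produces the $1-\lambda_{i_1}^{(t)}$ scaling of $\b{x}_{i_1}^*$) from the off-diagonal cross terms packaged into $\xi_{i_1}^{(r)}$, and one must be careful to invoke Lemma~\ref{our:signed_supp} at every step so that the thresholding acts as identity on $S$ and the recursion remains exactly linear rather than piecewise-linear.
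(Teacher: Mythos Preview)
Your derivation of the scalar recursion and its unrolling into the form $\hat{\b{x}}_{i_1} = \b{x}_{i_1}^*(1-\lambda_{i_1}^{(t)}) + \vartheta_{i_1}^{(R)}$ is exactly what the paper does: it invokes the recursion \eqref{iht_eq:r_th_iterate_xi} (which you re-derive from scratch), telescopes the geometric sum, and packages the remainder into $\vartheta_{i_1}^{(R)}$ and $\gamma_{i_1}^{(R)}$ as in the notation table. The conditioning on Lemma~\ref{our:signed_supp} so that thresholding acts as the identity on $S$ is also implicit in the paper's argument.

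Where you diverge is in bounding $|\vartheta_{i_1}^{(R)}|$. The paper appeals to a dedicated Claim~\ref{iht:var_theta_abs}, which unpacks the definition of $\vartheta_{i_1}^{(R)}$ and bounds the weighted sum $\sum_r \eta_x \xi_{i_1}^{(r)}(1-\eta_x)^{R-r}$ directly via Claim~\ref{iht:gen_x_C_term}. You instead observe that, once the decomposition is established, $|\vartheta_{i_1}^{(R)}| \leq |\hat{\b{x}}_{i_1}-\b{x}_{i_1}^*| + \lambda_{i_1}^{(t)}|\b{x}_{i_1}^*|$ and read the first term off Lemma~\ref{iht:x_R_error}. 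This is a legitimate and more economical route for the present lemma, since Lemma~\ref{iht:x_R_error} is logically prior and carries no dependence on Lemma~\ref{iht:R_th_term}. The paper's route is not wasted effort, though: Claim~\ref{iht:var_theta_abs} is reused in Claims~\ref{norm_bound_y_Ax_s}, \ref{lem:var_w_vector_grad}, and \ref{norm_exp_yAs_yAst}, so it must be proved anyway; your shortcut simply avoids invoking it here.
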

 We again observe that the error in the coefficient estimate depends on the error in the dictionary via \[\lambda^{(t)}_{i_1}\] and \[\vartheta^{(R)}_{i_1}\]. 
\end{itemize}

\subsection*{Step III: Analyzing the gradient for dictionary update }
Given the coefficient estimate we now show that the choice of the gradient as shown in \eqref{eq:emp_grad_est} makes progress at each step. To this end, we analyze the gradient vector corresponding to each dictionary element to see if it satisfies the local descent condition of Def.~\ref{def:grad_alpha_beta}. Our analysis of the gradient is motivated from \cite{Arora15}. However, as opposed to the simple HT-based coefficient update step used by \cite{Arora15}, our IHT-based coefficient estimate adds to significant overhead in terms of analysis. Notwithstanding the complexity of the analysis, we show that this allows us to remove the bias in the gradient estimate.

 To this end, we first develop an expression for each expected gradient vector, show that the empirical gradient estimate concentrates around its mean, and finally show that the empirical gradient vector is \[(\Omega(k/m), \Omega(m/k),0 )\]-correlated with the descent direction, i.e. has no bias.


%
%
%
%
\begin{itemize}
\item\textbf{Step III.A: Develop an expression for the expected gradient vector corresponding to each dictionary element--}\label{Step III.A}
The expression for the expected gradient vector \[\b{g}^{(t)}_j \] corresponding to \[j\]-th dictionary element is given by the following lemma.
\begin{lemma}[\textbf{Expression for the expected gradient vector}]\label{grad_exp}
	Suppose that \[\b{A}^{(t)}\] is \[(\epsilon_t, 2)\]-near to \[\b{A}^*\]. Then, the dictionary update step in Algorithm~\ref{alg:main_alg} amounts to the following for the \[j\]-th dictionary element
	\vspace{-5pt}
		\begin{align*}
		\b{E}[\b{A}^{(t+1)}_j] = \b{A}^{(t)}_j + \eta_A \b{g}^{(t)}_j,
		\vspace{-5pt}
		\end{align*}	
		where \[\b{g}^{(t)}_j\] is given by
			\vspace{-5pt}
		\begin{align*}
		\b{g}^{(t)}_j = q_j p_j \big((1 - \lambda^{(t)}_j)\b{A}^{(t)}_j- \b{A}^*_j + \tfrac{1}{q_j p_j}\Delta^{(t)}_j \pm \gamma\big),
		\end{align*}
		%
%
%
%
	\[\lambda^{(t)}_{j}=|\langle \b{A}^{(t)}_j - \b{A}^{*}_j, \b{A}^*_j\rangle|\], and \[\Delta^{(t)}_j:=\b{E}[\b{A}^{(t)}_S\vartheta^{(R)}_S\sgn(\b{x}^*_j)]\], where \[\|\Delta^{(t)}_{j}\|= \mathcal{O}(\sqrt{m}q_{i,j}p_{j}\epsilon_t\|\b{A}^{(t)}\|)\].
	%
\end{lemma}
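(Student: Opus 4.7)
The plan is to begin from the population gradient $\b{g}^{(t)}_j$ associated with the dictionary update in Algorithm~\ref{alg:main_alg}, which (up to the sign convention of the update rule) is $\b{E}[(\b{A}^{(t)}\hat{\b{x}} - \b{y})\sgn(\hat{\b{x}}_j)]$, and to first replace $\sgn(\hat{\b{x}}_j)$ by the cleaner $\sgn(\b{x}^*_j)$ using the signed-support guarantee from Lemma~\ref{our:signed_supp}. Splitting on the event $\mathcal{F}_{\b{x}^*}$, on which $\sgn(\hat{\b{x}}) = \sgn(\b{x}^*)$, and then adding and subtracting the $\overline{\mathcal{F}}_{\b{x}^*}$ contribution, I can absorb all rare-event error into the $\pm\gamma$ correction defined in Table~\ref{tab:symbols}, reducing the problem to computing $\b{E}[(\b{A}^{(t)}\hat{\b{x}} - \b{y})\sgn(\b{x}^*_j)]$.

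Next I will substitute the closed form $\hat{\b{x}}_i = \b{x}^*_i(1 - \lambda^{(t)}_i) + \vartheta^{(R)}_i$ for $i \in S$ supplied by Lemma~\ref{iht:R_th_term} (and $\hat{\b{x}}_i = 0$ off the support), so that
\begin{align*}
\b{A}^{(t)}\hat{\b{x}} - \b{y} = (\b{A}^{(t)}_S - \b{A}^*_S)\b{x}^*_S - \b{A}^{(t)}_S \Lambda^{(t)}_S \b{x}^*_S + \b{A}^{(t)}_S \vartheta^{(R)}_S.
\end{align*}
Conditioning on the support $S$, the nonzero entries of $\b{x}^*$ are independent, mean-zero, and sign-symmetric, so $\b{E}[\b{x}^*_i \sgn(\b{x}^*_j)\mathbbm{1}_{i,j \in S}] = q_j p_j \mathbbm{1}_{i=j}$. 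Only the $i=j$ diagonal contribution then survives from the first two pieces, collapsing them to $q_j p_j\bigl[(1 - \lambda^{(t)}_j)\b{A}^{(t)}_j - \b{A}^*_j\bigr]$, while the third piece is exactly $\Delta^{(t)}_j$ by definition. Factoring out $q_j p_j$ yields the claimed form of $\b{g}^{(t)}_j$.

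The main obstacle I anticipate is sharpening the pointwise estimate $|\vartheta^{(R)}_i| = \mathcal{O}(\sqrt{k\epsilon_t})$ from Lemma~\ref{iht:R_th_term} into the operator-norm bound $\|\Delta^{(t)}_j\| = \mathcal{O}(\sqrt{m}\,q_{i,j}\,p_j\,\epsilon_t\,\|\b{A}^{(t)}\|)$, since that naive bound ignores cancellations that appear only after taking the conditional sign expectation. For this, I will unwind $\vartheta^{(R)}_i = \sum_{r=1}^R \eta_x \xi^{(r)}_i (1-\eta_x)^{R-r} + \gamma^{(R)}_i$ and expand each $\xi^{(r)}_i$ into its constituent inner products against other coefficients. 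Under $\b{E}[\,\cdot\,\sgn(\b{x}^*_j)\mathbbm{1}_{i\in S}]$, coefficient symmetry kills every summand whose cross index $i'$ differs from $j$, leaving a joint-support factor $q_{i,j}$ and an inner product of magnitude $\mathcal{O}(\epsilon_t) + \mathcal{O}(\mu_t/\sqrt{n})$ per surviving term via Claim~\ref{claim:incoherence of B} and the $\epsilon_t$-closeness. Assembling these entrywise estimates into a vector over $i \in S$ and invoking the $(\epsilon_t, 2)$-nearness to control $\|\b{A}^{(t)}_S\|$ should produce the claimed scaling. The delicate bookkeeping is that $\b{x}^{(r)}_{i'}$ itself depends on $\b{x}^*$, so the sign-symmetry argument must be applied inductively along the IHT recursion rather than in one shot, and this is where I expect the bulk of the technical work to reside.
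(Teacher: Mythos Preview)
Your proposal is correct and follows essentially the same route as the paper: split on the signed-support event $\mathcal{F}_{\b{x}^*}$ to absorb the rare-event contribution into $\pm\gamma$, substitute the closed form $\hat{\b{x}}_S = (\b{I}-\Lambda^{(t)}_S)\b{x}^*_S + \vartheta^{(R)}_S$ from Lemma~\ref{iht:R_th_term}, collapse the $\b{x}^*_S$-linear terms via $\b{E}[\b{x}^*_i\sgn(\b{x}^*_j)\mathbbm{1}_{i,j\in S}] = q_jp_j\mathbbm{1}_{i=j}$, and isolate $\Delta^{(t)}_j$. The paper packages the $\|\Delta^{(t)}_j\|$ bound as Claim~\ref{iht:bound_vareps}, which does precisely what you outline---unwind $\vartheta^{(R)}_i$ into its $\beta^{(t)}_i$ and cross-coupling pieces, apply sign-symmetry entrywise to get $\b{E}_{\b{x}^*_S}[\vartheta^{(R)}_i\sgn(\b{x}^*_j)\mid S]=\mathcal{O}(p_j\epsilon_t)$ for $i\neq j$, and invoke Claim~\ref{iht:gen_x_C_term} to handle exactly the recursive dependence of $C^{(r-1)}_j$ on $\b{x}^*$ that you flag as the delicate step.
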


\item\textbf{Step III.B: Show that the empirical gradient vector concentrates around its expectation--} \label{Step III.B} Since we only have access to the empirical gradient vectors, we show that these concentrate around their expected value via the following lemma.

\begin{lemma}[\textbf{Concentration of the empirical gradient vector}]\label{lem:grad_vec_concentrates}
Given \[p = \tilde{\Omega}(mk^2)\] samples, the empirical gradient vector estimate corresponding to the \[i\]-th dictionary element, \[\hat{\b{g}}_i^{(t)}\] concentrates around its expectation, i.e.,
		\begin{align*}
		\|\hat{\b{g}}_i^{(t)} - \b{g}_i^{(t)}\| \leq o(\tfrac{k}{m}\epsilon_t).
		\end{align*}
		with probability at least \[(1 -\delta_{\gradvec}^{(t)}- \delta_{\beta}^{(t)} - \delta_{\HT}^{(t)} - \delta_{\rm HW}^{(t)})\], where \[\delta_{\gradvec}^{(t)} =  \exp(-\Omega(k))\].
\end{lemma}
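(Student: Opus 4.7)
The plan is to show concentration of $\hat{\b{g}}_i^{(t)}$ about $\b{g}_i^{(t)}$ by writing $\hat{\b{g}}_i^{(t)}$ as an empirical average over $p$ i.i.d. per-sample gradient contributions
$\b{z}_i^{(\ell)} := (\b{A}^{(t)}\hat{\b{x}}^{(\ell)} - \b{y}^{(\ell)})\sgn(\hat{\b{x}}_i^{(\ell)})\mathbbm{1}_{i \in \hat{S}^{(\ell)}}$, whose common expectation equals $\b{g}_i^{(t)}$ by Lemma~\ref{grad_exp}, and then applying a vector Bernstein inequality.

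First, I would condition on the good event that signed-support recovery holds for the samples used, so that $\sgn(\hat{\b{x}}) = \sgn(\b{x}^*)$ and $\hat{S} = S$; by Lemma~\ref{lem:recover_sign} and Lemma~\ref{our:signed_supp} this holds with probability at least $(1 - \delta_{\HT}^{(t)} - \delta_{\beta}^{(t)})$ per sample. Under this event, Lemma~\ref{iht:R_th_term} gives the closed form $\hat{\b{x}}_S = (\b{I} - \Lambda_S^{(t)})\b{x}_S^* + \vartheta_S^{(R)}$, which lets me decompose each summand as
\begin{align*}
\b{z}_i = \bigl(\b{A}_S^{(t)}(\b{I} - \Lambda_S^{(t)}) - \b{A}_S^*\bigr)\b{x}_S^*\sgn(\b{x}_i^*)\mathbbm{1}_{i \in S} + \b{A}_S^{(t)}\vartheta_S^{(R)}\sgn(\b{x}_i^*)\mathbbm{1}_{i \in S},
\end{align*}
separating an $\mathcal{O}(\epsilon_t)$-weighted dictionary-error term from a coefficient-residual term of size $\mathcal{O}(\sqrt{k}\,t_\beta)$.

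Next, I would obtain a per-sample ceiling $M$ and a variance proxy $\sigma^2 \geq \b{E}\|\b{z}_i\|^2$ to feed into vector Bernstein. The ceiling follows from $\|\b{x}_S^*\| = \mathcal{O}(\sqrt{k})$, the column-wise $\mathcal{O}(\epsilon_t)$ bound on $\b{A}_S^{(t)}(\b{I} - \Lambda_S^{(t)}) - \b{A}_S^*$, and the uniform bound $\|\vartheta_S^{(R)}\|_\infty = \mathcal{O}(t_\beta)$ from Lemma~\ref{iht:R_th_term}. For the variance, the indicator $\mathbbm{1}_{i \in S}$ contributes a factor $q_i = \Theta(k/m)$, and conditioning on $i \in S$ the conditional second moment is governed by quadratic forms in the sub-Gaussian signs and magnitudes on $S \setminus \{i\}$; controlling these quadratic forms is where the Hanson--Wright inequality enters, producing the failure term $\delta_{\rm HW}^{(t)}$.

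Finally, combining these bounds, the vector Bernstein inequality (Appendix~\ref{useful results}) gives $\|\hat{\b{g}}_i^{(t)} - \b{g}_i^{(t)}\| \leq \tilde{\mathcal{O}}(\sqrt{\sigma^2/p} + M/p)$ with an extra failure probability $\delta_{\gradvec}^{(t)} = \exp(-\Omega(k))$, and the choice $p = \tilde{\Omega}(mk^2)$ is exactly what is needed to make this $o(k\epsilon_t/m)$. The main obstacle will be the variance estimate: the per-sample vector is a coupled nonlinear function of the random support, signs, and magnitudes through the IHT output $\hat{\b{x}}$, and a naive bound would lose a factor of $m/k$ or $\sqrt{k}$ in sample complexity. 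The key is to isolate the $\mathcal{O}(\epsilon_t)$ leading scaling in the dictionary-error piece, combine it with the $\mathcal{O}(t_\beta^2) = \mathcal{O}(k\epsilon_t)$ scaling of the coefficient residual, and use the incoherence $\mu_t/\sqrt{n}$ so that cross-terms between distinct columns of $\b{A}^{(t)}_S$ contribute only incoherence-suppressed quantities rather than constants.
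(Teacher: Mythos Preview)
Your approach is essentially the paper's, with one organizational difference: the paper first restricts to the subset $W=\{j:i\in\supp(\b{x}^*_{(j)})\}$ of size $\ell=pq_i=\tilde\Omega(k^3)$, pulls the factor $|W|/p=\mathcal{O}(k/m)$ out front, and applies vector Bernstein to the $\ell$ \emph{conditional} summands $\b{z}=\tfrac{1}{\ell}(\b{y}-\b{A}^{(t)}\hat{\b{x}})\sgn(\hat{\b{x}}_i)\mid i\in S$; you instead keep $\mathbbm{1}_{i\in S}$ inside the summand and recover the $q_i$ factor through the variance. Both routes give the same sample complexity. Two small corrections: (i) Hanson--Wright is not used for the variance in the paper---the variance bound (Claim~\ref{lem:var_w_vector_grad}) is a direct second-moment computation using $\b{E}[\b{x}_S^*\b{x}_S^{*\top}\mid S]=\b{I}$; Hanson--Wright appears instead in Claim~\ref{norm_bound_y_Ax_s} to obtain the high-probability per-sample norm bound $\|\b{y}-\b{A}^{(t)}\hat{\b{x}}\|=\tilde{\mathcal{O}}(kt_\beta)$, which furnishes the ceiling $L$; (ii) since the summands are only bounded with high probability (the $\b{x}^*_i$ are sub-Gaussian, not bounded), you need a truncation step before Bernstein---the paper invokes Lemma~\ref{tech_lem} (Lemma~45 of Arora et al.) for this, and you should make that explicit.
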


\item\textit{Step III.C: Show that the empirical gradient vector is correlated with the descent direction--}\label{Step III.C}
Next, in the following lemma we show that the empirical gradient vector \[\hat{\b{g}}^{(t)}_j \] is correlated with the descent direction. This is the main result which enables the progress in the dictionary (and coefficients) at each iteration \[t\].
%
%
%
%
%

\begin{lemma}[\textbf{Empirical gradient vector is correlated with the descent direction}]\label{grad_corr}
Suppose \[\b{A}^{(t)}\] is \[(\epsilon_t, 2)\]-near to \[\b{A}^*\], \[k = \mathcal{O}(\sqrt{n})\] and \[\eta_A = \mathcal{O}(m/k)\]. Then, with probability at least \[(1 - \delta_{\HT}^{(t)} - \delta_{\beta}^{(t)} - \delta_{\rm HW}^{(t)} -\delta_{\gradvec}^{(t)})\] the empirical gradient vector \[\hat{\b{g}}^{(t)}_j\] 
	%
	 is \[(\Omega(k/m), \Omega(m/k),0 )\]-correlated with \[(\b{A}^{(t)}_j - \b{A}^*_j)\], and for any \[t \in [T]\],
	\begin{align*}
	\|\b{A}^{(t+1)}_j- \b{A}^*_j\|^2 \leq (1 - \rho_{\_}\eta_A)\|\b{A}^{(t)}_j- \b{A}^*_j\|^2.
	\end{align*}
\end{lemma}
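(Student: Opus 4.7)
The plan is to stitch together the three preceding lemmas (\ref{grad_exp}, \ref{lem:grad_vec_concentrates}, and the definition of a correlated descent direction) and then invoke the standard descent-lemma machinery (Lemma~\ref{arora:thm40}) to obtain the contraction. First I would decompose the empirical gradient as $\hat{\b{g}}^{(t)}_j = \b{g}^{(t)}_j + (\hat{\b{g}}^{(t)}_j - \b{g}^{(t)}_j)$, bounding the deviation by Lemma~\ref{lem:grad_vec_concentrates}, which costs $o(\tfrac{k}{m}\epsilon_t)$ on the event that $p = \tilde{\Omega}(mk^2)$. Substituting the expression from Lemma~\ref{grad_exp}, I would write
\[
\hat{\b{g}}^{(t)}_j \;=\; q_j p_j\bigl(\b{A}^{(t)}_j - \b{A}^*_j\bigr) \;-\; q_j p_j\lambda^{(t)}_j \b{A}^{(t)}_j \;+\; \Delta^{(t)}_j \;\pm\; q_jp_j\gamma \;+\; \bigl(\hat{\b{g}}^{(t)}_j - \b{g}^{(t)}_j\bigr),
\]
so the dominant direction is $4\alpha(\b{A}^{(t)}_j - \b{A}^*_j)$ with $\alpha = \Theta(q_j p_j) = \Theta(k/m)$ and the remaining terms are perturbations.

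Next, I would verify the $(\Omega(k/m), \Omega(m/k), 0)$-correlation inequality, namely
\[
\langle \hat{\b{g}}^{(t)}_j, \b{A}^{(t)}_j - \b{A}^*_j\rangle \;\geq\; \tfrac{k}{m}\,\Omega(1)\|\b{A}^{(t)}_j - \b{A}^*_j\|^2 + \tfrac{m}{k}\,\Omega(1)\|\hat{\b{g}}^{(t)}_j\|^2.
\]
The lower bound on the inner product follows from the main term, which contributes $q_j p_j\|\b{A}^{(t)}_j - \b{A}^*_j\|^2$, minus contributions from $\lambda^{(t)}_j$ (controlled by $\lambda^{(t)}_j \leq \epsilon_t^2/2$, hence lower order), from $\Delta^{(t)}_j$ (bounded by $\mathcal{O}(\sqrt{m}\,q_{i,j}p_j\epsilon_t\|\b{A}^{(t)}\|) = o(q_jp_j)\|\b{A}^{(t)}_j-\b{A}^*_j\|$ after using $q_{i,j} = \Theta(k^2/m^2)$, $\|\b{A}^{(t)}\| = \mathcal{O}(\sqrt{m/n})$, and the $(\epsilon_t,2)$-closeness), and from the concentration slack $o(k\epsilon_t/m)$. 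Each of these, multiplied by $\|\b{A}^{(t)}_j - \b{A}^*_j\| \leq \epsilon_t$ via Cauchy--Schwarz, is dominated by the main quadratic for sufficiently small $\epsilon_t$ (recall $\epsilon_t = \mathcal{O}^*(1/\log n)$). The $\gamma$ term associated with $\overline{\mathcal{F}_{\b{x}^*}}$ is negligible because the signed-support event holds with probability at least $1 - \delta_{\HT}^{(t)}$ by Lemma~\ref{lem:recover_sign} and Lemma~\ref{our:signed_supp}, so this contribution is zero up to tiny probability. For the $\|\hat{\b{g}}^{(t)}_j\|^2$ term, I would use the triangle inequality on the decomposition to show $\|\hat{\b{g}}^{(t)}_j\| = \mathcal{O}(q_jp_j\|\b{A}^{(t)}_j - \b{A}^*_j\|)$, which then yields $\tfrac{m}{k}\|\hat{\b{g}}^{(t)}_j\|^2 = \mathcal{O}(\tfrac{k}{m}\|\b{A}^{(t)}_j - \b{A}^*_j\|^2)$, matching the required form.

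Having established the correlation property with bias $\gamma_s = 0$, I would apply Lemma~\ref{arora:thm40} (the standard fact that $(\alpha,\beta,0)$-correlation together with $\eta_A \leq 2\beta$ implies geometric contraction). With $\eta_A = \Theta(m/k) \leq 2\beta$, the update $\b{A}^{(t+1)}_j = \b{A}^{(t)}_j - \eta_A \hat{\b{g}}^{(t)}_j$ satisfies
\[
\|\b{A}^{(t+1)}_j - \b{A}^*_j\|^2 \;\leq\; \|\b{A}^{(t)}_j - \b{A}^*_j\|^2 - (2\eta_A\alpha - \eta_A^2/\beta)\|\b{A}^{(t)}_j - \b{A}^*_j\|^2 \;\leq\; (1 - \rho_{\_}\eta_A)\|\b{A}^{(t)}_j - \b{A}^*_j\|^2,
\]
for a positive constant $\rho_{\_} = \Omega(k/m)$, completing the claim. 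The probability bound is a union over the failure events of Lemmas~\ref{iht:x_R_error}, \ref{iht:R_th_term}, and \ref{lem:grad_vec_concentrates}, giving the stated $(1 - \delta_{\HT}^{(t)} - \delta_{\beta}^{(t)} - \delta_{\rm HW}^{(t)} - \delta_{\gradvec}^{(t)})$ guarantee.

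The main obstacle is the bias-control step: tracking $\Delta^{(t)}_j$ and the signed-support-failure term $\gamma$ simultaneously and showing that neither introduces a systematic bias of order $\Omega(k/m)$ relative to the main descent direction. This is exactly where the improved IHT-based coefficient update (as opposed to a plain HT step) pays off, because the refined expression in Lemma~\ref{iht:R_th_term} pushes $|\vartheta^{(R)}_{i_1}|$ down to $\mathcal{O}(\sqrt{k\epsilon_t})$, which in turn lets $\|\Delta^{(t)}_j\|$ be absorbed into the $o(k\epsilon_t/m)$ slack rather than leaving a non-vanishing bias in the gradient.
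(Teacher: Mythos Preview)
Your proposal is correct and follows essentially the same route as the paper: decompose $\hat{\b{g}}^{(t)}_j$ into $4\rho_{\_}(\b{A}^{(t)}_j-\b{A}^*_j)$ plus a perturbation $\tilde v$ combining the $\lambda^{(t)}_j$, $\Delta^{(t)}_j$, $\gamma$, and concentration-slack terms (via Lemmas~\ref{grad_exp}, \ref{lem:grad_vec_concentrates}, and Claim~\ref{iht:bound_vareps}), show $\|\tilde v\|\le q_jp_j\|\b{A}^{(t)}_j-\b{A}^*_j\|$ using $k=\mathcal{O}(\sqrt{n})$ and $\|\b{A}^{(t)}\|=\mathcal{O}(\sqrt{m/n})$, verify $(\rho_{\_},1/(100\rho_{\_}),0)$-correlation, and invoke Lemma~\ref{arora:thm40}. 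The paper carries out exactly this argument with the explicit constants $4\rho_{\_}=p_jq_j$ and $\rho_{+}=1/(100\rho_{\_})$.
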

This result ensures for at any \[t \in [T]\], the gradient descent-based updates made via \eqref{eq:emp_grad_est} gets the columns of the dictionary estimate closer to the true dictionary, i.e., \[\epsilon_{t+1} \leq \epsilon_{t}\]. Moreover, this step requires closeness between the dictionary estimate \[\b{A}^{(t)}\] and \[\b{A}^*\], in the spectral norm-sense, as per Def~\ref{def:del_kappa}. 
\end{itemize}

\subsection*{Step IV: Show that the dictionary maintains the closeness property}
As discussed above, the closeness property (Def~\ref{def:del_kappa}) is crucial to show that the gradient vector is correlated with the descent direction. Therefore, we now ensure that the updated dictionary \[\b{A}^{(t+1)}\] maintains this closeness property. Lemma~\ref{grad_corr} already ensures that \[\epsilon_{t+1} \leq \epsilon_{t}\]. As a result, we show that \[\b{A}^{(t+1)}\] maintains closeness in the spectral norm-sense as required by our algorithm, i.e., that it is still \[(\epsilon_{t+1},2)\]-close to the true dictionary. Also, since we use the gradient matrix in this analysis, we show that the empirical gradient matrix concentrates around its mean.

\begin{itemize}
\item\textit{Step IV.A: The empirical gradient matrix concentrates around its expectation}\label{Step IV.A}:  We first show that the empirical gradient matrix concentrates as formalized by the following lemma.
\begin{lemma}[\textbf{Concentration of the empirical gradient matrix}]\label{lem:grad_mat}
With probability at least \[(1 - \delta_{\beta}^{(t)} - \delta_{\HT}^{(t)} - \delta_{\rm HW}^{(t)} - \delta_\gradmat^{(t)})\], 
	\[\|\hat{\b{g}}^{(t)} -  \b{g}^{(t)}\|\] is upper-bounded by \[ \mathcal{O}^*(\tfrac{k}{ m} \|\b{A}^*\| )\], 
		where \[\delta_{\gradmat}^{(t)} = (n+m)\exp(-\Omega(m\sqrt{\log(n)})\].
\end{lemma}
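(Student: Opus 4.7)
The plan is to view the empirical gradient matrix $\hat{\b{g}}^{(t)}$ as an i.i.d.\ average of rank-one contributions from each fresh sample, namely
\[
\hat{\b{g}}^{(t)} - \b{g}^{(t)} = \tfrac{1}{p}\textstyle\sum_{\ell=1}^{p} \b{Z}_\ell, \qquad \b{Z}_\ell = (\b{A}^{(t)}\hat{\b{x}}^{(\ell)} - \b{y}^{(\ell)})\sgn(\hat{\b{x}}^{(\ell)})^\top - \b{E}[\,\cdot\,],
\]
and apply a matrix Bernstein-type inequality to $\{\b{Z}_\ell\}$. Before doing so, I would condition on the intersection of the good events from Lemmas~\ref{lem:recover_sign}, \ref{our:signed_supp}, and \ref{iht:R_th_term}, which together guarantee that $\sgn(\hat{\b{x}}^{(\ell)}) = \sgn(\b{x}^{*(\ell)})$ and $|\hat{\b{x}}^{(\ell)}_i - \b{x}^{*(\ell)}_i| = \mathcal{O}(t_\beta)$ for all $i \in S_\ell$. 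Under this conditioning the summands take a tractable form whose randomness is driven primarily by the support $S_\ell$ and the coefficient vector $\b{x}^{*(\ell)}$, absorbing all the complicated iterate-dependent behaviour of $\hat{\b{x}}$ into terms that have already been controlled in previous steps.

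Next I would handle the fact that the $\b{Z}_\ell$ are not uniformly bounded, since $\b{x}^*$ is sub-Gaussian. Following the standard truncation device, I would split each $\b{Z}_\ell = \b{Z}_\ell\mathbbm{1}_{\c{E}_\ell} + \b{Z}_\ell\mathbbm{1}_{\overline{\c{E}_\ell}}$, where $\c{E}_\ell$ is the event $\{\|\b{x}^{*(\ell)}\|_\infty \leq \c{O}(\sqrt{\log n})\}$ (or a similar norm event). On $\c{E}_\ell$, one has a deterministic bound of the form $\|\b{Z}_\ell\| \leq R_* = \tilde{\c{O}}(\sqrt{k}\|\b{A}^*\|)$ using $\|\hat{\b{x}}\| = \c{O}(\sqrt{k}\cdot\sqrt{\log n})$, $\|\b{y}\| \leq \|\b{A}^*\|\|\b{x}^*\|$, and $\|\sgn(\hat{\b{x}})\| = \sqrt{k}$. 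The truncated-tail contribution vanishes by a straightforward union bound using sub-Gaussianity of $\b{x}^*$.

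The heart of the argument is bounding the matrix variance
\[
v := \max\bigl(\bigl\|\textstyle\sum_\ell \b{E}[\b{Z}_\ell\b{Z}_\ell^\top]\bigr\|,\ \bigl\|\textstyle\sum_\ell \b{E}[\b{Z}_\ell^\top\b{Z}_\ell]\bigr\|\bigr),
\]
and this is where I expect the main technical work. Writing $(\b{A}^{(t)}\hat{\b{x}} - \b{y})\sgn(\hat{\b{x}})^\top$ and expanding using Lemma~\ref{iht:R_th_term}'s decomposition $\hat{\b{x}}_i = \b{x}_i^*(1 - \lambda_i^{(t)}) + \vartheta_i^{(R)}$, one can reduce the second-moment computation to expectations over the random support $S$ and the sub-Gaussian weights $\b{x}^*_S$. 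Exploiting $q_i = \Theta(k/m)$, $q_{i,j} = \Theta(k^2/m^2)$, the $(\epsilon_t,2)$-nearness $\|\b{A}^{(t)}\| = \mathcal{O}(\|\b{A}^*\|)$, and the incoherence of $\b{A}^{(t)}$ from Claim~\ref{claim:incoherence of B}, I expect the variance to scale as $v = \tilde{\c{O}}(pk\|\b{A}^*\|^2/m)$, with the $k/m$ coming from the support indicator and $\|\b{A}^*\|^2$ from the covariance of $\b{A}^{(t)}\hat{\b{x}} - \b{y}$.

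Finally, I would invoke the matrix Bernstein inequality to conclude that, for any $s > 0$,
\[
\b{Pr}\bigl[\|\hat{\b{g}}^{(t)} - \b{g}^{(t)}\| \geq s\bigr] \leq (n+m)\exp\!\Big(\!-\tfrac{p^2 s^2/2}{v + p R_* s/3}\Big),
\]
and choose $s = \mathcal{O}^*(k\|\b{A}^*\|/m)$. With the scalings $v = \tilde{\c{O}}(pk\|\b{A}^*\|^2/m)$, $R_* = \tilde{\c{O}}(\sqrt{k}\|\b{A}^*\|)$, and $p = \tilde{\Omega}(mk^2)$, a direct check shows the exponent becomes $-\Omega(m\sqrt{\log n})$, yielding the failure probability $\delta_\gradmat^{(t)} = (n+m)\exp(-\Omega(m\sqrt{\log n}))$ stated in the lemma. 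Adding the failure probabilities from the conditioning events ($\delta_\HT^{(t)}$, $\delta_\beta^{(t)}$, $\delta_{\rm HW}^{(t)}$) via a union bound completes the argument. The hard part is clearly the variance bookkeeping in Step~3: the decompositions produce many cross terms, each of which must be bounded using either incoherence, support-probability scaling, or the error bound on $\vartheta^{(R)}$, and organizing these cleanly to avoid an extra factor of $m$ or $k$ is the delicate piece.
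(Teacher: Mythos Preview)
Your plan is essentially the paper's: write $\hat{\b{g}}^{(t)}-\b{g}^{(t)}$ as an average of centered rank-one matrices $(\b{y}_{(j)}-\b{A}^{(t)}\hat{\b{x}}_{(j)})\sgn(\hat{\b{x}}_{(j)})^\top$, handle the sub-Gaussian tails by truncation (the paper does this via Lemma~\ref{tech_lem}), bound the per-term norm and the matrix variance, and apply matrix Bernstein.

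The one substantive difference is in how the two key parameters are obtained. You propose bounding $\|\b{A}^{(t)}\hat{\b{x}}-\b{y}\|$ by a naive triangle inequality and then guessing the variance scaling. The paper instead routes both through dedicated intermediate results: Claim~\ref{norm_bound_y_Ax_s} uses the decomposition $\hat{\b{x}}_S=(\b{I}-\Lambda^{(t)}_S)\b{x}^*_S+\vartheta^{(R)}_S$ together with Hanson--Wright to get the residual bound $\|\b{y}-\b{A}^{(t)}\hat{\b{x}}\|=\tilde{\c{O}}(kt_\beta)$, and Claim~\ref{norm_exp_yAs_yAst} gives the second-moment bound $\|\b{E}[(\b{y}-\b{A}^{(t)}\hat{\b{x}})(\b{y}-\b{A}^{(t)}\hat{\b{x}})^\top]\|=\c{O}(k^2t_\beta^2/m)\|\b{A}^*\|^2$. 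For the two directions of the variance proxy the paper then uses $\sgn(\hat{\b{x}})^\top\sgn(\hat{\b{x}})=k$ and $\|\b{E}[\sgn(\hat{\b{x}})\sgn(\hat{\b{x}})^\top]\|=\c{O}(k/m)$, arriving at $L=\c{O}(k^{3/2}t_\beta/p)$ and $\sigma^2=\c{O}(k^3t_\beta^2/(pm))\|\b{A}^*\|^2$. Your heuristic scalings $R_*=\tilde{\c{O}}(\sqrt{k}\|\b{A}^*\|)$ and $v=\tilde{\c{O}}(pk\|\b{A}^*\|^2/m)$ do not match these (you are missing, among other things, the factor $k$ from $\sgn^\top\sgn$ in one direction of the variance), so while your outline is correct you would need to redo that bookkeeping to actually close the argument at the stated sample complexity $p=\tilde{\Omega}(mk^2)$.
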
 
\item\textit{Step IV.B: The ``closeness'' property is maintained after the updates made using the empirical gradient estimate}\label{Step IV.B}: Next, the following lemma shows that the updated dictionary \[\b{A}^{(t+1)}\] maintains the closeness property.
\begin{lemma}[\[\b{A}^{(t+1)}\] \textbf{maintains closeness}]\label{lem:closeness}
	Suppose \[\b{A}^{(t)}\] is \[(\epsilon_t, 2)\] near to \[\b{A}^*\] with \[\epsilon_t = \mathcal{O}^*(1/\log(n))\], and number of samples used in step \[t\] is \[p = \tilde{\Omega}(mk^2)\], then with probability at least \[(1 - \delta_{\HT}^{(t)} - \delta_{\beta}^{(t)} 
	- \delta_{\rm HW}^{(t)}- \delta_{\gradmat}^{(t)})\], \[\b{A}^{(t+1)}\] satisfies \[\|\b{A}^{(t+1)} - \b{A}^*\| \leq 2\|\b{A}^*\|\].
\end{lemma}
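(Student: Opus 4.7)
The plan is to write $\b{A}^{(t+1)}-\b{A}^*$ as a spectral contraction of $\b{A}^{(t)}-\b{A}^*$ plus four lower-order perturbations, and then bound each perturbation in operator norm using the mean-gradient formula (Lemma~\ref{grad_exp}), the empirical-gradient concentration (Lemma~\ref{lem:grad_mat}), and the inductive closeness hypothesis $\|\b{A}^{(t)}-\b{A}^*\|\le 2\|\b{A}^*\|$.

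Starting from the descent step $\b{A}^{(t+1)}=\b{A}^{(t)}-\eta_A\hat{\b{g}}^{(t)}$ and splitting $\hat{\b{g}}^{(t)}=\b{g}^{(t)}+(\hat{\b{g}}^{(t)}-\b{g}^{(t)})$, I would stack the columnwise formula in Lemma~\ref{grad_exp} as
\[
\b{g}^{(t)} \;=\; \b{A}^{(t)}(\b{I}-\b{D}_\lambda)\b{Q} \;-\; \b{A}^*\b{Q} \;+\; \Delta^{(t)} \;\pm\; \b{\Gamma}\b{Q},
\]
where $\b{Q}$ is diagonal with entries $q_jp_j=\Theta(k/m)$, $\b{D}_\lambda$ is diagonal with entries $\lambda_j^{(t)}$, and $\b{\Gamma}$ has columns $\gamma_j$. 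Substituting and using $\b{A}^{(t)}(\b{I}-\b{D}_\lambda)=\b{A}^{(t)}-\b{A}^{(t)}\b{D}_\lambda$ yields the key identity
\[
\b{A}^{(t+1)}-\b{A}^* \;=\; (\b{A}^{(t)}-\b{A}^*)(\b{I}-\eta_A\b{Q}) \;+\; \eta_A\b{A}^{(t)}\b{D}_\lambda\b{Q} \;-\; \eta_A\Delta^{(t)} \;\mp\; \eta_A\b{\Gamma}\b{Q} \;-\; \eta_A(\hat{\b{g}}^{(t)}-\b{g}^{(t)}).
\]

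I would then take spectral norms term by term. Because $\eta_A=\Theta(m/k)$ is calibrated so that $\eta_A q_jp_j\in(0,1)$ for every $j$, the diagonal factor satisfies $\|\b{I}-\eta_A\b{Q}\|\le 1-\Omega(1)$, hence by induction $\|(\b{A}^{(t)}-\b{A}^*)(\b{I}-\eta_A\b{Q})\|\le (1-\Omega(1))\cdot 2\|\b{A}^*\|$. Each of the four perturbations is $\mathcal{O}^*(\|\b{A}^*\|)$ with a small hidden constant: the $\b{A}^{(t)}\b{D}_\lambda\b{Q}$ piece contributes $\mathcal{O}(\epsilon_t^2)\|\b{A}^*\|$ after scaling by $\eta_A$, using $\lambda_j^{(t)}\le\epsilon_t^2/2$ together with $\|\b{A}^{(t)}\|\le 3\|\b{A}^*\|$; the $\Delta^{(t)}$ term is handled via its factorization $\Delta^{(t)}=\b{A}^{(t)}\,\b{E}[\vartheta_S^{(R)}\sgn(\b{x}_S^*)^\top]$, whose inner expectation is controlled in operator norm using the $|\vartheta_i^{(R)}|=\mathcal{O}(\sqrt{k\epsilon_t})$ bound from Lemma~\ref{iht:R_th_term} and the $|S|\le k$ sparsity, giving $\eta_A\|\Delta^{(t)}\|=\mathcal{O}^*(\epsilon_t)\|\b{A}^*\|$; the $\b{\Gamma}\b{Q}$ contribution is exponentially small since the sign-disagreement event $\overline{\mathcal{F}}_{\b{x}^*}$ has probability at most $\delta_{\HT}^{(t)}+\delta_\beta^{(t)}$ by Lemmas~\ref{lem:recover_sign} and \ref{our:signed_supp}; and the empirical residual satisfies $\eta_A\|\hat{\b{g}}^{(t)}-\b{g}^{(t)}\|=\mathcal{O}^*(\|\b{A}^*\|)$ by Lemma~\ref{lem:grad_mat}, with failure absorbed into $\delta_\gradmat^{(t)}$ given $p=\tilde\Omega(mk^2)$ samples.

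Summing, the aggregate perturbation is bounded by a sufficiently small multiple of $\|\b{A}^*\|$, which is in turn absorbed by the $\Omega(1)$ slack left by the strict contraction on the main term, closing the induction $\|\b{A}^{(t+1)}-\b{A}^*\|\le 2\|\b{A}^*\|$. The principal obstacle I anticipate is upgrading the columnwise bound on $\Delta^{(t)}$ from Lemma~\ref{grad_exp} to a tight operator-norm bound: a naive Frobenius estimate loses a $\sqrt{m}$ factor, so I would route it through the factorization above, using incoherence (Claim~\ref{claim:incoherence of B}) and the $k$-sparsity of $\sgn(\b{x}_S^*)$ to bound the inner $m\times m$ expectation directly in spectral norm. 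The $\b{\Gamma}$-term likewise requires careful control of the rare sign-disagreement event, which is where the Hanson--Wright-type failure $\delta_{\rm HW}^{(t)}$ enters the union bound.
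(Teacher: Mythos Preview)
Your decomposition and term-by-term bounding plan is exactly the paper's approach: the paper writes
\[
\b{A}^{(t+1)}-\b{A}^* \;=\; (\b{A}^{(t)}-\b{A}^*)\diag(1-\eta_Ap_iq_i) \;+\; \eta_A\b{U} \;-\; \eta_A\b{V} \;-\; \eta_A(\hat{\b{g}}^{(t)}-\b{g}^{(t)}) \;\pm\; \gamma,
\]
where $\b{U}=\b{A}^{(t)}\diag(p_iq_i\lambda_i^{(t)})$ is your $\b{A}^{(t)}\b{D}_\lambda\b{Q}$ and $\b{V}=\b{A}^{(t)}\b{Q}$, with $\b{Q}_{i,j}=q_{i,j}\,\b{E}_{\b{x}_S^*}[\vartheta_i^{(R)}\sgn(\b{x}_j^*)\,|\,S]$, is your $\Delta^{(t)}$. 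The contraction factor, the $\b{U}$ term, the $\gamma$ term, and the empirical residual via Lemma~\ref{lem:grad_mat} are all handled just as you outline.

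The one place your plan diverges is the $\Delta^{(t)}$ (i.e.\ $\b{V}$) term. The high-probability pointwise bound $|\vartheta_i^{(R)}|=\mathcal{O}(\sqrt{k\epsilon_t})$ from Lemma~\ref{iht:R_th_term} is too coarse here: pushing it through either $k$-sparsity or a Frobenius estimate leaves $\eta_A\|\Delta^{(t)}\|$ at order $k\sqrt{k\epsilon_t}\,\|\b{A}^*\|$, which cannot be absorbed by the $\Omega(1)$ contraction slack. The paper does not rely on incoherence for this step at all; instead it uses the \emph{entrywise} expectation bound from Claim~\ref{iht:bound_vareps}, namely $\b{E}_{\b{x}_S^*}[\vartheta_i^{(R)}\sgn(\b{x}_j^*)\,|\,S]=\mathcal{O}(p_j\epsilon_t)$ for $i\neq j$ (and $\le\gamma_i^{(R)}$ on the diagonal), which is a factor $\sqrt{k/\epsilon_t}$ sharper because it exploits sign cancellation inside the expectation rather than bounding $|\vartheta_i^{(R)}|$ first. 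With this refined entry bound the paper simply takes $\|\b{V}\|\le\|\b{A}^{(t)}\|\,\|\b{Q}\|_F$ and $\|\b{Q}\|_F=\mathcal{O}(mq_{i,j}p_i\epsilon_t)$ --- so the Frobenius route you flagged as ``losing a $\sqrt{m}$'' is precisely what the paper does, and it goes through once the entrywise bound is the refined one. No separate spectral-norm argument via incoherence is needed.
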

\end{itemize}

\subsection*{Step V: Combine results to show the main result}\label{Step V}

\begin{proof}[Proof of Theorem ~\ref{main_result}]
	 From Lemma~\ref{grad_corr} we have that with probability at least \[(1 - \delta_{\HT}^{(t)} - \delta_{\beta}^{(t)} - \delta_{\rm HW}^{(t)} -\delta_{\gradvec}^{(t)})\], \[\b{g}^{(t)}_j\] is \[(\Omega(k/m), \Omega(m/k),0 )\]-correlated with \[\b{A}^*_j\]. Further, Lemma~\ref{lem:closeness} ensures that each iterate maintains the closeness property. Now, applying Lemma~\ref{arora:thm40} we have that, for \[\eta_A \leq \Theta(m/k)\], with probability at least \[(1 -\delta_{\text{alg}}^{(t)} )\] any \[t \in [T]\] satisfies
	 \vspace{-5pt}
	 \begin{align*}
	 \|\b{A}^{(t)}_j- \b{A}^*_j\|^2
	 &\leq (1 - \omega)^{t}\|\b{A}^{(0)}_j- \b{A}^*_j\|^2 \leq (1 - \omega)^{t} \epsilon_0^2.
	 \end{align*}
	 where for \[0 < \omega < 1/2\] with \[\omega = \Omega(k/m)\eta_A\]. That is, the updates converge geometrically to \[\b{A}^*\]. Further, from Lemma~\ref{iht:x_R_error}, we have that the  result on the error incurred by the coefficients. Here, \[\delta_{\text{alg}}^{(t)}  = \delta_{\HT}^{(t)} + \delta_{\beta}^{(t)} + \delta_{\rm HW}^{(t)} +\delta_{\gradvec}^{(t)} + \delta_{\gradmat}^{(t)})\]. That is, the updates converge geometrically to \[\b{A}^*\]. Further, from Lemma~\ref{iht:x_R_error}, we have that the error in the coefficients only depends on the error in the dictionary, which leads us to our result on the error incurred by the coefficients. This completes the proof of our main result. 
\end{proof}
\section{Appendix: Proof of Lemmas}\label{app:proof of main lemmas}
We present the proofs of the Lemmas used to establish our main result. Also, see Table~\ref{tab:res dependence} for a map of dependence between the results, and Appendix~\ref{app:proof of intermediate results} for proofs of intermediate results.
\begin{table}[!h]
	\caption{Proof map: dependence of results.}
	\label{tab:res dependence}
	\begin{minipage}{0.5\textwidth}
	\resizebox{\textwidth}{!}{
	\begin{tabular}{c|p{4.5cm}|p{2cm}}
		\hline
		\textbf{Lemmas} &\textbf{Result} &\textbf{Dependence}\\ \hline
		Lemma~\ref{lem:recover_sign} & Signed-support recovery by coefficient initialization step & --\\ \hline
		Lemma~\ref{our:signed_supp} & IHT update step preserves the correct signed-support & Claim~\ref{claim:incoherence of B}, Lemma~\ref{lem:recover_sign}, and Claim~\ref{lem:bound_beta}\\ \hline
		Lemma~\ref{iht:x_R_error}& Upper-bound on the error in coefficient estimation & Claim~\ref{claim:incoherence of B}, Claim~\ref{lem:bound_beta}, Claim~\ref{iht:gen_term_simple}, and Claim~\ref{iht:C_i1_inter_alpha_sum} \\ \hline
		Lemma~\ref{iht:R_th_term} & Expression for the coefficient estimate at the end of \[R\]-th IHT iteration &Claim~\ref{iht:var_theta_abs}\\ \hline 
		Lemma~\ref{grad_exp} & Expression for the expected gradient vector &Lemma~\ref{iht:R_th_term} and Claim~\ref{iht:bound_vareps} \\ \hline
		Lemma~\ref{lem:grad_vec_concentrates} &Concentration of the empirical gradient vector &Claim~\ref{norm_bound_y_Ax_s} and Claim~\ref{lem:var_w_vector_grad} \\ \hline 
		Lemma~\ref{grad_corr} &Empirical gradient vector is correlated with the descent direction &Lemma~\ref{grad_exp}, Claim~\ref{iht:bound_vareps} and Lemma~\ref{lem:grad_vec_concentrates} \\ \hline
		Lemma~\ref{lem:grad_mat} & Concentration of the empirical gradient matrix &Claim~\ref{norm_bound_y_Ax_s}  and Claim~\ref{norm_exp_yAs_yAst} \\ \hline 
		Lemma~\ref{lem:closeness} & \[\b{A}^{(t+1)}\] maintains closeness &Lemma~\ref{grad_exp}, Claim~\ref{iht:bound_vareps} and Lemma~\ref{lem:grad_mat} \\ \hline 
		\end{tabular}}
		\resizebox{\textwidth}{!}{
		\begin{tabular}{c|p{4.5cm}|p{2cm}}
		\hline
		\textbf{Claims} &\textbf{Result} &\textbf{Dependence}\\ \hline
		Claim~\ref{claim:incoherence of B} & Incoherence of \[\b{A}^{(t)}\] &-- \\\hline
		Claim~\ref{lem:bound_beta}& Bound on \[\beta_j^{(t)}\]: the noise component in coefficient estimate that depends on \[\epsilon_t\]&-- \\\hline
		Claim~\ref{iht:gen_term_simple}  &Error in coefficient estimation for a general iterate \[(r+1)\] &-- \\\hline
		Claim~\ref{iht:C_i1_inter_alpha_sum} &An intermediate result for bounding the error in coefficient calculations&  Claim~\ref{lem:bound_beta}\\ \hline
		Claim~\ref{iht:var_theta_abs} &Bound on the noise term in the estimation of a coefficient element in the support& Claim~\ref{iht:gen_x_C_term} \\ \hline 
		Claim~\ref{iht:gen_x_C_term} &An intermediate result for \[\vartheta^{(R)}_{i_1}\] calculations &Claim~\ref{iht:gen_term_simple} \\ \hline 
		Claim~\ref{iht:bound_vareps} &Bound on the noise term in expected gradient vector estimate& Claim~\ref{iht:gen_x_C_term} and Claim~\ref{lem:bound_beta} \\ \hline 		
		Claim~\ref{norm_bound_y_Ax_s} & An intermediate result for concentration results&Lemma~\ref{our:signed_supp} ,Lemma~\ref{iht:R_th_term} and Claim~\ref{iht:var_theta_abs} \\ \hline
	    Claim~\ref{lem:var_w_vector_grad} &Bound on variance parameter for concentration of gradient vector& Claim~\ref{iht:var_theta_abs} \\ \hline 
	    
		Claim~\ref{norm_exp_yAs_yAst} &Bound on variance parameter for concentration of gradient matrix& Lemma~\ref{our:signed_supp} , Lemma~\ref{iht:R_th_term} and Claim~\ref{iht:var_theta_abs}\\ \hline
	\end{tabular}}
	\end{minipage}
	\begin{minipage}{0.7\textwidth}
		\resizebox{0.7\textwidth}{!}{
		\begin{tikzpicture}[node distance=2cm, auto,]
		    \node[punkt,draw=blue, thick, fill=blue!20,opacity=.8,text opacity=1] (lem1) {Lemma~\ref{lem:recover_sign}};
		    \node[punkt, draw=red, thick,fill=red!20,opacity=.8,text opacity=1, above= of lem1] (cl1) {	Claim~\ref{claim:incoherence of B}};
		    \node[punkt, draw=red, thick,fill=red!20,opacity=.8,text opacity=1, above=of cl1] (cl2) {Claim~\ref{lem:bound_beta}};
		    \node[punkt,draw=blue, thick, fill=blue!20,opacity=.8,text opacity=1,below=of lem1] (lem2) {Lemma~\ref{our:signed_supp}};
		      \path [arrow,red,pil] (cl2.195) -| ([xshift=-1cm, yshift=0cm]lem2.165) |- (lem2.165);
		       \path [arrow,red,pil] (cl1.west) -- ([xshift=-1cm, yshift=0cm]cl1.west);
		      \path (lem1.south) edge[blue,pil] (lem2.north);

		      \node[punkt, draw=red, thick,fill=red!20,opacity=.8,text opacity=1, above=of cl2] (cl3) {Claim~\ref{iht:gen_term_simple}};
		      \node[punkt, draw=red, thick,fill=red!20,opacity=.8,text opacity=1, above=of cl3] (cl4) {Claim~\ref{iht:C_i1_inter_alpha_sum}};
		      \node[punkt, draw=blue, thick,fill=blue!20,opacity=.8,text opacity=1,right= 3cm of cl2] (lem3) {Lemma~\ref{iht:x_R_error}};
		       \path [arrow,red,pil] (cl1.east) -| ([xshift=1.5cm, yshift=0cm]cl1.east) |-([xshift=1.5cm, yshift=-0.1cm]cl2.east);
		      \path (cl2.east) edge[red,pil] (lem3.west);
		       \path [arrow,red,pil] (cl3.east) |- ([xshift=1.5cm, yshift=0cm]cl3.east) -|([xshift=1.5cm, yshift=0cm]cl2.east);
		       \path [arrow,red,pil] (cl4.east) |- ([xshift=1.5cm, yshift=0cm]cl4.east) -|([xshift=1.5cm, yshift=0cm]cl3.east);
		         \path [arrow,red,pil] ([xshift=-1.7cm, yshift=0cm]cl4.west) -- (cl4.west);

		      \node[punkt,draw=red, thick,fill=red!20,opacity=.8,text opacity=1,above=of cl4] (cl5) {Claim~\ref{iht:var_theta_abs} };
		      \node[punkt, draw=blue, thick,fill=blue!20,opacity=.8,text opacity=1,above=of lem3] (lem4) {Lemma~\ref{iht:R_th_term}};
		      \node[punkt, draw=blue, thick,fill=blue!20,opacity=.8,text opacity=1,above=of lem4] (lem5) {Lemma~\ref{grad_exp} };
		      \node[punkt, draw=red, thick,fill=red!20,opacity=.8,text opacity=1,above=of cl5] (cl6) {Claim~\ref{iht:gen_x_C_term}  };
		      \node[punkt, draw=red, thick,fill=red!20,opacity=.8,text opacity=1,above=of cl6] (cl7) {Claim~\ref{iht:bound_vareps}};
		      \path [arrow,red,pil] (cl3.west) -| ([xshift=-0.5cm, yshift=0cm]cl6.west) |- (cl6.west);
		      \path (cl6.south) edge[red,pil] (cl5.north);
		      \path [arrow,red,pil] (cl5.350) -| ([xshift=1.8cm, yshift=0cm]cl5.350) |- ([xshift=-0.5cm, yshift=0cm]lem4.195) |- (lem4.195);
		      \path (cl6.north) edge[red,pil] (cl7.south);
		       \path [arrow,red,pil] (cl2.165) -| ([xshift=-1.7cm, yshift=0cm]cl7.165) |- (cl7.165);
		      \path [arrow,red,pil] (cl7.350) -- ([xshift=0.5cm, yshift=0cm]cl7.350) |- ([xshift=-0.5cm, yshift=0cm]lem5.195) |- (lem5.195);
		       \path (lem4.north) edge[blue,pil] (lem5.south);

		       \node[punkt,draw=blue, thick,fill=blue!20,opacity=.8,text opacity=1,above=of lem5] (lem6) {Lemma~\ref{lem:grad_vec_concentrates}};
		       \node[punkt, draw=red, thick,fill=red!20,opacity=.8,text opacity=1,above=of cl7] (cl8) {Claim~\ref{norm_bound_y_Ax_s}};
		       \node[punkt, draw=red, thick,fill=red!20,opacity=.8,text opacity=1,above=of cl8] (cl9) {Claim~\ref{lem:var_w_vector_grad} };
		        \path [arrow,red,pil]  ([xshift=1.5cm, yshift=0cm]cl8.350) -- (cl8.350);
		        \path [arrow,red,pil] (cl9.east) -| ([xshift=1.5cm, yshift=0cm]cl9.east) |- ([xshift=-0.5cm, yshift=0cm]lem6.west) |- (lem6.west);
		        \path [arrow,red,pil] ([xshift=-1cm, yshift=0cm]cl9.west) -- (cl9.west);
		      \path [arrow,blue,pil] ([xshift=1.2cm, yshift=0cm]cl8.east) |- (cl8.east);
		       \path [arrow,red,pil] ([xshift=-1cm, yshift=0cm]cl8.195) -- (cl8.195);
		       \path [arrow,blue,pil] ([xshift=-2cm, yshift=0cm]cl8.west) -- (cl8.west);
		      
		      \node[punkt, draw=blue, thick,fill=blue!20,opacity=.8,text opacity=1,above=of lem6] (lem7) {Lemma~\ref{grad_corr}};
		        \path [arrow,blue,pil] ([xshift=0.5cm, yshift=0cm]lem7.20) |- (lem7.20);
		       \path (lem6) edge[blue,pil] (lem7);
		       \path [arrow,red,pil]  ([xshift=-2.5cm, yshift=0cm]lem7.west) -- (lem7.west); 
		       
		       \node[punkt, draw=blue, thick,fill=blue!20,opacity=.8,text opacity=1,above=of lem7] (lem8) {	Lemma~\ref{lem:grad_mat}};
		       \node[punkt, draw=red, thick,fill=red!20,opacity=.8,text opacity=1,above=of cl9] (cl10) {Claim~\ref{norm_exp_yAs_yAst}};
		          \path [arrow,red,pil] (cl8.10) -- ([xshift=1.8cm, yshift=0cm]cl8.10) ;
		         \path [arrow,red,pil] (cl10.350) -| ([xshift=1.8cm, yshift=0cm]cl10.350) |- ([xshift=-0.5cm, yshift=0cm]lem8.west) |- (lem8.west);
		         \path [arrow,blue,pil] (lem2.195) -| ([xshift=-2cm, yshift=0cm]cl10.165) |- (cl10.165);
		           \path [arrow,blue,pil] (lem4.165) -| ([xshift=1.2cm, yshift=0cm]cl10.10) |- (cl10.10);
		         \path [arrow,red,pil] (cl5.west) -| ([xshift=-1cm, yshift=0cm]cl10.west) |- (cl10.west);
		        
		         \node[punkt, draw=blue, thick,fill=blue!20,opacity=.8,text opacity=1,above=of lem8] (lem9) {Lemma~\ref{lem:closeness} };
		           \path [arrow,blue,pil] (lem5.10) -| ([xshift=0.5cm, yshift=0cm]lem9.350) |- (lem9.350);
		            \path [arrow,red,pil] (cl7.20) -- ([xshift=2cm, yshift=0cm]cl7.20) |- (lem9.195);
		          \path (lem8) edge[blue,pil] (lem9);
		          
		           \node[punkt,draw=green, thick,fill=green!20,opacity=.8,text opacity=1,right=1cm of lem6] (th1) { Theorem~\ref{main_result}};
		            \path [arrow,green,pil] (lem9.east) -- ([xshift=2cm, yshift=0cm]lem9.east) -- (th1.north);
		               \path [arrow,green,pil] (lem3.east) -- ([xshift=2cm, yshift=0cm]lem3.east) -- (th1.south);
		             \path [arrow,green,pil] (lem7.east) -- ([xshift=2cm, yshift=0cm]lem7.east);
		          
		\end{tikzpicture}}
	\end{minipage}
\end{table}

\begin{proof}[Proof of Lemma~\ref{lem:recover_sign}]
	Let \[\b{y} \in \mathbb{R}^{n}\] be general sample generated as \[\b{y} = \b{A}^*\b{x}^*\], where \[\b{x}^* \in \mathbb{R}^{m}\] is a sparse random vector with support \[S = \supp(\b{x}^*)\] distributed according to \textbf{D.\ref{dist_x}}.
	
	The initial decoding step at the \[t\]-th iteration (shown in Algorithm~\ref{alg:main_alg}) involves evaluating the inner-product between the estimate of the dictionary \[\b{A}^{(t)}\], and \[\b{y}\]. The \[i\]-th element of the resulting vector can be written as
	\begin{align*}
	\langle \b{A}_i^{(t)}, \b{y}\rangle &= 	
	\langle \b{A}_i^{(t)}, \b{A}^*_i \rangle \b{x}^*_i + \b{w}_i,
	\end{align*}
	where \[\b{w}_i =  \langle \b{A}_i^{(t)}, \b{A}^*_{-i}\b{x}^*_{-i}\rangle \]. Now, since \[\|\b{A}_i^* - \b{A}^{(t)}_i\|_2 \leq \epsilon_t\] and
	\begin{align*}
	\|\b{A}_i^* - \b{A}^{(t)}_i\|_2^2 &= \|\b{A}_i^* \|^2 +  \|\b{A}^{(t)}_i\|^2 - 2\langle \b{A}^{(t)}_i, \b{A}^*_i \rangle = 2 - 2\langle \b{A}_i^{(t)}, \b{A}^*_i \rangle,
	\end{align*}
	we have
	\begin{align*}
	|\langle \b{A}^{(t)}_i, \b{A}^*_i \rangle| \geq 1 - \epsilon^2_t/2.
	\end{align*}
	Therefore, the term 
	\begin{align*}
	|\langle \b{A}_i^{(t)}, \b{A}^*_i\rangle \b{x}^*_i| 
	\begin{cases}
		\geq (1 - \tfrac{\epsilon_t^2}{2})C &,  \text{if} ~ i \in S,\\
		= 0&,  \text{otherwise}.
	\end{cases}
	\end{align*}
	%
	Now, we focus on the \[\b{w}_i\] and show that it is small. 
	By the definition of \[\b{w}_i\] we have
	\begin{align*}
	\b{w}_i = \langle \b{A}_i^{(t)}, \b{A}^*_{-i}\b{x}^*_{-i}\rangle  = \textstyle\sum\limits_{\ell\neq i}\langle \b{A}^{(t)}_i, \b{A}^*_\ell \rangle \b{x}^*_\ell = \textstyle\sum\limits_{\ell \in S\backslash \{i\}}\langle \b{A}^{(t)}_i, \b{A}^*_\ell \rangle \b{x}^*_\ell.
	\end{align*}
	Here, since \[var(\b{x}_\ell^*) = 1\], \[\b{w}_i\] is a zero-mean random variable with variance
	\begin{align*}
	var(\b{w}_i) = \textstyle\sum\limits_{\ell \in S\backslash \{i\}}\langle \b{A}^{(t)}_i, \b{A}^*_\ell \rangle^2.
	\end{align*}
	%
	Now, each term in this sum can be bounded as,
	\begin{align*}
	\langle \b{A}_i^{(t)}, \b{A}^*_\ell \rangle^2 &= (\langle \b{A}_i^{(t)} - \b{A}_i^*, \b{A}^*_\ell\rangle + \langle \b{A}_i^*, \b{A}^*_\ell\rangle)^2\\
	&\leq 2(\langle \b{A}_i^{(t)} - \b{A}_i^*, \b{A}^*_\ell\rangle^2 + \langle \b{A}_i^*, \b{A}^*_\ell\rangle^2)\\
	&\leq 2(\langle \b{A}_i^{(t)} - \b{A}_i^*, \b{A}^*_\ell\rangle^2 + \tfrac{\mu^2}{n}).
	\end{align*}
	Next, \[\textstyle\sum\limits_{\ell\neq i} \langle \b{A}_i^{(t)} - \b{A}_i^*, \b{A}^*_\ell\rangle^2\] can be upper-bounded as	
	\begin{align*}
	&\textstyle\sum\limits_{\ell \in S\backslash \{i\}} \langle \b{A}_i^{(t)} - \b{A}_i^*, \b{A}^*_\ell\rangle^2 
	\leq \|\b{A}_{S\backslash{\{i\}}}^{*}\|^2 \epsilon_t^2.
	\end{align*}%
	Therefore, we have the following as per our assumptions on \[\mu\] and \[k\], 
	\begin{align*}
	\|\b{A}_{S\backslash{\{i\}}}^{*}\|^2 \leq  (1 + k \tfrac{\mu}{\sqrt{n}}) \leq 2,
	\end{align*}
	using Gershgorin Circle Theorem \citep{Gershgorin1931}. Therefore, we have
	\begin{align*}
	\textstyle\sum\limits_{\ell \in S\backslash \{i\}} \langle \b{A}^{(t)}_i - \b{A}_i^*, \b{A}^*_\ell\rangle^2 \leq 2\epsilon_t^2.
	\end{align*}%
	Finally, we have that 
	\begin{align*}
	\textstyle\sum\limits_{\ell \in S\backslash \{i\}}\langle \b{A}_i^{(t)}, \b{A}^*_\ell \rangle^2
	\leq 2( 2\epsilon^2_t + k \tfrac{\mu^2}{n}) = \mathcal{O}^*(\epsilon_t^2).
	\end{align*}
	Now, we apply the Chernoff bound for sub-Gaussian random variables \[\b{w}_i\] (shown in Lemma~\ref{theorem:subg_chern}) to conclude that
	\begin{align*}
	\mathbf{Pr}[|\b{w}_i|\geq C/4] \leq 2\exp(-\tfrac{C^2}{ \mathcal{O}^*(\epsilon_t^2)}).
	\end{align*}
	Further, \[\b{w}_i\] corresponding to each \[m\] should follow this bound, applying union bound we conclude that 
	\begin{align*}
	\mathbf{Pr}[ \underset{i}{\max}~|\b{w}_i| \geq C/4] \leq 2m\exp(-\tfrac{C^2}{\mathcal{O}^*(\epsilon_t^2)}) := \delta_{\HT}^{(t)} .
	\end{align*}
	%
\end{proof}



\begin{proof}[Proof of Lemma~\ref{our:signed_supp}]
	
%
	Consider the \[(r+1)\]-th iterate \[\b{x}^{(r+1)}\] for the \[t\]-th dictionary iterate, where \[\|\b{A}^{(t)}_i - \b{A}^*_i\|\leq \epsilon_t\] for all \[i \in [1, m]\] evaluated as the following by the update step described in Algorithm~\ref{alg:main_alg},
	\begin{align}
	\label{proof:update_step}
	\b{x}^{(r+1)} &= \b{x}^{(r)} - \eta_{x}^{(r+1)}\b{A}^{(t)^\top}(\b{A}^{(t)}\b{x}^{(r)} - \b{y}) \notag\\
	&= (\b{I} - \eta_{x}^{(r+1)}\b{A}^{(t)^\top}\b{A}^{(t)})\b{x}^{(r)} - \eta_{x}^{(r+1)}\b{A}^{(t)^\top}\b{A}^*\b{x}^*,
	\end{align}
	where \[\eta_{x}^{(1)} < 1\] is the learning rate or the step-size parameter. Now, using 
	 Lemma~\ref{lem:recover_sign} we know that \[\b{x}^{(0)}\] \eqref{alg:coeff_init} has the correct signed-support with probability at least \[(1-\delta_\HT^{(t)})\]. 
	Further, since \[\b{A}^{(t)^\top}\b{A}^*\] can be written as
	\begin{align*}
\b{A}^{(t)^\top}\b{A}^* = (\b{A}^{(t)}- \b{A}^{*})^{\top}\b{A}^* + \b{A}^{*\top}\b{A}^*,
	\end{align*}
	we can write the \[(r+1)\]-th iterate of the coefficient update step using \eqref{proof:update_step} as
	\begin{align*}
	\b{x}^{(r+1)}
	&= (\b{I} - \eta_{x}^{(r+1)}\b{A}^{(t)^\top}\b{A}^{(t)})\b{x}^{(r)} - \eta_{x}^{(r+1)} (\b{A}^{(t)} -\b{A}^{*})^{\top}\b{A}^*\b{x}^* + \eta_{x}^{(r+1)} \b{A}^{*\top}\b{A}^*\b{x}^*.
	\end{align*}
	Further, the \[j\]-th entry of this vector is given by
	\begin{align}
	\label{iht_eq:j_element}
	\b{x}^{(r+1)}_{j} \hspace{-3pt}= \hspace{-3pt}(\b{I} - \eta_{x}^{(r+1)} \b{A}^{(t)^\top}\hspace{-3pt}\b{A}^{(t)})_{(j,:)}\b{x}^{(r)} - \eta_{x}^{(r+1)} &((\b{A}^{(t)} - \b{A}^{*})^{\top}\hspace{-3pt}\b{A}^*)_{(j,:)}\b{x}^*  \hspace{-3pt}+\hspace{-3pt} \eta_{x}^{(r+1)}(\b{A}^{*\top}\hspace{-3pt}\b{A}^*)_{(j,:)}\b{x}^*.
	\end{align}%
	We now develop an expression for the \[j\]-th element of each of the term in \eqref{iht_eq:j_element} as follows. First, we can write the first term as
	\begin{align*}
	(\b{I} - \eta_{x}^{(r+1)}\b{A}^{(t)^\top}\b{A}^{(t)})_{(j,:)}\b{x}^{(r)} & = (1 - \eta_{x}^{(r+1)}) \b{x}_j^{(r)} - \eta_{x}^{(r+1)}\textstyle\sum\limits_{i \neq j}  \langle \b{A}^{(t)}_j, \b{A}^{(t)}_i \rangle\b{x}_i^{(r)}.
	\end{align*}%
	Next, the second term in \eqref{iht_eq:j_element} can be expressed as
	\begin{align*}
	\eta_{x}^{(r+1)} ((\b{A}^{(t)} - \b{A}^{*})^{\top}\b{A}^*)_{(j,:)}\b{x}^* &= \eta_{x}^{(r+1)}\textstyle\sum\limits_{i} \langle \b{A}^{(t)}_j - \b{A}^{*}_j, \b{A}^*_i\rangle \b{x}_i^*\\
	&= \eta_{x}^{(r+1)}\langle \b{A}^{(t)}_j - \b{A}^{*}_j, \b{A}^*_j\rangle \b{x}_j^* + \eta_{x}^{(r+1)}\textstyle\sum\limits_{i\neq j} \langle \b{A}^{(t)}_j - \b{A}^{*}_j, \b{A}^*_i\rangle \b{x}_i^*.
	\end{align*}%
	Finally, we have the following expression for the third term,
	\begin{align*}
	\eta_{x}^{(r+1)}(\b{A}^{*\top}\b{A}^*)_{(j,:)}\b{x}^* &= \eta_{x}^{(r+1)} \b{x}_j^* + \eta_{x}^{(r+1)}\textstyle\sum\limits_{i \neq j} \langle \b{A}^*_j, \b{A}^*_i \rangle \b{x}^*_i. 
	\end{align*}%
	Now using our definition of  \[\lambda^{(t)}_j = |\langle \b{A}^{(t)}_j - \b{A}^{*}_j, \b{A}^*_j\rangle| \leq \tfrac{\epsilon_t^2}{2}\], combining all the results for \eqref{iht_eq:j_element}, and using the fact that since \[\b{A}^{(t)}\] is close to \[\b{A}^*\], vectors \[\b{A}^{(t)}_j - \b{A}^{*}_j\] and \[\b{A}^*_j\] enclose an obtuse angle, we have the following for  the \[j\]-th entry of the \[(r+1)\]-th iterate, \[\b{x}^{(r+1)}\] is given by
%
%
		\begin{align}
		\label{iht_eq:r_th_iterate_xi}
		\b{x}_j^{(r+1)} =  (1 - \eta_{x}^{(r+1)}) &\b{x}_j^{(r)} + \eta_{x}^{(r+1)}(1 - \lambda^{(t)}_j) \b{x}_j^* + \eta_{x}^{(r+1)}\xi^{(r+1)}_{j} .
		\end{align}%
		Here \[\xi^{(r+1)}_j\] is defined as 	 
		\begin{align*}
		\xi^{(r+1)}_{j}  &:= \textstyle\sum\limits_{i \neq j}( \langle \b{A}^{(t)}_j - \b{A}^{*}_j, \b{A}^*_i\rangle +  \langle \b{A}^{*}_j, \b{A}^*_i\rangle) \b{x}^*_i -\textstyle\sum\limits_{i \neq j} \langle \b{A}^{(t)}_j, \b{A}^{(t)}_i\rangle \b{x}_i^{(r)}.
		\end{align*}%
%
		Since, \[\langle \b{A}^{*}_j, \b{A}^*_i\rangle  - \langle \b{A}^{(t)}_j, \b{A}^{(t)}_i\rangle = \langle \b{A}^{*}_j,  \b{A}^*_i - \b{A}^{(t)}_i\rangle  + \langle \b{A}^*_j -\b{A}^{(t)}_j, \b{A}^{(t)}_i\rangle\], we can write \[\xi^{(r+1)}_j\] as
%
%
		\begin{align}\label{eq:def_xi_r}
		\xi^{(r+1)}_{j} 
		& = \beta^{(t)}_j + \textstyle\sum\limits_{i \neq j}\langle \b{A}^{(t)}_j, \b{A}^{(t)}_i\rangle (\b{x}_i^*-\b{x}_i^{(r)}),
		\end{align}%
		where \[\beta^{(t)}_j\] is defined as 
		\begin{align}\label{eq:beta_t}
		\beta^{(t)}_j
		& := \textstyle\sum\limits_{i \neq j} (\langle \b{A}^{*}_j,  \b{A}^*_i - \b{A}^{(t)}_i\rangle  + \langle \b{A}^*_j -\b{A}^{(t)}_j, \b{A}^{(t)}_i\rangle + \langle \b{A}^{(t)}_j - \b{A}^{*}_j, \b{A}^*_i\rangle) \b{x}_i^*.
		\end{align}%
		Note that \[\beta^{(t)}_j\] does not change for each iteration \[r\] of the coefficient update step. Further, by Claim~\ref{lem:bound_beta} we show that \[|\beta^{(t)}_j| \leq t_\beta = \c{O}(\sqrt{k\epsilon_t})\] with probability at least \[(1-\delta_{\beta}^{(t)})\]. Next, we define \[\tilde{\xi}^{(r+1)}_{j}\] as
		\begin{align}\label{eq:xi_r_abs}
		\tilde{\xi}^{(r+1)}_{j} 
		& := \beta^{(t)}_j + \textstyle\sum\limits_{i \neq j}|\langle \b{A}^{(t)}_j, \b{A}^{(t)}_i\rangle| |\b{x}_i^{*} - \b{x}_i^{(r)}|.
		\end{align}%
%
		where \[\xi^{(r+1)}_{j} \leq \tilde{\xi}^{(r+1)}_{j}\]. Further, using Claim~\ref{claim:incoherence of B},
		\begin{align}\label{eq:xi_til_max}
		\tilde{\xi}^{(r+1)}_{j}  &\leq t_\beta + \tfrac{\mu_t}{\sqrt{n}} \|\b{x}^*_j - \b{x}^{(r)}_j\|_1 := \tilde{\xi}^{(r+1)}_{\max} = \tilde{\mathcal{O}}(\tfrac{k}{\sqrt{n}}),
		\end{align}%
		since \[\|\b{x}^{(r-1)} - \b{x}^*\|_1 =\c{O}(k)\].
		Therefore, for the \[(r+1)\]-th iteration, we choose the threshold to be 
		\begin{align}\label{eq:threshold}
		\tau^{(r+1)} := \eta_x^{(r+1)}\tilde{\xi}^{(r+1)}_{\max},
		\end{align}%
		and the step-size by setting the ``noise'' component of \eqref{iht_eq:r_th_iterate_xi} to be smaller than the ``signal'' part, specifically, half the signal component, i.e., 
		\begin{align*}
		\eta_x^{(r+1)}\tilde{\xi}^{(r+1)}_{\max}  
		&\leq \tfrac{(1 - \eta_{x}^{(r+1)})}{2}\b{x}^{(r)}_{\min} + \tfrac{\eta_{x}^{(r+1)}}{2}(1-\tfrac{\epsilon_t^2}{2})C,
		\end{align*}%
		 Also, since we choose the threshold as \[ \tau^{(r)} := \eta_x^{(r)}\tilde{\xi}^{(r)}_{\max}\], \[\b{x}^{(r)}_{\min} = \eta_x^{(r)}\tilde{\xi}^{(r)}_{\max}\], where \[\b{x}^{(0)}_{\min} = C/2\], we have the following for the \[(r+1)\]-th iteration,
		\begin{align*}
		\eta_x^{(r+1)}\tilde{\xi}^{(r+1)}_{\max}  
		&\leq \tfrac{(1 - \eta_{x}^{(r+1)})}{2}\eta_x^{(r)}\tilde{\xi}^{(r)}_{\max} + \tfrac{\eta_{x}^{(r+1)}}{2}(1-\tfrac{\epsilon_t^2}{2})C.
		\end{align*}%
		Therefore, for this step we choose \[\eta_x^{(r+1)}\] as
		\begin{align}
		\label{iht_eq:eta_x_r}
		\eta_x^{(r+1)} \leq  \tfrac{\tfrac{\eta_x^{(r)}}{2}\tilde{\xi}^{(r)}_{\max}}{\tilde{\xi}^{(r+1)}_{\max}+ \tfrac{\eta_x^{(r)}}{2}\tilde{\xi}^{(r)}_{\max} - \tfrac{1}{2}(1-\tfrac{\epsilon_t^2}{2})C},
		\end{align}%
		%
		Therefore, 
		\[\eta_x^{(r+1)}\] can be chosen as
		\begin{align*}
		\eta_x^{(r+1)}\leq c^{(r+1)}(\epsilon_t, \mu, k, n),
		\end{align*}%
		for a small constant \[c^{(r+1)}(\epsilon_t, \mu, k, n)\], \[\eta_x^{(r+1)}\]. In addition, if we set all \[\eta_x^{(r)} = \eta_x\], we have that \[\eta_x = \tilde{\Omega}(\tfrac{k}{\sqrt{n}})\] and therefore \[\tau^{(r)} = \tau =  \tilde{\Omega}(\tfrac{k^2}{n})\]. Further, since we initialize with the hard-thresholding step, the entries in \[|\b{x}^{(0)}| \geq C/2\]. Here, we define \[\tilde{\xi}^{(0)}_{\max} = C\] and \[\eta_x^{(0)} = 1/2\], and set the threshold for initial step as \[\eta_x^{(0)}\tilde{\xi}^{(0)}_{\max}\]. 
\end{proof}
\begin{proof}[Proof of Lemma~\ref{iht:x_R_error}]
%
%
%
%
%
Using the definition of  \[\tilde{\xi}^{(\ell)}_{i_1}\] as in \eqref{eq:xi_r_abs}, we have
\begin{align*}
\tilde{\xi}^{(\ell)}_{i_1}
& = \beta^{(t)}_{i_1} +\textstyle\sum\limits_{i_2 \neq i_1}|\langle \b{A}^{(t)}_{i_1}, \b{A}^{(t)}_{i_2}\rangle| | \b{x}_{i_2}^* - \b{x}_{i_2}^{(\ell-1)}|.
\end{align*}
From Claim~\ref{lem:bound_beta}  we have that \[|\beta^{(t)}_{i_1}| \leq t_\beta\] with probability at least \[(1-\delta_{\beta}^{(t)})\]. Further, using Claim~\ref{claim:incoherence of B} , and letting \[C_{i}^{(\ell)} := |\b{x}_{i}^* - \b{x}_{i}^{(\ell)}| = |\b{x}_{i}^{(\ell)} - \b{x}_{i}^*|\], 
 \[\tilde{\xi}^{(\ell)}_{i_1}\] can be upper-bounded as
\begin{align}\label{eq:xi_l_ub}
\tilde{\xi}^{(\ell)}_{i_1}
& \leq \beta^{(t)}_{i_1} + \tfrac{\mu_t}{\sqrt{n}}\textstyle\sum\limits_{i_2 \neq i_1} C_{i_2}^{(\ell-1)}.
\end{align}
Rearranging the expression for \[(r+1)\]-th update \eqref{iht_eq:r_th_iterate_xi}, and using \eqref{eq:xi_l_ub} we have the following upper-bound
%
%
\begin{align*}
C_{i_1}^{(r+1)} \leq  (1 - \eta_{x}^{(r+1)}) C_{i_1}^{(r)} + \eta_{x}^{(r+1)}\lambda_{i_1}^{(t)} |\b{x}_{i_1}^*|  + \eta_{x}^{(r+1)} \tilde{\xi}^{(r+1)}_{i_1}.
\end{align*}%
Next, recursively substituting in for \[C_{i_1}^{(r)}\], where we define \[\textstyle\prod_{q = \ell}^{\ell} (1-\eta_x^{(q+1)})= 1\],
\begin{align*}
C_{i_1}^{(r+1)} \hspace{-3pt}\leq C_{i_1}^{(0)} \textstyle\prod\limits_{q = 0}^{r}\hspace{-3pt}(1 - \eta_x^{(q+1)})  + \lambda_{i_1}^{(t)} |\b{x}_{i_1}^*| \sum\limits_{\ell=1}^{r+1}\eta_x^{(\ell)}\prod\limits_{q = \ell}^{r+1} (1-\eta_x^{(q+1)})   + \sum\limits_{\ell=1}^{r+1} \eta_x^{(\ell)}\tilde{\xi}_{i_1}^{(\ell)}\prod\limits_{q = \ell}^{r+1} (1-\eta_x^{(q+1)}) .
\end{align*}
Substituting for the upper-bound of \[\tilde{\xi}_{i_1}^{(\ell)}\] from \eqref{eq:xi_l_ub},
%
%
\begin{align}
\label{iht:def_C_j_rplus1}
C_{i_1}^{(r+1)} 
&\leq  \alpha^{(r+1)}_{i_1}  + \tfrac{\mu_t}{\sqrt{n}}\textstyle\sum\limits_{\ell=1}^{r+1} \eta_x^{(\ell)}\sum\limits_{i_2 \neq i_1} C_{i_2}^{(\ell-1)}\prod\limits_{q = \ell}^{r+1} (1-\eta_x^{(q+1)}).
\end{align}%
Here, \[\alpha^{(r+1)}_{i_1}\] is defined as
\begin{align}
\label{iht:def_alpha_j_rplus1}
\alpha^{(r+1)}_{i_1} = C_{i_1}^{(0)} \textstyle\prod\limits_{q = 0}^{r}(1 - \eta_x^{(q+1)})  + (\lambda_{i_1}^{(t)} |\b{x}_{i_1}^*| +\beta^{(t)}_{i_1}) \sum\limits_{\ell=1}^{r+1}\eta_x^{(\ell)}\prod\limits_{q = \ell}^{r+1} (1-\eta_x^{(q+1)}).
\end{align}
%
%
Our aim now will be to express \[C_{i_1}^{(\ell)}\] for \[\ell>0\] in terms of \[C_{i_2}^{(0)}\]. Let each \[\alpha^{(\ell)}_{j} \leq \alpha^{(\ell)}_{i}\] where \[ j = i_1, i_2, \dots, i_k\]. Similarly, let \[C_{j}^{(0)} \leq C_i^{(0)}\] for \[j = i_1, i_2, \dots, i_k\], and all \[\eta_x^{(\ell)} = \eta_x\]. Then, using Claim~\ref{iht:gen_term_simple} we have the following expression for \[C_{i_1}^{({R}+1)}\], 
\begin{align*}
C_{i_1}^{({R}+1)} 
\leq \alpha^{({R}+1)}_{i_1} + (k-1)\eta_x\tfrac{\mu_t}{\sqrt{n}} \textstyle\sum\limits_{\ell = 1}^{{R}}  \alpha^{(\ell)}_{\max}\big(1-\eta_x + &\eta_x\tfrac{\mu_t}{\sqrt{n}} \big)^{{R} -\ell}
\\&+ (k-1)\eta_x\tfrac{\mu_t}{\sqrt{n}}C_{\max}^{(0)} \big(1-\eta_x + \eta_x\tfrac{\mu_t}{\sqrt{n}} \big)^{R}. 
\end{align*}%
Here,  \[(1 - \eta_{x})^{R} \leq (1 - \eta_{x} + \eta_x\tfrac{\mu_t}{\sqrt{n}})^{R} \leq \delta_{R}\]. Next from Claim~\ref{iht:C_i1_inter_alpha_sum} we have that with probability at least \[(1 - \delta_{\beta}^{(t)})\],
\begin{align*}
\textstyle\sum\limits_{\ell = 1}^{{R}}  \alpha^{(\ell)}_{\max}\big(1-\eta_x + \eta_x\tfrac{\mu_t}{\sqrt{n}} \big)^{{R} -\ell} 
&\leq C_{\max}^{(0)}{R}\delta_{R}  + \tfrac{1}{\eta_x( 1- \tfrac{\mu_t}{\sqrt{n}})}(\tfrac{\epsilon_t^2}{2} |\b{x}_{\max}^*| +t_\beta).
\end{align*}
%
Therefore, for \[c_x = \tfrac{\mu_t}{\sqrt{n}}/ (1- \tfrac{\mu_t}{\sqrt{n}})\]
%
\begin{align*}
C_{i_1}^{({R}+1)} 
&\leq \alpha^{({R}+1)}_{i_1} + (k-1)c_x(\tfrac{\epsilon_t^2}{2} |\b{x}_{\max}^*| +t_\beta) + {(R + 1)}(k-1) \eta_x\tfrac{\mu_t}{\sqrt{n}}C_{\max}^{(0)}\delta_{R}. 
\end{align*}%
Now, using the definition of \[\alpha^{({R}+1)}_{i_1}\], and using the result on sum of geometric series, we have
\begin{align*}
\alpha^{({R}+1)}_{i_1} &= C_{i_1}^{(0)} (1 - \eta_x)^{{R}+1}  + (\lambda^{(t)}_{i_1} |\b{x}_{i_1}^*| +\beta^{(t)}_{i_1}) \textstyle\sum\limits_{s=1}^{{R}+1}\eta_x (1-\eta_x)^{{R}-s +1},\\
& = C_{i_1}^{(0)} \delta_{R} + \lambda^{(t)}_{i_1} |\b{x}_{i_1}^*| +\beta^{(t)}_{i_1} \leq C_{i_1}^{(0)} \delta_{R + 1}  + \tfrac{\epsilon_t^2}{2} |\b{x}_{\max}^*| +t_\beta.
\end{align*}
Therefore,  \[C_{i_1}^{(R)} \] is upper-bounded as 
\begin{align*}
C_{i_1}^{(R)} 
&\leq   (c_x k + 1)(\tfrac{\epsilon_t^2}{2}|\b{x}_{\max}^*|+ t_\beta) + {(R + 1)}k \eta_x\tfrac{\mu_t}{\sqrt{n}}C_{\max}^{(0)}\delta_{R} + C_{i_1}^{(0)} \delta_{R}.
\end{align*}%
Further, since \[k = \mathcal{O}(\sqrt{n}/\mu\log(n))\], \[kc_x < 1\], therefore, we have
\begin{align*}
C_{i_1}^{(R)} 
&\leq  \mathcal{O} (t_\beta) + {(R + 1)}k \eta_x\tfrac{\mu_t}{\sqrt{n}}C_{\max}^{(0)}\delta_{R} + C_{i_1}^{(0)} \delta_{R},
\end{align*}%
with probability at least \[(1 - \delta_{\beta}^{(t)})\]. Here, \[{(R + 1)}k \eta_x\tfrac{\mu_t}{\sqrt{n}}C_{\max}^{(0)}\delta_{R} + C_{i_1}^{(0)} \delta_{R} \approxeq 0 \] for an appropriately large \[R\]. Therefore, the error in each non-zero coefficient is 
\begin{align*}
C_{i_1}^{(R)} &=  \mathcal{O} (t_\beta).
\end{align*}
with probability at least \[(1 - \delta_{\beta}^{(t)})\]. 
\end{proof}
\begin{proof}[Proof of Lemma~\ref{iht:R_th_term}]
	Using the expression for \[\b{x}_{i_1}^{(R)} \] as defined in \eqref{iht_eq:r_th_iterate_xi}, and recursively substituting for \[\b{x}_{i_1}^{(r)}\] we have
	\begin{align*}
	\b{x}_{i_1}^{(R)} 
	&=  (1 - \eta_{x})^{R} \b{x}_j^{(0)} + \b{x}_{i_1}^*\textstyle\sum\limits_{r = 1}^{R}\eta_{x}(1 - \lambda^{(t)}_{i_1}) (1- \eta_x)^{{R} - r} + \sum\limits_{r = 1}^{R }\eta_{x}\xi^{(r)}_{i_1} (1- \eta_x)^{{R} - r},
	\end{align*}%
	where we set all \[\eta_x^{r}\] to be \[\eta_x\]. Further, on defining
	\begin{align}\label{eq:def_var_theta}
	\vartheta^{(R)}_{i_1} :=  \textstyle\sum\limits_{r = 1}^{R }\eta_{x}\xi^{(r)}_{i_1} (1- \eta_x)^{{R} - r} + \gamma^{(R)}_{i_1},
	\end{align}%
	where \[ \gamma^{(R)}_{i_1} :=(1 - \eta_{x})^{R} (\b{x}_{i_1}^{(0)}  - \b{x}_{i_1}^* (1 - \lambda^{(t)}_{i_1}))\], we have	
	\begin{align}
	\b{x}_{i_1}^{(R)} 
	& =  (1 - \eta_{x})^{R} \b{x}_{i_1}^{(0)}  + \b{x}_{i_1}^* (1 - \lambda^{(t)}_{i_1}) ( 1- (1 - \eta_{x})^{R}) + \textstyle\sum\limits_{r = 1}^{R }\eta_{x}\xi^{(r)}_{i_1} (1- \eta_x)^{{R} - r},\notag\\
	& = \b{x}_{i_1}^* (1 - \lambda^{(t)}_{i_1}) + \vartheta^{(R)}_{i_1}.
	\end{align}%
	Note that \[\gamma^{(R)}_{i_1}\] can be made appropriately small by choice of \[R\]. Further, by Claim~\ref{iht:var_theta_abs} we have
	%
%
   	\begin{align*}
   	|\vartheta^{(R)}_{i_1}| 
   	& \leq  \siri{\mathcal{O}(t_{\beta})}.
   	\end{align*}%
	with probability at least \[(1-\delta_{\beta}^{(t)})\], where \[t_{\beta} = \mathcal{O}(\sqrt{k\epsilon_t})\]. 
%
\end{proof}

\begin{proof}[Proof of Lemma~\ref{grad_exp}]
%
%
%
%
	From Lemma \ref{iht:R_th_term} we have that for each \[j \in S\],
	\begin{align*}
	\hat{\b{x}}_S:= \b{x}^{(R)}_S
	&= (\b{I} - \Lambda^{(t)}_S) \b{x}^*_S  + \vartheta^{(R)}_S,
	\end{align*}%
	with probability at least \[(1 - \delta_\HT^{(t)} - \delta_{\beta}^{(t)})\]. Further, let \[\mathcal{F}_{\b{x}^*}\] be the event that \[\sgn(\b{x}^*) = \sgn(\hat{\b{x}})\], and let \[\mathbbm{1}_{\mathcal{F}_{\b{x}^*}}\] denote the indicator function corresponding to this event. As we show in Lemma~\ref{our:signed_supp}, this event occurs with probability at least \[(1 - \delta_{\beta}^{(t)} - \delta_{\HT}^{(t)})\]. Using this, we can write the expected gradient vector corresponding to the \[j\]-th sample as
	\[\mathbbm{1}_{\mathcal{F}_{\b{x}^*}}\]
	\begin{align*}
	\b{g}^{(t)}_j &= \mathbf{E}[(\b{A}^{(t)}\hat{\b{x}}- \b{y})\sgn(\b{x}^*_j)\mathbbm{1}_{\mathcal{F}_{\b{x}^*}}]  + \mathbf{E}[(\b{A}^{(t)}\hat{\b{x}}- \b{y})\sgn(\b{x}^*_j)\mathbbm{1}_{\overline{\mathcal{F}}_{\b{x}^*}}], \\
	&= \mathbf{E}[(\b{A}^{(t)}\hat{\b{x}}- \b{y})\sgn(\b{x}^*_j)\mathbbm{1}_{\mathcal{F}_{\b{x}^*}}]  \pm \gamma.
	\end{align*}
	Here, \[\gamma:=\mathbf{E}[(\b{A}^{(t)}\hat{\b{x}} - \b{y})\sgn(\b{x}^*_j)\mathbbm{1}_{\overline{\mathcal{F}}_{\b{x}^*}}]\] is small and depends on \[ \delta_{\HT}^{(t)}\] and \[\delta_{\beta}^{(t)} \], which in turn drops with \[\epsilon_t\]. Therefore, \[\gamma\] diminishes with \[\epsilon_t\]. Further, since \[\mathbbm{1}_{\mathcal{F}_{\b{x}^*}} + \mathbbm{1}_{\overline{\mathcal{F}}_{\b{x}^*}} = 1\], and \[\b{Pr}[\mathcal{F}_{\b{x}^*}] = (1 - \delta_{\beta}^{(t)} - \delta_{\HT}^{(t)})\], is very large,
	\begin{align*}
	\b{g}^{(t)}_j 
	&= \mathbf{E}[(\b{A}^{(t)}\hat{\b{x}}- \b{y})\sgn(\b{x}^*_j)(1 - \mathbbm{1}_{\overline{\mathcal{F}}_{\b{x}^*}})]  \pm \gamma,\\
	&= \mathbf{E}[(\b{A}^{(t)}\hat{\b{x}}- \b{y})\sgn(\b{x}^*_j)]  \pm \gamma.
	\end{align*}%
	Therefore, we can write \[	\b{g}^{(t)}_j \] as
	\begin{align*}
	\b{g}^{(t)}_j 
	&= \mathbf{E}[(\b{A}^{(t)}\hat{\b{x}} - \b{y})\sgn(\b{x}^*_j)]  \pm \gamma,\\
	&=\mathbf{E}[ (1 - \eta_{x})^{R} \b{A}^{(t)}_S\b{x}_{S}^{(0)} + \b{A}^{(t)}_S(\b{I} - \Lambda^{(t)}_S)\b{x}_S^* + \b{A}^{(t)}_S\vartheta^{(R)}_S - \b{A}^*_Sx^*_S)\sgn(\b{x}^*_j)]  \pm \gamma.
	\end{align*}%
	Since \[\mathbf{E}[ (1 - \eta_{x})^{R} \b{A}^{(t)}_S\b{x}_{S}^{(0)}]\] can be made very small by choice of \[R\], we absorb this term in \[\gamma\]. Therefore, 
	\begin{align*}
	\b{g}^{(t)}_j 
	&=\mathbf{E}[ \b{A}^{(t)}_S(\b{I} - \Lambda^{(t)}_S)\b{x}_S^* + \b{A}^{(t)}_S\vartheta^{(R)}_S - \b{A}^*_Sx^*_S)\sgn(\b{x}^*_j)]  \pm \gamma.
	\end{align*}%
	Writing the expectation by sub-conditioning on the support,
	\begin{align*}
	&\b{g}^{(t)}_j
	=\mathbf{E}_S[ \mathbf{E}_{x_S^*}[\b{A}^{(t)}_S(\b{I} - \Lambda^{(t)}_S)\b{x}_S^* \sgn(\b{x}^*_j) - \b{A}^*_S\b{x}_S^*\sgn(\b{x}^*_j) + \b{A}^{(t)}_S\vartheta^{(R)}_S\sgn(\b{x}^*_j) | S]]  \pm \gamma,\\
	&=\mathbf{E}_S[\b{A}^{(t)}_S(\b{I} - \Lambda^{(t)}_S)\mathbf{E}_{x_S^*}[\b{x}_S^*\sgn(\b{x}^*_j)|S] - \b{A}^*_S\mathbf{E}_{x_S^*}[\b{x}_S^*\sgn(\b{x}^*_j)|S]] + \mathbf{E}[\b{A}^{(t)}_S\vartheta^{(R)}_S\sgn(\b{x}^*_j)] \pm \gamma,\\
	&= \mathbf{E}_S[p_j(1 - \lambda^{(t)}_j)\b{A}^{(t)}_j - p_j\b{A}^*_j] + \Delta^{(t)}_j\pm \gamma,
	\end{align*}%
	where we have used the fact that \[\mathbf{E}_{x_S^*}[\sgn(\b{x}^*_j)]=0\] and introduced 
	\begin{align*}
	\Delta^{(t)}_j &= \b{E}[\b{A}^{(t)}_S\vartheta^{(R)}_S\sgn(\b{x}^*_j)].
	\end{align*}%
	Next, since \[ p_j = \mathbf{E}_{x_S^*}[\b{x}_j^*\sgn(\b{x}^*_j)|j \in S]\], therefore,
	\begin{align*}
	\b{g}^{(t)}_j
	&=\mathbf{E}_S[ p_j (1 - \lambda^{(t)}_j)\b{A}^{(t)}_j- p_j\b{A}^*_j] + \Delta^{(t)}_j\pm \gamma.
	\end{align*}%
	Further, since \[q_j = \b{Pr}[j \in S] = \mathcal{O}(k/m)\], 
	\begin{align*}
	\b{g}^{(t)}_j
	&= q_j p_j \big((1 - \lambda^{(t)}_j)\b{A}^{(t)}_j- \b{A}^*_j + \tfrac{1}{q_j p_j}\Delta^{(t)}_j \pm \gamma\big).
	\end{align*}%
	Further, by Claim~\ref{iht:bound_vareps} we have that 
	\begin{align*}
	\|\Delta^{(t)}_{j}\|&= \mathcal{O}(\sqrt{m}q_{i,j}p_{j}\epsilon_t\|\b{A}^{(t)}\|)].
	\end{align*}
    This completes the proof.
\end{proof}

\begin{proof}[Proof of Lemma~\ref{lem:grad_vec_concentrates}]
	Let \[W = \{j: i \in \supp(\b{x}^{*}_{(j)})\}\] and then we have that 
	\begin{align*}
	\hat{\b{g}}_i^{(t)} = \tfrac{|W|}{p} \tfrac{1}{|W|} {\textstyle\sum_{j}} (\b{y}_{(j)}-\b{A}^{(t)}\hat{\b{x}}_{(j)}) \sgn(\hat{\b{x}}_{(j)}(i)),
	\end{align*}
	where \[\hat{\b{x}}_{(j)}(i)\] denotes the \[i\]-th element of the coefficient estimate corresponding to the \[(j)\]-th sample. Here, for \[\ell = |W|\] the summation 
	\begin{align*}
	 \textstyle \sum_{j}  \tfrac{1}{\ell}(\b{y}_{(j)}-\b{A}^{(t)}\hat{\b{x}}_{(j)}) \sgn(\hat{\b{x}}_{(j)}(i)),
	\end{align*}
	has the same distribution as \[\Sigma_{j=1}^{\ell}  \b{z}_j \], where each \[\b{z}_j\] belongs to a distribution as 
	\begin{align*}
	\b{z}:= \tfrac{1}{\ell}(\b{y}- \b{A}^{(t)}\hat{\b{x}})\sgn(\hat{\b{x}}_i)|i\in S.
	\end{align*}
	Also, \[\mathbf{E}[(\b{y} - \b{A}^{(t)}\hat{\b{x}})\sgn(\hat{\b{x}}_i)] = q_i \b{E}[\b{z}]\], where \[q_i = \b{Pr}[\b{x}_i^* \neq 0] = \Theta(\tfrac{k}{m})\]. Therefore, since \[p = \tilde{\Omega}(mk^2)\], we have \[\ell = pq_i = \tilde{\Omega}(k^3)\] non-zero vectors, 
	\begin{align}\label{eq:emp_grad_vec}
	\|\hat{\b{g}}_i^{(t)} - \b{g}_i^{(t)}\| &= \mathcal{O}(\tfrac{k}{m})\|\textstyle \Sigma_{j=1}^{\ell } (\b{z}_j  - \b{E}[\b{z}])\|.
	\end{align}
	Let \[\b{w}_j = \b{z}_j  - \b{E}[\b{z}]\], we will now apply the vector Bernstein result shown in Lemma~\ref{vector_bernstein}. For this, we require bounds on two parameters for these -- \[L: = \|\b{w}_j\|\] and \[\sigma^2:=\|\Sigma_j \b{E}[\|\b{w}_j\|^2]\|\]. Note that, since the quantity of interest is a function of \[\b{x}_i^*\], which are sub-Gaussian, they are only bounded \textit{almost surely}.  To this end, we will employ Lemma~\ref{tech_lem} (Lemma 45 in \citep{Arora15}) to get a handle on the concentration.
	
	\noindent\textbf{Bound on the norm \[\|\b{w}\|\]: }
	This bound is evaluated in Claim~\ref{norm_bound_y_Ax_s}, which states that with probability at least \[(1 - \delta_{\beta}^{(t)} - \delta_{\HT}^{(t)} - \delta_{\rm HW}^{(t)})\],
	\begin{align*}
	L := \|\b{w}\| = \|\b{z}- \b{E}[\b{z}]\| =  \tfrac{2}{\ell}\| ( \b{y}- \b{A}^{(t)}\hat{\b{x}})\sgn(\hat{\b{x}}_i)|i\in S\| \leq \tfrac{2}{\ell}\| ( \b{y} - \b{A}^{(t)}\hat{\b{x}})\| = 
	 \siri{\tilde{\mathcal{O}}(\tfrac{kt_{\beta}}{\ell})}.
	\end{align*}

	\noindent\textbf{Bound on  variance parameter \[\b{E}[\|\b{w}\|^2]\]:}
	Using Claim~\ref{lem:var_w_vector_grad}, we have \[\b{E}[\|\b{z}\|^2] = \mathcal{O}(k\epsilon_t^2) +
	\siri{ \mathcal{O}( kt_{\beta}^2)}\]. Therefore, the bound on the variance parameter \[\sigma^2\] is given by
	\begin{align*}
	\sigma^2:=\|\Sigma_j \b{E}[\|\b{w}_j\|^2]\| \leq \|\Sigma_j \b{E}[\|\b{z}_j\|^2]\|\leq\mathcal{O}(\tfrac{k}{\ell}\epsilon_t^2) +
	\siri{ \mathcal{O}( \tfrac{kt_{\beta}^2}{\ell})}.
	\end{align*}
	From Claim~\ref{lem:bound_beta} we have that with probability at least \[(1-\delta_{\beta}^{(t)})\], \[t_\beta = \mathcal{O}(\sqrt{k\epsilon_t})\]. Applying vector Bernstein inequality shown in Lemma~\ref{vector_bernstein} and using Lemma~\ref{tech_lem} (Lemma 45 in \citep{Arora15}), choosing \[\ell = \tilde{\Omega}(k^3)\], we conclude
	\begin{align*}
	\|\textstyle \sum_{j=1}^{\ell}  \b{z}_j - \mathbf{E}[\b{z}]\| &= \mathcal{O}(L) + \mathcal{O}(\sigma) = o(\epsilon_t), 
	\end{align*}
	with probability at least \[(1 -\delta_{\gradvec}^{(t)})\], where \[\delta_{\gradvec}^{(t)} = \exp(-\Omega(k))\]. 
%
%
%
	Finally, substituting in \eqref{eq:emp_grad_vec} we have
	\begin{align*}
	\|\hat{\b{g}}_i^{(t)} - \b{g}_i^{(t)}\| = \mathcal{O}(\tfrac{k}{m})o(\epsilon_t).
	\end{align*}
	with probability at least \[(1 -\delta_{\gradvec}^{(t)}- \delta_{\beta}^{(t)} - \delta_{\HT}^{(t)} - \delta_{\rm HW}^{(t)})\].
\end{proof}
\begin{proof}[Proof of Lemma~\ref{grad_corr}]
	Since we only have access to the empirical estimate of the gradient \[\hat{\b{g}}_i^{(t)}\], we will show that this estimate is correlated with \[(\b{A}^{(t)}_j - \b{A}^*_j)\]. To this end, first from Lemma~\ref{lem:grad_vec_concentrates} we have that the empirical gradient vector concentrates around its mean, specifically,
	\begin{align*}
	\|\hat{\b{g}}_i^{(t)} - \b{g}_i^{(t)}\| \leq o(\tfrac{k}{m}\epsilon_t),
	\end{align*}
	with probability at least \[(1 -\delta_{\gradvec}^{(t)}- \delta_{\beta}^{(t)} - \delta_{\HT}^{(t)} - \delta_{\rm HW}^{(t)})\]. From Lemma~\ref{grad_exp}, we have the following expression for the expected gradient vector
		\begin{align*}
		\b{g}^{(t)}_j
		&=  p_jq_j (\b{A}^{(t)}_j- \b{A}^*_j)  +  p_jq_j(- \lambda^{(t)}_j\b{A}^{(t)}_j+ \tfrac{1}{ p_jq_j}\Delta^{(t)}_j \pm \gamma).
		\end{align*}
		Let \[\b{g}^{(t)}_j = 4\rho_{\_} (\b{A}^{(t)}_j - \b{A}^*_j) + v\], where \[4\rho_{\_} = p_jq_j\] and \[v\] is defined as
		\begin{align}\label{eq:def_v}
		v =  p_jq_j(- \lambda^{(t)}_{j}\b{A}^{(t)}_j+ \tfrac{1}{ p_jq_j}\Delta^{(t)}_j \pm \gamma).
		\end{align}
		Then, \[\hat{\b{g}}_i^{(t)} \] can be written as
	\begin{align}\label{eq:grad_hat_decomp}
	\hat{\b{g}}_i^{(t)}  &= \hat{\b{g}}_i^{(t)} - \b{g}_i^{(t)} + \b{g}_i^{(t)},\notag \\
	& = (\hat{\b{g}}_i^{(t)} - \b{g}_i^{(t)}) + 4\rho_{\_} (\b{A}^{(t)}_j - \b{A}^*_j) + v, \notag \\
	& = 4\rho_{\_} (\b{A}^{(t)}_j - \b{A}^*_j) + \tilde{v},
	\end{align}
	where \[\tilde{v} = v + (\hat{\b{g}}_i^{(t)} - \b{g}_i^{(t)})\]. Let  \[\|\tilde{v}\| \leq \rho_{\_}\|\b{A}^{(t)}_i - \b{A}^*_i\| \]. Using the definition of \[v\] as shown in \eqref{eq:def_v} we have
	\begin{align*}
	\|\tilde{v}\| \leq q_j p_j\lambda^{(t)}_{j}\|\b{A}^{(t)}_j\|+ \|\Delta^{(t)}_j\| + o(\tfrac{k}{m}\epsilon_t) \pm \gamma.
	\end{align*}
		Now for the first term, since \[\|\b{A}^{(t)}_j\| = 1\], we have \[\lambda^{(t)}_{j} = |\langle \b{A}^{(t)}_j - \b{A}^{*}_j, \b{A}^*_j\rangle| = \tfrac{1}{2}\| \b{A}^{(t)}_j - \b{A}^{*}_j\|^2\], therefore
		\begin{align*}
		q_j p_j\lambda^{(t)}_{j}\|\b{A}^{(t)}_j\| = q_j p_j\tfrac{1}{2}\| \b{A}^{(t)}_j - \b{A}^{*}_j\|^2,
		\end{align*}
		Further, using Claim~\ref{iht:bound_vareps}
		\begin{align*}
		\|\Delta^{(t)}_j\| 
		& = \mathcal{O}(\sqrt{m}q_{i,j}p_{i_1}\epsilon_t\|\b{A}^{(t)}\|).
		\end{align*}
		Now, since \[ \|\b{A}^{(t)} - \b{A}^*\| \leq 2\|\b{A}^*\|\] (the closeness property (Def.\ref{def:del_kappa}) is maintained at every step using Lemma~\ref{lem:closeness}), and further since \[\|\b{A}^*\| = \mathcal{O}(\sqrt{m/n})\], we have that 
		\begin{align*}
		\|\b{A}^{(t)}\| \leq \|\b{A}^{(t)} - \b{A}^*\| + \|\b{A}^*\| = \mathcal{O}(\sqrt{\tfrac{m}{n}}).
		\end{align*}
		Therefore, we have
			\begin{align*}
					\|\Delta^{(t)}_j\| + o(\tfrac{k}{m}\epsilon_t) \pm \gamma = \mathcal{O}(\sqrt{m}q_{i,j}p_{i_1}\epsilon_t\|\b{A}^{(t)}\|). 
			\end{align*}
			Here, we use the fact that \[\gamma\] drops with decreasing \[\epsilon_t\] as argued in Lemma~\ref{grad_exp}. Next, using \eqref{eq:grad_hat_decomp}, we have
						\begin{align*}
						\|\hat{\b{g}}^{(t)}_j\| \leq 4\rho_{\_} \|\b{A}^{(t)}_j - \b{A}^*_j\|+ \|\tilde{v}\|.
						\end{align*}
			Now, letting
				\begin{align}\label{var_eps_cond}
					\|\Delta^{(t)}_j\| + o(\tfrac{k}{m}\epsilon_t) \pm \gamma = \mathcal{O}(\sqrt{m}q_{i,j}p_{i_1}\epsilon_t\|\b{A}^{(t)}\|) \leq \tfrac{q_ip_i}{2}\| \b{A}^{(t)}_j - \b{A}^{*}_j\|,
					\end{align}
		   we have that, for \[k = \c{O} (\sqrt{n})\]
					\begin{align*}
					\|\tilde{v}\| \leq q_ip_i\| \b{A}^{(t)}_j - \b{A}^{*}_j\|.
					\end{align*}

				Substituting for \[\|\tilde{v}\|\], this implies that \[\|\hat{\b{g}^{(t)}}_j\|^2 \leq 25\rho_{\_}^2 \|\b{A}^{(t)}_j - \b{A}^*_j\|^2\]. Further, we also have the following lower-bound
				\begin{align*}
				\langle \hat{\b{g}}^{(t)}_j, \b{A}^{(t)}_j - \b{A}^*_j\rangle \geq 4\rho_{\_} \|\b{A}^{(t)}_j - \b{A}^*_j\|^2 - \|\tilde{v}\| \|\b{A}^{(t)}_j - \b{A}^*_j\|.
				\end{align*}
				Here, we use the fact that R.H.S. can be minimized only if \[\tilde{v}\] is directed opposite to the direction of \[\b{A}^{(t)}_j - \b{A}^*_j\]. Now, we show that this gradient is \[(\rho_{\_}, 1/100\rho_{\_}, 0)\] correlated,
				\begin{align*}
				\langle\hat{\b{g}}^{(t)}_i, \b{A}^{(t)}_i - &\b{A}^*_i\rangle - \rho_{\_} \|\b{A}^{(t)}_i - \b{A}^*_i\|^2 - \tfrac{1}{100\rho_{\_}}\|\hat{\b{g}}^{(t)}_i\|^2, \\
				&\geq 4\rho_{\_} \|\b{A}^{(t)}_i - \b{A}^*_i\|^2 - \|\tilde{v}\| \|\b{A}^{(t)}_i - \b{A}^*_i\|  - \rho_{\_} \|\b{A}^{(t)}_i - \b{A}^*_i\|^2 - \tfrac{1}{100\rho_{\_}}\|\hat{\b{g}}^{(t)}_i\|^2, \\
				&\geq 4\rho_{\_} \|\b{A}^{(t)}_i - \b{A}^*_i\|^2 - 2\rho_{\_} \|\b{A}^{(t)}_i - \b{A}^*_i\|^2 -\tfrac{25\rho_{\_}^2 \|\b{A}^{(t)}_i - \b{A}^*_i\|^2}{100\rho_{\_}},\\
				&\geq \rho_{\_} \|\b{A}^{(t)}_i - \b{A}^*_i\|^2  \geq 0.
				\end{align*}
			Therefore, for this choice of \[k\], i.e.  \[k = \c{O} (\sqrt{n})\], there is no bias in dictionary estimation in comparison to \cite{Arora15}. This gain can be attributed to estimating the coefficients simultaneously with the dictionary.
			Further, since we choose \[4\rho_{\_} = p_jq_j\], we have that \[\rho_{\_} = \Theta(k/m)\], as a result \[\rho_{_+} = 1/100\rho_{\_} = \Omega(m/k)\]. Applying Lemma~\ref{arora:thm40} we have 
				\begin{align*}
				\|\b{A}^{(t+1)}_j- \b{A}^*_j\|^2 \leq (1 - \rho_{\_}\eta_A)\|\b{A}^{(t)}_j- \b{A}^*_j\|^2,
				\end{align*}
				for \[\eta_A = \mathcal{O}(m/k)\] with probability at least \[(1 - \delta_{\HT}^{(t)} - \delta_{\beta}^{(t)} - \delta_{\gradvec}^{(t)})\].
\end{proof}



\begin{proof} [Proof of Lemma~\ref{lem:grad_mat}]
	Here, we will prove that \[\hat{\b{g}}^{(t)}\] defined as
	\begin{align*}
	\hat{\b{g}}^{(t)} =  \textstyle\sum_{j} (\b{y}_{(j)}-\b{A}^{(t)}\hat{\b{x}}_{(j)}) \sgn(\hat{\b{x}}_{(j)})^\top,
	\end{align*}
	concentrates around its mean. Notice that each summand \[(\b{y}_{(j)}-\b{A}^{(t)}\hat{\b{x}}_{(j)}) \sgn(\hat{\b{x}}_{(j)})^\top\] is a random matrix of the form \[(\b{y}-\b{A}^{(t)}\hat{\b{x}})\sgn(\hat{\b{x}})^\top\]. 
	Also, we have \[\b{g}^{(t)}\] defined as
	\begin{align*}
	\b{g}^{(t)} = \mathbf{E}[(\b{y}-\b{A}^{(t)}\hat{\b{x}})\sgn(\hat{\b{x}})^\top]. 
	\end{align*}
	To bound \[\|\hat{\b{g}}^{(t)} - \b{g}^{(t)}\|\], we are interested in \[\|\sum_{j=1}^{p} \b{W}_j\|\], where each matrix \[\b{W}_j\] is given by
	\begin{align*}
	\b{W}_j = \tfrac{1}{p} (\b{y}_{(j)}-\b{A}^{(t)}\hat{\b{x}}_{(j)}) \sgn(\hat{\b{x}}_{(j)})^\top- \tfrac{1}{p} \mathbf{E}[(\b{y}-\b{A}^{(t)}\hat{\b{x}})\sgn(\hat{\b{x}})^\top].
	\end{align*}
	Noting that \[\mathbf{E}[\b{W}_j] = 0\], we will employ the matrix Bernstein result (Lemma~\ref{matrix_bernstein}) to bound \[\|\hat{\b{g}}^{(t)} - \b{g}^{(t)}\|\]. To this end, we will bound \[\|\b{W}_j\|\] and the variance proxy
	\begin{align*}
	v(\b{W}_j) = \max \{\|\textstyle\sum_{j=1}^{p} \mathbf{E}[\b{W}_j\b{W}_j^\top]\|, \|\textstyle\sum_{j=1}^{p} \mathbf{E}[\b{W}_j^\top \b{W}_j]\| \}.
	\end{align*}

	\noindent\textbf{Bound on \[\|\b{W}_j\|\]}-- First, we can bound both terms in the expression for \[\b{W}_j\]  by triangle inequality as
	\begin{align*}
	\|\b{W}_j\| 
	&\leq \tfrac{1}{p} \|(\b{y}_{(j)}-\b{A}^{(t)}\hat{\b{x}}_{(j)}) \sgn(\hat{\b{x}}_{(j)})^\top\| + \tfrac{1}{p} \|\mathbf{E}[(\b{y}-\b{A}^{(t)}\hat{\b{x}})\sgn(\hat{\b{x}})^\top\|,\\
	& \leq \tfrac{2}{p}\|(\b{y}-\b{A}^{(t)}\hat{\b{x}})\sgn(\hat{\b{x}})^\top\|.
	\end{align*}
	Here, we use Jensen's inequality for the second term, followed by upper-bounding the expected value of the argument by \[\|(\b{y}-\b{A}^{(t)}\hat{\b{x}}) \sgn(\hat{\b{x}})^\top\|\].
	
	Next, using Claim~\ref{norm_bound_y_Ax_s} we have that with probability at least \[(1 - \delta_{\beta}^{(t)} - \delta_{\HT}^{(t)} - \delta_{\rm HW}^{(t)})\],  \[\|\b{y}-\b{A}^{(t)}\hat{\b{x}}\|\] is 
	\[\siri{\tilde{\mathcal{O}}(kt_{\beta})}\], and the fact that \[\| \sgn(x)^T\| = \sqrt{k}\],
	\begin{align*}
	\|\b{W}_j\| &\leq \tfrac{2}{p}\sqrt{k}\|(\b{y}-\b{A}^{(t)}\hat{\b{x}})\|  = 
	 \siri{\mathcal{O}(\tfrac{k\sqrt{k}}{p}t_{\beta})}.
	\end{align*}
	\noindent\textbf{Bound on  the variance statistic \[v(\b{W}_j)\]}-- For the variance statistic, we first look at \[\|\textstyle\sum \mathbf{E}[\b{W}_j\b{W}_j^\top]\|\], 
	\begin{align*}
	\mathbf{E}[\b{W}_j\b{W}_j^\top] = 
	\tfrac{1}{p^2}  \mathbf{E}[(\b{y}_{(j)}-\b{A}^{(t)}&\hat{\b{x}}_{(j)})\sgn(\hat{\b{x}}_{(j)})^\top  - \mathbf{E}[(\b{y}-\b{A}^{(t)}\hat{\b{x}}) \sgn(\hat{\b{x}})^\top]\\
	&\times [\sgn(\hat{\b{x}}_{(j)}) (\b{y}_{(j)}-\b{A}^{(t)}\hat{\b{x}}_{(j)}) ^\top -(\mathbf{E}[(\b{y}-\b{A}^{(t)}\hat{\b{x}}) \sgn(\hat{\b{x}})^\top)^\top].
	\end{align*}%
	Since \[\b{E}[(\b{y}-\b{A}^{(t)}\hat{\b{x}}) \sgn(\hat{\b{x}})^\top]\b{E}[(\b{y}-\b{A}^{(t)}\hat{\b{x}}) \sgn(\hat{\b{x}})^\top]^\top\] is positive semidefinite, 
	\begin{align*}
	\mathbf{E}[\b{W}_j\b{W}_j^\top] \preceq \tfrac{1}{p^2}  \mathbf{E}[(\b{y}_{(j)}-\b{A}^{(t)}\hat{\b{x}}_{(j)}) \sgn(\hat{\b{x}}_{(j)})^\top \sgn(\hat{\b{x}}_{(j)}) (\b{y}_{(j)}-\b{A}^{(t)}\hat{\b{x}}_{(j)})^\top].
	\end{align*}
	Now, since each \[\hat{\b{x}}_{(j)}\] has \[k\] non-zeros, \[\sgn(\hat{\b{x}}_{(j)})^\top\sgn(\hat{\b{x}}_{(j)}) = k\], and using Claim~\ref{norm_exp_yAs_yAst}, with probability at least \[(1-\delta_{\HT}^{(t)} - \delta_{\beta}^{(t)})\]
	\begin{align*}
	\|\textstyle\sum \mathbf{E}[\b{W}_j\b{W}_j^\top]\|
	&\leq  \tfrac{k}{p } \|\mathbf{E}[(\b{y}_{(j)}-\b{A}^{(t)}\hat{\b{x}}_{(j)}) (\b{y}_{(j)}-\b{A}^{(t)}\hat{\b{x}}_{(j)})^\top]\|, \\ 
	&=\mathcal{O}(\tfrac{k^3t_{\beta}^2}{pm})\|\b{A}^*\|^2.
	\end{align*}
	Similarly, expanding \[\mathbf{E}[\b{W}_j^\top\b{W}_j] \], and using the fact that \[\mathbf{E}[(\b{y}-\b{A}^{(t)}\hat{\b{x}}) \sgn(\hat{\b{x}})^\top]^\top\mathbf{E}[(\b{y}-\b{A}^{(t)}\hat{\b{x}}) \sgn(\hat{\b{x}})^\top]\] is positive semi-definite.  Now, using Claim~\ref{norm_bound_y_Ax_s} and the fact that entries of \[\mathbf{E}[(\sgn(\hat{\b{x}}_{(j)}) \sgn(\hat{\b{x}}_{(j)})^\top]\] are \[q_i\] on the diagonal and zero elsewhere, where \[q_i = \mathcal{O}(k/m)\], 
	\begin{align*}
	\|\textstyle\sum \mathbf{E}[\b{W}_j^\top\b{W}_j]\| 
	&\preceq\tfrac{1}{p}\|\mathbf{E}[(\sgn(\hat{\b{x}}_{(j)})(\b{y}_{(j)}-\b{A}^{(t)}\hat{\b{x}}_{(j)})^\top (\b{y}_{(j)}-\b{A}^{(t)}\hat{\b{x}}_{(j)}) \sgn(x^{(R)}_{(j)})^\top]\|,\\
	&\leq \tfrac{1}{p}\|\mathbf{E}[(\sgn(\hat{\b{x}}_{(j)}) \sgn(\hat{\b{x}}_{(j)})^\top] \| \|\b{y}_{(j)}-\b{A}^{(t)}\hat{\b{x}}_{(j)}\|^2,\\
    &\leq  \siri{\mathcal{O}(\tfrac{k}{mp}) \tilde{\mathcal{O}}(k^2t_{\beta}^2)} = \siri{\tilde{\mathcal{O}}(\tfrac{k^3t_{\beta}^2}{mp}) }.
	\end{align*}
	%
	Now, we are ready to apply the matrix Bernstein result. Since, \[m = O(n)\] the variance statistic comes out to be \[\mathcal{O}(\tfrac{k^3t_{\beta}^2}{pm})\|\b{A}^*\|^2\], then as long as we choose \[p = \tilde{\Omega}(mk^2)\] (using the bound on \[t_\beta\]), with probability at least \[(1 - \delta_{\beta}^{(t)} - \delta_{\HT}^{(t)} - \delta_{\rm HW}^{(t)} - \delta_\gradmat^{(t)})\]
	\begin{align*}
	\|\hat{\b{g}}^{(t)} -  \b{g}^{(t)}\| &\leq 	\mathcal{O}(\tfrac{k\sqrt{k}}{p}t_{\beta}) + 
\|\b{A}^*\|\sqrt{\mathcal{O}(\tfrac{k^3t_{\beta}^2}{pm})},\\ 
	&= \mathcal{O}^*(\tfrac{k}{m}  \|\b{A}^*\| ).
	\end{align*}
	where \[\delta_{\gradmat}^{(t)} = (n+m)\exp(-\Omega(m\sqrt{\log(n)})\].
%
%
%
\end{proof}

\begin{proof}[Proof of Lemma~\ref{lem:closeness}]
	This lemma ensures that the dictionary iterates maintain the closeness property (Def.\ref{def:del_kappa}) and satisfies the prerequisites for Lemma~\ref{grad_corr}. 
	
	The update step for the \[i\]-th dictionary element at the \[s+1\] iteration can be written as
	\begin{align*}
	\b{A}_i^{(t+1)} - \b{A} _i^* &= \b{A}_i^{(t)} - \b{A}_i^* - \eta_A \hat{\b{g}}_i^{(t)},\\
	&=  \b{A}_i^{(t)} - \b{A}_i^*  - \eta_A \b{g}_i^{(t)} - \eta_A (\hat{\b{g}}_i^{(t)} - \b{g}_i^{(t)}).
	\end{align*}
	Here, \[\b{g}_i^{(t)}\] is given by the following as per Lemma~\ref{grad_exp} with probability at least \[(1 - \delta_{\HT}^{(t)} - \delta_{\beta}^{(t)})\]
	\begin{align*}
	\b{g}^{(t)}_i
	&= q_i p_i (\b{A}^{(t)}_i- \b{A}^*_i)  + q_i p_i(- \lambda^{(t)}_i\b{A}^{(t)}_i+ \tfrac{1}{q_i p_i}\Delta^{(t)}_i \pm \gamma).
	\end{align*}
	Substituting the expression for \[\b{g}_i^{(t)}\] in the dictionary update step,
	\begin{align*}
	\b{A}_i^{(t+1)} - \b{A} _i^* &= (1 - \eta_A p_i q_i) (\b{A}_i^{(t)} - \b{A}_i^*)  - \eta_A p_i q_i\lambda^{(t)}_i \b{A}^{(t)}_i 
	- \eta_A \Delta^{(t)}_i   - \eta_A (\hat{\b{g}}_i^{(t)} -  \b{g}_i^{(t)}) \pm \gamma,
	\end{align*}
	where \[\Delta^{(t)}_j= \b{E}[\b{A}^{(t)}\vartheta^{(R)}\sgn(\b{x}^*_j)] _j\].
%
	%
	%
	%
	%
	Therefore, the update step for the dictionary (matrix) can be written as
	\begin{align}\label{eq:closeness}
	\b{A}^{(t+1)} - \b{A}^* 
	&=  (\b{A}^{(t)} - \b{A}^*)\diag({(1 - \eta_A p_i q_i)})  + \eta_A \b{U}  - \eta_A \b{V} \pm \gamma - \eta_A (\hat{\b{g}}^{(t)} -  \b{g}^{(t)}),
	\end{align}
	where, \[\b{U} =  \b{A}^{(t)} \diag({p_i q_i\lambda^{(t)}_i}) \] and \[\b{V}  = \b{A}^{(t)} \b{Q}\], with the matrix \[\b{Q}\] given by,
	\begin{align*}
	\b{Q}_{i,j} = q_{i,j}\b{E}_{\b{x}_S^*}[\vartheta^{(R)}_i\sgn(\b{x}^*_{j})|S],
	\end{align*}
	and using  the following intermediate result shown in Claim~\ref{iht:bound_vareps},
%
%
%
%
		\begin{align*}
		\b{E}_{\b{x}_S^*}[\vartheta^{(R)}_{i}\sgn(\b{x}^*_{j})|S] 
		\begin{cases}
		\leq  \gamma^{(R)}_i, & ~\text{for}~ i = j, \\
		 = \mathcal{O}(p_{j}\epsilon_t ),& ~\text{for}~ i \neq j,
		\end{cases}
		\end{align*}
	  we have \[\|\b{Q}_i\| = \mathcal{O}(\sqrt{m}q_{i,j}p_{i}\epsilon_t)\]. Hence, we have
	%
	\begin{align*}
	\|\b{Q}\|_F \leq   \mathcal{O}(mq_{i,j}p_{i}\epsilon_t).
	\end{align*}
	Therefore,
	\begin{align*}
	\|\b{V}\| \leq \|\b{A}^{(t)} \b{Q}\| \leq 	\|\b{A}^{(t)}\| \|\b{Q}\|_F = \mathcal{O}(mq_{i,j}p_{i}\epsilon_t \|\b{A}^*\|) = \mathcal{O}(\tfrac{k^2}{m \log(n)}) \|\b{A}^*\|. 
	\end{align*}
	%
	%
	We will now proceed to bound each term in \eqref{eq:closeness}. Starting with \[(\b{A}^{(t)} - \b{A}^*)\diag{(1 - \eta_A p_i q_i)}\], and using the fact that \[p_i = O(1)\], \[q_i = O(k/m)\], and \[\|\b{A}^{(t)} - \b{A}^*\| \leq 2\|\b{A}^*\|\], we have
	\begin{align*}
	\|(\b{A}^{(t)} - \b{A}^*)\diag{(1 - \eta_A p_i q_i)}\| &\leq (1 - \underset{i}{\min}~\eta_A p_i q_i)\|(\b{A}^{(t)} - \b{A}^*)\| 
	\leq 2(1 -  \Omega(\eta_A k/m))\|\b{A}^*\|.
	\end{align*}
	Next, since \[\|\b{A}^{(t)}_j\| = 1\], we have \[\lambda^{(t)}_{j} = |\langle \b{A}^{(t)}_j - \b{A}^{*}_j, \b{A}^*_j\rangle| = \tfrac{1}{2}\| \b{A}^{(t)}_j - \b{A}^{*}_j\|^2\], and \[\lambda^{(t)}_i \leq \epsilon^2_t/2\], therefore
	\begin{align*}
	\|\b{U}\| &= \|\b{A}^{(t)} \diag({p_i q_i\lambda^{(t)}_i}) \| \leq \underset{i}{\max}~p_i q_i\tfrac{\epsilon_t^2}{2}\|\b{A}^{(t)} - \b{A}^* + \b{A}^*\| \leq o(k/m) \| \b{A}^*\|.
	\end{align*}
	Using the results derived above, and the the result derived in Lemma~\ref{lem:grad_mat} which states that with probability at least \[(1 - \delta_{\beta}^{(t)} - \delta_{\HT}^{(t)} - \delta_{\rm HW}^{(t)} - \delta_\gradmat^{(t)})\], \[\|\hat{\b{g}}^{(t)} - \b{g}^{(t)}\| = \mathcal{O}^*(\tfrac{k}{m}  \|\b{A}^*\| ) )\] we have
	\begin{align*}
	\|\b{A}^{(t+1)} - &\b{A}^*\|= \|(\b{A}^{(t)} - \b{A}^*)\b{D}_{(1 - \eta_A p_i q_i)}\|  + \eta_A \|\b{U}\|  + \eta_A \|\b{V}\|  + \eta_A \|(\hat{\b{g}}^{(t)} -  \b{g}^{(t)})\| \pm \gamma,\\
	&\leq 2(1 -  \Omega(\eta_A \tfrac{k}{m}) \|\b{A}^*\|+    o(\eta_A \tfrac{k}{m})\|\b{A}^*\| +   \mathcal{O}( \eta_A \tfrac{k^2}{m\log(n)})\|\b{A}^*\|  + o(\eta_{A}\tfrac{k}{ m} \|\b{A}^*\| ) \pm \gamma,\\
	&\leq 2\|\b{A}^*\|.
	\end{align*}
\end{proof}
\clearpage

\section{Appendix: Proofs of intermediate results}\label{app:proof of intermediate results}
\begin{claim}[\textbf{Incoherence of \[\b{A}^{(t)}\]}]\label{claim:incoherence of B}\textit{
		If \[\b{A^*} \in \mathbb{R}^{n \times m}\] is \[\mu\]-incoherent and \[\|\b{A}^*_i - \b{A}^{(t)}_i\| \leq \epsilon_t \]  holds for each \[i \in [1\dots m]\], then \[\b{A}^{(t)}\in \mathbb{R}^{n \times m}\] is \[\mu_t\]-incoherent, where \[\mu_t = \mu + 2\sqrt{n}\epsilon_t\].}
\end{claim}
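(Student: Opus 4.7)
The plan is a short direct calculation: expand $\langle \b{A}^{(t)}_i, \b{A}^{(t)}_j \rangle$ around the true dictionary, bound each piece by Cauchy--Schwarz, and read off $\mu_t$. The work is entirely elementary; the only things to be careful about are (i) which decomposition is used so that the bound is exactly $\mu/\sqrt{n} + 2\epsilon_t$ rather than $\mu/\sqrt{n} + 2\epsilon_t + \epsilon_t^2$, and (ii) being explicit that the columns of both $\b{A}^*$ and $\b{A}^{(t)}$ are unit-norm so that Cauchy--Schwarz bounds reduce cleanly.

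Concretely, I would fix $i \neq j$ and write
\begin{align*}
\langle \b{A}^{(t)}_i, \b{A}^{(t)}_j\rangle
= \langle \b{A}^*_i, \b{A}^*_j\rangle
+ \langle \b{A}^*_i, \b{A}^{(t)}_j - \b{A}^*_j\rangle
+ \langle \b{A}^{(t)}_i - \b{A}^*_i, \b{A}^{(t)}_j\rangle.
\end{align*}
The first term is bounded in absolute value by $\mu/\sqrt{n}$ by the assumed $\mu$-incoherence of $\b{A}^*$. For the second term, Cauchy--Schwarz together with $\|\b{A}^*_i\| = 1$ and the column-wise error bound $\|\b{A}^{(t)}_j - \b{A}^*_j\| \leq \epsilon_t$ yields a bound of $\epsilon_t$. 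The third term is bounded in exactly the same way using $\|\b{A}^{(t)}_j\| = 1$ (the iterates are normalized by the algorithm) and $\|\b{A}^{(t)}_i - \b{A}^*_i\| \leq \epsilon_t$, giving another $\epsilon_t$. Summing the three contributions and applying the triangle inequality gives
\begin{align*}
|\langle \b{A}^{(t)}_i, \b{A}^{(t)}_j\rangle| \;\leq\; \tfrac{\mu}{\sqrt{n}} + 2\epsilon_t \;=\; \tfrac{\mu_t}{\sqrt{n}},
\end{align*}
with $\mu_t = \mu + 2\sqrt{n}\,\epsilon_t$, which matches the definition stated in Table~\ref{tab:symbols}. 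Since $i,j$ were arbitrary distinct indices, this proves $\mu_t$-incoherence of $\b{A}^{(t)}$.

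There is no genuine obstacle here; the only judgment call is the asymmetric split above (pairing $\b{A}^*_i$ with $\b{A}^{(t)}_j - \b{A}^*_j$ but $\b{A}^{(t)}_i - \b{A}^*_i$ with the full $\b{A}^{(t)}_j$). Using the symmetric decomposition into all four bilinear pieces would give a residual $\epsilon_t^2$ term that, although harmless, would not exactly reproduce the quoted constant $\mu_t = \mu + 2\sqrt{n}\epsilon_t$; the chosen split avoids that cosmetic nuisance.
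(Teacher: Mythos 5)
Your proof is correct and follows essentially the same approach as the paper: an asymmetric three-term telescoping decomposition of the inner product, with each term bounded by Cauchy--Schwarz (using unit-norm columns) to yield $\mu/\sqrt{n} + 2\epsilon_t$. The only difference is which asymmetric split you choose --- the paper pairs $\b{A}^{(t)}_i - \b{A}^*_i$ with $\b{A}^*_j$ and $\b{A}^{(t)}_i$ with $\b{A}^*_j - \b{A}^{(t)}_j$, which is the $i \leftrightarrow j$ mirror of yours --- a purely cosmetic distinction with no effect on the bound.
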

\begin{proof}[ Proof of Claim \ref{claim:incoherence of B}]
	We start by looking at the incoherence between the columns of \[\b{A}^*\], for \[j \neq i\],
	\begin{align*}
	\langle \b{A}^*_i, \b{A}^*_j \rangle &= \langle \b{A}^*_i - \b{A}^{(t)}_i, \b{A}^*_j \rangle + \langle \b{A}^{(t)}_i, \b{A}^*_j  \rangle,\\
	&= \langle \b{A}^*_i - \b{A}^{(t)}_i, \b{A}^*_j \rangle + \langle \b{A}^{(t)}_i,  \b{A}^*_j -  \b{A}^{(t)}_j \rangle +  \langle  \b{A}^{(t)}_i ,  \b{A}^{(t)}_j \rangle.
	\end{align*}
	Since \[\langle \b{A}^*_i, \b{A}^*_j \rangle \leq \tfrac{\mu}{\sqrt{n}}\], 
	\begin{align*}
	|\langle \b{A}^{(t)}_i, \b{A}^{(t)}_j\rangle| &\leq \langle \b{A}^*_i, \b{A}^*_j \rangle - \langle \b{A}^*_i - \b{A}^{(t)}_i, \b{A}^*_j \rangle - \langle \b{A}^{(t)}_i, \b{A}^*_j - \b{A}^{(t)}_j\rangle,\\
	&\leq \tfrac{\mu}{\sqrt{n}} + 2\epsilon_t.
	\end{align*}
	%
\end{proof}

\begin{claim}[\textbf{Bound on \[\beta_j^{(t)}\]: the noise component in coefficient estimate that depends on \[\epsilon_t\]}]\label{lem:bound_beta}
\begin{em}
	With probability \[(1-\delta_{\beta}^{(t)})\], \[|\beta^{(t)}_j|\] is upper-bounded by \[t_\beta = \mathcal{O}(\sqrt{k\epsilon_t})\], where \[\delta_{\beta}^{(t)} = 2k~{\exp}(-\tfrac{1}{\mathcal{O}(\epsilon_t)})\].
\end{em}
\end{claim}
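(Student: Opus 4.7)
The plan is to recognize $\beta_j^{(t)}$ as a weighted sum of at most $k$ independent mean-zero sub-Gaussian random variables, bound each weight using only the column-wise closeness $\|\b{A}_i^* - \b{A}_i^{(t)}\| \leq \epsilon_t$, apply a scalar sub-Gaussian Chernoff inequality, and then union-bound over the (at most $k$) support indices.

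First, I would rewrite
\[\beta_j^{(t)} = \textstyle\sum_{i \in S \setminus \{j\}} c_i^{(t)}\, \b{x}_i^*, \quad c_i^{(t)} := \langle \b{A}^{*}_j,  \b{A}^*_i - \b{A}^{(t)}_i\rangle  + \langle \b{A}^*_j -\b{A}^{(t)}_j, \b{A}^{(t)}_i\rangle + \langle \b{A}^{(t)}_j - \b{A}^{*}_j, \b{A}^*_i\rangle,\]
using $\b{x}_i^* = 0$ for $i \notin S$. Since $\|\b{A}_i^*\| = \|\b{A}_i^{(t)}\| = 1$ for every column and $\|\b{A}_i^* - \b{A}_i^{(t)}\| \leq \epsilon_t$ by the closeness hypothesis (the same for index $j$), three applications of Cauchy--Schwarz give the deterministic uniform bound $|c_i^{(t)}| \leq 3\epsilon_t$. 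Crucially, this step uses only the per-column closeness and no joint spectral property of $\b{A}^{(t)}$.

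Next, conditioned on the support $S$, the coefficients $\b{x}_i^*$ for $i \in S$ are independent, mean-zero, and sub-Gaussian with $\mathcal{O}(1)$ sub-Gaussian parameter under the coefficient distribution \textbf{D.\ref{dist_x}}. Therefore $\beta_j^{(t)}$ is sub-Gaussian with variance proxy
\[\textstyle\sigma^2 = \sum_{i \in S\setminus\{j\}} (c_i^{(t)})^2 \cdot \mathcal{O}(1) \leq 9 k\epsilon_t^2 \cdot \mathcal{O}(1) = \mathcal{O}(k\epsilon_t^2).\]
Invoking the sub-Gaussian Chernoff bound (Lemma~\ref{theorem:subg_chern}, as used in the proof of Lemma~\ref{lem:recover_sign}) with $t_\beta = C\sqrt{k\epsilon_t}$ for a sufficiently large absolute constant $C$ yields
\[\Pr\bigl[|\beta_j^{(t)}| \geq t_\beta \,\big|\, S\bigr] \leq 2\exp\bigl({-}t_\beta^2 / \mathcal{O}(k\epsilon_t^2)\bigr) = 2\exp\bigl({-}1/\mathcal{O}(\epsilon_t)\bigr).\]
Since this conditional tail bound is uniform in the realization of $S$, marginalizing preserves it.

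Finally, since the downstream analyses (Lemma~\ref{our:signed_supp}, Lemma~\ref{iht:x_R_error}) require control of $|\beta_j^{(t)}|$ simultaneously for every $j \in S$, a union bound over the at most $k$ support indices delivers the claimed failure probability $\delta_\beta^{(t)} = 2k\exp(-1/\mathcal{O}(\epsilon_t))$. There is no real obstacle here: the only mild subtlety is the interplay between the randomness of $S$ and that of the magnitudes $\{\b{x}_i^*\}_{i \in S}$, which is handled cleanly because the weight bound $|c_i^{(t)}| \leq 3\epsilon_t$ is deterministic and holds for all $i$, reducing the argument to a routine two-step concentration.
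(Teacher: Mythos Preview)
Your proposal is correct and follows essentially the same route as the paper: bound each coefficient $|c_i^{(t)}|\le 3\epsilon_t$ via Cauchy--Schwarz, obtain a sub-Gaussian variance proxy $\le 9k\epsilon_t^2$, apply the scalar Chernoff bound (Lemma~\ref{theorem:subg_chern}) at level $t_\beta=\mathcal{O}(\sqrt{k\epsilon_t})$, and union-bound over the $k$ support indices. Your exposition is in fact a bit more careful than the paper's about the conditioning on $S$, but the argument is the same.
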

\begin{proof}[Proof of Claim~\ref{lem:bound_beta}]
	We have the following definition for \[\beta^{(t)}_j\] from \eqref{eq:beta_t},
	\begin{align*}
	\beta^{(t)}_j
	& = \textstyle\sum\limits_{i \neq j} (\langle \b{A}^{*}_j,  \b{A}^*_i - \b{A}^{(t)}_i\rangle  + \langle \b{A}^*_j -\b{A}^{(t)}_j, \b{A}^{(t)}_i\rangle + \langle \b{A}^{(t)}_j - \b{A}^{*}_j, \b{A}^*_i\rangle) \b{x}_i^*.
	\end{align*}%
	Here, since \[\b{x}_i^*\] are independent sub-Gaussian random variables, \[\beta^{(t)}_j\] is a sub-Gaussian random variable with the variance parameter evaluated as shown below
	\begin{align*}
	var[\beta^{(t)}_j] =  \textstyle\sum_{i \neq j} (\langle \b{A}^{*}_j,  \b{A}^{(t)}_i - \b{A}^*_i\rangle  + \langle \b{A}^{(t)}_j -\b{A}^*_j, \b{A}^{(t)}_i\rangle + \langle \b{A}^{(t)}_j - \b{A}^{*}_j, \b{A}^*_i\rangle)^2 \leq 9k\epsilon_t^2.
	\end{align*}
	Therefore, by Lemma~\ref{theorem:subg_chern} 
	\begin{align*}
	\b{Pr}[|\beta^{(t)}_j|>t_{\beta}] 
	&\leq 2{\rm exp}(-\tfrac{t_{\beta}^2}{18k\epsilon_t^2}).
	\end{align*}
	Now, we need this for each \[\beta^{(t)}_j\] for \[j \in \supp(\b{x}^*)\],  union bounding over \[k\] coefficients
	\begin{align*}
	\b{Pr}[\max|\beta^{(t)}_j|>t_{\beta}] 
	&\leq \delta_{\beta}^{(t)},
	\end{align*}
	where \[\delta_{\beta}^{(t)} = 2k~{\exp}(-\tfrac{t_{\beta}^2}{18k\epsilon_t^2})\]. Choosing \[t_\beta = \mathcal{O}(\sqrt{k\epsilon_t})\], we have that \[\delta_{\beta}^{(t)} = 2k~{\exp}(-\tfrac{1}{\mathcal{O}(\epsilon_t)})\]. 
\end{proof}

\begin{claim}[\textbf{Error in coefficient estimation for a general iterate \[(r+1)\]}]\label{iht:gen_term_simple}
\begin{em}
The error in a general iterate \[r\]  of the coefficient estimation is upper-bounded as
	\begin{align*}
	C_{i_1}^{(r+1)} 
	&\leq \alpha^{(r+1)}_{i_1} + (k-1)\eta_x\tfrac{\mu_t}{\sqrt{n}} \textstyle\sum\limits_{\ell = 1}^{r}  \alpha^{(\ell)}_{\max}\big(1-\eta_x + \eta_x\tfrac{\mu_t}{\sqrt{n}} \big)^{r -\ell}
	\\&\hspace{7cm}+ (k-1)\eta_x\tfrac{\mu_t}{\sqrt{n}}C_{\max}^{(0)} \big(1-\eta_x + \eta_x\tfrac{\mu_t}{\sqrt{n}} \big)^r.
	\end{align*}
	\end{em}
\end{claim}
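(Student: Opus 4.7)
The plan is to proceed by induction on $r$, starting from the recursive inequality \eqref{iht:def_C_j_rplus1} specialized to $\eta_x^{(\ell)} = \eta_x$ as the claim posits. The base case $r = 0$ is immediate: the sum $\sum_{\ell=1}^{0}$ is empty, so one only needs to bound the single remaining cross-term $\sum_{i_2 \neq i_1} C^{(0)}_{i_2} \leq (k-1) C^{(0)}_{\max}$, which matches the claim since $\big(1-\eta_x+\eta_x\tfrac{\mu_t}{\sqrt{n}}\big)^0 = 1$.

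For the inductive step, I would prefer to work from the one-step form of the IHT recursion rather than the fully unrolled one, to keep the bookkeeping manageable. Starting from \eqref{iht_eq:r_th_iterate_xi} and invoking Claim~\ref{claim:incoherence of B} to bound the inner products by $\mu_t/\sqrt{n}$, one obtains
\[
C^{(r+1)}_{i_1} \leq (1-\eta_x) C^{(r)}_{i_1} + \eta_x\big(\lambda^{(t)}_{i_1}|\b{x}^*_{i_1}| + \beta^{(t)}_{i_1}\big) + (k-1)\eta_x\tfrac{\mu_t}{\sqrt{n}} C^{(r)}_{\max}.
\]
A short calculation from \eqref{iht:def_alpha_j_rplus1} verifies the twin recursion $\alpha^{(r+1)}_{i_1} = (1-\eta_x)\alpha^{(r)}_{i_1} + \eta_x\big(\lambda^{(t)}_{i_1}|\b{x}^*_{i_1}|+\beta^{(t)}_{i_1}\big)$. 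Subtracting, the residual $C^{(r+1)}_{i_1} - \alpha^{(r+1)}_{i_1}$ satisfies a clean driven linear recursion in which the forcing is $(k-1)\eta_x\tfrac{\mu_t}{\sqrt{n}} C^{(r)}_{\max}$.

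I would then apply the inductive hypothesis to both $C^{(r)}_{i_1} - \alpha^{(r)}_{i_1}$ (directly from the claim at level $r-1$) and to $C^{(r)}_{\max}$ (via the claim applied at the index achieving the max, after the decomposition $C^{(r)}_{\max} = \alpha^{(r)}_{\max} + (C^{(r)}_{\max} - \alpha^{(r)}_{\max})$ so the fresh $\alpha^{(r)}_{\max}$ appears at the head of the summation). The two resulting series, both carrying powers $\rho^{r-1-\ell}$ with $\rho := 1-\eta_x+\eta_x\tfrac{\mu_t}{\sqrt{n}}$, then recombine via the identity $(1-\eta_x)\rho^{r-1-\ell} + \eta_x\tfrac{\mu_t}{\sqrt{n}}\rho^{r-1-\ell} = \rho^{r-\ell}$ into a single geometric sum at rate $\rho$, producing the claimed form.

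The main obstacle is the bookkeeping in this final recombination: the two inductively-substituted series carry contributions from two distinct origins (the $(1-\eta_x)$-discounted previous residual and the $(k-1)\eta_x\tfrac{\mu_t}{\sqrt{n}}$-scaled maximum), and they must telescope correctly so that the tail term $C^{(0)}_{\max}\rho^{r}$ and the newly-introduced head term involving $\alpha^{(r)}_{\max}$ both land in the right positions after the index shift from $r-1-\ell$ to $r-\ell$. The rest of the manipulation is a standard Gr\"onwall-style argument once the $\alpha$-recursion is used to cancel the non-cross-term contributions at each level.
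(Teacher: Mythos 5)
Your inductive route is cleaner than the paper's, which proceeds by unrolling $C^{(1)}_{i_1},\ldots,C^{(4)}_{i_1}$ explicitly from \eqref{iht:def_C_j_rplus1} and reading off the binomial pattern, rather than by induction. Your twin recursion $\alpha^{(r+1)}_{i_1} = (1-\eta_x)\alpha^{(r)}_{i_1} + \eta_x(\lambda^{(t)}_{i_1}|\b{x}^*_{i_1}|+\beta^{(t)}_{i_1})$ is correct and is a good way to organize the subtraction, and the one-step recursion you start from is the right one.

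However, the recombination step does not close the way you describe, and this is a genuine gap rather than just bookkeeping. When you substitute the inductive hypothesis into the forcing term $(k-1)\eta_x\tfrac{\mu_t}{\sqrt{n}}C^{(r)}_{\max}$ (equivalently, into $\eta_x\tfrac{\mu_t}{\sqrt{n}}\sum_{i_2\neq i_1}C^{(r)}_{i_2}$), the entire bound on $C^{(r)}_{\max}$, including its $\sum_\ell \alpha^{(\ell)}_{\max}\rho^{r-1-\ell}$ tail, gets multiplied by the full prefactor $(k-1)\eta_x\tfrac{\mu_t}{\sqrt{n}}$, not by $\eta_x\tfrac{\mu_t}{\sqrt{n}}$. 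Combining with the $(1-\eta_x)$-discounted residual from $C^{(r)}_{i_1}-\alpha^{(r)}_{i_1}$, the coefficient on $\alpha^{(\ell)}_{\max}\rho^{r-1-\ell}$ becomes $(k-1)\eta_x\tfrac{\mu_t}{\sqrt{n}}\big[(1-\eta_x) + (k-1)\eta_x\tfrac{\mu_t}{\sqrt{n}}\big]$, so the rate that closes the induction is $\rho' := 1-\eta_x+(k-1)\eta_x\tfrac{\mu_t}{\sqrt{n}}$, not $\rho = 1-\eta_x+\eta_x\tfrac{\mu_t}{\sqrt{n}}$. The identity $(1-\eta_x)+\eta_x\tfrac{\mu_t}{\sqrt{n}}=\rho$ you cite is true, but it is not the combination your substitution produces; you are off by a factor of $(k-1)$ in the cross-channel.

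For what it is worth, the paper's own derivation reaches $\rho$ only via the step $\sum_{i_2\neq i_1}\sum_{i_3\neq i_2} = \sum_{i_3\neq i_2,i_1}$, which implicitly forces all nested indices to be pairwise distinct and thereby collapses a $(k-1)^2$-term double sum to $(k-2)$ terms; this is not a valid identity for the double sum that actually appears in the unrolling (e.g., with $k=3$ and $i_1=1$, the left side is $2C^{(0)}_1+C^{(0)}_2+C^{(0)}_3$ and the right side is $C^{(0)}_2+C^{(0)}_3$). A fully rigorous version of your inductive argument proves the claim with $\rho$ replaced by $\rho'$, which is still a contraction in the paper's regime $k = \mathcal{O}(\sqrt{n}/\mu\log n)$ and therefore supports the downstream lemmas qualitatively, but it does not give the literal statement of Claim~\ref{iht:gen_term_simple}.
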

\begin{proof}[Proof of Claim~\ref{iht:gen_term_simple} ]
	From \eqref{iht:def_C_j_rplus1} we have the following expression for \[C_{i_1}^{(r+1)}\]
	\begin{align*}
	C_{i_1}^{(r+1)} 
	&\leq  \alpha^{(r+1)}_{i_1}  + \tfrac{\mu_t}{\sqrt{n}}\textstyle\sum\limits_{\ell=1}^{r+1} \eta_x^{(\ell)}\sum\limits_{i_2 \neq i_1} C_{i_2}^{(\ell-1)}\prod\limits_{q = \ell}^{r+1} (1-\eta_x^{(q+1)}).
	\end{align*}
	Our aim will be to recursively substitute for \[C_{i_1}^{(\ell-1)} \] to develop an expression for \[C_{i_1}^{(r+1)}\] as a function of \[C_{\max}^{0}\]. To this end, we start by analyzing the iterates \[C_{i_1}^{(1)}\], \[C_{i_1}^{(2)}\], and so on to develop an expression for \[C_{i_1}^{(r+1)}\] as follows. 
	
	\noindent\textbf{Expression for \[C_{i_1}^{(1)}\]} -- Consider \[C_{i_1}^{(1)}\]
	\begin{align}\label{eq:C_i_1}
	C_{i_1}^{(1)} 
	&\leq \textstyle \alpha^{(1)}_{i_1}  + \tfrac{\mu_t}{\sqrt{n}}\sum\limits_{\ell=1}^{1} \eta_x\sum\limits_{i_2 \neq i_1} C_{i_2}^{(\ell-1)}\prod\limits_{q = \ell}^{1} (1-\eta_x),\notag \\
	&=  \textstyle\alpha^{(1)}_{i_1}  +  \eta_x\big(\tfrac{\mu_t}{\sqrt{n}}\sum\limits_{i_1 \neq i_2} C_{i_2}^{(0)}\big).
	\end{align}

	\noindent\textbf{Expression for \[C_{i_1}^{(2)}\]}-- Next, \[C_{i_1}^{(2)}\] is given by
	\begin{align*}
	C_{i_1}^{(2)} 
	&\leq  \alpha^{(2)}_{i_1}  + \eta_x\tfrac{\mu_t}{\sqrt{n}}\textstyle\sum\limits_{\ell=1}^{2}\sum\limits_{i_2 \neq i_1} C_{i_2}^{(\ell-1)} \prod\limits_{q = \ell}^{2} (1-\eta_x),\\
	&\leq\textstyle \alpha^{(2)}_{i_1}  + \eta_x\tfrac{\mu_t}{\sqrt{n}}\big(\sum\limits_{i_2 \neq i_1} C_{i_2}^{(1)} + \sum\limits_{i_2 \neq i_1} C_{i_2}^{(0)}(1-\eta_x)  \big).
	\end{align*}
	Further, we know from \eqref{eq:C_i_1} we have
	\begin{align*}
	C_{i_2}^{(1)} 
	&=  \alpha^{(1)}_{i_2}  +  \eta_x\tfrac{\mu_t}{\sqrt{n}}\textstyle\sum\limits_{i_3 \neq i_2} C_{i_3}^{(0)}.
	\end{align*}
	Therefore, since \[\sum\limits_{i_2 \neq i_1}\sum\limits_{i_3 \neq i_2} = \sum\limits_{i_3 \neq i_2, i_1}\],
	\begin{align}\label{eq:C_i_2}
	C_{i_1}^{(2)} 
	&\leq\textstyle \alpha^{(2)}_{i_1}  + \eta_x\tfrac{\mu_t}{\sqrt{n}}\big(\sum\limits_{i_2 \neq i_1} \big(\alpha^{(1)}_{i_2}  +  \eta_x\tfrac{\mu_t}{\sqrt{n}}\textstyle\sum\limits_{i_3 \neq i_2} C_{i_3}^{(0)}\big)+ \textstyle\sum\limits_{i_2 \neq i_1} C_{i_2}^{(0)}(1-\eta_x)\big),\notag\\
	&= \alpha^{(2)}_{i_1}  + \eta_x\tfrac{\mu_t}{\sqrt{n}}\textstyle\sum\limits_{i_2 \neq i_1} \alpha^{(1)}_{i_2}  +  \eta_x\tfrac{\mu_t}{\sqrt{n}}\big(\eta_x\tfrac{\mu_t}{\sqrt{n}}\textstyle\sum\limits_{i_3 \neq i_2, i_1} C_{i_3}^{(0)}+ \textstyle\sum\limits_{i_2 \neq i_1} C_{i_2}^{(0)}(1-\eta_x)\big).
	\end{align}
	%
	%
	%
	%
	%
	%
	%
	\noindent\textbf{Expression for \[C_{i_1}^{(3)}\]}-- Next, we writing \[C_{i_1}^{(3)}\],
	\begin{align*}
	C_{i_1}^{(3)} 
	&\leq \alpha^{(3)}_{i_1}  + \eta_x\tfrac{\mu_t}{\sqrt{n}}\textstyle\sum_{\ell=1}^{3}\textstyle\sum\limits_{i_2 \neq i_1} C_{i_2}^{(\ell -1)}(1-\eta_x)^{3 - \ell},\\
	&= \alpha^{(3)}_{i_1}  + \eta_x\tfrac{\mu_t}{\sqrt{n}} \textstyle\sum\limits_{i_2 \neq i_1}\big(  C_{i_2}^{(0)}(1-\eta_x)^{2} + C_{i_2}^{(1)}(1-\eta_x) +  C_{i_2}^{(2)}\big),\\
	&\leq \alpha^{(3)}_{i_1}  + \eta_x\tfrac{\mu_t}{\sqrt{n}} \textstyle\sum\limits_{i_2 \neq i_1}\big(  C_{i_2}^{(0)}(1-\eta_x)^{2} + \big(\alpha^{(1)}_{i_2}  +  \eta_x\tfrac{\mu_t}{\sqrt{n}}\textstyle\sum\limits_{i_3 \neq i_2} C_{i_3}^{(0)}\big)(1-\eta_x) +  C_{i_2}^{(2)}\big).
	\end{align*}
	Here, using \eqref{eq:C_i_2} we have the following expression for \[C_{i_2}^{(2)}\]
	\begin{align*}
	C_{i_2}^{(2)} 
	&\leq \alpha^{(2)}_{i_2}  + \eta_x\tfrac{\mu_t}{\sqrt{n}}\textstyle\sum\limits_{i_3 \neq i_2} \alpha^{(1)}_{i_3}  +  \eta_x\tfrac{\mu_t}{\sqrt{n}}\big(\eta_x\tfrac{\mu_t}{\sqrt{n}}\textstyle\sum\limits_{i_4 \neq i_3, i_2} C_{i_4}^{(0)}+ \textstyle\sum\limits_{i_3 \neq i_2} C_{i_3}^{(0)}(1-\eta_x)\big).
	\end{align*}
	Substituting for \[C_{i_2}^{(2)}\] in the expression for \[C_{i_1}^{(3)}\],
%
	and rearranging the terms in the expression for \[C_{i_1}^{(3)} \], we have
	\begin{align}\label{eq:C_i_3}
	C_{i_1}^{(3)} 
	&\leq \alpha^{(3)}_{i_1} + \eta_x\tfrac{\mu_t}{\sqrt{n}} \textstyle\sum\limits_{i_2 \neq i_1}\alpha^{(2)}_{i_2} +\eta_x\tfrac{\mu_t}{\sqrt{n}}\big( (1-\eta_x) \textstyle\sum\limits_{i_2 \neq i_1}\alpha^{(1)}_{i_2} + \eta_x\tfrac{\mu_t}{\sqrt{n}} \textstyle\sum\limits_{i_3 \neq i_2, i_1} \alpha^{(1)}_{i_3}\big) \notag \\
	&
	+  \eta_x\tfrac{\mu_t}{\sqrt{n}}\big((1-\eta_x)^{2} \textstyle\sum\limits_{i_2 \neq i_1}C_{i_2}^{(0)} +  2(1-\eta_x)(\eta_x\tfrac{\mu_t}{\sqrt{n}})\textstyle\sum\limits_{i_3 \neq i_2, i_1} C_{i_3}^{(0)} +  (\eta_x\tfrac{\mu_t}{\sqrt{n}})^2\textstyle\sum\limits_{i_4 \neq i_3, i_2, i_1} C_{i_4}^{(0)}\big).
	\end{align}
	\noindent\textbf{Expression for \[C_{i_1}^{(4)}\]}--	Now, consider \[C_{i_1}^{(4)}\]
	\begin{align*}
	&C_{i_1}^{(4)} 
	\leq \alpha^{(4)}_{i_1}  + \eta_x\tfrac{\mu_t}{\sqrt{n}}\textstyle\sum_{\ell=1}^{4}\textstyle\sum_{i_2 \neq i_1} C_{i_2}^{(\ell -1)}(1-\eta_x)^{4 - \ell},\\
	&\leq \alpha^{(4)}_{i_1}  + \eta_x\tfrac{\mu_t}{\sqrt{n}}\big(\textstyle\sum_{i_2 \neq i_1} C_{i_2}^{(0)}(1-\eta_x)^{3} + \textstyle\sum_{i_2 \neq i_1} C_{i_2}^{(1)}(1-\eta_x)^{2} + \textstyle\sum_{i_2 \neq i_1} C_{i_2}^{(2)}(1-\eta_x)^{1} \\&\hspace{10cm}+ \textstyle\sum_{i_2 \neq i_1} C_{i_2}^{(3)}(1-\eta_x)^{0}\big).
	\end{align*}
	Substituting for \[C_{i_2}^{(3)}\] from \eqref{eq:C_i_3}, \[C_{i_2}^{(2)}\] from \eqref{eq:C_i_2}, \[C_{i_2}^{(1)}\] using \eqref{eq:C_i_1}, and rearranging,
	\begin{align*}
	&C_{i_1}^{(4)} 
	\leq \alpha^{(4)}_{i_1} + \eta_x\tfrac{\mu_t}{\sqrt{n}}\bigg[  \textstyle\sum\limits_{i_2 \neq i_1} \alpha^{(3)}_{i_2} + \bigg((1-\eta_x)^{1}\sum_{i_2 \neq i_1} \alpha^{(2)}_{i_2} + \eta_x\tfrac{\mu_t}{\sqrt{n}} \sum\limits_{i_3 \neq i_2, i_1}\alpha^{(2)}_{i_3}\bigg) \notag \\
	&\hspace{2cm}  + \big(\textstyle\sum\limits_{i_2 \neq i_1} \alpha^{(1)}_{i_2}(1-\eta_x)^{2} +2\eta_x\tfrac{\mu_t}{\sqrt{n}}(1-\eta_x) \sum\limits_{i_3 \neq i_2, i_1}\alpha^{(1)}_{i_3} + (\eta_x\tfrac{\mu_t}{\sqrt{n}})^2 \sum\limits_{i_4 \neq i_3, i_2, i_1} \alpha^{(1)}_{i_4} \big) \bigg] \notag\\
	&~~~+ \eta_x\tfrac{\mu_t}{\sqrt{n}}\bigg[\textstyle\sum\limits_{i_2 \neq i_1} C_{i_2}^{(0)}(1-\eta_x)^{3}
	+ 3\eta_x\tfrac{\mu_t}{\sqrt{n}}(1-\eta_x)^{2}\textstyle\sum\limits_{i_3 \neq i_2, i_1} C_{i_3}^{(0)}
	\\& \hspace{4.2cm}+3(\eta_x\tfrac{\mu_t}{\sqrt{n}})^2(1-\eta_x)^{1}\textstyle\sum\limits_{i_4 \neq i_3, i_2, i_1} C_{i_4}^{(0)} 
	 +  (\eta_x\tfrac{\mu_t}{\sqrt{n}})^3\textstyle\sum\limits_{i_5 \neq i_4, i_3, i_2, i_1} C_{i_5}^{(0)}\bigg].
	\end{align*}

	Notice that the terms have a binomial series like form.  To reveal this structure, let each \[\alpha^{(\ell)}_{j} \leq \alpha^{(\ell)}_{\max}\] where \[ j = i_1, i_2, \dots, i_k\]. Similarly, let \[C_{j}^{(0)} \leq C_{\max}^{(0)}\] for \[j = i_1, i_2, \dots, i_k\]. Therefore, we have
	\begin{align*}
	C_{i_1}^{(4)} 
	&\leq \alpha^{(4)}_{i_1} + \eta_x\tfrac{\mu_t}{\sqrt{n}}\bigg[ (k-1) \alpha^{(3)}_{i} + \alpha^{(2)}_{i}\bigg((1-\eta_x)^{1}(k-1) + \eta_x\tfrac{\mu_t}{\sqrt{n}} (k-2)\bigg) \notag \\
	&  + \alpha^{(1)}_{i}\bigg( (k-1) (1-\eta_x)^{2} +2(k-2)\eta_x\tfrac{\mu_t}{\sqrt{n}}(1-\eta_x) + (k-3)(\eta_x\tfrac{\mu_t}{\sqrt{n}})^2 \bigg) \bigg] \notag\\
	&+ \eta_x\tfrac{\mu_t}{\sqrt{n}}C_{i}^{(0)}\bigg[(k-1) (1-\eta_x)^{3}
	+ 3(k-2)\eta_x\tfrac{\mu_t}{\sqrt{n}}(1-\eta_x)^{2} 
	\\&\hspace{5.8cm}+3(k-3)(\eta_x\tfrac{\mu_t}{\sqrt{n}})^2(1-\eta_x)^{1}
	+  (k-4)(\eta_x\tfrac{\mu_t}{\sqrt{n}})^3 \bigg].
	\end{align*}
	%
	Further upper-bounding the expression, we have
	\begin{align*}
	C_{i_1}^{(4)} 
	&\leq \alpha^{(4)}_{i_1} + (k-1)\eta_x\tfrac{\mu_t}{\sqrt{n}}\bigg[ \alpha^{(3)}_{i} + \alpha^{(2)}_{i}\bigg((1-\eta_x) + \eta_x\tfrac{\mu_t}{\sqrt{n}} \bigg) 
	\\&\hspace{5.5cm}+ \alpha^{(1)}_{i}\bigg(  (1-\eta_x)^{2} +2\eta_x\tfrac{\mu_t}{\sqrt{n}}(1-\eta_x) + (\eta_x\tfrac{\mu_t}{\sqrt{n}})^2 \bigg) \bigg] \notag \\
	&~~~~~~~+ (k-1)\eta_x\tfrac{\mu_t}{\sqrt{n}}C_{i}^{(0)}\bigg[ (1-\eta_x)^{3}
	+ 3\eta_x\tfrac{\mu_t}{\sqrt{n}}(1-\eta_x)^{2} 
	+3(\eta_x\tfrac{\mu_t}{\sqrt{n}})^2(1-\eta_x)
	+  (\eta_x\tfrac{\mu_t}{\sqrt{n}})^3 \bigg].
	\end{align*}
	Therefore, 
	\begin{align}\label{eq:C_i_4}
	C_{i_1}^{(4)} 
	\leq \alpha^{(4)}_{i_1} + (k-1)\eta_x\tfrac{\mu_t}{\sqrt{n}}\bigg[ \alpha^{(3)}_{i} + &\alpha^{(2)}_{i}\big(1-\eta_x + \eta_x\tfrac{\mu_t}{\sqrt{n}} \big)^1 
	+ \alpha^{(1)}_{i}\big(1-\eta_x + \eta_x\tfrac{\mu_t}{\sqrt{n}} \big)^2  \bigg] \notag\\
	&~~~~~~~~~~~~~~~~~~+ (k-1)\eta_x\tfrac{\mu_t}{\sqrt{n}}C_{i}^{(0)} \big(1-\eta_x + \eta_x\tfrac{\mu_t}{\sqrt{n}} \big)^3.
	\end{align}
	\noindent\textbf{Expression for \[C_{i_1}^{(r+1)}\]}-- With this, we are ready to write the general term, 
	\begin{align*}
	C_{i_1}^{(r+1)} 
	\leq \alpha^{(r+1)}_{i_1} + (k-1)\eta_x\tfrac{\mu_t}{\sqrt{n}} \textstyle\sum\limits_{\ell = 1}^{r}  \alpha^{(\ell)}_{\max}\big(1-\eta_x &+ \eta_x\tfrac{\mu_t}{\sqrt{n}} \big)^{r -\ell}
	\\&+ (k-1)\eta_x\tfrac{\mu_t}{\sqrt{n}}C_{\max}^{(0)} \big(1-\eta_x + \eta_x\tfrac{\mu_t}{\sqrt{n}} \big)^r. 
	\end{align*}
\end{proof}

\begin{claim}[\textbf{An intermediate result for bounding the error in coefficient calculations}]
	\label{iht:C_i1_inter_alpha_sum}
	\begin{em}
	With probability \[(1 -\delta_{\HT}^{(t)}-\delta_{\beta}^{(t)})\],
	\begin{align*}
	\textstyle\sum\limits_{\ell = 1}^{{R}}  \alpha^{(\ell)}_{\max}\big(1-\eta_x + \eta_x\tfrac{\mu_t}{\sqrt{n}} \big)^{{R} -\ell} \leq C_i^{(0)}{R}\delta_{R}  + \tfrac{1}{\eta_x( 1- \tfrac{\mu_t}{\sqrt{n}})}(\tfrac{\epsilon_t^2}{2} |\b{x}_{\max}^*| +t_\beta).
	\end{align*}%
	\end{em}
\end{claim}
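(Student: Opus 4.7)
The plan is to first substitute the closed form of $\alpha^{(\ell)}_{\max}$ obtained in the proof of Lemma~\ref{iht:x_R_error}, and then split the sum in question into two geometric-type sums that can be bounded independently. Recall from \eqref{iht:def_alpha_j_rplus1}, with the simplification $\eta_x^{(r)} = \eta_x$ and using the geometric series identity $\sum_{s=1}^{\ell}\eta_x(1-\eta_x)^{\ell-s} = 1 - (1-\eta_x)^{\ell}$, that
\[
\alpha^{(\ell)}_{i} = C_{i}^{(0)}(1-\eta_x)^{\ell} + (\lambda^{(t)}_{i}|\b{x}^*_i| + \beta^{(t)}_i)\bigl(1-(1-\eta_x)^{\ell}\bigr).
\]
Using $\lambda^{(t)}_{i}\leq \epsilon_t^2/2$ together with Claim~\ref{lem:bound_beta}, which gives $|\beta^{(t)}_i|\leq t_\beta$ uniformly over $i\in\supp(\b{x}^*)$ with probability at least $(1-\delta_\beta^{(t)})$, I can upper-bound $\alpha^{(\ell)}_{\max} \leq C^{(0)}_{\max}(1-\eta_x)^{\ell} + \bigl(\tfrac{\epsilon_t^2}{2}|\b{x}^*_{\max}| + t_\beta\bigr)$, dropping the $(1-(1-\eta_x)^\ell)$ factor in the noise term.

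Next, introducing the abbreviations $a := 1-\eta_x$ and $b := 1 - \eta_x + \eta_x\tfrac{\mu_t}{\sqrt{n}}$, note that $0\leq a\leq b<1$, and the sum of interest decomposes as
\begin{align*}
\sum_{\ell=1}^{R}\alpha^{(\ell)}_{\max}\, b^{R-\ell}
&\leq C^{(0)}_{\max}\sum_{\ell=1}^{R} a^{\ell} b^{R-\ell} + \Bigl(\tfrac{\epsilon_t^2}{2}|\b{x}^*_{\max}| + t_\beta\Bigr)\sum_{\ell=1}^{R} b^{R-\ell}.
\end{align*}
For the first sum, I use the monotonicity $a^{\ell}b^{R-\ell}\leq b^{R}$ to get $\sum_{\ell=1}^{R}a^{\ell}b^{R-\ell} \leq R\, b^{R} = R\,\delta_R$. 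For the second sum, re-indexing $k = R-\ell$ yields the geometric series $\sum_{k=0}^{R-1} b^{k} \leq \tfrac{1}{1-b} = \tfrac{1}{\eta_x(1 - \tfrac{\mu_t}{\sqrt{n}})}$.

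Combining these two bounds gives exactly
\[
\sum_{\ell=1}^{R}\alpha^{(\ell)}_{\max}\bigl(1-\eta_x + \eta_x\tfrac{\mu_t}{\sqrt{n}}\bigr)^{R-\ell} \leq C^{(0)}_{\max} R\,\delta_R + \tfrac{1}{\eta_x(1-\tfrac{\mu_t}{\sqrt{n}})}\Bigl(\tfrac{\epsilon_t^2}{2}|\b{x}^*_{\max}|+t_\beta\Bigr),
\]
as claimed (with $C_i^{(0)}$ understood as $C^{(0)}_{\max}$). The overall probability is $(1-\delta_\beta^{(t)})$ from the uniform bound on $\beta^{(t)}_i$; the additional $\delta_\HT^{(t)}$ factor reported in the statement comes from the event (invoked implicitly through Lemma~\ref{lem:recover_sign} and \ref{our:signed_supp}) that the signed-support is correctly recovered throughout the IHT iterations, so that the expression \eqref{iht:def_alpha_j_rplus1} for $\alpha^{(\ell)}_{\max}$ is valid. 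There is no genuine obstacle beyond careful bookkeeping; the only subtlety worth flagging is the use of $a\leq b$ to collapse the first cross-sum into $R\delta_R$ rather than the naive bound $1/(1-b)$, which would lose a factor that propagates into the final coefficient error bound in Lemma~\ref{iht:x_R_error}.
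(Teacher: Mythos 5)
Your proof is correct and follows essentially the same route as the paper: substitute the closed form of $\alpha^{(\ell)}_{\max}$ from \eqref{iht:def_alpha_j_rplus1} (with all $\eta_x^{(r)}=\eta_x$), split the sum into the initial-condition piece and the noise piece, collapse the first using $(1-\eta_x)^\ell \leq (1-\eta_x+\eta_x\tfrac{\mu_t}{\sqrt{n}})^\ell$ so each summand is at most $\delta_R$, and bound the second by the geometric series $\sum_{\ell=1}^R b^{R-\ell}\leq 1/(1-b) = 1/(\eta_x(1-\tfrac{\mu_t}{\sqrt{n}}))$. Your remark that the $\delta_\HT^{(t)}$ term arises implicitly from the correct signed-support event (so that the recursion \eqref{iht:def_alpha_j_rplus1} is valid) is a reasonable and slightly more explicit accounting than the paper's, which only invokes Claim~\ref{lem:bound_beta} and hence only cites $(1-\delta_\beta^{(t)})$ in its closing line.
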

\begin{proof}[Proof of Claim~\ref{iht:C_i1_inter_alpha_sum} ]
	%
	%
	Using \eqref{iht:def_alpha_j_rplus1}, the quantity \[\alpha^{(\ell)}_i\] is defined as
	\begin{align*}
	\alpha^{(\ell)}_i = C_i^{(0)} (1 - \eta_x)^{\ell}  + (\lambda^{(t)}_{i} |\b{x}_i^*| +\beta^{(t)}_i) \textstyle\sum\limits_{s=1}^{\ell}\eta_x (1-\eta_x)^{\ell-s + 1}.
	\end{align*}
	Therefore, we are interested in 
	\begin{align*}
	\textstyle\sum\limits_{\ell = 1}^{{R}}  C_i^{(0)} (1 - \eta_x)^{\ell}\big(1-\eta_x + &\eta_x\tfrac{\mu_t}{\sqrt{n}} \big)^{{R} -\ell}  \\&\textstyle+ (\lambda^{(t)}_{i} |\b{x}_i^*| +\beta^{(t)}_i)\sum\limits_{\ell = 1}^{{R}} \big(1-\eta_x + \eta_x\tfrac{\mu_t}{\sqrt{n}} \big)^{{R} -\ell} \sum\limits_{s=1}^{\ell}\eta_x (1-\eta_x)^{\ell-s + 1}.
	\end{align*}
	Consider the first term which depends on \[C_i^{(0)}\].
	%
	%
	Since \[(1 - \eta_x) \leq (1-\eta_x + \eta_x\tfrac{\mu_t}{\sqrt{n}} \big)\] 
	, we have 
	\begin{align*}
	C_i^{(0)}\textstyle\sum\limits_{\ell = 1}^{{R}}  (1 - \eta_x)^{\ell}\big(1-\eta_x + \eta_x\tfrac{\mu_t}{\sqrt{n}} \big)^{{R} -\ell}
	\leq C_i^{(0)}{R}\big(1-\eta_x + \eta_x\tfrac{\mu_t}{\sqrt{n}} \big)^{{R}} \leq C_i^{(0)}{R}\delta_{R},
	\end{align*}
	where \[\delta_{R}\] is a small constant, and a parameter which determines the number of iterations \[R\] required for the coefficient update step. Now, coming back to the quantity of interest 
	%
	%
	%
	\begin{align*}
	\textstyle\sum\limits_{\ell = 1}^{{R}}  \alpha^{(\ell)}_{i}\big(1-\eta_x + \eta_x\tfrac{\mu_t}{\sqrt{n}} \big)^{{R} -\ell}
	&\leq C_i^{(0)}{R}\delta_{R}  \\&\textstyle+ (\lambda^{(t)}_{i} |\b{x}_i^*| +\beta^{(t)}_i)\sum\limits_{\ell = 1}^{{R}} \big(1-\eta_x + \eta_x\tfrac{\mu_t}{\sqrt{n}} \big)^{{R} -\ell} \sum\limits_{s=1}^{\ell}\eta_x (1-\eta_x)^{\ell-s + 1}.
	\end{align*}
	Now, using sum of geometric series result, we have that \[	\textstyle\sum\limits_{s=1}^{\ell}\eta_x (1-\eta_x)^{\ell-s + 1} \], and
%
%
%
	\begin{align*}
\textstyle\sum\limits_{\ell = 1}^{{R}} \big(1-\eta_x + \eta_x\tfrac{\mu_t}{\sqrt{n}} \big)^{{R} -\ell}& = \tfrac{1 - \big(1-\eta_x + \eta_x\tfrac{\mu_t}{\sqrt{n}} \big)^{{R}}}{\eta_x - \eta_x\tfrac{\mu_t}{\sqrt{n}}} \leq \tfrac{1}{\eta_x( 1- \tfrac{\mu_t}{\sqrt{n}})}.
	\end{align*}
	%
	%
	%
	%
	Therefore, with probability at least \[(1 - \delta_{\beta}^{(t)})\],
	\begin{align*}
	\textstyle\sum\limits_{\ell = 1}^{{R}}  \alpha^{(\ell)}_{\max}\big(1-\eta_x + \eta_x\tfrac{\mu_t}{\sqrt{n}} \big)^{{R} -\ell} 
	&\leq C_i^{(0)}{R}\delta_{R}  + \tfrac{1}{\eta_x( 1- \tfrac{\mu_t}{\sqrt{n}})}(\tfrac{\epsilon_t^2}{2} |\b{x}_{\max}^*| +t_\beta),
	\end{align*}
	where \[\lambda^{(t)}_{i} \leq \tfrac{\epsilon_t^2}{2}\] and \[|\beta^{(t)}_i| = t_\beta\] with probability at least \[(1 - \delta_{\beta}^{(t)})\] using Claim~\ref{lem:bound_beta}.
\end{proof}

\begin{claim}[\textbf{Bound on the noise term in the estimation of a coefficient element in the support}]\label{iht:var_theta_abs}
\begin{em}
	With probability \[(1-\delta_{\beta}^{(t)})\], each entry \[\vartheta^{(R)}_{i_1}\] of \[\vartheta^{(R)}\] is upper-bounded as
	\begin{align*}
	|\vartheta^{(R)}_{i_1}| 
	& \leq  \mathcal{O}(t_{\beta}).
	\end{align*}
	\end{em}
\end{claim}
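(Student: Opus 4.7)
The plan is to use the explicit expression for $\vartheta^{(R)}_{i_1}$ from \eqref{eq:def_var_theta}, namely
\[
\vartheta^{(R)}_{i_1} = \gamma^{(R)}_{i_1} + \textstyle\sum\limits_{r=1}^{R} \eta_x \xi^{(r)}_{i_1}(1-\eta_x)^{R-r},
\]
and bound the two pieces separately. The $\gamma^{(R)}_{i_1}$ piece equals $(1-\eta_x)^{R}(\b{x}^{(0)}_{i_1} - \b{x}^*_{i_1}(1-\lambda^{(t)}_{i_1}))$, so using $|\b{x}^{(0)}_{i_1}|,|\b{x}^*_{i_1}| = \mathcal{O}(1)$ and choosing $R = \Omega(\log n)$ makes $|\gamma^{(R)}_{i_1}|$ negligible compared to $t_\beta$. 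The real work is in the weighted sum over $r$.

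For the sum, I will first pass from $\xi^{(r)}_{i_1}$ to its absolute upper bound $\tilde{\xi}^{(r)}_{i_1}$ defined in \eqref{eq:xi_r_abs}, and then apply the incoherence bound from Claim~\ref{claim:incoherence of B} to write
\[
|\xi^{(r)}_{i_1}| \;\leq\; |\beta^{(t)}_{i_1}| + \tfrac{\mu_t}{\sqrt{n}}\textstyle\sum\limits_{i_2 \neq i_1} C^{(r-1)}_{i_2}.
\]
By Claim~\ref{lem:bound_beta}, $|\beta^{(t)}_{i_1}| \leq t_\beta$ with probability at least $(1-\delta^{(t)}_\beta)$, and summing the geometric series gives $\sum_r \eta_x(1-\eta_x)^{R-r}|\beta^{(t)}_{i_1}| \leq t_\beta$. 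The remaining contribution is
\[
\tfrac{\mu_t}{\sqrt{n}}\textstyle\sum\limits_{r=1}^{R}\eta_x(1-\eta_x)^{R-r}\sum\limits_{i_2\neq i_1} C^{(r-1)}_{i_2},
\]
which I will bound by invoking Claim~\ref{iht:gen_x_C_term} — the intermediate recursion that controls $C^{(\ell)}_i$ across iterates in terms of $\alpha^{(\ell)}_{\max}$ and $C^{(0)}_{\max}$.

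Combining these, the contribution of the $C^{(r-1)}_{i_2}$ terms decomposes into (i) a part proportional to $C^{(0)}_{\max}$ multiplied by an extra factor $(1-\eta_x + \eta_x\mu_t/\sqrt n)^{R-1}\delta_R$ (which is absorbed by the decay), and (ii) a part proportional to $(\tfrac{\epsilon_t^2}{2}|\b{x}^*_{\max}| + t_\beta)$, the latter dominating since $\epsilon_t^2 = o(\sqrt{\epsilon_t})$ and $|\b{x}^*_{\max}| = \mathcal{O}(1)$, so the overall bound collapses to $\mathcal{O}(t_\beta)$ under $k = \mathcal{O}(\sqrt{n}/\mu\log n)$, which makes $k\tfrac{\mu_t}{\sqrt n}$ a controllable small constant on every sum.

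The main obstacle will be the bookkeeping of the nested sums: the terms $\sum_{i_2 \neq i_1} C^{(r-1)}_{i_2}$ themselves expand (via Claim~\ref{iht:gen_x_C_term}) into sums involving $\alpha^{(\ell)}_{\max}$ with further $(1-\eta_x+\eta_x\mu_t/\sqrt{n})$ factors, so I must interchange the two geometric sums carefully to show that the combined factor still collapses to a constant multiple of $t_\beta$ rather than blowing up with $R$ or $k$. All probability losses come from the single event $|\beta^{(t)}_{i_1}| \leq t_\beta$, so the final probability bound is $(1-\delta_\beta^{(t)})$ as claimed.
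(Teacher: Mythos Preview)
Your proposal is correct and follows essentially the same route as the paper: both start from the decomposition $\vartheta^{(R)}_{i_1}=\gamma^{(R)}_{i_1}+\sum_r \eta_x\xi^{(r)}_{i_1}(1-\eta_x)^{R-r}$, pass to $|\xi^{(r)}_{i_1}|\le |\beta^{(t)}_{i_1}|+\tfrac{\mu_t}{\sqrt n}\sum_{i_2\neq i_1}C^{(r-1)}_{i_2}$ via Claim~\ref{claim:incoherence of B} and Claim~\ref{lem:bound_beta}, and then invoke Claim~\ref{iht:gen_x_C_term} to control the nested $C^{(r-1)}_{i_2}(1-\eta_x)^{R-r}$ terms before summing the resulting geometric series and using $kc_x<1$ to collapse everything to $\mathcal{O}(t_\beta)$. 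The only cosmetic difference is that the paper keeps the $(k-1)$ factor explicit after bounding $\sum_{i_2\neq i_1}$ and writes the residual $C^{(0)}_{\max}$ term as $(k\eta_x\tfrac{\mu_t}{\sqrt n})^2 R\,C^{(0)}_{\max}\delta_{R-2}$, but this matches your description of a decaying $C^{(0)}_{\max}$ contribution.
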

\begin{proof}[Proof of Claim~\ref{iht:var_theta_abs}]
	From \eqref{eq:def_var_theta} \[\vartheta^{(R)}_{i_1}\] is defined as 
	\begin{align*}
	\vartheta^{(R)}_{i_1} :=  \textstyle\sum\limits_{r = 1}^{R }\eta_{x}\xi^{(r)}_{i_1} (1- \eta_x)^{{R} - r} + \gamma^{(R)}_{i_1},
	\end{align*}
	where \[ \gamma^{(R)}_{i_1}:= (1 - \eta_{x})^{R} (\b{x}_{i_1}^{(0)}  - \b{x}_{i_1}^* (1 - \lambda^{(t)}_{i_1}))\].  
	Further, \[\xi^{(r)}_{i_1}\] is as defined in \eqref{eq:def_xi_r},
	\begin{align*}
	\xi^{(r)}_{i_1} = \beta^{(t)}_{i_1} + \textstyle\sum\limits_{i_2 \neq i_1} |\langle \b{A}^{(t)}_{i_1}, \b{A}^{(t)}_{i_2}\rangle|\sgn(\langle \b{A}^{(t)}_{i_1}, \b{A}^{(t)}_{i_2}\rangle)C_{i_2}^{(r-1)} \sgn(\b{x}_{i_2}^* - \b{x}_{i_2}^{(r)}).
	\end{align*}
	Therefore, we have the following expression for \[\vartheta^{(R)}_{i_1}\]
	\begin{align}\label{eq:vartheta_j_delta}
	\vartheta^{(R)}_{i_1} = &\beta^{(t)}_{i_1} \textstyle\sum\limits_{r = 1}^{R}\eta_{x}(1- \eta_x)^{{R} - r} \notag\\
	&+ \textstyle\sum\limits_{r = 1}^{R }\eta_{x}\sum\limits_{i_2 \neq i_1} |\langle \b{A}^{(t)}_{i_1}, \b{A}^{(t)}_{i_2}\rangle|\sgn(\langle \b{A}^{(t)}_{i_1}, \b{A}^{(t)}_{i_2}\rangle)C_{i_2}^{(r-1)} \sgn(\b{x}_{i_2}^* - \b{x}_{i_2}^{(r)})  (1- \eta_x)^{{R} - r} +  \gamma^{(R)}_{i_1}.
	\end{align}
	Now \[\vartheta^{(R)}_{i_1}\] can be upper-bounded as
	\begin{align*}
	\vartheta^{(R)}_{i_1} &\leq \textstyle\beta^{(t)}_{i_1} \sum\limits_{r = 1}^{R}\eta_{x}(1- \eta_x)^{{R} - r} + \eta_{x}\tfrac{\mu_t}{\sqrt{n}}\sum\limits_{r = 1}^{R }\sum_{i_2 \neq i_1} C_{i_2}^{(r-1)} (1- \eta_x)^{{R} - r} + \gamma^{(R)}_{i_1},\\
	&\leq  \textstyle\beta^{(t)}_{i_1} + (k-1)\eta_{x}\tfrac{\mu_t}{\sqrt{n}}\sum\limits_{r = 1}^{R } C_{i_2}^{(r-1)} (1- \eta_x)^{{R} - r} + \gamma^{(R)}_{i_1}.
	\end{align*}
	Since from Claim~\ref{iht:gen_x_C_term} we have 
	\begin{align*}
	C_{i_2}^{(r-1)} (1- \eta_x)^{{R} - r}
	\leq (\lambda^{(t)}_{\max} |\b{x}_{\max}^*| +\beta^{(t)}_{\max}) \big[ \textstyle\sum\limits_{s=1}^{r-1}\eta_x(1- \eta_x)^{{R} - s} &+ 
	kc_x(1- \eta_x)^{{R} - r}\big] \\ &+ k\eta_x\tfrac{\mu_t}{\sqrt{n}}C_{\max}^{(0)} \delta_{{R}-2}.
	\end{align*}
	%
	%
	%
	Further, since \[1 - (1- \eta_x)^{r-1} \leq 1\], we have that
	\begin{align*}
	\textstyle\sum\limits_{r = 1}^{R } 	\textstyle\sum\limits_{s=1}^{r-1}\eta_x(1- \eta_x)^{{R} - s } &= \textstyle\sum\limits_{r = 1}^{R } \eta_x(1- \eta_x)^{{R} - r + 1} \frac{1 - (1- \eta_x)^{r-1} }{\eta_x}
	\leq  \textstyle\sum\limits_{r = 1}^{R }(1- \eta_x)^{{R} - r + 1}
	\leq \tfrac{1}{\eta_x}.
	\end{align*}
	Therefore,
	\begin{align*}
	|\vartheta^{(R)}_{i_1}| 
	&\leq |\beta^{(t)}_{i_1}| + (k-1)\tfrac{\mu_t}{\sqrt{n}}(\lambda^{(t)}_{\max} |\b{x}_{\max}^*| +|\beta^{(t)}_{\max}|) (1 + 
	kc_x)  + \big(k\eta_x\tfrac{\mu_t}{\sqrt{n}}\big)^2{R}C_{\max}^{(0)} \delta_{{R}-2} + \gamma^{(R)}_{i_1}.
	\end{align*}
	Now, since each \[|\beta_i^{(t)}| = t_\beta\] with probability at least \[( 1- \delta_{\beta}^{(t)})\] for the \[t\]-th iterate, and \[ k = \mathcal{O}^*(\tfrac{\sqrt{n}}{\mu \log(n)})\], therefore \[kc_x < 1\], we have that 
	%
	%
	%
	%
	%
	\begin{align*}
	|\vartheta^{(R)}_{i_1}| 
	& \leq  \siri{\mathcal{O}(t_{\beta})}.
	\end{align*}
	with probability at least \[( 1- \delta_{\beta}^{(t)})\].
\end{proof}

\begin{claim}[\textbf{An intermediate result for \[\vartheta^{(R)}_{i_1}\] calculations}]\label{iht:gen_x_C_term}
	For \[c_x = {\tfrac{\mu_t}{\sqrt{n}}}/{(1 - \tfrac{\mu_t}{\sqrt{n}})}\], we have 
	\begin{align*}
	C_{i_2}^{(r-1)} (1- \eta_x)^{{R} - r}\hspace{-5pt}
	\leq (\lambda^{(t)}_{\max} |\b{x}_{\max}^*| +\beta^{(t)}_{\max}) \bigg[ \textstyle\sum\limits_{s=1}^{r-1}\eta_x(1- \eta_x)^{{R} - s} &+ 
	kc_x(1- \eta_x)^{{R} - r}\bigg] \\&+ k\eta_x\tfrac{\mu_t}{\sqrt{n}}C_{\max}^{(0)} \delta_{{R}-2}.
	\end{align*}
\end{claim}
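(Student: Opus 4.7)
The plan is to invoke Claim~\ref{iht:gen_term_simple} at the appropriate shifted iterate, then multiply through by $(1-\eta_x)^{R-r}$ and reorganize the resulting geometric sums to match the claimed right-hand side. Specifically, applying Claim~\ref{iht:gen_term_simple} with the index $r$ there replaced by $r-2$ yields
\begin{align*}
C_{i_2}^{(r-1)} \leq \alpha^{(r-1)}_{i_2} + (k-1)\eta_x\tfrac{\mu_t}{\sqrt{n}}\textstyle\sum\limits_{\ell=1}^{r-2}\alpha^{(\ell)}_{\max}\big(1-\eta_x+\eta_x\tfrac{\mu_t}{\sqrt{n}}\big)^{r-2-\ell} + (k-1)\eta_x\tfrac{\mu_t}{\sqrt{n}}C_{\max}^{(0)}\big(1-\eta_x+\eta_x\tfrac{\mu_t}{\sqrt{n}}\big)^{r-2}.
\end{align*}
Multiplying through by $(1-\eta_x)^{R-r}$ and using the pointwise inequality $(1-\eta_x) \leq \big(1-\eta_x+\eta_x\tfrac{\mu_t}{\sqrt{n}}\big)$ on the trailing factor gives $\big(1-\eta_x+\eta_x\tfrac{\mu_t}{\sqrt{n}}\big)^{r-2}(1-\eta_x)^{R-r} \leq \delta_{R-2}$, which immediately furnishes the $k\eta_x\tfrac{\mu_t}{\sqrt{n}} C_{\max}^{(0)}\delta_{R-2}$ piece on the right-hand side.

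Next I would substitute the expanded form $\alpha^{(\ell)}_{\max} = C_{\max}^{(0)}(1-\eta_x)^\ell + (\lambda^{(t)}_{\max}|\b{x}^*_{\max}|+\beta^{(t)}_{\max})\sum_{s=1}^{\ell}\eta_x(1-\eta_x)^{\ell-s+1}$ from \eqref{iht:def_alpha_j_rplus1} into both the leading term $\alpha^{(r-1)}_{i_2}$ and the sum $\sum_{\ell=1}^{r-2}\alpha^{(\ell)}_{\max}(\cdot)^{r-2-\ell}$. The leading $\alpha^{(r-1)}_{i_2}$ cleanly provides the single-sum piece: its $(\lambda^{(t)}_{\max}|\b{x}^*_{\max}|+\beta^{(t)}_{\max})$-component, when multiplied by $(1-\eta_x)^{R-r}$, reindexes to exactly $\sum_{s=1}^{r-1}\eta_x(1-\eta_x)^{R-s}$, while its $C_{\max}^{(0)}(1-\eta_x)^{r-1}(1-\eta_x)^{R-r}$-component is upper-bounded by $\delta_{R-2}$ and can be absorbed into the trailing $C_{\max}^{(0)}\delta_{R-2}$ contribution (loosely, using $k\eta_x\tfrac{\mu_t}{\sqrt{n}}$ as an absorbing constant or bounding by the larger $\delta_{R-2}$ term already collected).

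The main obstacle will be the double geometric sum arising from the $(\lambda^{(t)}_{\max}|\b{x}^*_{\max}|+\beta^{(t)}_{\max})$-component of $\sum_{\ell=1}^{r-2}\alpha^{(\ell)}_{\max}\big(1-\eta_x+\eta_x\tfrac{\mu_t}{\sqrt{n}}\big)^{r-2-\ell}$. Here one must show that this sum, multiplied by the prefactor $\eta_x\tfrac{\mu_t}{\sqrt{n}}(1-\eta_x)^{R-r}$, telescopes to at most $(\lambda^{(t)}_{\max}|\b{x}^*_{\max}|+\beta^{(t)}_{\max})\,kc_x(1-\eta_x)^{R-r}$. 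The key identity is the geometric-series evaluation
\begin{align*}
\textstyle\sum\limits_{\ell=0}^{r-3}\big(1-\eta_x+\eta_x\tfrac{\mu_t}{\sqrt{n}}\big)^{\ell} \leq \tfrac{1}{\eta_x(1-\mu_t/\sqrt{n})},
\end{align*}
which, after swapping the inner $s$-sum with the outer $\ell$-sum and collecting the resulting $(1-\eta_x)^{\ell-s+1}\big(1-\eta_x+\eta_x\tfrac{\mu_t}{\sqrt{n}}\big)^{r-2-\ell}$ factors via the same pointwise inequality, produces the factor $\tfrac{\mu_t/\sqrt{n}}{1-\mu_t/\sqrt{n}} = c_x$ once the $\eta_x\tfrac{\mu_t}{\sqrt{n}}$ prefactor is folded in. Combining all three contributions and bounding $(k-1) \leq k$ yields the stated inequality.
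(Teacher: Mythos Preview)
Your proposal is correct and follows essentially the same route as the paper: invoke Claim~\ref{iht:gen_term_simple} at the shifted index $r-2$, multiply through by $(1-\eta_x)^{R-r}$, expand each $\alpha^{(\ell)}$ via \eqref{iht:def_alpha_j_rplus1}, use the pointwise inequality $(1-\eta_x)\le(1-\eta_x+\eta_x\tfrac{\mu_t}{\sqrt{n}})$ to produce the $\delta_{R-2}$ factor, and sum the resulting geometric series to extract $c_x$. The only cosmetic difference is that the paper bounds the inner $s$-sum by $1$ first and then sums over $\ell$, whereas you propose swapping the order of summation; both routes yield the same $kc_x(1-\eta_x)^{R-r}$ contribution, and your remark about loosely absorbing the residual $C_{\max}^{(0)}(1-\eta_x)^{R-1}$ term into the $\delta_{R-2}$ piece mirrors exactly what the paper does implicitly.
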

\begin{proof}[Proof of Claim~\ref{iht:gen_x_C_term}]
	Here, from Claim~\ref{iht:gen_term_simple} we have that for any \[i_1\],
	\begin{align*}
	C_{i_1}^{(r+1)} 
	&\leq \alpha^{(r+1)}_{i_1} + k\eta_x\tfrac{\mu_t}{\sqrt{n}} \textstyle\sum\limits_{\ell = 1}^{r}  \alpha^{(\ell)}_{\max}\big(1-\eta_x + \eta_x\tfrac{\mu_t}{\sqrt{n}} \big)^{r -\ell}
	+ k\eta_x\tfrac{\mu_t}{\sqrt{n}}C_{\max}^{(0)} \big(1-\eta_x + \eta_x\tfrac{\mu_t}{\sqrt{n}} \big)^r.
	\end{align*}
	therefore \[C_{i_2}^{(r-1)} \] is given by
	\begin{align*}
	C_{i_2}^{(r-1)} 
	&\leq \alpha^{(r-1)}_{i_2} + k\eta_x\tfrac{\mu_t}{\sqrt{n}} \textstyle\sum\limits_{\ell = 1}^{r-2}  \alpha^{(\ell)}_{\max}\big(1-\eta_x + \eta_x\tfrac{\mu_t}{\sqrt{n}} \big)^{r -\ell-2} \hspace{-3pt}
	+ \hspace{-3pt}k\eta_x\tfrac{\mu_t}{\sqrt{n}}C_{\max}^{(0)} \big(1-\eta_x + \eta_x\tfrac{\mu_t}{\sqrt{n}} \big)^{r-2}.
	\end{align*}
	Further, the term of interest \[C_{i_2}^{(r-1)} (1- \eta_x)^{{R} - r}\] can be upper-bounded by
	\begin{align*}
	C_{i_2}^{(r-1)} (1- \eta_x)^{{R} - r}\hspace{-3pt}
	&\leq \alpha^{(r-1)}_{i_2}(1- \eta_x)^{{R} - r} \hspace{-3pt}+ \hspace{-3pt}
	(1- \eta_x)^{{R} - r}k\eta_x\tfrac{\mu_t}{\sqrt{n}} \textstyle\sum\limits_{\ell = 1}^{r-2}  \alpha^{(\ell)}_{\max}\big(1-\eta_x + \eta_x\tfrac{\mu_t}{\sqrt{n}} \big)^{r -\ell-2}
	\\&~~~~~~~~~~~~~~~~~~~~~~~~~~~~~~~~~~~~~~+ k\eta_x\tfrac{\mu_t}{\sqrt{n}}C_{\max}^{(0)} \big(1-\eta_x + \eta_x\tfrac{\mu_t}{\sqrt{n}} \big)^{r-2}(1- \eta_x)^{{R} - r}.
	\end{align*}
	From the definition of \[ \alpha^{(\ell)}_{i}\] from \eqref{iht:def_alpha_j_rplus1}, \[ \alpha^{(r-1)}_{i_2}\] can be written as
	\begin{align*}
	\alpha^{(r-1)}_{i_2} &= C_{\max}^{(0)} (1 - \eta_x)^{r-1}  + (\lambda^{(t)}_{\max} |\b{x}_{\max}^*| +\beta^{(t)}_{\max}) \textstyle\sum\limits_{s=1}^{r-1}\eta_x (1-\eta_x)^{r-s}.
	\end{align*}
	Therefore, we have
	\begin{align*}
	\alpha^{(r-1)}_{i_2}(1- \eta_x)^{{R} - r}
	&= C_{\max}^{(0)} (1 - \eta_x)^{{R} - 1}  + (\lambda^{(t)}_{\max} |\b{x}_{\max}^*| +\beta^{(t)}_{\max}) \textstyle\sum\limits_{s=1}^{r-1}\eta_x(1- \eta_x)^{{R} - s}. 
	\end{align*}
	Next, to get a handle on \[ \alpha^{(\ell)}_{\max}\big(1-\eta_x + \eta_x\tfrac{\mu_t}{\sqrt{n}} \big)^{r -\ell-2}\], consider the following using the definition of \[ \alpha^{(\ell)}_{i}\] from \eqref{iht:def_alpha_j_rplus1}, where \[\eta_x^{(i)} = \eta_x\] for all \[i\],
	\begin{align*}
	\textstyle\sum\limits_{\ell = 1}^{r}  \alpha^{(\ell)}_{\max}\big(1-&\eta_x + \eta_x\tfrac{\mu_t}{\sqrt{n}} \big)^{r -\ell}
    = \textstyle\sum\limits_{\ell = 1}^{r}  C_{\max}^{(0)} (1 - \eta_x)^{\ell}\big(1-\eta_x + \eta_x\tfrac{\mu_t}{\sqrt{n}} \big)^{r -\ell}  \\& \hspace{1.5cm}+ (\lambda^{(t)}_{\max} |\b{x}_{\max}^*| +\beta^{(t)}_{\max})\textstyle\sum\limits_{\ell = 1}^{r} \big(1-\eta_x + \eta_x\tfrac{\mu_t}{\sqrt{n}} \big)^{r -\ell} \textstyle\sum\limits_{s=1}^{\ell}\eta_x (1-\eta_x)^{\ell-s + 1},\\
	&\leq \textstyle\sum\limits_{\ell = 1}^{r}  C_{\max}^{(0)} \big(1-\eta_x + \eta_x\tfrac{\mu_t}{\sqrt{n}} \big)^{r}  + (\lambda^{(t)}_{\max} |\b{x}_{\max}^*| +\beta^{(t)}_{\max})\textstyle\sum\limits_{\ell = 1}^{r} \big(1-\eta_x + \eta_x\tfrac{\mu_t}{\sqrt{n}} \big)^{r -\ell}.
	\end{align*}
	Therefore, 
	\begin{align*}
	(1- \eta_x)^{{R} - r } \textstyle\sum\limits_{\ell = 1}^{r-2}  &\alpha^{(\ell)}_{\max}\big(1-\eta_x + \eta_x\tfrac{\mu_t}{\sqrt{n}} \big)^{r -\ell-2}
	\leq \textstyle\sum\limits_{\ell = 1}^{r-2}  C_{\max}^{(0)} \big(1-\eta_x + \eta_x\tfrac{\mu_t}{\sqrt{n}} \big)^{r -2}(1- \eta_x)^{{R} - r}  \\& \hspace{2cm}+ (\lambda^{(t)}_{\max} |\b{x}_{\max}^*| +\beta^{(t)}_{\max})(1- \eta_x)^{{R} - r}\textstyle\sum\limits_{\ell = 1}^{r-2} \big(1-\eta_x + \eta_x\tfrac{\mu_t}{\sqrt{n}} \big)^{r -\ell - 2},\\
	&\leq ({R}-2)  C_{\max}^{(0)} \big(1-\eta_x + \eta_x\tfrac{\mu_t}{\sqrt{n}} \big)^{{R} - 2}  + (\lambda^{(t)}_{\max} |\b{x}_{\max}^*| +\beta^{(t)}_{\max})\tfrac{(1- \eta_x)^{{R} - r}}{\eta_x (1 - \tfrac{\mu_t}{\sqrt{n}})}.
	\end{align*}
	Therefore, 
	\begin{align*}
	(1- \eta_x)^{{R} - r } \textstyle\sum\limits_{\ell = 1}^{r-2}  \alpha^{(\ell)}_{\max}&\big(1-\eta_x + \eta_x\tfrac{\mu_t}{\sqrt{n}} \big)^{r -\ell-2}\\
	&\leq (r-2)  C_{\max}^{(0)} \big(1-\eta_x + \eta_x\tfrac{\mu_t}{\sqrt{n}} \big)^{{R} - 2} + (\lambda^{(t)}_{\max} |\b{x}_{\max}^*| +\beta^{(t)}_{\max})\tfrac{(1- \eta_x)^{{R} - r}}{\eta_x (1 - \tfrac{\mu_t}{\sqrt{n}})}.
	\end{align*}
	Further, since \[(1- \eta_x) \leq (1-\eta_x + \eta_x\tfrac{\mu_t}{\sqrt{n}})\],
	\begin{align*}
	k\eta_x\tfrac{\mu_t}{\sqrt{n}}C_{\max}^{(0)} \big(1-\eta_x + \eta_x\tfrac{\mu_t}{\sqrt{n}} \big)^{r-2}(1- \eta_x)^{{R} - r} 
	\leq k\eta_x\tfrac{\mu_t}{\sqrt{n}}C_{\max}^{(0)} \delta_{{R}-2}.
	\end{align*}
	%
	%
	%
	Therefore, combining all the results we have that, for a constant 	\[c_x = {\tfrac{\mu_t}{\sqrt{n}}}/{(1 - \tfrac{\mu_t}{\sqrt{n}})}\],
	\begin{align*}
	C_{i_2}^{(r-1)} &(1- \eta_x)^{{R} - r}\\
	&\leq (\lambda^{(t)}_{\max} |\b{x}_{\max}^*| +\beta^{(t)}_{\max}) \bigg[ \textstyle\sum\limits_{s=1}^{r-1}\eta_x(1- \eta_x)^{{R} - s} + 
	kc_x(1- \eta_x)^{{R} - r}\bigg] + k\eta_x\tfrac{\mu_t}{\sqrt{n}}C_{\max}^{(0)} \delta_{{R}-2}.
	\end{align*}
\end{proof}

\begin{claim} [\textbf{Bound on the noise term in expected gradient vector estimate}]\label{iht:bound_vareps}  
\begin{em}
   \[\|\Delta^{(t)}_{j}\|\] where \[\Delta^{(t)}_{j}:= \b{E}[\b{A}^{(t)}\vartheta^{(R)}\sgn(\b{x}^*_{j})] \] is upper-bounded as,
	\begin{align*}
	\|\Delta^{(t)}_{j}\| = \mathcal{O}(\sqrt{m}q_{i,j}p_{j}\epsilon_t\|\b{A}^{(t)}\|)].
	\end{align*}
\end{em}
\end{claim}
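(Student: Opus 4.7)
The plan is to expand $\Delta^{(t)}_j = \mathbf{E}[\b{A}^{(t)}_S \vartheta^{(R)}_S \sgn(\b{x}^*_j)]$ column by column and use the explicit form of $\vartheta^{(R)}_i$ from Lemma~\ref{iht:R_th_term} to isolate its dependence on $\sgn(\b{x}^*_j)$. Since $\vartheta^{(R)}$ is supported on $S$, write
\begin{align*}
\Delta^{(t)}_j = \sum_{i} \b{A}^{(t)}_i \, \b{Q}_{i,j}, \qquad \b{Q}_{i,j} := \mathbf{E}\!\left[\vartheta^{(R)}_i \sgn(\b{x}^*_j)\mathbbm{1}[i,j \in S]\right] = q_{i,j}\,\mathbf{E}_{\b{x}_S^*}[\vartheta^{(R)}_i \sgn(\b{x}^*_j)\mid S],
\end{align*}
so the claim reduces to the uniform bound $|\b{Q}_{i,j}| = \mathcal{O}(q_{i,j} p_j \epsilon_t)$ plus a norm-bookkeeping step.

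For the off-diagonal case $i \neq j$, I would unfold $\vartheta^{(R)}_i = \sum_{r=1}^R \eta_x \xi^{(r)}_i (1-\eta_x)^{R-r} + \gamma^{(R)}_i$ with $\xi^{(r)}_i = \beta^{(t)}_i + \sum_{i_2 \neq i}\langle \b{A}^{(t)}_i, \b{A}^{(t)}_{i_2}\rangle (\b{x}^*_{i_2} - \b{x}^{(r-1)}_{i_2})$. Because $\{\sgn(\b{x}^*_{i})\}_{i \in S}$ are independent and mean-zero conditional on $S$, only the summands proportional to $\b{x}^*_j$ survive the cross-moment against $\sgn(\b{x}^*_j)$. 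The $\beta^{(t)}_i$ piece then contributes $(\langle \b{A}^*_j, \b{A}^*_i - \b{A}^{(t)}_i\rangle + \langle \b{A}^*_j - \b{A}^{(t)}_j, \b{A}^{(t)}_i\rangle + \langle \b{A}^{(t)}_j - \b{A}^*_j, \b{A}^*_i\rangle)\, p_j = \mathcal{O}(\epsilon_t p_j)$ by Cauchy--Schwarz and the $\epsilon_t$-closeness of $\b{A}^{(t)}$ to $\b{A}^*$; the sum $\sum_{i_2 \neq i}\langle \b{A}^{(t)}_i, \b{A}^{(t)}_{i_2}\rangle \b{x}^*_{i_2}$ survives only at $i_2 = j$ and yields $\mathcal{O}(\mu_t/\sqrt{n})\, p_j = \mathcal{O}(\epsilon_t p_j)$ by Claim~\ref{claim:incoherence of B}; finally, the IHT iterate $\b{x}^{(r-1)}_{i_2}$ is expanded recursively via Claim~\ref{iht:gen_x_C_term}, and its sign-cross-moment is again dominated by the above pieces (the $k\eta_x\mu_t/\sqrt{n}$ geometric factors stay bounded since $k = \tilde{\mathcal{O}}(\sqrt{n})$). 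Summing over $r$ via $\sum_r \eta_x (1-\eta_x)^{R-r} \leq 1$ gives $\mathbf{E}_{\b{x}_S^*}[\vartheta^{(R)}_i \sgn(\b{x}^*_j)\mid S] = \mathcal{O}(p_j \epsilon_t)$, so $|\b{Q}_{i,j}| = \mathcal{O}(q_{i,j} p_j \epsilon_t)$. For the diagonal case $i = j$, the sum in $\xi^{(r)}_j$ ranges over $i_2 \neq j$ and is independent of $\sgn(\b{x}^*_j)$ up to the same $\beta^{(t)}_j$ interference, so the cross-moment is governed by $\gamma^{(R)}_j$, which is negligible for $R = \Omega(\log n)$.

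Assembling the pieces and using the trivial operator-norm bound,
\begin{align*}
\|\Delta^{(t)}_j\| \leq \|\b{A}^{(t)}\|\,\|\b{Q}_{\cdot,j}\|_2 \leq \|\b{A}^{(t)}\|\,\sqrt{m}\,\max_i |\b{Q}_{i,j}| = \mathcal{O}(\sqrt{m}\,q_{i,j}\,p_j\,\epsilon_t\,\|\b{A}^{(t)}\|),
\end{align*}
which is the claimed bound. The main obstacle is the off-diagonal cross-moment: the IHT iterate $\b{x}^{(r-1)}_{i_2}$ depends on all of $\b{x}^*_S$ through repeated thresholding, so extracting the $\sgn(\b{x}^*_j)$-dependent part cleanly requires the recursive decomposition of Claim~\ref{iht:gen_x_C_term} together with the sign-independence of $\{\sgn(\b{x}^*_i)\}_{i \in S}$; once that structural reduction is in place, the $\epsilon_t$ factor falls out from the dictionary-error bounds on $\beta^{(t)}_i$ in Claim~\ref{lem:bound_beta} and from incoherence in Claim~\ref{claim:incoherence of B}.
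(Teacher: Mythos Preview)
Your proposal is correct and mirrors the paper's proof essentially step for step: both sub-condition on $S$ to reduce to bounding $\b{E}_{\b{x}_S^*}[\vartheta^{(R)}_i\sgn(\b{x}^*_j)\mid S]$, split into the $i=j$ case (governed by $\gamma^{(R)}_i$) and the $i\neq j$ case (the $\beta^{(t)}_i$ cross-moment gives $\mathcal{O}(p_j\epsilon_t)$, and the residual coefficient-error terms are controlled via Claim~\ref{iht:gen_x_C_term} together with the $\mu_t/\sqrt{n}$ incoherence factor), and then finish with the $\sqrt{m}$ norm bookkeeping. The only cosmetic difference is that the paper keeps the coefficient error bundled as $C^{(r-1)}_s\sgn(\b{x}^*_s-\b{x}^{(r)}_s)$ rather than splitting $\b{x}^*_{i_2}-\b{x}^{(r-1)}_{i_2}$ into two sums, but the resulting estimates are identical.
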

\begin{proof}[Proof of Claim~\ref{iht:bound_vareps}]
	\begin{align*}
	\Delta^{(t)}_{j}&= \b{E}[\b{A}^{(t)}\vartheta^{(R)}\sgn(\b{x}^*_{j})] = \b{E}_S[\b{A}^{(t)}_S\b{E}_{\b{x}^*_S}[\vartheta^{(R)}_S\sgn(\b{x}^*_{j})|S]] 
	\end{align*}
	%
	%
	%
	From \eqref{eq:vartheta_j_delta} we have the following definition for \[ \vartheta^{(R)}_{j}\]
	\begin{align*}
	\vartheta^{(R)}_{j} \hspace{-3pt}=  \beta^{(t)}_{j} \hspace{-3pt}+ \textstyle\sum\limits_{r = 1}^{R }\eta_{x}\textstyle\sum\limits_{i \neq j} |\langle \b{A}^{(t)}_{j}, \b{A}^{(t)}_{i}\rangle|\sgn(\langle \b{A}^{(t)}_{j}, \b{A}^{(t)}_{i}\rangle)C_{i}^{(r-1)} \sgn(\b{x}_{i}^* - \b{x}_{i}^{(r)})  (1- \eta_x)^{{R} - r} \hspace{-3pt}+ \gamma^{(R)}_j,
	\end{align*}
	where \[\beta_{j}^{(t)}\] is defined as the following  \eqref{eq:beta_t}	
	\begin{align*}
	\beta_{j}^{(t)}
	& = \textstyle\sum\limits_{i \neq j } (\langle \b{A}^{*}_{j},  \b{A}^*_i - \b{A}^{(t)}_i\rangle  + \langle \b{A}^*_{j} -\b{A}^{(t)}_{j}, \b{A}^{(t)}_i\rangle) \b{x}_i^* +\textstyle\sum_{i\neq j}\langle \b{A}^{(t)}_{j} - \b{A}^{*}_{j}, \b{A}^*_i\rangle\b{x}^*_i.
	\end{align*}
	Consider \[\b{E}_{\b{x}^*_S}[\vartheta^{(R)}_S\sgn(\b{x}^*_{j})|S]\], where \[\vartheta^{(R)}_S\] is a vector with each element as defined in \eqref{eq:vartheta_j_delta}. Therefore, the elements of the vector \[\b{E}_{\b{x}^*_S}[\vartheta^{(R)}_S\sgn(\b{x}^*_{j})|S]]\] are given by 
	\begin{align*}
	\b{E}_{\b{x}^*_S}[\vartheta^{(R)}_{i}\sgn(\b{x}^*_{j})|S] =
	\begin{cases}
	\b{E}_{\b{x}^*_S}[\vartheta^{(R)}_{i}\sgn(\b{x}^*_{j})|S], & \text{for} ~i \neq j,\\
	\b{E}_{\b{x}^*_S}[\vartheta^{(R)}_{j}\sgn(\b{x}^*_{j})|S], & \text{for} ~i = j.
	\end{cases}
	\end{align*}
	Consider the general term of interest
	\begin{align*}
	&\b{E}_{\b{x}_S^*}[\vartheta^{(R)}_{i}\sgn(\b{x}^*_{j})|S]\\
	&\leq \textstyle\sum\limits_{r = 1}^{R}\eta_{x}(1- \eta_x)^{{R} - r} \underbrace{\b{E}_{\b{x}_S^*}[\beta_{i} \sgn(\b{x}^*_{j})|S]}_{\clubsuit} \\ &\hspace{2cm}+ \tfrac{\mu_t}{\sqrt{n}}\sum\limits_{r = 1}^{R }\eta_{x}(1- \eta_x)^{{R} - r }\textstyle\sum_{s \neq i} \underbrace{\b{E}_{\b{x}_S^*}[C_{s}^{(r-1)} \sgn(\b{x}_{s}^* - \b{x}_{s}^{(r)})\sgn(\b{x}^*_{j})|S]}_{\spadesuit} + \gamma^{(R)}_{i} .
	\end{align*}
     Further, since 
	\begin{align*}
	\b{E}_{\b{x}_S^*}[\b{x}_i^*\sgn(\b{x}^*_{j})|S] = 
	\begin{cases}
	0, & \text{for}~ i \neq j,\\
	p_{j}, & \text{for}~ i = j,
	\end{cases}
	\end{align*}
	we have that
	\begin{align}\label{eq:beta_cases}
	\clubsuit:= \b{E}_{\b{x}_S^*}[ \beta^{(t)}_{i}\sgn(\b{x}^*_{j})| S] \leq
	\begin{cases}
		3p_{j}\epsilon_t &,\text{for}~ i \neq j,\\
	0 &,\text{for}~ i = j.
	\end{cases}
	\end{align}
	Further, for  \[\spadesuit_s : = \b{E}_{\b{x}_S^*}[C_{s}^{(r-1)} \sgn(\b{x}_{s}^* - \b{x}_{s}^{(r)})\sgn(\b{x}^*_{j})|S]\] we have that
	\begin{align*}
	\spadesuit_s 
	=
	\begin{cases}
	\b{E}_{\b{x}_S^*}[C_{j}^{(r-1)}(\b{x}_{j}^* - \b{x}_{j}^{(r-1)})\sgn(\b{x}^*_{j})|S] \leq C^{(r-1)}_{j}, &\text{for}~ s = j\\
	0, &\text{for}~ s \neq j.
	\end{cases}
	\end{align*}
	In addition, for \[\textstyle\sum_{s \neq i} \spadesuit_s\] we have that
	\begin{align}\label{eq:C_cases}
	\textstyle\sum_{s \neq i} \spadesuit_s 
	=
	\begin{cases}
	C^{(r-1)}_{j}, &\text{for}~  i \neq j\\
	0, &\text{for}~ i = j.
	\end{cases}
	\end{align}
	Therefore, using the results for \[\clubsuit\] and \[\textstyle\sum\limits_{s \neq i} \spadesuit_s\], we have that \[\b{E}_{\b{x}_S^*}[\vartheta^{(R)}_{j}\sgn(\b{x}^*_{j})|S] =  \gamma^{(R)}_i\] for \[i=j\], and for \[i \neq j\] we have
	\begin{align}\label{eq:exp_var_theta_int}
	\b{E}_{\b{x}_S^*}&[\vartheta^{(R)}_{i}\sgn(\b{x}^*_{j})|S]\notag\\
&\leq 3p_{j}\epsilon_t + \tfrac{\mu_t}{\sqrt{n}}\textstyle\sum\limits_{r = 1}^{R }	\b{E}_{\b{x}_S^*}[C_{j}^{(r-1)} \sgn(\b{x}_{j}^* - \b{x}_{j}^{(r)})\sgn(\b{x}^*_{j})|S]\eta_{x} (1- \eta_x)^{{R} - r } + \gamma^{(R)}_i,\notag\\
	&\leq 3p_{j}\epsilon_t + \tfrac{\mu_t}{\sqrt{n}}\textstyle\sum\limits_{r = 1}^{R }C_{j}^{(r-1)}\eta_{x} (1- \eta_x)^{{R} - r } + \gamma^{(R)}_i.
	\end{align}
	%
	%
	%
	Here, from Claim~\ref{iht:gen_x_C_term}, for \[c_x = {\tfrac{\mu_t}{\sqrt{n}}}/{(1 - \tfrac{\mu_t}{\sqrt{n}})}\] we have
	\begin{align*}
	C_{j}^{(r-1)} &(1- \eta_x)^{{R} - r}\\
	&\leq (\lambda^{(t)}_{\max} |\b{x}_{\max}^*| +\beta^{(t)}_{\max}) \bigg[ \textstyle\sum\limits_{s=1}^{r-1}\eta_x(1- \eta_x)^{{R} - s} + 
	kc_x(1- \eta_x)^{{R} - r}\bigg] + k\eta_x\tfrac{\mu_t}{\sqrt{n}}C_{\max}^{(0)} \delta_{{R}-2}.
	\end{align*}
	%
	%
	%
	%
	%
	Further, due to our assumptions on sparsity, \[kc_x \leq 1\]; in addition by Claim~\ref{lem:bound_beta}, and with probability at least \[(1 - \delta_{\beta}^{(t)})\] we have \[|\beta^{(t)}_{\max}| \leq t_\beta\], substituting,
	\begin{align*}
	\textstyle\sum\limits_{r = 1}^{R }	C^{(r-1)}_{j}&\eta_{x} (1- \eta_x)^{{R} - r }\\
	&\leq  (\lambda^{(t)}_{\max} |\b{x}_{\max}^*| +\beta^{(t)}_{\max}) \bigg[ \textstyle\sum\limits_{r = 1}^{R }	\eta_{x}\textstyle\sum\limits_{s=1}^{r-1}\eta_x(1- \eta_x)^{{R} - s} + kc_x\sum\limits_{r = 1}^{R }	\eta_{x}(1- \eta_x)^{{R} - r}\bigg],\\
	&\leq(\lambda^{(t)}_{\max} |\b{x}_{\max}^*| +t_\beta)( 1 + kc_x ),\\
	&= \mathcal{O}(t_\beta),
	\end{align*}
	with probability at least \[(1 - \delta_{\beta}^{(t)})\].
	Combining results from \eqref{eq:beta_cases}, \eqref{eq:C_cases} and substituting for the terms in \eqref{eq:exp_var_theta_int} using the analysis above,
	\begin{align*}
	\b{E}_{\b{x}_S^*}[\vartheta^{(R)}_{i}\sgn(\b{x}^*_{j})|S] 
	\begin{cases}
	\leq  \gamma^{(R)}_i, & ~\text{for}~ i = j, \\
 	\leq 3p_{j}\epsilon_t + \tfrac{\mu}{\sqrt{n}}t_\beta + \gamma^{(R)}_i = \mathcal{O}(p_j\epsilon_t), & ~\text{for}~ i \neq j.
	\end{cases}
	\end{align*}
	Note that since \[ \gamma^{(R)}_{i} :=(1 - \eta_{x})^{R} (\b{x}_{i}^{(0)}  - \b{x}_{i}^* (1 - \lambda^{(t)}_{i}))\] can be made small by choice of \[R\]. Also, since \[\b{Pr}[i,j \in S] = q_{i,j}\], we have
	\begin{align*}
	\|\Delta^{(t)}_{j}\|&= \|\b{E}_S[\b{A}^{(t)}_S\b{E}_{\b{x}_S^*}[\vartheta^{(R)}_S\sgn(\b{x}^*_{j})|S]]\|,\\
	&\leq  \mathcal{O}(\sqrt{m}q_{i,j}p_{j}\epsilon_t\|\b{A}^{(t)}\|).
	\end{align*}
\end{proof}

\begin{claim} [\textbf{An intermediate result for concentration results}]\label{norm_bound_y_Ax_s} 
\begin{em}
	With probability \[(1 - \delta_{\beta}^{(t)} - \delta_{\HT}^{(t)} - \delta_{\rm HW}^{(t)})\] \[\|\b{y}-\b{A}^{(t)}\hat{\b{x}}\|\] is upper-bounded by  
	\[\siri{\tilde{\mathcal{O}}(kt_{\beta})}\] .
	\end{em}
\end{claim}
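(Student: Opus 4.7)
The plan is to expand $\b{y} - \b{A}^{(t)}\hat{\b{x}}$ into a small number of structured summands using the closed-form expression for $\hat{\b{x}}$ from Lemma~\ref{iht:R_th_term}, and then bound each piece with tools already assembled in the appendix: the column-wise closeness of the dictionary, the incoherence-plus-Gershgorin bound on $\|\b{A}^{(t)}_S\|$, Hanson-Wright concentration for $\|\b{x}^*_S\|$, and the entrywise control on $\vartheta^{(R)}$ from Claim~\ref{iht:var_theta_abs}.

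First I would condition on the signed-support recovery event of Lemma~\ref{our:signed_supp}, under which $\hat{\b{x}}$ is supported on $S = \supp(\b{x}^*)$; this event holds with probability at least $1 - \delta_{\HT}^{(t)} - \delta_{\beta}^{(t)}$. On the same event Lemma~\ref{iht:R_th_term} supplies the coordinate-wise identity $\hat{\b{x}}_S = (\b{I} - \Lambda_S^{(t)})\b{x}^*_S + \vartheta_S^{(R)}$. Substituting into $\b{y} - \b{A}^{(t)}\hat{\b{x}} = \b{A}^*_S \b{x}^*_S - \b{A}^{(t)}_S \hat{\b{x}}_S$ and regrouping yields
\[
\b{y} - \b{A}^{(t)}\hat{\b{x}} \;=\; (\b{A}^*_S - \b{A}^{(t)}_S)\b{x}^*_S \;+\; \b{A}^{(t)}_S \Lambda_S^{(t)} \b{x}^*_S \;-\; \b{A}^{(t)}_S \vartheta_S^{(R)},
\]
so by the triangle inequality the task reduces to bounding three norms.

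Next I would collect the ingredients. A Hanson-Wright / sub-Gaussian Bernstein bound applied to $\|\b{x}^*_S\|^2$ yields $\|\b{x}^*_S\| = \tilde{O}(\sqrt{k})$ with failure probability at most $\delta_{\rm HW}^{(t)}$. Claim~\ref{claim:incoherence of B} combined with Gershgorin gives $\|\b{A}^{(t)}_S\| \leq \sqrt{1 + k\mu_t/\sqrt{n}} = O(1)$. Column-wise closeness gives $\|\b{A}^*_S - \b{A}^{(t)}_S\|_F \leq \sqrt{k}\,\epsilon_t$, and $\|\Lambda_S^{(t)}\| \leq \epsilon_t^2/2$ by definition. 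Finally, Claim~\ref{iht:var_theta_abs} yields $|\vartheta^{(R)}_i| = \mathcal{O}(t_\beta)$ uniformly on $S$ with failure probability at most $\delta_\beta^{(t)}$, hence $\|\vartheta_S^{(R)}\| \leq \sqrt{k}\cdot \mathcal{O}(t_\beta)$. Combining via Cauchy-Schwarz, the three summands are bounded by $\tilde{\mathcal{O}}(k\epsilon_t)$, $\tilde{\mathcal{O}}(\sqrt{k}\,\epsilon_t^2)$, and $\mathcal{O}(\sqrt{k}\,t_\beta)$ respectively; since $t_\beta = \Theta(\sqrt{k\epsilon_t})$ and $\epsilon_t \leq 1$, each is absorbed into $\tilde{\mathcal{O}}(kt_\beta)$, and a union bound over the three events delivers the stated failure probability $\delta_{\HT}^{(t)} + \delta_{\beta}^{(t)} + \delta_{\rm HW}^{(t)}$.

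The main obstacle is not any single estimate, all of which are routine, but rather keeping the chain of conditional events aligned: the $\vartheta$-bound in Claim~\ref{iht:var_theta_abs} is itself conditioned on the signed-support event from Lemma~\ref{our:signed_supp}, while the Hanson-Wright event for $\|\b{x}^*_S\|$ is independent of both, so the three high-probability events must be intersected carefully before the final union bound is taken.
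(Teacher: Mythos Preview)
Your proposal is correct and follows essentially the same route as the paper. The paper also invokes Lemma~\ref{iht:R_th_term} to write $\hat{\b{x}}_S=(\b{I}-\Lambda^{(t)}_S)\b{x}^*_S+\vartheta^{(R)}_S$, splits $\b{y}-\b{A}^{(t)}\hat{\b{x}}$ into a piece linear in $\b{x}^*_S$ and the $\b{A}^{(t)}_S\vartheta^{(R)}_S$ piece, controls the first via Hanson--Wright (Lemma~\ref{thm:Hanson}) and the second via Claim~\ref{iht:var_theta_abs}; the only cosmetic differences are that the paper merges your first two summands into a single matrix $\clubsuit$ before applying Hanson--Wright (yielding the slightly sharper $\tilde{\mathcal{O}}(\sqrt{k}\epsilon_t)$ on that part) and bounds $\|\b{A}^{(t)}_S\|$ by $\sqrt{k}\epsilon_t+2$ via the triangle inequality rather than Gershgorin, but both routes land on $\tilde{\mathcal{O}}(kt_\beta)$ with the same union of events.
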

\begin{proof}[Proof of Claim~\ref{norm_bound_y_Ax_s}]
	First, using Lemma~\ref{iht:R_th_term} we have
%
	\begin{align*}
	\hat{\b{x}}_{i_1} :=	\b{x}_{i_1}^{(R)} =  \b{x}_{i_1}^* (1 - \lambda^{(t)}_{i_1}) + \vartheta^{(R)}_{i_1}.
	\end{align*}
	%
%
%
%
%
%
%
	Therefore, the vector \[\hat{\b{x}}_S \], for \[S \in \supp(\b{x}^*)\] can be written as
	\begin{align}\label{eq:x_R_S}
	\hat{\b{x}}_S := \b{x}^{(R)}_S &= (\b{I} - \Lambda^{(t)}_S) \b{x}^*_S  + \vartheta^{(R)}_S,
	\end{align}
	%
%
	where \[\hat{\b{x}}\] has the correct signed-support with probability at least \[(1 - \delta_{\c{T}})\] using Lemma~\ref{our:signed_supp}. 
	Using this result, we can write \[\|\b{y} - \b{A}^{(t)}\hat{\b{x}}\| \] as
	\begin{align*}
	\|\b{y} - \b{A}^{(t)}\hat{\b{x}}\| 
	&=\|\b{A}^*_S\b{x}^*_S  - \b{A}^{(t)}_S(\b{I} - \Lambda^{(t)}_S)\b{x}^*_S  - \b{A}^{(t)}_S\vartheta^{(R)}_S\|. 
	\end{align*}
	Now, since \[\Lambda^{(t)}_{ii} \leq \tfrac{\epsilon_t^2}{2}\] we have
    \begin{align*}
    \|\b{y} - \b{A}^{(t)}\hat{\b{x}}\| 
    &\leq \|\b{A}^*_S\b{x}^*_S  - (1 - \tfrac{\epsilon_t^2}{2}) \b{A}^{(t)}_S\b{x}^*_S  - \b{A}^{(t)}_S\vartheta^{(R)}_S\|,\\
    &=\|\underbrace{((1 - \tfrac{\epsilon_t^2}{2})(\b{A}^*_S  - \b{A}^{(t)}_S) +\tfrac{\epsilon_t^2}{2}\b{A}^*_S)}_{\clubsuit}\b{x}^*_S - \underbrace{\b{A}^{(t)}_S\vartheta^{(R)}_S}_{\spadesuit}\|.
    \end{align*}
	With \[\b{x}^*_S\] being independent and sub-Gaussian, using Lemma~\ref{thm:Hanson}, which is a result based on the Hanson-Wright result \citep{Hanson1971} for sub-Gaussian random variables, and since \[\|\b{A}^{(t)}_S - \b{A}^*_S\| \leq \|\b{A}^{(t)}_S - \b{A}^*_S\|_F \leq \sqrt{k}\epsilon_t\], we have that with probability at least \[(1 - \delta_{\rm HW}^{(t)})\]
	\begin{align*}
	\|\clubsuit\b{x}^*_S\| = \|((1 - \tfrac{\epsilon_t^2}{2})(\b{A}^*_S  - \b{A}^{(t)}_S) +\tfrac{\epsilon_t^2}{2}\b{A}^*_S)\b{x}^*_S\| \leq \tilde{\mathcal{O}}(\|(1 - \tfrac{\epsilon_t^2}{2})(\b{A}^*_S  - \b{A}^{(t)}_S) +\tfrac{\epsilon_t^2}{2}\b{A}^*_S\|_F),
	\end{align*}
    where \[\delta_{\rm HW}^{(t)} = \exp(-\tfrac{1}{\mathcal{O}(\epsilon_t)})\].
    
	Now, consider the \[\|\clubsuit\|_F\], since \[\|\b{A}^{(t)}_S - \b{A}^*_S\|_F \leq \sqrt{k}\epsilon_t\]
	\begin{align*}
	\|\clubsuit\|_F := \|(1 - \tfrac{\epsilon_t^2}{2})(\b{A}^*_S  - \b{A}^{(t)}_S) +\tfrac{\epsilon_t^2}{2}\b{A}^*_S\|_F
	&\leq  (1 - \tfrac{\epsilon_t^2}{2})\| (\b{A}^*_S -\b{A}^{(t)}_S)\|_F + \tfrac{\epsilon_t^2}{2}\|\b{A}^*_S\|_F,\\
	& \leq  \sqrt{k}(1 - \tfrac{\epsilon_t^2}{2})\epsilon_t + \tfrac{\epsilon_t^2}{2}\|\b{A}^*_S\|_F.
	\end{align*}
	Consider the \[\|\spadesuit\|\] term. Using Claim~\ref{iht:var_theta_abs}, each \[\vartheta^{(R)}_j\] is bounded by \[ \siri{\mathcal{O}(t_{\beta})}.\] with probability at least \[(1 - \delta_{\beta}^{(t)})\]
%
%
%
Therefore,
	\begin{align*}
    \|\spadesuit\| = \|\b{A}^{(t)}_S\vartheta^{(R)}_S \| &\leq \|\b{A}^{(t)}_S\|\|\vartheta^{(R)}_S\|
	 = \|\b{A}^{(t)}_S\|\sqrt{k}\siri{\mathcal{O}(t_{\beta})}.
	\end{align*}
	Again, since \[\|\b{A}^{(t)}_S - \b{A}^*_S\| \leq \|\b{A}^{(t)}_S - \b{A}^*_S\|_F \leq \sqrt{k}\epsilon_t\],
	\begin{align*}
	\|\b{A}^{(t)}_S\| &\leq \|\b{A}^{(t)}_S - \b{A}^*_S + \b{A}^*_S\| \leq \|\b{A}^{(t)}_S - \b{A}^*_S\| + \|\b{A}^*_S\| \leq \sqrt{k}\epsilon_t + 2.
	\end{align*}
	Finally, combining all the results and using the fact that \[\|\b{A}^*_S\|_F \leq \sqrt{k}\|\b{A}^*_S\| \leq 2\sqrt{k}\], , 
	\begin{align*}
	\|\b{y} - \b{A}^{(t)}\hat{\b{x}}\| 
	&= \tilde{\mathcal{O}}(\sqrt{k}(1 - \tfrac{\epsilon_t^2}{2})\epsilon_t + \epsilon_t^2\sqrt{k}) +\|\b{A}^{(t)}_S\|\sqrt{k}\siri{\mathcal{O}(t_{\beta})},\\
	&=  \siri{\tilde{\mathcal{O}}(kt_{\beta})}.
	\end{align*}
	%
\end{proof}

\begin{claim}[\textbf{Bound on variance parameter for concentration of gradient vector}]
	\label{lem:var_w_vector_grad}
	\begin{em}
	For  \[\b{z}:= (\b{y}- \b{A}^{(t)}\hat{\b{x}})\sgn(\hat{\b{x}}_i)|i\in S\] the variance parameter  \[\b{E}[\|\b{z}\|^2]\] is bounded as
    \[\b{E}[\|\b{z}\|^2] = \mathcal{O}(k\epsilon_t^2) +
    \siri{ \mathcal{O}( kt_{\beta}^2)}\] with probability at least \[(1 - \delta_{\beta}^{(t)} - \delta_{\HT}^{(t)})\].
    \end{em}
\end{claim}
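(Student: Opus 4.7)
The plan is to expand $\b{y}-\b{A}^{(t)}\hat{\b{x}}$ using the expression for $\hat{\b{x}}_S$ from Lemma~\ref{iht:R_th_term}, and then bound the resulting expected squared norm term-by-term, exploiting the conditional independence of the nonzero entries of $\b{x}^*$ given $S$. The key saving compared with Claim~\ref{norm_bound_y_Ax_s} is that we cannot afford a deterministic $\tilde{\mathcal{O}}(k^2 t_\beta^2)$ bound; instead we need a Frobenius-norm identity to shave a factor of $k$.

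First I would substitute $\hat{\b{x}}_S = (\b{I}-\Lambda_S^{(t)})\b{x}^*_S + \vartheta_S^{(R)}$ (with probability at least $(1-\delta_\HT^{(t)}-\delta_\beta^{(t)})$, via Lemma~\ref{our:signed_supp} and Lemma~\ref{iht:R_th_term}) to write
\begin{align*}
\b{y} - \b{A}^{(t)}\hat{\b{x}} = (\b{A}^*_S - \b{A}^{(t)}_S)\b{x}^*_S + \b{A}^{(t)}_S \Lambda_S^{(t)}\b{x}^*_S - \b{A}^{(t)}_S\vartheta_S^{(R)}.
\end{align*}
Since $|\sgn(\hat{\b{x}}_i)|=1$, we have $\|\b{z}\|^2 = \|\b{y}-\b{A}^{(t)}\hat{\b{x}}\|^2$, so by $\|a+b+c\|^2 \leq 3(\|a\|^2+\|b\|^2+\|c\|^2)$ it suffices to bound the expected squared norm of each of the three pieces above, conditioned on $S \ni i$.

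Next I would handle the two $\b{x}^*_S$-linear terms using the conditional identity $\b{E}[\|M\b{x}^*_S\|^2\mid S] = \|M\|_F^2$, which holds because the nonzero coordinates of $\b{x}^*$ are independent with unit variance. For $M = \b{A}^*_S - \b{A}^{(t)}_S$, column-wise closeness gives $\|M\|_F^2 \leq k\epsilon_t^2$; for $M = \b{A}^{(t)}_S\Lambda_S^{(t)}$, the diagonal bound $\lambda_j^{(t)}\le\epsilon_t^2/2$ together with $\|\b{A}^{(t)}_S\|_F^2 = k$ gives $\|M\|_F^2 = \mathcal{O}(k\epsilon_t^4)$, which is absorbed into $\mathcal{O}(k\epsilon_t^2)$.

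For the $\vartheta_S^{(R)}$ term I would use Claim~\ref{iht:var_theta_abs}, which gives $|\vartheta_{i_1}^{(R)}| = \mathcal{O}(t_\beta)$ for each $i_1 \in S$ with probability at least $(1-\delta_\beta^{(t)})$, so $\|\vartheta_S^{(R)}\|^2 = \mathcal{O}(k t_\beta^2)$ on this event. Combining this with the operator-norm bound $\|\b{A}^{(t)}_S\|^2 \leq 1 + k\mu_t/\sqrt{n} \leq 2$ (from Gershgorin applied to $\b{A}^{(t)}$, which is $\mu_t$-incoherent by Claim~\ref{claim:incoherence of B}) yields $\|\b{A}^{(t)}_S\vartheta_S^{(R)}\|^2 = \mathcal{O}(k t_\beta^2)$. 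Adding the three contributions produces the stated $\mathcal{O}(k\epsilon_t^2) + \mathcal{O}(k t_\beta^2)$ bound, with the stated probability coming from union-bounding the signed-support event and the $\vartheta$-bound event.

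The only real subtlety is that $\vartheta_S^{(R)}$ is itself a nonlinear function of $\b{x}^*_S$, so the clean Frobenius identity is unavailable for the third term; this is why we settle for a high-probability sup-norm bound via Claim~\ref{iht:var_theta_abs} and absorb it through Cauchy--Schwarz. The residual expectation on the bad event is harmless because $\delta_\beta^{(t)}$ decays faster than any polynomial in the other parameters for the regime $\epsilon_t = \mathcal{O}^*(1/\log n)$.
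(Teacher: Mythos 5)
Your proposal is correct and follows essentially the same strategy as the paper: substitute $\hat{\b{x}}_S=(\b{I}-\Lambda^{(t)}_S)\b{x}^*_S+\vartheta^{(R)}_S$, use the conditional second-moment identity $\b{E}[\|M\b{x}^*_S\|^2\mid S]=\|M\|_F^2$ for the $\b{x}^*$-linear piece, and invoke Claim~\ref{iht:var_theta_abs} for a high-probability sup-norm bound on $\vartheta^{(R)}_S$. Your grouping $(\b{A}^*_S-\b{A}^{(t)}_S)+\b{A}^{(t)}_S\Lambda^{(t)}_S$ is a trivial rearrangement of the paper's $\b{A}^*_S\Lambda^{(t)}_S+(\b{A}^*_S-\b{A}^{(t)}_S)(\b{I}-\Lambda^{(t)}_S)$, and your explicit three-term $\|a+b+c\|^2\le 3(\|a\|^2+\|b\|^2+\|c\|^2)$ inequality together with the Gershgorin bound $\|\b{A}^{(t)}_S\|^2\le 2$ is actually a bit tidier than the paper's handling of cross terms and its use of $\|\b{A}^{(t)}_S\|\le\sqrt{k}\epsilon_t+2$, but the argument is the same.
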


\begin{proof}[Proof of Claim~\ref{lem:var_w_vector_grad}]
	For the variance \[\b{E}[\|\b{z}\|^2]\], we focus on the following,
	\begin{align*}
	\b{E}[\|\b{z}\|^2] &= \b{E}[ \|( \b{y}- \b{A}^{(t)}\hat{\b{x}})\sgn(\hat{\b{x}}_i)\|^2|i\in S].
	\end{align*}
	Here, \[\hat{\b{x}}_S\] is given by
		\begin{align*}
		\hat{\b{x}}_S &= (\b{I} - \Lambda^{(t)}_S) \b{x}^*_S  + \vartheta^{(R)}_S.
		\end{align*}
		Therefore, \[\b{E}[\|\b{z}\|^2]\] can we written as
	\begin{align}
	\label{iht:varZ}
	\b{E}[\|( \b{y}- \b{A}^{(t)}&\hat{\b{x}})\sgn(\hat{\b{x}}_i)\|^2|i\in S] \notag\\
	&= \b{E}[\|( \b{y}- \b{A}^{(t)}_S(\b{I} - \Lambda^{(t)}_S)\b{x}^*_S  - \b{A}^{(t)}_S\vartheta^{(R)}_S)\sgn(\hat{\b{x}}_i)\|^2|i\in S],\notag\\
	&\leq  \underbrace{\b{E}[\|( \b{A}^*_S- \b{A}^{(t)}_S(\b{I} - \Lambda^{(t)}_S))\b{x}^*_S \|^2|i\in S]}_{\heartsuit}+ \underbrace{\b{E}[\|\b{A}^{(t)}_S\vartheta^{(R)}_S\sgn(\hat{\b{x}}_i)\|^2|i\in S]}_{\diamondsuit}.
	\end{align}
	We will now consider each term in \eqref{iht:varZ} separately. We start with \[\heartsuit\]. Since \[\b{x}^*_S\]s are conditionally independent of \[S\], \[\b{E}[\b{x}_S^*\b{x}_S^{*^\top}] = \b{I}\]. Therefore, we can simplify this expression as
	\begin{align*}
	\heartsuit &:= \b{E}[\|( \b{A}^*_S- \b{A}^{(t)}_S(\b{I} - \Lambda^{(t)}_S))\b{x}^*_S \|^2|i\in S] 
	= \b{E}[\| \b{A}^*_S- \b{A}^{(t)}_S(\b{I} - \Lambda^{(t)}_S)\|^2_F|i\in S].
	\end{align*}
	Rearranging the terms we have the following for \[\heartsuit\],
	\begin{align*}
	\heartsuit & 
	= \b{E}[\| \b{A}^*_S- \b{A}^{(t)}_S(\b{I} - \Lambda^{(t)}_S)\|^2_F|i\in S]
	= \b{E}[\| \b{A}^*_S\Lambda^{(t)}_S + (\b{A}^*_S - \b{A}^{(t)}_S)(\b{I} - \Lambda^{(t)}_S)\|^2_F|i\in S].
	\end{align*}
	Therefore, \[\heartsuit\] can be upper-bounded as
	\begin{align}
	\label{iht:varZ_term1}
	\heartsuit 
	\leq \underbrace{\b{E}[\| \b{A}^*_S\Lambda^{(t)}_S\|^2_F|i\in S]}_{\heartsuit_1} + &\underbrace{\b{E}[\|(\b{A}^*_S - \b{A}^{(t)}_S)(\b{I} - \Lambda^{(t)}_S)\|^2_F|i\in S]}_{\heartsuit_2} \notag\\
	&~~~~~~~~~~~~~~~~+ \underbrace{2\b{E}[\| \b{A}^*_S\Lambda^{(t)}_S\|_F\|(\b{A}^*_S - \b{A}^{(t)}_S)(\b{I} - \Lambda^{(t)}_S)\|_F|i\in S]}_{\heartsuit_3}.
	\end{align}
	
	For \[\heartsuit_1\], since \[ \|\b{A}^{(t)}_S\| \leq \sqrt{k}\epsilon_t + 2\], we have
	\begin{align*}
	\heartsuit_1:=\b{E}[\| \b{A}^*_S\Lambda^{(t)}_S\|^2_F|i\in S] &\leq \b{E}[\|\b{A}^*_S\|\|\Lambda^{(t)}_S\|^2_F|i\in S]
	 \leq \|\b{A}^*_S\|\textstyle\sum_{j\in S} (\lambda^{(t)}_j)^2 
	\leq  k(\sqrt{k}\epsilon_t + 2)\tfrac{\epsilon_t^4}{4}.
	\end{align*}
	%
%
	Next, since \[(1 - \lambda^{(t)}_j) \leq 1\], we have the following bound for \[\heartsuit_2\]
	\begin{align*}
	\heartsuit_2 := \b{E}[\|(\b{A}^*_S - \b{A}^{(t)}_S)(\b{I} - \Lambda^{(t)}_S)\|^2_F|i\in S] \leq \b{E}[\|\b{A}^*_S - \b{A}^{(t)}_S\|^2_F|i\in S] \leq \|\b{A}^*_S - \b{A}^{(t)}_S\|^2_F \leq k\epsilon_t^2.
	\end{align*}
	%
%
	%
	Further, \[\heartsuit_3\] can be upper-bounded by using bounds for \[\heartsuit_1\] and \[\heartsuit_2\]. Combining the results of upper-bounding \[\heartsuit_1\], \[\heartsuit_2\], and \[\heartsuit_3\] we have the following for \eqref{iht:varZ_term1}
	\begin{align*}
	\heartsuit \leq \b{E}[\|( \b{A}^*_S- \b{A}^{(t)}_S(\b{I} - \Lambda^{(t)}_S))\b{x}^*_S \|^2|i\in S] = \mathcal{O}(k\epsilon_t^2).
	\end{align*}

    Next, by Claim~\ref{iht:var_theta_abs}, \[\vartheta^{(R)}_j\] is upper-bounded as \[	|\vartheta^{(R)}_{j}|  \leq  \siri{\mathcal{O}(t_{\beta})}.\]
%
%
%
%
%
	with probability \[(1 - \delta_{\beta}^{(t)})\]. Therefore, the term \[\diamondsuit\], the second term of \eqref{iht:varZ}, can be bounded as
	\begin{align*}
	\diamondsuit \leq \|\b{A}^{(t)}_S\vartheta^{(R)}_S\sgn(\hat{\b{x}}_i)\|^2 \leq (\sqrt{k}\epsilon_t + 2)^2 k\siri{\mathcal{O}(t_{\beta})}^2 
	= \siri{ \mathcal{O}( kt_{\beta}^2)}.
	\end{align*}
	%
   
%
%
%
%
	Finally, combining all the results, the term of interest in \eqref{iht:varZ} has the following form
	\begin{align*}
	&\b{E}[\|( \b{y}- \b{A}^{(t)}\hat{\b{x}})\sgn(\hat{\b{x}}_i)\|^2|i\in S] 
	= \mathcal{O}(k\epsilon_t^2) +  
	\siri{ \mathcal{O}( kt_{\beta}^2)}.
	\end{align*}
\end{proof}

\begin{claim}[\textbf{Bound on variance parameter for concentration of gradient matrix}] \label{norm_exp_yAs_yAst} 
\begin{em}
With probability \[(1-\delta_{\HT}^{(t)} - \delta_{\beta}^{(t)})\], the variance parameter
	\[\|\mathbf{E}[(\b{y}-\b{A}^{(t)}\hat{\b{x}}) (\b{y}-\b{A}\hat{\b{x}})^\top]\|\] is upper-bounded by 
	\[\mathcal{O}(\tfrac{k^2t_{\beta}^2}{m})\|\b{A}^*\|^2\].
	\end{em}
\end{claim}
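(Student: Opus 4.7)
\textbf{Proof proposal for Claim~\ref{norm_exp_yAs_yAst}.}

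The plan is to leverage Lemma~\ref{iht:R_th_term} so as to write $\b{r} := \b{y} - \b{A}^{(t)}\hat{\b{x}}$ in closed form on the high-probability event $\c{F}_{\b{x}^*}$ that the signed-support is recovered. On this event (probability at least $1-\delta_{\HT}^{(t)}-\delta_{\beta}^{(t)}$) we have $\hat{\b{x}}_S=(\b{I}-\Lambda^{(t)}_S)\b{x}^*_S+\vartheta^{(R)}_S$, so extending all vectors to $\mathbb{R}^m$ by zero-padding outside $S$ and defining $\b{D}:=\b{A}^*-\b{A}^{(t)}(\b{I}-\Lambda^{(t)})$, we get the compact identity $\b{r}=\b{D}\b{x}^*-\b{A}^{(t)}\vartheta^{(R)}$. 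The contribution of the complementary event $\overline{\c{F}}_{\b{x}^*}$ can be absorbed into a term that vanishes with $\epsilon_t$, exactly as in Lemma~\ref{grad_exp}.

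Next I would expand the outer product and bound the four resulting matrix terms separately in spectral norm:
\begin{align*}
\b{E}[\b{r}\b{r}^\top]
&= \b{D}\,\b{E}[\b{x}^*\b{x}^{*\top}]\,\b{D}^\top
 - \b{D}\,\b{E}[\b{x}^*\vartheta^{(R)\top}]\,\b{A}^{(t)\top}\\
&\quad - \b{A}^{(t)}\,\b{E}[\vartheta^{(R)}\b{x}^{*\top}]\,\b{D}^\top
 + \b{A}^{(t)}\,\b{E}[\vartheta^{(R)}\vartheta^{(R)\top}]\,\b{A}^{(t)\top}.
\end{align*}
For the first term I use independence, unit variance, and $q_i=\Theta(k/m)$ to write $\b{E}[\b{x}^*\b{x}^{*\top}]=q\b{I}$, and the per-column bound $\|\b{D}_i\|^2=1-(1-\lambda^{(t)}_i)^2\le 2\lambda^{(t)}_i\le\epsilon_t^2$ (using $\|\b{A}^*_i\|=\|\b{A}^{(t)}_i\|=1$ and $\langle \b{A}^*_i,\b{A}^{(t)}_i\rangle=1-\lambda^{(t)}_i$), which gives $\|\b{D}\|^2\le\|\b{D}\|_F^2\le m\epsilon_t^2$, and hence a spectral norm of order $O(k\epsilon_t^2)$ for this term.

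The dominant fourth term is where the claimed rate $O(k^2t_\beta^2/m)\|\b{A}^*\|^2$ must come from. Using Claim~\ref{iht:var_theta_abs} to bound $|\vartheta^{(R)}_i|=\mathcal{O}(t_\beta)$ on $i\in S$ and zero otherwise, and conditioning on the support, the $(i,j)$-entry of $\b{E}[\vartheta^{(R)}\vartheta^{(R)\top}]$ is at most $q_i\,\mathcal{O}(t_\beta^2)$ on the diagonal and $q_{i,j}\,\mathcal{O}(t_\beta^2)$ off it. A Gershgorin/row-sum bound then gives $\|\b{E}[\vartheta^{(R)}\vartheta^{(R)\top}]\|=\mathcal{O}(k/m)\,\mathcal{O}(t_\beta^2)+m\cdot\mathcal{O}(k^2/m^2)\,\mathcal{O}(t_\beta^2)=\mathcal{O}(k^2t_\beta^2/m)$, and coupling with the closeness-based estimate $\|\b{A}^{(t)}\|\le\|\b{A}^{(t)}-\b{A}^*\|+\|\b{A}^*\|=\mathcal{O}(\|\b{A}^*\|)$ yields the claim. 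The two cross terms are handled via the PSD block inequality $\|\b{C}\|\le\sqrt{\|\b{A}\|\|\b{B}\|}$ applied to the Gram block $\begin{pmatrix}\b{D}\b{E}[\b{x}^*\b{x}^{*\top}]\b{D}^\top & \b{D}\b{E}[\b{x}^*\vartheta^{(R)\top}]\b{A}^{(t)\top}\\ \ast & \b{A}^{(t)}\b{E}[\vartheta^{(R)}\vartheta^{(R)\top}]\b{A}^{(t)\top}\end{pmatrix}$ (which is PSD as the covariance of the stacked vector $(\b{D}\b{x}^*,\b{A}^{(t)}\vartheta^{(R)})$), so they cannot exceed the geometric mean of the diagonal bounds and are absorbed into $\mathcal{O}(k^2t_\beta^2/m)\|\b{A}^*\|^2$.

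The main obstacle I foresee is precisely the $\vartheta$-variance step: a naive route using $\|\b{E}[\b{r}\b{r}^\top]\|\le\b{E}[\|\b{r}\|^2]$ together with Claim~\ref{norm_bound_y_Ax_s} only yields $\tilde{\mathcal{O}}(k^2t_\beta^2)$, which is short by the factor $\|\b{A}^*\|^2/m=\mathcal{O}(1/n)$. Recovering the correct scaling therefore forces one to exploit \emph{both} the support-size probabilities $q_i=\Theta(k/m)$ and the pairwise probabilities $q_{i,j}=\Theta(k^2/m^2)$ inside the matrix entries, rather than treating $\vartheta^{(R)}$ as a deterministic vector of norm $\mathcal{O}(\sqrt{k}\,t_\beta)$.
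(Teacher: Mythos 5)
Your proposal mirrors the paper's proof in its large-scale structure: both start from Lemma~\ref{iht:R_th_term} on the event $\mathcal{F}_{\b{x}^*}$ to write $\b{y}-\b{A}^{(t)}\hat{\b{x}} = (\b{A}^*_S-\b{A}^{(t)}_S(\b{I}-\Lambda^{(t)}_S))\b{x}^*_S - \b{A}^{(t)}_S\vartheta^{(R)}_S$, expand the outer product into four blocks, and use the support probabilities $q_i = \Theta(k/m)$, $q_{i,j} = \Theta(k^2/m^2)$ together with Claim~\ref{iht:var_theta_abs} to control the $\vartheta$-block. Your Gershgorin bound $\|\b{E}[\vartheta^{(R)}\vartheta^{(R)\top}]\| = \mathcal{O}(k^2t_\beta^2/m)$ and the use of the PSD block inequality for the two cross terms are both valid and a nice alternative to the paper's direct estimates (the paper instead bounds $\spadesuit$ and $\heartsuit$ by extracting $\b{E}_S[\mathbbm{1}_{i,j\in S}]$ and spectral norms of $\b{A}^*$, $\b{A}^{(t)}$).

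However, there is a genuine gap in your bound on the first block $\b{D}\,\b{E}[\b{x}^*\b{x}^{*\top}]\,\b{D}^\top$. You control $\|\b{D}\|$ by $\|\b{D}\|_F \le \sqrt{m}\,\epsilon_t$, yielding $\mathcal{O}(k\epsilon_t^2)$. Comparing to the target $\mathcal{O}(k^2 t_\beta^2/m)\|\b{A}^*\|^2 = \mathcal{O}(k^3\epsilon_t/n)$ (using $t_\beta^2 = \mathcal{O}(k\epsilon_t)$ and $\|\b{A}^*\|^2 = \mathcal{O}(m/n)$), the ratio is $n\epsilon_t/k^2$, which in the paper's regime ($\epsilon_t = \mathcal{O}^*(1/\log n)$, $k = \mathcal{O}(\sqrt{n}/\mu\log n)$ with $\mu = \mathcal{O}(\log n)$) is $\Omega(\mathrm{polylog}\,n) \gg 1$; your term-1 estimate is therefore strictly larger than what the claim asserts. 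The fix is not to take a Frobenius norm of $\b{D}$ at all: decompose $\b{D} = (\b{A}^*-\b{A}^{(t)}) + \b{A}^{(t)}\Lambda^{(t)}$ and use the $(\epsilon_t,2)$-closeness (Def.~\ref{def:del_kappa}) to get $\|\b{D}\| \le 2\|\b{A}^*\| + \|\b{A}^{(t)}\|\tfrac{\epsilon_t^2}{2} = \mathcal{O}(\|\b{A}^*\|)$, which gives $\|\b{D}\,\b{E}[\b{x}^*\b{x}^{*\top}]\,\b{D}^\top\| = \mathcal{O}(k/m)\|\b{A}^*\|^2$, indeed dominated by the $\vartheta$-block since $t_\beta \ge 1$ in this regime. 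This is also what the paper effectively does for $\clubsuit$: it sub-conditions on the support and uses $\b{E}_S[\mathbbm{1}_{i,j\in S}]$ with $\|\b{A}^*\|$ and $\|\b{A}^{(t)}\|$ in spectral norm, never invoking $\|\b{D}\|_F$. Note also that your PSD cross-term argument, while valid, inherits this gap, since the geometric mean of your two diagonal bounds is then also larger than $\mathcal{O}(k^2 t_\beta^2/m)\|\b{A}^*\|^2$.
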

\begin{proof}[Proof of Claim~\ref{norm_exp_yAs_yAst}]
	Let \[\mathcal{F}_{\b{x}^*}\] be the event that \[\sgn(\b{x}^*) = \sgn(\hat{\b{x}})\], and let \[\mathbbm{1}_{\mathcal{F}_{\b{x}^*}}\] denote the indicator function corresponding to this event. As we show in Lemma~\ref{our:signed_supp}, this event occurs with probability at least \[(1 - \delta_{\beta}^{(t)} - \delta_{\HT}^{(t)})\], therefore,
	\begin{align*}
	\mathbf{E}[(\b{y}-\b{A}^{(t)}&\hat{\b{x}})(\b{y}-\b{A}^{(t)}\hat{\b{x}})^\top] \\
	&= \mathbf{E}[(\b{y}-\b{A}^{(t)}\hat{\b{x}}) (\b{y}-\b{A}^{(t)}\hat{\b{x}})^\top\mathbbm{1}_{\mathcal{F}_{\b{x}^*}}] +\mathbf{E}[(\b{y}-\b{A}^{(t)}\hat{\b{x}}) (\b{y}-\b{A}^{(t)}\hat{\b{x}})^\top\mathbbm{1}_{\bar{\mathcal{F}}_{x^*}}], \\
	&= \mathbf{E}[(\b{y}-\b{A}^{(t)}\hat{\b{x}}) (\b{y}-\b{A}^{(t)}\hat{\b{x}})^\top\mathbbm{1}_{\mathcal{F}_{x^*}}] \pm \gamma.
	\end{align*}
	Here, \[\gamma\] is small. Under the event \[\mathcal{F}_{x^*}\], \[\hat{\b{x}}\] has the correct signed-support. Again, since \[\mathbbm{1}_{\mathcal{F}_{\b{x}^*}} = \mathbf{1} - \mathbbm{1}_{\bar{\mathcal{F}}_{\b{x}^*}}\],
	\begin{align*}
	\mathbf{E}[(\b{y}-\b{A}^{(t)}\hat{\b{x}}) (\b{y}-\b{A}^{(t)}\hat{\b{x}})^\top] &= \mathbf{E}[(\b{y}-\b{A}^{(t)}\hat{\b{x}}) (\b{y}-\b{A}^{(t)}\hat{\b{x}})^\top(\mathbf{1} - \mathbbm{1}_{\bar{\mathcal{F}}_{x^*}})] \pm \gamma,\\
	&= \mathbf{E}[(\b{y}-\b{A}^{(t)}\hat{\b{x}}) (\b{y}-\b{A}^{(t)}\hat{\b{x}})^\top] \pm \gamma.
	\end{align*}
	Now, using Lemma~\ref{iht:R_th_term} with probability at least \[(1-\delta_{\HT}^{(t)} - \delta_{\beta}^{(t)})\], \[\hat{\b{x}}_S\] admits the following expression
	\begin{align*}
	\hat{\b{x}}_S := \b{x}^{(R)}_S = (\b{I} - \Lambda^{(t)}_S) \b{x}^*_S  + \vartheta^{(R)}_S.
	\end{align*}
	Therefore we have
	\begin{align*}
	\b{y} - \b{A}^{(t)}\hat{\b{x}} &=(\b{A}^*_S  - \b{A}^{(t)}_S(\b{I} - \Lambda^{(t)}_S))\b{x}^*_S  - \b{A}^{(t)}_S\vartheta^{(R)}_S.
	\end{align*}
     Using the expression above \[\b{E}[(\b{y} - \b{A}^{(t)}\hat{\b{x}})(\b{y} - \b{A}^{(t)}\hat{\b{x}})^\top] \] can be written as
	\begin{align*}
	\b{E}[(\b{y} &- \b{A}^{(t)}\hat{\b{x}})(\b{y} - \b{A}^{(t)}\hat{\b{x}})^\top] \\
	&= \b{E}[((\b{A}^*_S  - \b{A}^{(t)}_S(\b{I} - \Lambda^{(t)}_S))\b{x}^*_S  - \b{A}^{(t)}_S\vartheta^{(R)}_S)((\b{A}^*_S  - \b{A}^{(t)}_S(\b{I} - \Lambda^{(t)}_S))\b{x}^*_S  - \b{A}^{(t)}_S\vartheta^{(R)}_S)^\top].
	\end{align*}
	Sub-conditioning, we have
	\begin{align*}
	\b{E}[(\b{y} - \b{A}^{(t)}\hat{\b{x}})(\b{y} - &\b{A}^{(t)}\hat{\b{x}})^\top]\\
= &\b{E}_S[(\b{A}^*_S  - \b{A}^{(t)}_S(\b{I} - \Lambda^{(t)}_S))\b{E}_{\b{x}^*_S}[\b{x}^*_S\b{x}^{*^\top}_S|S](\b{A}^{*^\top}_S  - (\b{I} - \Lambda^{(t)}_S)\b{A}^{(t)^\top}_S)] \\
&- \b{E}_S[\b{A}^{(t)}_S\b{E}_{\b{x}^*_S}[\vartheta^{(R)}_S\b{x}^{*^\top}_S|S](\b{A}^{*^\top}_S  - (\b{I} - \Lambda^{(t)}_S)\b{A}^{(t)^\top}_S)] \\
	&- \b{E}_S[(\b{A}^*_S  - \b{A}^{(t)}_S(\b{I} - \Lambda^{(t)}_S))\b{E}_{\b{x}^*_S}[\b{x}^*_S(\vartheta^{(R)}_S)^\top|S]\b{A}^{(t)^\top}_S] \\
	&+ \b{E}_S[\b{A}^{(t)}_S\b{E}_{\b{x}^*_S}[\vartheta^{(R)}_S(\vartheta^{(R)}_S)^\top|S]\b{A}^{(t)^\top}_S].
	\end{align*}
			
	Now, since \[\b{E}_{\b{x}^*_S}[\b{x}^*_S\b{x}^{*^\top}_S|S] = \b{I}\],
%
	\begin{align}\label{iht:mat_W_var}
	\|\b{E}[(\b{y} - \b{A}^{(t)}\hat{\b{x}})(\b{y} - \b{A}^{(t)}\hat{\b{x}})^\top]\|\notag
&\leq \underbrace{\|\b{E}_S[(\b{A}^*_S  - \b{A}^{(t)}_S(\b{I} - \Lambda^{(t)}_S))(\b{A}^{*^\top}_S  - (\b{I} - \Lambda^{(t)}_S)\b{A}^{(t)^\top}_S)]\|}_{\clubsuit} \\
&~~~~+\underbrace{\| \b{E}_S[\b{A}^{(t)}_S\b{E}_{\b{x}^*_S}[\vartheta^{(R)}_S\b{x}^{*^\top}_S|S](\b{A}^{*^\top}_S  - (\b{I} - \Lambda^{(t)}_S)\b{A}^{(t)^\top}_S)]\|}_{\spadesuit} \notag \\
	&~~~~+\underbrace{\| \b{E}_S[(\b{A}^*_S  - \b{A}^{(t)}_S(\b{I} - \Lambda^{(t)}_S))\b{E}_{\b{x}^*_S}[\b{x}^*_S(\vartheta^{(R)}_S)^\top|S]\b{A}^{(t)^\top}_S]\|}_{\heartsuit} \notag\\
	 &~~~~+ \underbrace{\|\b{E}_S[\b{A}^{(t)}_S\b{E}_{\b{x}^*_S}[\vartheta^{(R)}_S(\vartheta^{(R)}_S)^\top|S]\b{A}^{(t)^\top}_S]\|}_{\diamondsuit} .
	\end{align}
	
%
	Let's start with the first term \[(\clubsuit)\] of \eqref{iht:mat_W_var}, which can be written as
	\begin{align}
	\label{iht:mat_W_var_term_1}
	\clubsuit : 
\leq \underbrace{\|\b{E}_S[\b{A}^*_S \b{A}^{*^\top}_S]\|}_{\clubsuit_1}  +  \underbrace{\|\b{E}_S[\b{A}^*_S (\b{I} - \Lambda^{(t)}_S)\b{A}^{(t)^\top}_S]\|}_{\clubsuit_2} &+  \underbrace{\|\b{E}_S[\b{A}^{(t)}_S(\b{I} - \Lambda^{(t)}_S)\b{A}^{*^\top}_S]\|}_{\clubsuit_3} \notag \\&+  \underbrace{\|\b{E}_S[\b{A}^{(t)}_S(\b{I} - \Lambda^{(t)}_S)^2\b{A}^{(t)^\top}_S]\|}_{\clubsuit_4}.
	\end{align}
	
	Now consider each term of equation \eqref{iht:mat_W_var_term_1}. First, since
	\begin{align*}
	\b{E}_S[\b{A}^*_S \b{A}^{*^\top}_S]  &= \b{E}_S[\textstyle\sum\limits_{i ,j\in S} \b{A}^*_i \b{A}^{*^\top}_j \mathbbm{1}_{i,j\in S}] = \textstyle\sum\limits_{i,j= 1}^{m} \b{A}^*_i \b{A}^{*^\top}_i \b{E}_S[\mathbbm{1}_{i,j\in S}],
	\end{align*}
	and \[\b{E}_S[\mathbbm{1}_{i,j\in S}] = \mathcal{O}(\tfrac{k^2}{m^2})\], we can upper-bound \[\clubsuit_1:=\|\b{E}_S[\b{A}^*_S \b{A}^{*^\top}_S] \| \] as 
	\begin{align*}
	\clubsuit_1:= \|\b{E}_S[\b{A}^*_S \b{A}^{*^\top}_S] \| 
	= \mathcal{O}(\tfrac{k^2}{m^2})\|\b{A}^* \b{1}^{m \times m} \b{A}^{*^\top}\| =  \mathcal{O}(\tfrac{k^2}{m})\|\b{A}^*\|^2,
	\end{align*}
	where \[\b{1}^{m \times m} \] denotes an \[m \times m\] matrix of ones.
	Now, we turn to \[\clubsuit_2:=\|\b{E}_S[\b{A}^*_S (\b{I} - \Lambda^{(t)}_S)\b{A}^{(t)^\top}_S] \|\] in  \eqref{iht:mat_W_var_term_1}, which can be simplified as
	\begin{align*}
	\clubsuit_2 
	\leq \|\textstyle\sum\limits_{i,j =1}^{m} \b{A}^*_i \b{A}^{(t)^\top}_j \b{E}_S[\mathbbm{1}_{i,j\in S}]\|
	 \leq\mathcal{O}(\tfrac{k^2}{m})\|\b{A}^*\|\|\b{A}^{(t)}\|.
	\end{align*}
	%
	Further, since \[\b{A}^{(t)}\] is \[(\epsilon_t,2)\]-close to \[\b{A}^*\], we have that \[\|\b{A}^{(t)}\| \leq \|\b{A}^{(t)} - \b{A}^*\|  + \|\b{A}^*\| \leq 3\|\b{A}^*\|\], therefore
	\begin{align*}
	\clubsuit_2 := \|\b{E}_S[\b{A}^*_S (\b{I} - \Lambda^{(t)}_S)\b{A}^{(t)^\top}_S] \| 
	& =\mathcal{O}(\tfrac{k^2}{m})\|\b{A}^*\|^2.
	\end{align*}
	Similarly, \[\clubsuit_3 \] \eqref{iht:mat_W_var_term_1} is also \[\mathcal{O}(\tfrac{k^2}{m})\|\b{A}^*\|^2\]. Next, we consider \[\clubsuit_4:= \|\b{E}_S[\b{A}^{(t)}_S(\b{I} - \Lambda^{(t)}_S)^2\b{A}^{(t)^\top}_S]\|  \] in \eqref{iht:mat_W_var_term_1} which can also be bounded similarly as
	\begin{align*}
	\clubsuit_4 
	=  \mathcal{O}(\tfrac{k^2}{m})\|\b{A}^*\|^2.
	\end{align*}

	Therefore, we have the following for \[\clubsuit\] in \eqref{iht:mat_W_var} 
	\begin{align}
	\label{iht:mat_W_var_term_1_sim}
	\clubsuit:=\b{E}_S[(\b{A}^*_S  - \b{A}^{(t)}_S(\b{I} - \Lambda^{(t)}_S))(\b{A}^{*^\top}_S  - (\b{I} - \Lambda^{(t)}_S)\b{A}^{(t)^\top}_S)]
	&= \mathcal{O}(\tfrac{k^2}{m})\|\b{A}^*\|^2.
	\end{align}
	Consider \[\spadesuit\]  in \eqref{iht:mat_W_var}. Letting \[\b{M} = \b{E}_{\b{x}^*_S}[\vartheta^{(R)}_S\b{x}^{*^\top}_S|S]\], and using the analysis similar to that shown in \ref{iht:bound_vareps}, we have that elements of  \[\b{M} \in \mathbb{R}^{k \times k}\] are given by
		\begin{align*}
	\b{M}_{i,j} =	\b{E}_{\b{x}_S^*}[\vartheta^{(R)}_{i}\b{x}^*_{j}|S] 
		\begin{cases}
		\leq  \mathcal{O}(\gamma^{(R)}_i), & ~\text{for}~ i = j, \\
		 	= \mathcal{O}(\epsilon_t),& ~\text{for}~ i \neq j.
		\end{cases}
		\end{align*}

    We have the following,
	\begin{align*}
	\spadesuit:= \b{E}_S[\b{A}^{(t)}_S\b{E}_{\b{x}^*_S}[\vartheta^{(R)}_S\b{x}^{*^\top}_S|S](\b{A}^{*^\top}_S  \hspace{-3pt} - (\b{I} - \Lambda^{(t)}_S)\b{A}^{(t)^\top}_S)]  = \b{E}_S[\b{A}^{(t)}_S\b{M}(\b{A}^{*^\top}_S \hspace{-3pt} - (\b{I} - \Lambda^{(t)}_S)\b{A}^{(t)^\top}_S)].
	\end{align*}
	Therefore, since \[\b{E}_S[\mathbbm{1}_{i,j\in S}|S] = \mathcal{O}(\tfrac{k^2}{m^2})\], and \[\|\b{1}^{m \times m}\| = m\],
	\begin{align*}
	\spadesuit 
	:=\|\b{E}_S[\b{A}^{(t)}_S\b{M}(\b{A}^{*^\top}_S  - (\b{I} - \Lambda^{(t)}_S)&\b{A}^{(t)^\top}_S)]\|\\
	&= \|\textstyle\sum\limits_{i,j = 1}^{m} \b{M}_{i,j}\b{A}^{(t)}_i(\b{A}^{*^\top}_j  - (1 - \lambda^{(t)}_j)\b{A}^{(t)^\top}_j) \b{E}_S[\mathbbm{1}_{i,j\in S}|S]\|,\\
	&= \mathcal{O}(\epsilon_t)\|\textstyle\sum\limits_{i,j = 1}^{m} \b{A}^{(t)}_i(\b{A}^{*^\top}_j  - (1 - \lambda^{(t)}_j)\b{A}^{(t)^\top}_j) \b{E}_S[\mathbbm{1}_{i,j\in S}|S]\|,\\
		&= \mathcal{O}(\epsilon_t)\mathcal{O}(\tfrac{k^2}{m^2})  (\|\b{A}^{(t)}\b{1}^{m \times m} \b{A}^{*^\top}\|  + \|\b{A}^{(t)}\b{1}^{m \times m} \b{A}^{(t)^\top}\|),\\
		&= \mathcal{O}(\epsilon_t)\mathcal{O}(\tfrac{k^2}{m}) \|\b{A}^{*}\|^2.
	\end{align*}
	
	Therefore,
%
	\begin{align*}
	\spadesuit:= \|\b{E}_S[\b{A}^{(t)}_S\b{M}(\b{A}^{*^\top}_S  - (\b{I} - \Lambda^{(t)}_S)\b{A}^{(t)^\top}_S)]\|
	\siri{ = \mathcal{O}(\tfrac{k^2}{m}) \epsilon_t\|\b{A}^{*}\|^2.}
	\end{align*}
	Similarly, \[\heartsuit\] in \eqref{iht:mat_W_var} is also bounded as \[\spadesuit\]. Next, we consider \[\diamondsuit\] in \eqref{iht:mat_W_var}. In this case, letting \[\b{E}_{\b{x}^*_S}[\vartheta^{(R)}_S(\vartheta^{(R)}_S)^\top|S] = \b{N}\], where \[\b{N} \in \mathbb{R}^{k \times k}\] is a matrix whose each entry \[\b{N}_{i,j} \leq |\vartheta^{(R)}_i||\vartheta^{(R)}_j|\]. Further, by Claim~\ref{iht:var_theta_abs}, each element \[\vartheta^{(R)}_{j}\] is upper-bounded as 
	%
	%
		   	\begin{align*}
		   	|\vartheta^{(R)}_{j}| 
		   	& \leq  \siri{\mathcal{O}(t_{\beta})}.
		   	\end{align*}
			with probability at least \[(1 - \delta_{\beta}^{(t)})\]. Therefore,
	\begin{align*}
	\diamondsuit 
	&= \|\textstyle\sum\limits_{i,j = 1}^{m} \b{N}_{i,j}\b{A}^{(t)}_i\b{A}^{(t)^\top}_j \b{E}_S[\mathbbm{1}_{i,j\in S}|S]\|=  \underset{i,j}{\max} |\vartheta^{(R)}_i||\vartheta^{(R)}_j|\mathcal{O}(\tfrac{k^2}{m^2}) \|\textstyle\sum\limits_{i,j = 1}^{m} \b{A}^{(t)}_i\b{A}^{(t)^\top}_j\|.
	\end{align*}
	Again, using the result on \[|\vartheta^{(R)}_{i_1}| \], we have
	\begin{align*}
	\diamondsuit:=\|\b{E}_S[\b{A}^{(t)}_S\b{N}\b{A}^{(t)^\top}_S]\|  
	=  m~\underset{i,j}{\max} |\vartheta^{(R)}_i||\vartheta^{(R)}_j|\mathcal{O}(\tfrac{k^2}{m^2}) \|\b{A}^{(t)}\|\|\b{A}^{(t)}\|
	\siri{= \mathcal{O}(\tfrac{k^2t_{\beta}^2}{m})\|\b{A}^*\|^2.}
	\end{align*}

   Combining all the results for \[\clubsuit, ~\spadesuit, ~\heartsuit\] and \[\diamondsuit\], we have, 
  	\begin{align*}
  	\|\b{E}[(\b{y} - \b{A}^{(t)}\hat{\b{x}})(\b{y} - &\b{A}^{(t)}\hat{\b{x}})^\top]\|\\
  	& =  \siri{\mathcal{O}(\tfrac{k^2}{m})\|\b{A}^*\|^2 +  \mathcal{O}(\tfrac{k^2}{m}) \epsilon_t\|\b{A}^{*}\|^2 +  \mathcal{O}(\tfrac{k^2}{m}) \epsilon_t\|\b{A}^{*}\|^2 + \mathcal{O}(\tfrac{k^2t_{\beta}^2}{m})\|\b{A}^*\|^2,}\\
  	&=\mathcal{O}(\tfrac{k^2t_{\beta}^2}{m})\|\b{A}^*\|^2.
  	\end{align*}
\end{proof}
\section{Additional Experimental Results}\label{app:additional_res}

We now present some additional results to highlight the features of NOODL. Specifically, we compare the performance of NOODL (for both dictionary and coefficient recovery) with the state-of-the-art provable techniques for DL presented in \cite{Arora15} (when the coefficients are recovered via a sparse approximation step after DL)\footnote{ The associated code is made available at \texttt{https://github.com/srambhatla/NOODL}.}. We also compare the performance of NOODL with the popular online DL algorithm in \cite{Mairal09}, denoted by {\texttt{Mairal} `\texttt{09}}. Here, the authors show that alternating between a \[\ell_1\]-based sparse approximation and dictionary update based on block co-ordinate descent converges to a stationary point, as compared to the true factors in case of NOODL.

\noindent\textbf{Data Generation:} We generate a \[(n = 1000) \times (m = 1500)\] matrix, with entries drawn from \[\c{N}(0,1)\], and normalize its columns to form the ground-truth dictionary \[\b{A}^*\]. Next, we perturb \[\b{A}^*\] with random Gaussian noise, such that the unit-norm columns of the resulting matrix, \[\b{A}^{(0)}\] are \[2/\log(n)\] away from \[\b{A}^*\], in \[\ell_2\]-norm sense, i.e., \[\epsilon_0 = 2/\log(n)\]; this satisfies the initialization assumptions in \ref{assumption:close}. At each iteration, we generate \[p = 5000\] samples \[\b{Y} \in \RR^{1000 \times 5000}\] as \[\b{Y} = \b{A}^*\b{X}^*\], where \[\b{X}^* \in \RR^{m \times p}\] has at most \[k = 10,~20, ~50,~\text{and} ~100\], entries per column, drawn from the Radamacher distribution.  We report the results in terms of relative Frobenius error for all the experiments, i.e., for a recovered matrix \[\hat{\b{M}}\], we report \[{\|\hat{\b{M}} - \b{M}^*\|_{\Fr}}/{\|\b{M}^*\|_{\Fr}}\]. To form the coefficient estimate for {\texttt{Mairal} `\texttt{09}} via Lasso \citep{Tibshirani1996} we use the FISTA \citep{Beck09} algorithm by searching across $10$ values of the regularization parameter at each iteration. Note that, although our phase transition analysis for NOODL shows that \[p=m\] suffices, we use \[p=5000\] in our convergence analysis for a fair comparison with related techniques. 

\subsection{Coefficient Recovery}
Table~\ref{table_res} summarizes the results of the convergence analysis shown in Fig.~\ref{fig:recovery}. Here, we compare the dictionary and coefficient recovery performance of NOODL with other techniques. For \texttt{Arora15(``biased'')} and \texttt{Arora15(``unbiased'')}, we report the error in recovered coefficients after the HT step (\[\Xb_{\rm HT}\]) and the best error via sparse approximation using Lasso\footnote{We use the  Fast Iterative Shrinkage-Thresholding Algorithm (FISTA) \citep{Beck09}, which is among the most efficient algorithms for solving the \[\ell_1\]-regularized problems. Note that, in our experiments we fix the step-size for FISTA as \[1/L\], where \[L\] is the estimate of the Lipschitz constant (since \[\b{A}\] is not known exactly).} \cite{Tibshirani1996}, denoted as \[\Xb_{\rm Lasso}\], by scanning over \[50\] values of regularization parameter. For {\texttt{Mairal} `\texttt{09}} at each iteration of the algorithm we scan across $10$ values\footnote{ Note that, although scanning across $50$ values of the regularization parameter for this case would have led to better coefficient estimates and dictionary recovery, we choose $10$ values for this case since it is very expensive to scan across $50$  of regularization parameter at each step. This also highlights why {\texttt{Mairal} `\texttt{09}} may be prohibitive for large scale applications.} of the regularization parameter, to recover the best coefficient estimate using Lasso ( via FISTA), denoted as \[\b{X}_{\rm Lasso}\].

We observe that NOODL exhibits significantly superior performance across the board. Also, we observe that using sparse approximation after dictionary recovery, when the dictionary suffers from a bias, leads to poor coefficient recovery\footnote{When the dictionary is not known exactly, the guarantees may exist on coefficient recovery only in terms of closeness in \[\ell_2\]-norm sense, due to the error-in-variables (EIV) model for the dictionary \citep{Fuller2009, Wainwright2009}.}, as is the case with  \texttt{Arora15(``biased'')}, \texttt{Arora15(``unbiased'')}, and {\texttt{Mairal} `\texttt{09}}. This highlights the applicability of our approach in real-world machine learning tasks where coefficient recovery is of interest. In fact, it is a testament to the fact that, even in cases where dictionary recovery is the primary goal, making progress on the coefficients is also important for dictionary recovery.

In addition, the coefficient estimation step is also online in case of NOODL, while for the state-of-the-art provable techniques (which only recover the dictionary and incur bias in estimation) need additional sparse approximation step for coefficient recovery. Moreover, these sparse approximation techniques (such as Lasso) are expensive to use in practice, and need significant tuning. 

{\small\begin{table}[!t]
\centering
\caption{\footnotesize Final error in recovery of the factors by various techniques and the computation time taken per iteration (in seconds) corresponding to Fig.~\ref{fig:recovery} across techniques. We report the coefficient estimate after the HT step (in \cite{Arora15}) as $\Xb_{\rm HT}$. For the techniques presented in \cite{Arora15}, we scan across $50$ values of the regularization parameter for coefficient estimation using Lasso after learning the dictionary ($\mathbf{A}$), and report the optimal estimation error for the coefficients ($\mathbf{X}_{\rm Lasso}$), while for {\texttt{Mairal} `\texttt{09}}, at each step the coefficients estimate is chosen by scanning across $10$ values of the regularization parameters. For $k=100$, the algorithms of \cite{Arora15} do not converge (shown as N/A). }
\resizebox{\textwidth}{!}{
\begin{tabular}{c|L{3.6cm}|c|c|c|c}
\hline
\textbf{Technique}& \textbf{Recovered Factor ~~~~~~and Timing} & $\b{k = 10}$& $\b{k = 20}$& $\b{k = 50}$ & $\b{k = 100}$\\
\hline
 \multirow{3}{*}{\texttt{NOODL}} & $\b{A}$ & $9.44\times10^{-11}$ & $8.82\times10^{-11}$ & $9.70\times10^{-11}$ & $7.33\times10^{-11}$\\ 
							                     & $\Xb$ & $1.14\times10^{-11}$ & $1.76\times10^{-11}$  & $3.58\times10^{-11}$& $4.74\times10^{-11}$ \\\cline{2-6}
							                     & \textbf{Avg. Time/iteration}& $46.500$ sec & $53.303$ sec & $64.800$ sec & $96.195$ sec\\
\hline
\multirow{7}{2.8cm}{\texttt{~~Arora15 (``biased'')}} & $\Ab$ & $0.013$ & $0.031$ &$0.137$ & N/A \\ 
                             & $\Xb_{\rm HT}$ & $0.077$ & $0.120$ & $0.308$ & N/A \\ 
                                 & $\Xb_{\rm Lasso}$ & $0.006$ & $0.018$ & $0.097$ & N/A \\ \cline{2-6}
                                 & \textbf{Avg. Time/iteration} ~~~~~~(\textit{Accounting for one Lasso update})& $39.390$ sec& $39.371$ sec&$39.434$ sec& $40.063$ sec \\ 
                                 & \textbf{Avg. Time/iteration} ~~~~~~(\textit{Overall Lasso search})& $389.368$ sec& $388.886$ sec&$389.566$ sec& $395.137$ sec \\ 
\hline
\multirow{7}{2.8cm}{\texttt{~~~Arora15 (``unbiased'')}} & $\Ab$ & $0.011$ & $0.027$ & $0.148$ & N/A\\ 
								 & $\Xb_{\rm HT}$ & $0.078$ & $0.122$ & $0.371$ & N/A \\ 
								     & $\Xb_{\rm Lasso}$ & $0.005$ & $0.015$ & $0.0921$ & N/A \\ \cline{2-6}
								     & \textbf{Avg. Time/iteration} ~~~~~~(\textit{Accounting for one Lasso update})& $567.830$ sec & $597.543$ sec& $592.081$ sec & $686.694$ sec\\ 
								     & \textbf{Avg. Time/iteration} ~~~~~~(\textit{Overall Lasso search})& $917.809$ sec & $947.059$ sec& $942.212$ sec & $1041.767$ sec\\ 
\hline
 \multirow{6}{*}{{\texttt{Mairal} `\texttt{09}}} & $\b{A}$ & $0.009$ & $0.015$ & $0.021$ & $0.037$\\ 
							              & $\Xb_{\rm Lasso}$ & $0.183$    & $0.209$  & $0.275$& $0.353$ \\\cline{2-6}
							              & \textbf{Avg. Time/iteration} ~~~~~~(\textit{Accounting for one Lasso update})&  $39.110$ sec& $39.077$ sec&$39.163$ sec& $39.672$ sec\\ 
							              & \textbf{Avg. Time/iteration} ~~~~~~(\textit{Overall Lasso search})& $388.978$ sec& $388.614$ sec&$389.512$ sec& $394.566$ sec\\ 
\hline
\end{tabular}}
\label{table_res}
\end{table}}

\vspace{-6pt}
\subsection{Computational Time}

In addition to these convergence results, we also report the computational time taken by each of these algorithms in Table \ref{table_res}. The results shown here were compiled using $\rm{5}$ cores and  $\rm{200}$GB RAM of Intel Xeon E$\rm{5-2670}$ Sandy Bridge and Haswell E5-2680v3 processors. 

The primary takeaway is that although NOODL takes marginally more time per iteration as compared to other methods when accounting for just one Lasso update step for the coefficients, it (a) is in fact faster per iteration since it does not involve any computationally expensive tuning procedure to scan across regularization parameters; owing to its geometric convergence property (b) achieves orders of magnitude superior error at convergence, and as a result, (c) overall takes significantly less time to reach such a solution. Further, NOODL's computation time can be further reduced via implementations using the neural architecture illustrated in Section~\ref{app:neural}.

Note that since the coefficient estimates using just the HT step at every step may not yield a usable result for \texttt{Arora15(``unbiased'')} and \texttt{Arora15(``biased'')} as shown in Table~\ref{table_res}, in practice, one has to employ an additional \[\ell_1\]-based sparse recovery step. Therefore, for a fair comparison, we account for running sparse recovery step(s) using Lasso (via the  Fast Iterative Shrinkage-Thresholding Algorithm (FISTA) \citep{Beck09} ) at every iteration of the algorithms \texttt{Arora15(``biased'')} and \texttt{Arora15(``unbiased'')}.

For our technique, we report the average computation time taken per iteration. However, for the rest of the techniques, the coefficient recovery using Lasso (via FISTA) involves a search over various values of the regularization parameters ($10$ values for this current exposition). As a result, we analyze the computation time per iteration via two metrics.  First of these is the average computation time taken per iteration by accounting for the average time take per Lasso update (denoted as ``\textit{Accounting for one Lasso update}''), and the second is the average time taken per iteration to scan over all ($10$) values of the regularization parameter (denoted as ``\textit{Overall Lasso search}'') .

 As shown in Table \ref{table_res}, in comparison to NOODL the techniques described in \cite{Arora15} still incur a large error at convergence, while the popular online DL algorithm of \cite{Mairal09} exhibits very slow convergence rate. Combined with the convergence results shown in Fig.~\ref{fig:recovery}, we observe that due to NOODL's superior convergence properties, it is overall faster and also geometrically converges to the true factors. This again highlights the applicability of NOODL in practical applications, while guaranteeing convergence to the true factors.

\section{Appendix: Standard Results}\label{useful results}

\begin{definition}[sub-Gaussian Random variable]
	Let \[x \sim {\rm subGaussian}(\sigma^2)\]. Then, for any \[t>0\], it holds that
	\begin{align*}
	\b{Pr}[|x| > t] \leq 2\exp\big(\tfrac{t^2}{2\sigma^2}\big).
	\end{align*}
\end{definition}
\subsection{Concentration results}

\begin{lemma}[Matrix Bernstein \citep{Tropp2015}] \label{matrix_bernstein}
Consider a finite sequence {\[\b{W}_k \in \mathbb{R}^{n \times m} \]} of independent, random, centered matrices with dimension \[n\]. Assume that each random matrix satisfies \[\b{E}[\b{W}_k] = 0\] and \[\|\b{W}_k\| \leq R\] almost surely. Then, for all \[t\geq 0\],
\begin{align*}
\b{Pr}\big\{ \|\textstyle\sum\limits_{k} \b{W}_k\| \geq t\big\} \leq (n+m) {\rm exp}\big( \tfrac{-t^2/2}{\sigma^2 + Rt/3}\big),
\end{align*}
where \[\sigma^2 := \max\{\|\textstyle\sum\limits_{k} \b{E}[\b{W}_k\b{W}_k^\top]\|, \|\textstyle\sum\limits_{k} \b{E}[\b{W}_k^\top \b{W}_k]\|\}\]. 

Furthermore,
\begin{align*}
\b{E}[\|\textstyle\sum\limits_{k} \b{W}_k\|] \leq \sqrt{2 \sigma^2 \log(n+m)} + \tfrac{1}{3}R\log(n+m).
\end{align*}
\end{lemma}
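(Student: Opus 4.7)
The plan is to extend the scalar Chernoff--Bernstein moment generating function method to matrix-valued sums via the matrix Laplace transform, whose key enabling ingredient is Lieb's concavity theorem. I would not attempt to re-derive the bound from scratch but rather organize the standard four-step reduction.

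First, since the lemma concerns rectangular $n \times m$ matrices, I would reduce to the Hermitian case via the self-adjoint dilation
\begin{equation*}
\tilde{\b{W}}_k = \begin{pmatrix} \b{0} & \b{W}_k \\ \b{W}_k^\top & \b{0} \end{pmatrix} \in \mathbb{R}^{(n+m)\times(n+m)}.
\end{equation*}
This dilation preserves the operator norm, satisfies $\tilde{\b{W}}_k^2 = \mathrm{diag}(\b{W}_k \b{W}_k^\top,\, \b{W}_k^\top \b{W}_k)$, and obeys $\lambda_{\max}\bigl(\sum_k \tilde{\b{W}}_k\bigr) = \bigl\|\sum_k \b{W}_k\bigr\|$, so the variance statistic $\sigma^2$ and the almost-sure bound $R$ transfer with no change.

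For the Hermitian sum $\b{Z} = \sum_k \tilde{\b{W}}_k$, I would apply the matrix Markov--Laplace inequality
\begin{equation*}
\b{Pr}\bigl[\lambda_{\max}(\b{Z}) \geq t\bigr] \leq \inf_{\theta > 0} e^{-\theta t}\, \b{E}\bigl[\mathrm{tr}\, \exp(\theta \b{Z})\bigr].
\end{equation*}
The critical move is to decouple the expectation over the independent summands via Lieb's concavity theorem (the concavity of $\b{A} \mapsto \mathrm{tr}\, \exp(\b{H} + \log \b{A})$ on the positive cone), which yields $\b{E}[\mathrm{tr}\, \exp(\theta \b{Z})] \leq \mathrm{tr}\, \exp\bigl(\sum_k \log \b{E}[e^{\theta \tilde{\b{W}}_k}]\bigr)$. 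For each summand I would use the operator inequality $\b{E}[e^{\theta \tilde{\b{W}}_k}] \preceq \exp\bigl(g(\theta)\, \b{E}[\tilde{\b{W}}_k^2]\bigr)$ with $g(\theta) = (e^{\theta R} - \theta R - 1)/R^2$, obtained by applying the scalar inequality $e^x - 1 - x \leq g(R) x^2$ on $|x| \leq R$ to the spectrum of each zero-mean, norm-bounded summand. Summing and invoking trace monotonicity of $\exp$, the bound reduces to $(n+m)\exp(g(\theta)\sigma^2 - \theta t)$; optimizing at $\theta = R^{-1}\log(1 + tR/\sigma^2)$ and using $g(\theta) \leq \theta^2/(2 - 2\theta R/3)$ yields the claimed tail.

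The expectation bound then follows by the standard tail integration $\b{E}[\|\b{Z}\|] = \int_0^\infty \b{Pr}[\|\b{Z}\| \geq t]\, dt$, split at the transition threshold $t_\star \asymp \sqrt{\sigma^2 \log(n+m)} + R \log(n+m)$ between the sub-Gaussian and sub-exponential regimes of the tail; integrating each piece against the Bernstein tail produces the $\sqrt{2\sigma^2 \log(n+m)} + \tfrac{1}{3} R \log(n+m)$ bound. The main obstacle is Lieb's concavity theorem itself: it is the only nontrivial ingredient, and it replaces the trivial scalar identity $\b{E}\prod_k e^{\theta X_k} = \prod_k \b{E}[e^{\theta X_k}]$, which fails for non-commuting matrices. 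Once this decoupling is in hand, the remainder is the standard scalar Bernstein optimization lifted verbatim through the trace.
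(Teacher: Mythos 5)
The paper does not prove this lemma; it is imported verbatim as a standard concentration result from Tropp (2015), so there is no in-paper proof to compare against. Your sketch is nonetheless a faithful and correct outline of the canonical Tropp argument: the self-adjoint dilation to reduce to the Hermitian case (with the correct observations that the dilation preserves the operator norm, that its square block-diagonalizes into $\b{W}_k\b{W}_k^\top$ and $\b{W}_k^\top\b{W}_k$ so the variance statistic transfers, and that $\lambda_{\max}$ of the dilated sum equals the operator norm of the original sum), the matrix Laplace transform bound, the decoupling of the MGF via Lieb's concavity theorem to obtain subadditivity of matrix cumulants, the per-summand Bernstein MGF bound with $g(\theta) = (e^{\theta R}-\theta R-1)/R^2 \leq \tfrac{\theta^2/2}{1-\theta R/3}$ on $0<\theta<3/R$, and the final optimization over $\theta$. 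One small remark: for the expectation bound, Tropp's route is to bound $\b{E}[\lambda_{\max}(\b{Z})]$ directly through the logarithm of the trace-MGF and then optimize in $\theta$, rather than integrating the tail; the tail-integration route you describe is valid and gives the same order of magnitude but would require a bit of care to recover the exact constants $\sqrt{2\sigma^2\log(n+m)}$ and $\tfrac{1}{3}R\log(n+m)$ as stated. Since this is a cited lemma, no proof is expected in the paper, and your sketch would serve as a correct pointer to the argument.
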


\begin{lemma}[Vector Bernstein \citep{Tropp2015}] \label{vector_bernstein}
	Consider a finite sequence {\[\b{w}_k \in \mathbb{R}^{n} \]} of independent, random, zero mean vectors with dimension \[n\]. Assume that each random vector satisfies \[\b{E}[\b{w}_k] = 0\] and \[\|\b{w}_k\| \leq R\] almost surely. Then, for all \[t\geq 0\],
	\begin{align*}
	\b{Pr}\big\{ \|\textstyle\sum\limits_{k} \b{w}_k\| \geq t\big\} \leq 2n {\rm exp}\big( \tfrac{-t^2/2}{\sigma^2 + Rt/3}\big),
	\end{align*}
	where \[\sigma^2 := \|\textstyle\sum\limits_{k} \b{E}[\|\b{w}_k\|^2]\|\]. 
	Furthermore,
	\begin{align*}
	\b{E}[\|\textstyle\sum\limits_{k} \b{w}_k\|] \leq \sqrt{2 \sigma^2 \log(2n)} + \tfrac{1}{3}R\log(2n).
	\end{align*}
	
\end{lemma}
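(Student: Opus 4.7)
The plan is to derive the vector Bernstein inequality as a direct corollary of the matrix Bernstein inequality (Lemma~\ref{matrix_bernstein}) stated immediately before it. Each random vector $\b{w}_k \in \mathbb{R}^n$ is reinterpreted as a random $n \times 1$ matrix, and the hypotheses $\b{E}[\b{w}_k] = 0$ and $\|\b{w}_k\| \le R$ carry over unchanged, since the operator norm of a single-column matrix coincides with the Euclidean length of that column.

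The core computation is then to show that the matrix variance statistic reduces to the vector variance parameter $\sigma^2 = \sum_k \b{E}[\|\b{w}_k\|^2]$ of the lemma. For an $n \times 1$ matrix $\b{w}$, the product $\b{w}^\top \b{w} = \|\b{w}\|^2$ is a scalar, so $\|\sum_k \b{E}[\b{w}_k^\top \b{w}_k]\| = \sum_k \b{E}[\|\b{w}_k\|^2]$ holds with equality. For the other term, $\b{w}_k \b{w}_k^\top$ is a rank-one positive semidefinite matrix, and the spectral norm of a sum of PSD matrices is dominated by its trace, giving
\[
\big\|\textstyle\sum_k \b{E}[\b{w}_k \b{w}_k^\top]\big\| \le \mathrm{tr}\big(\textstyle\sum_k \b{E}[\b{w}_k \b{w}_k^\top]\big) = \textstyle\sum_k \b{E}[\|\b{w}_k\|^2].
\]
Feeding these quantities into Lemma~\ref{matrix_bernstein} with dimensions $n$ and $m = 1$ yields the dimensional prefactor $n + 1$, which is at most $2n$ for $n \ge 1$, reproducing exactly the stated probabilistic tail bound.

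For the expectation bound, the plan is the standard tail-integration argument. Let $Z := \|\sum_k \b{w}_k\|$ and write $\b{E}[Z] = \int_0^\infty \b{Pr}\{Z \ge t\}\, dt$. Split the integral at the threshold $t_0 := \sqrt{2\sigma^2 \log(2n)} + \tfrac{R}{3} \log(2n)$, bound the first piece trivially by $t_0$, and bound the remainder using the tail inequality just established. The Gaussian regime $t \lesssim \sigma^2/R$ contributes the $\sqrt{2\sigma^2 \log(2n)}$ term, while the Poisson regime $t \gtrsim \sigma^2/R$ contributes the $\tfrac{R}{3}\log(2n)$ term.

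The main technical obstacle is not probabilistic but bookkeeping. The trace domination $\|\sum_k \b{E}[\b{w}_k \b{w}_k^\top]\| \le \sum_k \b{E}[\|\b{w}_k\|^2]$ is generally loose (tight only when the $\b{w}_k$ are aligned), so the ``vector'' variance systematically overestimates the matrix variance; one must simply accept this slack. Additionally, the choice of split point $t_0$ must be carried out carefully to recover precisely the constants in the stated expectation bound rather than weaker universal constants.
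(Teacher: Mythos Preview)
The paper does not prove this lemma; it is simply stated as a standard concentration result with a citation to \cite{Tropp2015}, alongside the matrix Bernstein inequality. Your reduction of the vector case to the matrix case (Lemma~\ref{matrix_bernstein}) by viewing each $\b{w}_k$ as an $n\times 1$ matrix is correct and is exactly the standard derivation in Tropp's monograph.

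One small simplification: you do not need the tail-integration argument for the expectation bound. Lemma~\ref{matrix_bernstein} already supplies
\[
\b{E}\big[\|\textstyle\sum_k \b{W}_k\|\big] \le \sqrt{2\sigma_{\mathrm{mat}}^2 \log(n+m)} + \tfrac{1}{3}R\log(n+m),
\]
and with $m=1$ and your observation that $\sigma_{\mathrm{mat}}^2 \le \sigma^2$, this immediately gives the stated expectation bound since $n+1 \le 2n$. The tail-integration route works too, but is redundant here.
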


\begin{lemma}\textbf{Chernoff Bound for sub-Gaussian Random Variables} \label{theorem:subg_chern} 
	Let \[w\] be an independent sub-Gaussian random variables with variance parameter \[\sigma^2\], then for any \[t>0\] it holds that
	\begin{align*}
	\b{Pr}[|w|>t] \leq 2{\rm exp}(-\tfrac{t^2}{2\sigma^2}).
	\end{align*}
	
\end{lemma}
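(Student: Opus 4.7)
The plan is to apply the classical Chernoff trick (exponential Markov inequality) combined with the moment generating function characterization of sub-Gaussianity that is given immediately above in this appendix. Concretely, I will use only the fact that $w$ sub-Gaussian with variance parameter $\sigma^2$ implies the MGF bound $\mathbf{E}[\exp(\lambda w)] \leq \exp(\lambda^2 \sigma^2/2)$ for every $\lambda \in \mathbb{R}$.

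First, I would control the upper tail. For any fixed $\lambda > 0$, monotonicity of the exponential together with Markov's inequality yields
\[
\mathbf{Pr}[w > t] = \mathbf{Pr}[e^{\lambda w} > e^{\lambda t}] \leq e^{-\lambda t}\,\mathbf{E}[e^{\lambda w}] \leq \exp\bigl(\tfrac{\lambda^2\sigma^2}{2} - \lambda t\bigr).
\]
Next I would minimize the right-hand side over $\lambda > 0$; the optimizer is $\lambda^\star = t/\sigma^2$, which gives $\mathbf{Pr}[w > t] \leq \exp(-t^2/(2\sigma^2))$. Since $-w$ is sub-Gaussian with the same variance parameter, an identical argument applied to $-w$ produces $\mathbf{Pr}[w < -t] \leq \exp(-t^2/(2\sigma^2))$. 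A union bound over the two tails then delivers $\mathbf{Pr}[|w| > t] \leq 2\exp(-t^2/(2\sigma^2))$, which is the stated conclusion.

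There is no substantive obstacle here: the argument reduces to three lines once the MGF bound is in hand, and amounts to a classical textbook calculation. The only item worth a sanity check is the constant appearing in the definition of ``variance parameter''; the bound as stated is consistent with the convention that the MGF is dominated by $\exp(\lambda^2\sigma^2/2)$, which is the convention implicitly adopted elsewhere in the paper (for instance, the proof of Claim~\ref{lem:bound_beta} invokes exactly this lemma to pass from a sub-Gaussian variance parameter of $9k\epsilon_t^2$ for $\beta^{(t)}_j$ to a tail of the form $2\exp(-t_\beta^2/(18k\epsilon_t^2))$, matching the $t^2/(2\sigma^2)$ exponent obtained above).
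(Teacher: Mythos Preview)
Your proof is correct and is the standard Chernoff/exponential-Markov argument. The paper itself does not supply a proof of this lemma: it is listed in Appendix~\ref{useful results} under ``Standard Results'' as a quoted fact, so there is no in-paper argument to compare against. Your derivation (and your sanity check against the constants in Claim~\ref{lem:bound_beta}) is exactly what one would write if asked to fill in the omitted proof.
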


\begin{lemma}[\textbf{Sub-Gaussian concentration \citep{Rudelson13}}]\label{thm:Hanson}
	Let \[\b{M} \in \mathbb{R}^{n \times m}\] be a fixed matrix. Let \[\b{w}\] be a vector of independent, sub-Gaussian random variables with mean zero and variance one. Then, for an absolute constant \[c\],
	\begin{align*}
	\b{Pr}[\|\b{M}\b{x}\|_2 - \|\b{M}\|_F> t] \leq {\rm exp}(-\tfrac{ct^2}{\|\b{M}\|^2}).
	\end{align*}

\end{lemma}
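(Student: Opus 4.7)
The statement is a one‑sided concentration bound for the Euclidean norm $\|\b{M}\b{w}\|_2$ around its Frobenius ``mean'' $\|\b{M}\|_F$, and the natural plan is to reduce it to a Hanson--Wright type bound on the quadratic form $\b{w}^{\top}(\b{M}^{\top}\b{M})\b{w}$, then linearize the square root. The key structural observation is that, since the entries of $\b{w}$ are independent with mean zero and unit variance,
\begin{align*}
\b{E}[\|\b{M}\b{w}\|_2^{2}] \;=\; \text{tr}(\b{M}^{\top}\b{M}) \;=\; \|\b{M}\|_F^{2},
\end{align*}
so $\|\b{M}\|_F^{2}$ is the correct center for the squared norm and, by Jensen, an upper bound for $\b{E}\|\b{M}\b{w}\|_2$.

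\textbf{Step 1 (Hanson--Wright on the quadratic form).} First, I would set $\b{A}:=\b{M}^{\top}\b{M}$ and apply the standard sub‑Gaussian Hanson--Wright inequality to $Z:=\b{w}^{\top}\b{A}\b{w}=\|\b{M}\b{w}\|_2^{2}$, yielding, for some absolute constant $c'>0$,
\begin{align*}
\b{Pr}\!\left[\,|Z-\|\b{M}\|_F^{2}|>u\,\right] \;\leq\; 2\exp\!\left(-c'\min\!\left(\tfrac{u^{2}}{\|\b{A}\|_F^{2}},\,\tfrac{u}{\|\b{A}\|}\right)\right).
\end{align*}
Then I would use the bookkeeping bounds $\|\b{A}\|_F=\|\b{M}^{\top}\b{M}\|_F\leq\|\b{M}\|\cdot\|\b{M}\|_F$ and $\|\b{A}\|=\|\b{M}\|^{2}$ to rewrite the right‑hand side purely in terms of $\|\b{M}\|$ and $\|\b{M}\|_F$.

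\textbf{Step 2 (Linearization of the square root).} The plan here is to convert a tail bound on $\|\b{M}\b{w}\|_2^{2}-\|\b{M}\|_F^{2}$ into a tail bound on $\|\b{M}\b{w}\|_2-\|\b{M}\|_F$. I would use the elementary implication
\begin{align*}
\|\b{M}\b{w}\|_2 > \|\b{M}\|_F + t \;\Longrightarrow\; \|\b{M}\b{w}\|_2^{2} > \|\b{M}\|_F^{2} + 2t\|\b{M}\|_F + t^{2},
\end{align*}
and set $u = 2t\|\b{M}\|_F+t^{2}$. In the small‑$t$ regime the Gaussian branch $u^{2}/\|\b{A}\|_F^{2}$ dominates and gives roughly $4t^{2}\|\b{M}\|_F^{2}/(\|\b{M}\|^{2}\|\b{M}\|_F^{2})=4t^{2}/\|\b{M}\|^{2}$; in the large‑$t$ regime the subexponential branch $u/\|\b{A}\|$ dominates and gives $t^{2}/\|\b{M}\|^{2}$. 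Either way, after adjusting the absolute constant $c'$ down to some $c>0$, both regimes collapse to a single $\exp(-ct^{2}/\|\b{M}\|^{2})$ bound, which is exactly the form claimed.

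\textbf{Main obstacle.} The only non‑mechanical piece is getting a clean \emph{single}‑exponential tail of the stated form $\exp(-ct^{2}/\|\b{M}\|^{2})$ across both the Gaussian and subexponential regimes of Hanson--Wright. A conceptually cleaner alternative, which I would keep in reserve, is to bypass Hanson--Wright and instead argue via Lipschitz concentration: the map $\b{w}\mapsto\|\b{M}\b{w}\|_2$ is convex and $\|\b{M}\|$‑Lipschitz, so Talagrand‑type convex concentration for bounded/sub‑Gaussian coordinates gives $\b{Pr}[\,|\,\|\b{M}\b{w}\|_2-\b{E}\|\b{M}\b{w}\|_2\,|>t\,]\leq 2\exp(-ct^{2}/\|\b{M}\|^{2})$, and then Jensen's inequality $\b{E}\|\b{M}\b{w}\|_2\leq\|\b{M}\|_F$ upgrades this to the one‑sided form in the statement. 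This second route is what avoids all the regime‑splitting and is what I would ultimately write up if the direct Hanson--Wright translation turned out to be messy.
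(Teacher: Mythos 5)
The paper does not prove this lemma; it is quoted verbatim as a standard result and attributed to Rudelson and Vershynin (2013), who state it as a direct corollary of their sub-Gaussian Hanson--Wright inequality. Your primary route --- apply Hanson--Wright to $Z=\b{w}^{\top}(\b{M}^{\top}\b{M})\b{w}$, use $\|\b{M}^{\top}\b{M}\|_F\le\|\b{M}\|\,\|\b{M}\|_F$ and $\|\b{M}^{\top}\b{M}\|=\|\b{M}\|^2$, and then linearize the square root via $\|\b{M}\b{w}\|_2>\|\b{M}\|_F+t\Rightarrow\|\b{M}\b{w}\|_2^2>\|\b{M}\|_F^2+2t\|\b{M}\|_F+t^2$ --- is precisely the argument in that reference, so the proposal matches the intended proof.

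One small simplification: the regime-splitting you identify as the ``main obstacle'' is unnecessary. With $u=2t\|\b{M}\|_F+t^2$ you have simultaneously $u\ge 2t\|\b{M}\|_F$ and $u\ge t^2$, so the Gaussian branch satisfies $u^2/\|\b{M}^{\top}\b{M}\|_F^2\ge 4t^2/\|\b{M}\|^2$ and the sub-exponential branch satisfies $u/\|\b{M}^{\top}\b{M}\|\ge t^2/\|\b{M}\|^2$ for every $t>0$; hence $\min(\cdot,\cdot)\ge t^2/\|\b{M}\|^2$ uniformly and the single exponential drops out at once. Also, be cautious with the proposed fallback: Talagrand's convex-Lipschitz concentration is stated for \emph{bounded} coordinates (or Gaussian measure). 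Extending it to general unbounded sub-Gaussian entries is nontrivial and is in fact one of the contributions of the Rudelson--Vershynin paper you are reproducing, so that route is not a genuine shortcut around Hanson--Wright in this setting.
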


\subsection{Results from \citep{Arora15}}

\begin{lemma}[\citep{Arora15} Lemma 45] \label{tech_lem}
Suppose that the distribution of \[\b{Z}\] satisfies \[\b{Pr}[\|\b{Z}\|\geq L(\log(1/\rho))^C] \leq \rho]\] for some constant \[C>0\], then 
\begin{enumerate}
	\item If \[p = n^{\mathcal{O}(1)}\] then \[\|\b{Z}^{(j)}\| \leq \tilde{\mathcal{O}}(L)\] holds for each \[j\]  with probability at least\[(1 - \rho)\] and,
	\item \[\|\b{E}[\b{Z}\mathbbm{1}_{\|\b{Z}\|\geq \tilde{\Omega}(L)}]\| = n^{- \omega(1)}\].
\end{enumerate}
In particular, if \[\tfrac{1}{p}\sum_{j =1}^{p} \b{Z}^{(j)} (1 - \mathbbm{1}_{\|\b{Z}\|\geq \tilde{\Omega}(L)})\] is concentrated with probability \[( 1- \rho)\], then so is \[\tfrac{1}{p}\sum_{j = 1}^{p}\b{Z}^{(j)}\].
\end{lemma}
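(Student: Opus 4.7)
The plan is to treat the two bullet points and the ``in particular'' statement separately, since each follows from the same tail-inversion of the hypothesis but with different quantifiers. First, I would rewrite the hypothesis in a more usable form: setting $t = L(\log(1/\rho))^C$ and solving for $\rho$ gives, for every $t \geq L$,
\[
\b{Pr}[\|\b{Z}\|\ge t] \;\le\; \exp\!\bigl(-(t/L)^{1/C}\bigr),
\]
so $\|\b{Z}\|$ has a stretched-exponential tail with shape parameter $1/C$. This is the only fact I will use about the distribution.

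For part (1), I would apply the hypothesis with $\rho \to \rho/p$ and take a union bound over the $p$ independent copies $\b{Z}^{(1)},\dots,\b{Z}^{(p)}$. With $p = n^{\mathcal{O}(1)}$, we have $\log(p/\rho) = \mathcal{O}(\log n) + \log(1/\rho)$, so the threshold becomes $L(\mathcal{O}(\log n))^C = \tilde{\mathcal{O}}(L)$, hiding the polylog factors inside the $\tilde{\mathcal{O}}$ as promised. This gives $\max_j \|\b{Z}^{(j)}\| \le \tilde{\mathcal{O}}(L)$ with probability at least $1-\rho$.

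For part (2), I would pick a truncation level $T := L(\log n)^{C+1}$ (any power of $\log n$ strictly larger than $C$ works) and bound the truncated expectation via the standard layer-cake identity
\[
\bigl\|\b{E}[\b{Z}\,\mathbbm{1}_{\|\b{Z}\|\ge T}]\bigr\| \;\le\; \b{E}\bigl[\|\b{Z}\|\,\mathbbm{1}_{\|\b{Z}\|\ge T}\bigr] \;=\; T\,\b{Pr}[\|\b{Z}\|\ge T] + \int_{T}^{\infty} \b{Pr}[\|\b{Z}\|\ge t]\,dt.
\]
Plugging in the stretched-exponential tail, both terms are bounded by $\exp(-(\log n)^{1+1/C})$ up to polynomial factors in $T$, and since $(\log n)^{1+1/C}/\log n = (\log n)^{1/C} \to \infty$, this quantity is $n^{-\omega(1)}$, as claimed. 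Note that $\tilde{\Omega}(L)$ is precisely $T$ up to polylog choices, which matches the statement.

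Finally, for the ``in particular'' claim, I would decompose $\tfrac{1}{p}\sum_j \b{Z}^{(j)} = \tfrac{1}{p}\sum_j \b{Z}^{(j)}(1 - \mathbbm{1}_{\|\b{Z}^{(j)}\|\ge \tilde{\Omega}(L)}) + \tfrac{1}{p}\sum_j \b{Z}^{(j)}\mathbbm{1}_{\|\b{Z}^{(j)}\|\ge \tilde{\Omega}(L)}$. By part (1), with probability at least $1-\rho$ every indicator vanishes, so the second term is zero; simultaneously, the expectation of each summand in the second term is $n^{-\omega(1)}$ by part (2), so the mean of the truncated version differs from $\b{E}[\b{Z}]$ by a negligible $n^{-\omega(1)}$ correction. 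Hence concentration of the truncated empirical average transfers directly to concentration of $\tfrac{1}{p}\sum_j \b{Z}^{(j)}$ around $\b{E}[\b{Z}]$. The only mildly delicate point is choosing the truncation level $T$ so that both (a) it lies in the ``good'' region where the tail bound is effective and (b) the resulting failure probability is genuinely super-polynomially small rather than merely polynomially small; the freedom in choosing the exponent of $\log n$ in $T$ is what makes this work, and is the main obstacle to writing the proof cleanly.
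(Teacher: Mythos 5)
This statement (Lemma~\ref{tech_lem}) is cited directly from \citep{Arora15} and appears in the paper's ``Standard Results'' appendix; the paper does not provide its own proof. Your reconstruction is correct and follows the standard argument: invert the stated tail bound to get a stretched-exponential tail $\b{Pr}[\|\b{Z}\|\geq t] \leq \exp(-(t/L)^{1/C})$, prove part (1) by a union bound over the $p = n^{\mathcal{O}(1)}$ samples (which only inflates the threshold by a $\mathrm{polylog}(n)$ factor), prove part (2) by truncating at a slightly higher polylog level and using a layer-cake/incomplete-gamma estimate to get a super-polynomially small truncated moment, and then combine the two to transfer concentration from the truncated average to the full one. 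One point worth being slightly more explicit about in a polished write-up: in part (1) the argument implicitly assumes $\log(1/\rho) = \mathrm{poly}(\log n)$ so that $L(\log(p/\rho))^C$ stays within the $\tilde{\mathcal{O}}(L)$ envelope, and in the ``in particular'' step one should note that the shift between $\b{E}[\b{Z}]$ and the truncated mean is $n^{-\omega(1)}$, which is dominated by any meaningful concentration radius, so the failure event is the union of the truncated-sum failure event and the $\rho$-probability event from part (1). These are minor bookkeeping matters; the core argument is sound and matches the proof in Arora et al.
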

\begin{lemma}[Theorem 40 \citep{Arora15}]\label{arora:thm40}
	Suppose random vector \[\b{g}^{(t)}\] is a \[(\rho_{\_}, \rho_{_+}, \zeta_t)\]-correlated with high probability with \[\b{z}^*\] for \[t \in [T] \] where \[T \leq poly(n)\], and \[\eta_A\] satisfies \[0 < \eta_A \leq 2\rho_{_+}\], then for any \[t \in [T]\], 
	\begin{align*}
	\b{E}[\|\b{z}^{(t+1)}- \b{z}^*\|^2] \leq (1 - 2\rho_{\_}\eta_A)\|\b{z}^{(t)} - \b{z}^*\| + 2\eta_A \zeta_t.
	\end{align*}
	In particular, if \[\|\b{z}^{(0)} - \b{z}^*\| \leq  \epsilon_0 \] and \[ \zeta_t \leq (\rho_{\_})o((1 - 2\rho_{\_}\eta)^t) \epsilon_0^2 +  \zeta\], where \[ \zeta = {\max}_{t \in [T]}  \zeta_t\], then the updates converge to \[\b{z}^*\] geometrically with systematic error \[ \zeta/\rho_{\_}\] in the sense that
	\begin{align*}
	\b{E}[\|\b{z}^{(t+1)} - \b{z}^*\|^2] \leq (1 - 2\rho_{\_}\eta_A)^t\epsilon_0^2 +  \zeta/\rho_{\_}. 
	\end{align*}
\end{lemma}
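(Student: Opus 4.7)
The plan is to expand the one-step squared error under the update $\b{z}^{(t+1)} = \b{z}^{(t)} - \eta_A \b{g}^{(t)}$, invoke the correlation hypothesis to control the cross term, and use the step-size restriction to eliminate the $\|\b{g}^{(t)}\|^2$ term. The second half then follows by unrolling the resulting linear recursion and carefully splitting the contribution of the driving noise $\zeta_t$.

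First I would write
\[
\|\b{z}^{(t+1)} - \b{z}^*\|^2 = \|\b{z}^{(t)} - \b{z}^*\|^2 - 2\eta_A \langle \b{g}^{(t)}, \b{z}^{(t)} - \b{z}^*\rangle + \eta_A^2 \|\b{g}^{(t)}\|^2.
\]
The $(\rho_{\_}, \rho_{_+}, \zeta_t)$-correlation property gives the lower bound
$\langle \b{g}^{(t)}, \b{z}^{(t)}-\b{z}^*\rangle \geq \rho_{\_} \|\b{z}^{(t)}-\b{z}^*\|^2 + \rho_{_+} \|\b{g}^{(t)}\|^2 - \zeta_t$,
which, after substitution, yields
\[
\|\b{z}^{(t+1)}-\b{z}^*\|^2 \leq (1-2\rho_{\_}\eta_A)\|\b{z}^{(t)}-\b{z}^*\|^2 + (\eta_A^2 - 2\rho_{_+} \eta_A)\|\b{g}^{(t)}\|^2 + 2\eta_A\zeta_t.
\]
The hypothesis $0 < \eta_A \leq 2\rho_{_+}$ makes the coefficient $\eta_A^2 - 2\rho_{_+}\eta_A$ non-positive, so this entire middle term can be discarded. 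Taking expectations produces the first display of the lemma (reading the first factor as a squared norm, consistent with the rest of the analysis).

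For the geometric convergence statement, I would iterate the scalar recursion $r_{t+1} \leq (1-2\rho_{\_}\eta_A)\, r_t + 2\eta_A \zeta_t$, where $r_t := \b{E}[\|\b{z}^{(t)} - \b{z}^*\|^2]$ and $r_0 \leq \epsilon_0^2$. Unrolling gives
\[
r_{t+1} \leq (1-2\rho_{\_}\eta_A)^{t+1}\epsilon_0^2 + 2\eta_A \sum_{s=0}^{t}(1-2\rho_{\_}\eta_A)^{t-s}\zeta_s.
\]
Splitting the assumed bound $\zeta_s \leq \rho_{\_}\, o((1-2\rho_{\_}\eta)^s)\,\epsilon_0^2 + \zeta$ into its decaying and constant parts, the decaying part contributes a term of the form $o((1-2\rho_{\_}\eta_A)^t)\,\epsilon_0^2$ after the reverse-kernel summation (the $\rho_{\_}$ in the prefactor cancels against the $1/(2\eta_A\rho_{\_})$ coming from the geometric sum, and the $o(\cdot)$ absorbs the resulting polynomial-in-$t$ factor), while the constant tail is bounded via $2\eta_A \zeta \sum_{s\geq 0}(1-2\rho_{\_}\eta_A)^s = \zeta/\rho_{\_}$. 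Combining these two contributions with the leading $(1-2\rho_{\_}\eta_A)^{t+1}\epsilon_0^2$ term yields the stated geometric bound $r_{t+1} \leq (1-2\rho_{\_}\eta_A)^t \epsilon_0^2 + \zeta/\rho_{\_}$.

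The main obstacle, though relatively mild, is the accounting in the final step: one must verify that the $t$-indexed decaying noise does not accumulate into a non-negligible contribution before the exponential factor takes over. This is exactly why the hypothesis pre-scales $\zeta_t$ by $(1-2\rho_{\_}\eta)^t$, so that convolving against the backward kernel $(1-2\rho_{\_}\eta_A)^{t-s}$ produces at most a $t$-fold geometric mass that is still $o((1-2\rho_{\_}\eta_A)^t)$, keeping the decaying piece strictly below the leading geometric term uniformly in $t \in [T]$.
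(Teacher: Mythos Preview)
The paper does not prove this lemma: it is stated in Appendix~\ref{useful results} (``Standard Results'') as a direct citation of Theorem~40 from \cite{Arora15}, with no proof given. So there is no in-paper argument to compare against. Your proof is the standard one for such descent lemmas---expand the squared error under the update, invoke the correlation inequality to lower-bound the cross term, use $\eta_A \leq 2\rho_{_+}$ to drop the $\|\b{g}^{(t)}\|^2$ contribution, then unroll the resulting affine recursion---and matches the argument in the original reference. Your treatment of the second part (splitting $\zeta_s$ into its decaying and constant pieces, summing the geometric series for the constant piece to obtain $\zeta/\rho_{\_}$, and absorbing the $t$-fold accumulation of the decaying piece into the $o(\cdot)$ under the polynomial-horizon assumption $T\leq \mathrm{poly}(n)$) is exactly the intended mechanism.
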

\clearpage

\end{document}